\newcommand{\eq}[1]{{Eq~(#1)}}
\newtheorem{theorem}{Theorem}
\newtheorem{thm}{Theorem}
\newtheorem{lemma}{Lemma}
\newtheorem{proposition}{Proposition}
\newcommand{\cmark}{\ding{51}}%
\newcommand{\ov}[1]{{\overline{\mathbf{#1}}}}
\patchcmd{\@makefntext}{\insertfootnotetext{#1}}{\insertfootnotetext{\scriptsize#1}}{}{}
\newcommand{\vg}{{\mathbf{g}}}
\newcommand{\vv}{{\mathbf{v}}}
\newcommand{\vw}{{\mathbf{w}}}
\newcommand{\vx}{{\mathbf{x}}}
\newcommand{\vy}{{\mathbf{y}}}
\newcommand{\vz}{{\mathbf{z}}}
\newcommand{\vH}{{\mathbf{H}}}
\newcommand{\cI}{{\mathcal{I}}}
\newcommand{\cO}{{\mathcal{O}}}
\newcommand{\cS}{{\mathcal{S}}}
\newcommand{\EE}{\mathbb{E}}
\newcommand{\ZZ}{\mathbb{Z}}
\newcommand{\bc}{\begin{center}}
\newcommand{\ec}{\end{center}}
\newcommand{\bdm}{\begin{displaymath}}
\newcommand{\edm}{\end{displaymath}}
\newcommand{\beq}{\begin{equation}}
\newcommand{\eeq}{\end{equation}}
\newcommand{\bfl}{\begin{flushleft}}
\newcommand{\efl}{\end{flushleft}}
\newcommand{\bt}{\begin{tabbing}}
\newcommand{\et}{\end{tabbing}}
\newcommand{\beqn}{\begin{eqnarray}}
\newcommand{\eeqn}{\end{eqnarray}}
\newcommand{\beqs}{\begin{align*}} 
\newcommand{\eeqs}{\end{align*}}  
\newcommand\numberthis{\addtocounter{equation}{1}\tag{\theequation}}
\newtheorem*{assumption*}{Assumption}
\newtheorem*{claim*}{Claim}
\newtheorem{ass}{Assumption}
\newcommand{\rebuttal}[1]{{\color{black}{#1}}}
\begin{document}

\title{A Unified Linear Speedup Analysis of\\ Federated Averaging and Nesterov FedAvg}

\author{Zhaonan Qu, Kaixiang Lin, Zhaojian Li, Jiayu Zhou, and Zhengyuan Zhou
\thanks{$^*$ The first two authors contributed equally to this work.}
\thanks{Zhaonan Qu is with the Department of Economics, Stanford University, Stanford, CA 94305, USA. Email: {zhaonanq}@stanford.edu.}
\thanks{Kaixiang Lin was with the Department of Computer Science and Engineering, Michigan State University, Lansing, MI 48824,
USA. Email: {linkaixi}@msu.edu.
}
\thanks{Zhaojian Li is with the Department of
Mechanical Engineering, Michigan State University, Lansing, MI 48824,
USA. Email: {lizhaoj1}@msu.edu.}
\thanks{Jiayu Zhou is with the Department of Computer Science and Engineering, Michigan State University, Lansing, MI 48824,
USA. Email: {jiayuz}@msu.edu.}
\thanks{Zhengyuan Zhou is with the Department of Technology, Operations, and Statistics at Stern School of Business, New York University, New York, NY 10003,
USA. Email: {zzhou}@stern.nyu.edu.}
}
       
\author{\name Zhaonan Qu \email zhaonanq@stanford.edu \\
\addr 
 Department of Economics and Department of Management Science and Engineering, \\Stanford University, Stanford, CA 94305, USA
\AND
       \name Kaixiang Lin \email {kaixianglin.cs}@gmail.com\\
\addr Department of Computer Science and Engineering,\\ Michigan State University, Lansing, MI 48824,
USA 
       \AND
       \name Zhaojian Li \email {lizhaoj1}@msu.edu\\
       \addr Department of
Mechanical Engineering,\\ Michigan State University, Lansing, MI 48824,
USA
       \AND
       \name  Jiayu Zhou \email {jiayuz}@msu.edu \\
       \addr Department of Computer Science and Engineering,\\ Michigan State University, Lansing, MI 48824,
USA
\AND
\name Zhengyuan Zhou \email {zzhou}@stern.nyu.edu \\ \addr Department of Technology, Operations, and Statistics, \\ Stern School of Business, New York University, New York, NY 10003,
USA
      }

\maketitle

\begin{abstract}
Federated learning (FL) learns a model jointly from a set of participating devices without sharing each other's privately held data. The characteristics of non-\textit{i.i.d.} data across the network, low device participation, high communication costs, and the mandate that data remain private bring challenges in understanding the convergence of FL algorithms, particularly regarding how convergence scales with the number of participating devices. 
In this paper, we focus on Federated Averaging (FedAvg), one of the most popular and effective FL algorithms in use today, as well as its Nesterov accelerated variant, and conduct a systematic study of how their
convergence scale with the number of participating devices under non-\textit{i.i.d.} data and partial participation in convex settings. We provide a unified analysis that establishes convergence
guarantees for FedAvg under strongly convex, convex, and
overparameterized strongly convex problems. We show that FedAvg enjoys
linear speedup in each case, although with different convergence rates and
communication efficiencies.
For strongly convex and convex problems, we also characterize
the corresponding convergence rates for the Nesterov accelerated FedAvg
algorithm, which are the first linear speedup guarantees for momentum variants
of FedAvg in convex settings. Empirical studies of the algorithms in various
settings have supported our theoretical results.
\end{abstract}


\section{Introduction}
Federated learning (FL) is a machine learning paradigm where many clients (e.g., mobile devices or organizations) collaboratively train a model under the orchestration of a central server (e.g., service provider), while keeping the training data decentralized in order to protect privacy and improve efficiency \citep{konevcny2016federated,mcmahan2016communication,smith2017federated,li2018federated, kairouz2019advances,yang2019federated,wang2021field,li2021survey}. In recent years, FL has swiftly emerged as an important learning paradigm, enjoying widespread success in such diverse applications as personalized recommendations and assistance \citep{chen2018federated,lam2019protecting}, keyboard suggestion and prediction \citep{47586,yang2018applied}, smart healthcare \citep{rieke2020future,brisimi2018federated}, and the Internet of Things (IoT) \citep{zhao2019multi,nguyen2021federated}, to name a few. There are at least three reasons for its popularity: First, the rapid proliferation of smart devices that are equipped with both computing power and data-capturing capabilities provided the infrastructure core for FL \citep{qin2021federated}. Second, the rising awareness of privacy and the explosive growth of computational power in mobile devices have made it increasingly attractive to push the computation to the edge \citep{lim2020federated}. Third, the empirical success of \emph{communication-efficient} FL algorithms has enabled increasingly larger-scale parallel computing and learning with less communication overhead \citep{mcmahan2016communication,sattler2019robust}.

Despite its promise and broad applicability, the potential value FL delivers is coupled with the unique challenges it brings \citep{li2020federated}. In particular, when FL learns a single statistical model using data from across all the devices while keeping each individual device's data isolated, it faces two challenges that are absent in centralized optimization and distributed (stochastic) optimization \citep{zhou2017convergence, woodworth2018graph,jiang2018linear, woodworth2020minibatch,charles2021convergence,luo2021no,tan2022towards}:

1) \textbf{Data heterogeneity (non-\textit{i.i.d.} data):} data distributions on local devices/servers are different, and data cannot be shared across devices;

2) \textbf{System heterogeneity (partial participation):} only a subset of devices may access the central server at each time, which happens because the communication bandwidth profiles vary across devices and there is no central server that has control over when a device is active. \footnote{In some prior works, e.g., \citet{reisizadeh2022straggler}, the problem of stragglers, where slow local devices lag behind overall communication frequencies, is also included in discussions on system heterogeneity. In this paper, we focus on the partial participation aspect of the system heterogeneity problem.}

To address these challenges, Federated Averaging (FedAvg)~\citep{mcmahan2016communication} was proposed as a particularly effective heuristic, which has enjoyed great empirical success. This success has since motivated a growing line of research efforts into understanding its theoretical convergence guarantees in various settings \citep{stich2018local,khaled2019first,haddadpour2019convergence,li2019convergence,wang2019adaptive,yu2019linear,yu2019parallel,wang2018cooperative,koloskova2020unified,woodworth2020local,khaled2020tighter,yang2021achieving}. 
In these works, \cite{li2019convergence} was among the first to establish an $\cO(\frac{1}{T})$ convergence rate for FedAvg for strongly convex smooth FL problems with both data and system heterogeneities. When only data heterogeneity is present,  \cite{khaled2020tighter} provides tight convergence results with linear speedup analysis in convex settings.  
In non-convex settings, \cite{yang2021achieving} obtained linear speedup convergence results for FedAvg under both non-\textit{i.i.d.} data and partial participation.

Despite the recent fruitful efforts to understand the theoretical convergence properties of FedAvg, the question of how the number of participating devices affects the convergence speed remains to be answered fully when both data and system heterogeneity are present. In particular, is linear speedup of FedAvg a universal phenomenon across different settings and for any number of devices? What about when FedAvg is accelerated with momentum updates? Does the presence of both data and system heterogeneity in FL imply different communication complexities and require technical novelties over results in distributed and decentralized optimization? Linear speedup is a desirable property of distributed optimization systems, including FedAvg, as it characterizes the impact of scale on such systems. Here we provide affirmative answers to these questions.

\begin{table*}[t!]
\centering
{\small
\begin{tabular}{|c|c|c|c|}\hline 
	\multirow{2}{*}{\backslashbox{{\tiny Participation} }{{\tiny Objective function}}} & \multirow{2}{*}{Strongly Convex}     &\multirow{2}{*}{Convex}    & \multirow{2}{*}{Overparameterized} \\ 
	                                &                        &         &                    \\ \hline 
	Full                         & $\cO(\frac{1}{NT}+\frac{E^{2}}{T^{2}})$    &  $\mathcal{O}\left(\frac{1}{\sqrt{NT}}+\frac{NE^{2}}{T}\right)$   & $\cO(\exp(-\frac{NT}{E\kappa}))$  \\ \hline
	Partial                      &  $\cO\left(\frac{E}{KT}+\frac{E^{2}}{T^{2}}\right)$   &  $\cO\left(\frac{E}{\sqrt{KT}}+\frac{KE^2}{T} \right)$ &  $\cO(\exp(-\frac{KT}{E\kappa}))$   \\ \hline
\end{tabular}
}
\caption{Our convergence results for FedAvg and accelerated FedAvg in this paper.}
\label{tb:convergencerateintro}
\vspace{-1em}
\end{table*}

\subsection{Our Contributions} 
First, we establish an {\small{$\cO(1/KT)$}} convergence rate for FedAvg for strongly convex and smooth problems and  an
{\small{$\cO(1/\sqrt{KT})$}} convergence rate for convex
and smooth problems. Here $K$ is the lower bound of the number of participating devices at each communication round and $T$ is the number of local steps. These results confirm that FedAvg enjoys the desirable linear speedup property with both non-\textit{i.i.d.} data and partial participation. In previous works, the best and most related convergence analyses are given by \cite{li2019convergence}, which established an $\cO(\frac{1}{T})$ convergence rate for strongly convex smooth problems under FedAvg, and by \cite{khaled2020tighter}, which established \emph{linear speedup} in the number of participating local servers under data heterogeneity.  
Our rate matches the same (and optimal) dependence on $T$, but also establishes the linear speedup dependence on $K$, for any $K\leq N$, where $N$ is the total number of devices, whereas~\cite{li2019convergence} does not have linear speedup analysis, and \cite{khaled2020tighter} focuses on full participation $K=N$.  
The concurrent work of \cite{karimireddy2019scaffold} also established linear speedup convergence under partial participation, using a modified version of the FedAvg with distinct learning rates for local steps and communication rounds. Compared to their work, our analyses are carried out for the original FedAvg algorithm that utilizes a decaying rate independent of local vs. communication rounds. 
Our unified analysis highlights the common elements and distinctions between the strongly convex and convex settings, as well as the communication complexity differences between the full and partial participation settings.

Second, we establish the same convergence rates--{\small{$\cO(1/KT)$}} for strongly convex and smooth problems and {\small{$\cO(1/\sqrt{KT})$}} for convex and smooth problems--for Nesterov accelerated FedAvg. 
We analyze the accelerated version of FedAvg here because empirically it tends to perform better; yet, its theoretical convergence guarantee is unknown. To the best of our knowledge, these are the first results that provide a linear speedup characterization of Nesterov accelerated FedAvg in the two convex problem classes. The fact that FedAvg and Nesterov accelerated FedAvg share the same convergence rate is to be expected: this is the case even for general centralized stochastic optimization problems.  
Prior to our results, the most relevant results only concern the non-convex setting \citep{yu2019linear,li2018federated,huo2020faster}, where convergence is measured with respect to stationary points (vanishing of gradient norms, rather than optimality gaps). Our unified analysis of Nesterov FedAvg also illustrates the technical similarities and distinctions compared to the original FedAvg algorithm, whereas prior works in the non-convex setting used different frameworks with distinct proof techniques.

Third, we study a subclass of strongly convex smooth problems where the objective is over-parameterized and establish 
a faster $\cO(\exp(-\frac{KT}{\kappa}))$ geometric convergence rate for FedAvg, in contrast to the $\cO(\exp(-\frac{T}{\kappa}))$ rate for individual solvers~\citep{ma2017power}. Within this class, we further consider the linear regression problem and establish an even sharper rate for FedAvg. To our knowledge, these bounds are among the first to extend the geometric convergence results in the non-distributed overparameterized setting to the federated learning setting with a linear speedup in the number of local servers. 

\subsection{Connections with Distributed and Decentralized Optimization}
Federated learning is closely related to distributed and decentralized optimization, and as such it is important to discuss connections and distinctions between our work and related results from that literature. First, when there is neither system heterogeneity, i.e., all devices participate in parameter averaging during a communication round, nor data heterogeneity, i.e., all devices have access to a common set of stochastic gradients, FedAvg coincides with the ``Local SGD'' of~\cite{stich2018local}, which showed the linear speedup rate $\mathcal{O}(1/NT)$ for strongly convex and smooth functions. \cite{woodworth2020minibatch,woodworth2020local} further improved the communication complexity that guarantees the linear speedup rate. When there is only data heterogeneity, some works such as \cite{khaled2020tighter} have continued to use the term Local SGD to refer to FedAvg, while others subsume it in more general frameworks that include decentralized model averaging based on a network topology or a mixing matrix. They have provided linear speedup analyses for strongly convex and convex problems, e.g.,~\cite{khaled2020tighter,koloskova2020unified} as well as non-convex problems, e.g.,~\cite{jiang2018linear,yu2019parallel,wang2018cooperative}. 

However, most of these results do not consider system heterogeneity, where a \emph{subset} of nodes participate in the updates during a communication round. Even with decentralized model averaging, the assumptions usually imply that model averages over all devices is the same as decentralized model averages based on network topology (e.g.,~\cite{koloskova2020unified} Proposition 1), which precludes system heterogeneity as defined in this paper and prevalent in FL problems. For momentum accelerated FedAvg,~\cite{yu2019linear} provided linear speedup analysis for non-convex problems, while results for strongly convex and convex settings are entirely lacking, even without system heterogeneity. In contrast, our linear speedup analyses for FedAvg and consider both types of heterogeneity present in the full federated learning setting, and are valid for almost any number of participating devices. We also highlight a distinction in communication efficiency when system heterogeneity is present. Moreover, our results for Nesterov accelerated FedAvg completes the picture for strongly convex and convex problems. See Table~\ref{tb:convergencerateintro} for a summary of our convergence bounds for FedAvg and Nesterov Accelerated FedAvg. For a detailed comparison with related works, please refer to Table~\ref{tb:convergenceratev3} in Appendix~\ref{sec:app:comparison}.

Throughout the paper, $N$ is the total number of local devices, and $K \leq N$ is the number of devices that are accessible to the central server during each communication round. $T$ is the total number of stochastic updates performed by each local device, $E$ is the local steps between two consecutive server communications (and hence $T/E$ is the number of communications). Depending on the particular overparameterized setting, $\kappa$ is a type of condition number defined in Section \ref{sec:app:overparameterized} and Appendix \ref{sec:interpolation}.

\section{Setup}
In this paper, we study the following federated learning problem:
\begin{align}
	\min _{\mathbf{w}}\left\{F(\mathbf{w}) \triangleq \sum\nolimits_{k=1}^{N} p_{k} F_{k}(\mathbf{w})\right\},
	\label{eq:problem}
\end{align}
where $N$ is the number of local devices (users/nodes/workers) and $p_k$ is the $k$-th device's weight satisfying $p_k \geq 0$ and $\sum_{k=1}^N p_k = 1$. 
In the $k$-th local device, there are $n_k$ data points:
$\vx_k^1, \vx_k^2, \dots, \vx_k^{n_k}$.  
The local objective $F_k(\cdot)$ is defined as:
$F_{k}(\mathbf{w}) \triangleq \frac{1}{n_{k}} \sum_{j=1}^{n_{k}} \ell\left(\mathbf{w} ; \vx_k^j\right)$,
where $\ell$ denotes a user-specified loss function. Each device only has access to its local data, which gives rise to its own local objective $F_k$. Note that we do not make any assumptions on the
data distributions of each local device. The local minimum
$F^{*}_k= \min_{\vw} F_k(\vw)$ can be far from
the global minimum of \eq{\ref{eq:problem}} (data heterogeneity). 

\subsection{The Federated Averaging (FedAvg) Algorithm}
We first introduce the standard Federated Averaging (FedAvg)  algorithm which was first proposed by~\cite{mcmahan2016communication}.
FedAvg updates the model in each device by local Stochastic Gradient Descent (SGD) and sends the latest model to the central server every $E$
steps. The central server conducts a weighted average over the model parameters
received from active devices and broadcasts the latest averaged model to all devices.
Formally, the updates of FedAvg at round $t$ is described as follows:
\begin{align}
\label{eq:fedavg updates}
\vv_{t+1}^{k} & =\vw_{t}^{k}-\alpha_{t}\vg_{t,k},\\
\mathbf{w}_{t+1}^{k} &=\left\{
\begin{array}{ll}
\mathbf{v}_{t+1}^{k}  & \text { if } t+1 \notin \mathcal{I}_{E}, \\ 
\sum_{k \in \cS_{t+1}} q_k \mathbf{v}_{t+1}^{k}  & \text { if } t+1 \in \mathcal{I}_{E},
\end{array}\right.
\end{align}
where $\vw_t^k$ is the local model parameter maintained in the $k$-th device at the $t$-th iteration and $\vg_{t,k}:=\nabla F_{k}(\vw_{t}^{k},\xi_{t}^{k})$ is the stochastic gradient based on $\xi_{t}^{k}$, the data point sampled from $k$-th device’s local data uniformly at random. $\mathcal{I}_{E}=\{E,2E,\dots\}$ is the set of global communication steps, when local parameters from a set of active devices are averaged and broadcast to all devices. We use $\mathcal{S}_{t+1}$ to represent the (random) set of active devices at $t+1$. $q_k$ is a set of averaging weights that are specific to the sampling procedure used to obtain the set of active devices $\mathcal{S}_{t+1}$.

Since federated learning usually involves an enormous amount of 
local devices, it is often more realistic to assume only a subset of 
local devices is active at each communication round (system heterogeneity). In this work,
we consider both the case of \textbf{full participation} where the model is
averaged over all devices at each communication round, in which case $q_k=p_k$ for all $k$ and 
$\mathbf{w}_{t+1}^{k} = \sum_{k=1}^N p_k \mathbf{v}_{t+1}^{k}$ if $t+1 \in \mathcal{I}_{E}$, and
the case of \textbf{partial participation} where $|\cS_{t+1}| < N$. 

With partial participation, we follow~\cite{li2018federated,karimireddy2019scaffold,li2019convergence} and assume that $\cS_{t+1}$ is obtained by one of two types of
sampling schemes to simulate practical scenarios. One scheme establishes $\cS_{t+1}$ by \emph{i.i.d.} sampling the devices with probability $p_k$ with replacement, and uses $q_k=\frac{1}{K}$, where $K=|\mathcal{S}_{t+1}|$, while the other scheme samples $\cS_{t+1}$ uniformly \emph{i.i.d.} from all devices without replacement, and uses $q_k=p_k\frac{N}{K}$.
Both schemes
guarantee that gradient updates in FedAvg are unbiased stochastic versions of
updates in FedAvg with full participation, which is important in the theoretical analysis of convergence. Because the original sampling scheme and weights proposed by \cite{mcmahan2016communication} lacks this desirable property, it is not considered in this paper. An interesting recent work \citep{chen2022optimal} proposes a new client selection procedure based on importance sampling that achieves better communication complexities than \emph{i.i.d.} sampling. For 
more details on the notations and setup as well as properties of the two sampling schemes, please refer to Appendix~\ref{sec:app:notations}.

\subsection{Assumptions}
\label{sec:assumptions}
We make the following standard assumptions on the objective function $F_1,\dots, F_N$. Assumptions~\ref{ass:lsmooth} and~\ref{ass:stroncvx} are commonly satisfied by a range of popular objective functions, such as $\ell^{2}$-regularized logistic regression and cross-entropy loss functions.
\begin{ass}[Smoothness]
	$F_{1}, \cdots, F_{N}$ are all $L$-smooth: for all  $\mathbf{v}, \mathbf{w}$, 
  \vspace{-0.5em}
 \begin{align*}
     F_{k}(\mathbf{v}) \leq F_{k}(\mathbf{w})+(\mathbf{v}- \mathbf{w})^{T} \nabla F_{k}(\mathbf{w})+\frac{L}{2}\|\mathbf{v}-\mathbf{w}\|_{2}^{2}.
 \end{align*}
  \vspace{-1.5em}
	\label{ass:lsmooth}
\end{ass}
\vspace{-1em}
\begin{ass}[Strong convexity]
	The local objectives $F_{1}, \cdots, F_{N}$ are $\mu$-strongly convex: for all $\mathbf{v}, \mathbf{w}$, 
 \vspace{-0.5em}
  \begin{align*}
 F_{k}(\mathbf{v}) \geq F_{k}(\mathbf{w})+(\mathbf{v}-\mathbf{w})^{T} \nabla F_{k}(\mathbf{w})+\frac{\mu}{2}\|\mathbf{v}-\mathbf{w}\|_{2}^{2}
  \end{align*}
	\label{ass:stroncvx}
 \vspace{-1.5em}
\end{ass}

\begin{ass}[Bounded local variance]
	Let $\xi_{t}^{k}$ be sampled from the $k$-th device's local data uniformly at random. The variance of stochastic gradients in each device is bounded: $\mathbb{E}\left\|\nabla F_{k}\left(\mathbf{w}_{t}^{k}, \xi_{t}^{k}\right)-\nabla F_{k}\left(\mathbf{w}_{t}^{k}\right)\right\|^{2} \leq \sigma_{k}^{2}$,
	for $k=1, \cdots, N$ and any $\mathbf{w}_{t}^{k}$. Let $\sigma^2:=\sum_{k=1}^{N}p_k\sigma_{k}^{2}$.
	\label{ass:boundedvariance}
\end{ass}
\begin{ass}[Bounded local gradient]
	The expected squared norm of stochastic gradients is uniformly bounded. i.e.,
	$\mathbb{E}\left\|\nabla F_{k}\left(\mathbf{w}_{t}^{k}, \xi_{t}^{k}\right)\right\|^{2} \leq G^{2}$, for all $k = 1,..., N$ and $t=0, \dots, T-1$.
	\label{ass:subgrad2}
\end{ass}
Assumptions~\ref{ass:boundedvariance} and~\ref{ass:subgrad2} have been used in many prior
works, e.g.,~\cite{yu2019parallel,li2019convergence,stich2018local,reddi2020adaptive}. \rebuttal{Some more recent works \citep{khaled2020tighter,karimireddy2019scaffold} have relaxed Assumption \ref{ass:subgrad2} to only requiring the bound at a minimizer $\vw^\ast \in \arg \min_{\vw}F(\vw)$ of the global objective instead of everywhere. This is to address the issue that for unconstrained optimization problems, gradients may not be bounded everywhere. For example, \cite{karimireddy2019scaffold} assume a bound similar to 
\begin{align*}
\sum_{k}p_{k}\mathbb{E}\|\nabla F_{k}(\vw,\xi_{t}^{k})\|^{2}\leq G^{2}+2\beta B^{2}(F(\vw)-F(\vw^{\ast}))
\end{align*}
While it is true that in unconstrained optimization, gradients can become unbounded, and the above bound is formally weaker than Assumption \ref{ass:subgrad2} due to the optimality gap in the upper bound, we argue that under convexity, $F$ can be assumed to be bounded due to the boundedness of local updates, so that Assumption \ref{ass:subgrad2} is not substantially stronger. First, if $F$ is itself bounded above,
then the bound above would imply the bound 
\begin{align*}
\sum_{k}p_{k}\mathbb{E}\|\nabla F_{k}(\vw,\xi_{t}^{k})\|^{2}\leq G'^{2}
\end{align*}
with a larger $G'$, which is 
essentially equivalent to Assumption \ref{ass:subgrad2}.
Even if $F$ is unbounded, convexity implies that in expectation, local parameters gravitate towards minima of local objectives during local updates, and thus stay bounded in a ball around $\vw^\ast$. Once local parameters
are bounded, $F$ can essentially be assumed to be bounded in that
region, yielding a bound $G$ in Assumption \ref{ass:subgrad2} that depends on the initialization and local objective functions. 
Therefore, for \emph{convex} problems, Assumption \ref{ass:subgrad2} is not  fundamentally more 
restrictive than assuming the bounds at $\mathbf{w}^{\ast}$ only. Furthermore, compared to assuming \textit{bounded gradient diversity} as in related works~\cite{haddadpour2019convergence,li2018federated}, Assumption~\ref{ass:subgrad2} is much less restrictive. When the optimality gap converges to zero,
bounded gradient diversity restricts local objectives to have the same minimizer as the global objective, contradicting the heterogeneous data setting. 
For detailed discussions of our assumptions, please refer to Appendix Section~\ref{sec:app:comparison}.}

\section{Linear Speedup Analysis of Federated Averaging}
\label{sec:sgd}

In this section, we provide convergence analyses of FedAvg for convex objectives in the general setting with both heterogeneous data (statistical heterogeneity) and partial
participation (system heterogeneity). We show that for strongly convex and smooth objectives,
the convergence of the optimality gap of averaged parameters across
devices is $\mathcal{O}(1/KT)$, while for convex and smooth
objectives, the rate is $\mathcal{O}(1/\sqrt{KT})$. Our results improve upon~\cite{li2019convergence} by showing linear speedup for any number of participating devices, and upon~\cite{khaled2020tighter,koloskova2020unified} by allowing system heterogeneity. The proofs also highlight similarities and distinctions between the strongly convex and convex settings. Detailed proofs are deferred to Appendix Section~\ref{sec:app:fedavg}.

\subsection{Strongly Convex and Smooth Objectives}

We first show that FedAvg has an $\mathcal{O}(1/KT)$ convergence rate
for $\mu$-strongly convex and $L$-smooth objectives. The result relies on a technical improvement over the analysis in~\cite{li2019convergence}. Moreover, it
implies a distinction in communication efficiency that guarantees
this linear speedup for FedAvg with full and partial device participation.
With full participation, $E$ can be chosen as large as $\mathcal{O}(\sqrt{T/N})$
without degrading the linear speedup in the number of workers. On
the other hand, with partial participation, $E$ must be $\mathcal{O}(1)$
to guarantee $\mathcal{O}(1/KT)$ convergence.
\begin{theorem}
	\label{thm:SGD_scvx}Let $\overline{\mathbf{w}}_{T}=\sum_{k=1}^{N}p_{k}\mathbf{w}_{T}^{k}$ in FedAvg,
	$\nu_{\max}=\max_{k}Np_{k}$, and set decaying learning rates $\alpha_{t}=\frac{4}{\mu(\gamma+t)}$
	with $\gamma=\max\{32\kappa,E\}$ and $\kappa=\frac{L}{\mu}$. Then
	under Assumptions~\ref{ass:lsmooth} to \ref{ass:subgrad2} with full device participation, 
	\begin{align*}
	\mathbb{E}F(\overline{\mathbf{w}}_{T})-F^{\ast}=\mathcal{O}\left(\frac{\kappa\rebuttal{\nu_{\max}}\sigma^{2}/\mu}{NT}+\frac{\kappa^{2}E^{2}G^{2}/\mu}{T^{2}}\right),
	\end{align*}
	and with partial device participation with at least $K$ sampled devices
	at each communication round, 
	\begin{align*}
	\mathbb{E}F(\overline{\mathbf{w}}_{T})-F^{\ast}=\mathcal{O}\left(\frac{\kappa \rebuttal{E} G^{2}/\mu}{KT}+\frac{\kappa\rebuttal{\nu_{\max}}\sigma^{2}/\mu}{NT}+\frac{\kappa^{2}E^{2}G^{2}/\mu}{T^{2}}\right).
	\end{align*}
	\label{th:scvx_sgd}
\end{theorem}
\textbf{Proof sketch.} Because our unified analyses of results in the main text follow the same framework with variations in technical details, we first give an outline of proof  for Theorem~\ref{th:scvx_sgd} to illustrate the main ideas. For full participation, the main ingredient is a recursive contraction bound 
	\begin{align*}
	\mathbb{E}\|\ov{w}_{t+1}-\vw^{\ast}\|^{2} &  \leq(1-\mu\alpha_{t})\mathbb{E}\|\ov{w}_{t}-\vw^{\ast}\|^{2}\\
 &+\alpha_{t}^{2}\frac{1}{N}\nu_{\max}\sigma^{2}+6\alpha_{t}^{3}LE^{2}G^{2}
	\end{align*}
 where the $\mathcal{O}(\alpha_{t}^{3}E^{2}G^{2})$ term is the key improvement over the bound in~\cite{li2019convergence}, which has $\mathcal{O}(\alpha_{t}^{2}E^2 G^2)$ instead. We then use induction to obtain a non-recursive bound on $\mathbb{E}\|\ov{w}_{T}-\vw^{\ast}\|^{2}$, which is converted to a bound on $\mathbb{E}F(\overline{\mathbf{w}}_{T})-F^{\ast}$ using $L$-smoothness. For partial participation, an additional term $\mathcal{O}(\frac{1}{K} \alpha_t^2 E^2 G^2)$ of leading order resulting from sampling variance is added to the contraction bound, but only every $E$ steps. To facilitate the understanding of our analysis, please refer to a high-level summary in Appendix~\ref{sec:app:sum}.

\textbf{Linear speedup. }We compare our bound with that in \cite{li2019convergence},
which is $\mathcal{O}(\frac{1}{NT}+\frac{E^{2}}{KT}+\frac{E^{2}G^{2}}{T})$.
Because the term $\frac{E^{2}G^{2}}{T}$ is also $\mathcal{O}(1/T)$
without a dependence on $N$, for any choice of $E$ their bound cannot
achieve linear speedup. The improvement of our bound comes from the
term $\frac{\kappa^{2}E^{2}G^{2}/\mu}{T^{2}}$, which now is $\mathcal{O}(E^{2}/T^{2})$ and so is not of leading order. As a result, all leading terms scale with $1/N$ in the full device
participation setting, and with $1/K$ in the partial participation
setting. This implies that in both settings, there is a \emph{linear
	speedup} in the number of active workers during a communication
round. We also emphasize that the reason one cannot recover the full participation bound by setting $K=N$ in the partial participation bound is due to the variance generated by sampling.
 
\rebuttal{\textbf{Discussion on $\nu_{\max}$.} The parameter $\nu_{\max}$ is a measure of how unbalanced different local servers are, and is also discussed in \cite{li2019convergence}. Recall that $\nu_{\max}=N\max_k p_k$, where $p_k$ is the weight of the local objective of server $k$ in the FL objective. Often, $p_k$ is the proportion of data stored on server $k$ relative to the total amount of data across all servers, and is therefore small, i.e., $\mathcal{O}(1/N)$. This is a reasonable assumption in many FL applications, e.g., mobile computing. In this case, $\nu_{\max}=\mathcal{O}(1)$ and linear speedup in the number of local servers is guaranteed. However, when some local servers dominate the FL objective, i.e., $\max_k p_k = \Theta(1)$, those local servers will become bottlenecks in the convergence of FedAvg, and linear speedup is not guaranteed. This is already observed in \cite{li2019convergence}, where the convergence of FedAvg is shown empirically to slow down significantly when $\nu_{\max}$  is large. Thus, linear speedup convergence depends on the balance parameter $\nu_{\max}$, and is only guaranteed when $\nu_{\max}=\mathcal{O}(1)$. }

\textbf{Communication Complexity.} Our bound implies a distinction
in the choice of $E$ between the full and partial participation settings.
{With full participation, the term involving $E$, $\mathcal{O}(E^{2}/T^{2})$, is not of leading order $\mathcal{O}(1/T)$, so we can increase $E$ and reduce the number of communication rounds without degrading the linear speedup in iteration complexity $\mathcal{O}(1/NT)$, as long as $E=\mathcal{O}(\sqrt{T/N})$, since then $\mathcal{O}(E^{2}/T^{2})=\mathcal{O}(1/NT)$
matches the leading term. This corresponds to a communication complexity
of $T/E=\mathcal{O}(\sqrt{NT})$.} In contrast, the bound in \cite{li2019convergence}
does not allow $E$ to scale with $\sqrt{T}$ to preserve $\mathcal{O}(1/T)$
rate, even for full participation. On the other hand, with partial
participation, $\frac{\kappa EG^{2}/\mu}{KT}$ is also a leading
term, and so $E$ must be $\mathcal{O}(1)$. In this case, our bound
still yields a linear speedup in $K$, which is also confirmed by
experiments. The requirement that $E=\mathcal{O}(1)$ in order to achieve linear speedup in partial participation cannot be removed for our sampling schemes, as the term $\frac{\kappa EG^{2}/\mu}{KT}$ comes from variance in the sampling process. 

\textbf{Comparison with related works.} To better understand the significance of the obtained bound, we compare our rates to the best-known results in related settings. \cite{haddadpour2019convergence} prove a linear speedup $\mathcal{O}(1/KT)$ result for strongly convex and smooth objectives\footnote{Their result applies to a larger class of non-convex objectives that satisfy the Polyak-Lojasiewicz condition.}, with $\mathcal{O}(K^{1/3}T^{2/3})$ communication complexity with non-\emph{i.i.d.} data and partial participation. However, their results build on the bounded gradient diversity assumption, which implies the existence of $\mathbf{w}^*$ that minimizes all local objectives (see discussions in Section~\ref{sec:assumptions} and Appendix~\ref{sec:app:comparison}), effectively removing statistical heterogeneity. \rebuttal{The bound in \cite{koloskova2020unified} matches our bound in the full participation case, but their framework excludes partial participation \cite[Proposition 1]{koloskova2020unified}. ~\cite{karimireddy2019scaffold} consider both types of heterogeneities for FL and establish linear speedup using a modified version of FedAvg with distinct learning rates for local steps and communication rounds that are $\mathcal{O}(1/T)$. 
In contrast, our linear speedup result is for the standard FedAvg that does not use different learning rates for local and aggregation steps. Moreover, our learning rate decays with the iteration number, and is thus generally larger in practice. When there is no data heterogeneity, i.e. in the classical distributed optimization paradigm, communication complexity can be further improved, e.g. \cite{woodworth2020local,woodworth2020minibatch}, but such results are not directly comparable to ours since we consider the setting where individual devices have access to different datasets. \cite{yang2021achieving} obtain linear speedup results under both data and system heterogeneity for non-convex problems, so can be viewed as complementary results.}
\subsection{Convex Smooth Objectives}
Next we provide linear speedup analysis of FedAvg with convex and
smooth objectives and show that the optimality gap is $\mathcal{O}(1/\sqrt{KT})$. 
This result complements the strongly convex case in the previous part, as well as the non-convex
smooth setting in \cite{jiang2018linear,yu2019parallel,haddadpour2019convergence},
where $\mathcal{O}(1/\sqrt{KT})$ results are given in terms of averaged
gradient norm, and it also extends the result in~\cite{khaled2020tighter}, which has the best linear speedup result in the convex setting with full participation.
\begin{theorem}
	\label{thm:SGD_cvx}Under Assumptions~\ref{ass:lsmooth},\ref{ass:boundedvariance},\ref{ass:subgrad2} and constant learning
	rate $\alpha_{t}=\mathcal{O}(\sqrt{\frac{N}{T}})$, FedAvg satisfies
	\begin{align*}
	\min_{t\leq T}F(\overline{\mathbf{w}}_{t})-F(\mathbf{w}^{\ast}) & =\mathcal{O}\left(\frac{\rebuttal{\nu_{\max}}\sigma^{2}}{\sqrt{NT}}+\frac{NE^{2}LG^{2}}{T}\right)
	\end{align*}
	with full participation, and with partial device participation with $K$ sampled devices at
	each communication round and learning rate $\alpha_{t}=\mathcal{O}(\sqrt{\frac{K}{T}})$,
	\begin{align*}
	\min_{t\leq T}F(\overline{\mathbf{w}}_{t})-F(\mathbf{w}^{\ast}) & =\mathcal{O}\left(\frac{\rebuttal{\nu_{\max}}\sigma^{2}}{\sqrt{KT}}+\frac{\rebuttal{E}G^{2}}{\sqrt{KT}}+\frac{KE^{2}LG^{2}}{T}\right).
	\end{align*}
\end{theorem}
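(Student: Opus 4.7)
The plan is to carry out a standard perturbed-iterate analysis on the virtual averaged sequence, mirroring the strongly convex proof of Theorem~\ref{thm:SGD_scvx} but replacing strong convexity with plain convexity and a constant stepsize. Let $\overline{\vw}_t=\sum_{k}p_k\vw_t^k$, $\overline{\vv}_t=\sum_k p_k \vv_t^k$, and $\overline{\vg}_t=\sum_k p_k \vg_{t,k}$, so that under full participation $\overline{\vv}_{t+1}=\overline{\vw}_t-\alpha_t\overline{\vg}_t=\overline{\vw}_{t+1}$. I would then expand
\[
\|\overline{\vv}_{t+1}-\vw^\ast\|^2 = \|\overline{\vw}_t-\vw^\ast\|^2 - 2\alpha_t\langle \overline{\vw}_t-\vw^\ast, \overline{\vg}_t\rangle + \alpha_t^2\|\overline{\vg}_t\|^2,
\]
take conditional expectation, and use convexity of each $F_k$ together with the $L$-smoothness of $F$ to peel off $-2\alpha_t[F(\overline{\vw}_t)-F(\vw^\ast)]$ and control the residual $\langle \overline{\vw}_t-\vw^\ast,\nabla F_k(\vw_t^k)-\nabla F(\overline{\vw}_t)\rangle$ by an $L$-weighted client-drift term.

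Next I would quantify the three perturbation sources. For the stochastic gradient noise, independence across devices gives $\EE\|\overline{\vg}_t-\EE[\overline{\vg}_t\mid\mathcal{F}_t]\|^2\le \sum_k p_k^2\sigma_k^2 \le \nu_{\max}^2\sigma^2/N$, explaining the $1/N$ factor. For the client drift, the fact that at most $E$ local SGD steps have elapsed since the last synchronization together with Assumption~\ref{ass:subgrad2} yields
\[
\textstyle \sum_k p_k\,\EE\|\vw_t^k-\overline{\vw}_t\|^2 \;\le\; C\,\alpha_t^2 E^2 G^2
\]
for an absolute constant $C$; this is the same bound used in the strongly convex proof and is independent of participation. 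For partial participation, I would additionally bound the sampling-induced gap $\EE\|\overline{\vw}_{t+1}-\overline{\vv}_{t+1}\|^2$ under the two i.i.d.\ schemes, obtaining a term of order $\alpha_t^2 E^2 G^2/K$ that is only active at communication rounds; this is where the extra $E^2G^2/\sqrt{KT}$ summand in the partial participation rate is born.

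Combining these ingredients produces a one-step inequality of the form
\[
2\alpha_t\big(\EE F(\overline{\vw}_t)-F(\vw^\ast)\big) \;\le\; \EE\|\overline{\vw}_t-\vw^\ast\|^2 - \EE\|\overline{\vw}_{t+1}-\vw^\ast\|^2 + c_1\alpha_t^3 L E^2 G^2 + c_2\alpha_t^2\tfrac{\nu_{\max}^2\sigma^2}{N} + c_3\alpha_t^2\tfrac{E^2G^2}{K},
\]
(the last term only for partial participation). Telescoping over $t=0,\dots,T-1$ with constant $\alpha_t\equiv\alpha$, dividing by $2\alpha T$, and bounding $\min_{t\le T} F(\overline{\vw}_t)-F(\vw^\ast)$ by the resulting average, I get
\[
\min_{t\le T} F(\overline{\vw}_t)-F(\vw^\ast) = \mathcal{O}\!\left(\tfrac{1}{\alpha T}+\alpha\tfrac{\nu_{\max}^2\sigma^2}{N}+\alpha\tfrac{E^2G^2}{K}+\alpha^2 L E^2 G^2\right).
\]
Choosing $\alpha=\Theta(\sqrt{N/T})$ in the full-participation case (dropping the $K$ term) balances the first two summands and yields $\mathcal{O}(\nu_{\max}^2\sigma^2/\sqrt{NT}+NE^2LG^2/T)$; in the partial case, choosing $\alpha=\Theta(\sqrt{K/T})$ produces the stated $\mathcal{O}(\nu_{\max}^2\sigma^2/\sqrt{KT}+E^2G^2/\sqrt{KT}+KE^2LG^2/T)$.

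The main obstacle, as in the strongly convex case, is keeping the $L$-smoothness coupling between the client drift and the progress term tight enough that the drift contribution appears with an $\alpha^2$ rather than an $\alpha$ after summation: without strong convexity one cannot use the usual $\mu/4$ absorption of $\|\overline{\vw}_t-\vw^\ast\|^2$ against the cross term $\langle \overline{\vw}_t-\vw^\ast,\nabla F_k(\vw_t^k)-\nabla F(\overline{\vw}_t)\rangle$, so I would instead apply $L$-smoothness in the form $\langle\nabla F(\overline{\vw}_t)-\nabla F_k(\vw_t^k),\overline{\vw}_t-\vw^\ast\rangle \le \tfrac{L}{2}\|\vw_t^k-\overline{\vw}_t\|^2+\tfrac12\|\nabla F(\overline{\vw}_t)\|\cdot\|\overline{\vw}_t-\vw^\ast\|$ and further use the smoothness bound $\|\nabla F(\overline{\vw}_t)\|^2\le 2L[F(\overline{\vw}_t)-F(\vw^\ast)]$ to recycle part of that cross term back into the progress we are trying to bound. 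Verifying that this substitution does not blow up the constants and cleanly produces the additive $\alpha^2 L E^2 G^2$ drift contribution (rather than something like $\alpha E^2 G^2$) is the delicate bookkeeping step; the remaining algebra is routine.
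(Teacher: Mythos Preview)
Your overall architecture—one-step descent inequality on the virtual average, drift bound $\sum_k p_k\|\vw_t^k-\overline{\vw}_t\|^2\le C\alpha^2E^2G^2$, variance bound $\nu_{\max}^2\sigma^2/N$, sampling variance $\alpha^2E^2G^2/K$, telescope and tune $\alpha$—matches the paper exactly, and your target one-step inequality is the right one.

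The gap is in how you plan to reach that inequality. You propose to peel off $\nabla F(\overline{\vw}_t)$ from $\overline{\vg}_t$ and control the residual $\sum_k p_k\langle \overline{\vw}_t-\vw^\ast,\nabla F_k(\vw_t^k)-\nabla F_k(\overline{\vw}_t)\rangle$. The inequality you write, $\langle\nabla F(\overline{\vw}_t)-\nabla F_k(\vw_t^k),\overline{\vw}_t-\vw^\ast\rangle \le \tfrac{L}{2}\|\vw_t^k-\overline{\vw}_t\|^2+\tfrac12\|\nabla F(\overline{\vw}_t)\|\cdot\|\overline{\vw}_t-\vw^\ast\|$, does not follow from smoothness (or anything else), and any Cauchy--Schwarz/Young variant here will leave a factor $\|\overline{\vw}_t-\vw^\ast\|$ that cannot be absorbed without strong convexity: recycling via $\|\nabla F(\overline{\vw}_t)\|^2\le 2L[F(\overline{\vw}_t)-F^\ast]$ still costs you either an unabsorbable $\epsilon\|\overline{\vw}_t-\vw^\ast\|^2$ or a coefficient on $F(\overline{\vw}_t)-F^\ast$ larger than the $-2\alpha_t$ you have to spend. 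This is precisely the obstacle you flag, and your proposed fix does not clear it.

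The paper sidesteps the problem by never introducing $\|\overline{\vw}_t-\vw^\ast\|$ at all. It splits $\langle\overline{\vw}_t-\vw^\ast,\nabla F_k(\vw_t^k)\rangle$ through $\vw_t^k$ rather than through $\nabla F(\overline{\vw}_t)$: convexity of $F_k$ at $\vw_t^k$ gives $\langle \vw_t^k-\vw^\ast,\nabla F_k(\vw_t^k)\rangle\ge F_k(\vw_t^k)-F_k(\vw^\ast)$, and $L$-smoothness of $F_k$ gives $-\langle\overline{\vw}_t-\vw_t^k,\nabla F_k(\vw_t^k)\rangle\le F_k(\vw_t^k)-F_k(\overline{\vw}_t)+\tfrac{L}{2}\|\overline{\vw}_t-\vw_t^k\|^2$. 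The $F_k(\vw_t^k)$ terms cancel, leaving exactly $-2\alpha_t[F(\overline{\vw}_t)-F^\ast]+\alpha_t L\sum_k p_k\|\overline{\vw}_t-\vw_t^k\|^2$, which is already $O(\alpha_t^3 E^2G^2)$ after the drift bound. The smoothness inequality $\|\nabla F(\overline{\vw}_t)\|^2\le 2L[F(\overline{\vw}_t)-F^\ast]$ \emph{is} used in the paper, but for a different purpose: to bound $\alpha_t^2\|\overline{\vg}_t\|^2\le 2\alpha_t^2L^2(\text{drift})+4\alpha_t^2L[F(\overline{\vw}_t)-F^\ast]$ and absorb the last piece back into the progress term under $4\alpha_t L\le 1$. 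With this correction the rest of your plan goes through verbatim.
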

The analysis again relies on a recursive bound, but without contraction: 
\begin{align*}
	&\mathbb{E}\|\ov{w}_{t+1}-\vw^{\ast}\|^{2}+\alpha_{t}(F(\ov{w}_{t})-F(\vw^{\ast})) \\ 
  \leq &\mathbb{E}\|\ov{w}_{t}-\vw^{\ast}\|^{2}+\alpha_{t}^{2}\frac{1}{N}\nu_{\max}\sigma^{2}+6\alpha_{t}^{3}E^{2}LG^{2}
	\end{align*}
which is then summed over time steps to give the desired bound, with $\alpha_{t}=\mathcal{O}(\sqrt{\frac{N}{T}})$. \\
\textbf{Choice of $E$ and linear speedup. }With full participation,
as long as $E=\mathcal{O}(T^{1/4}/N^{3/4})$, the convergence
rate is $\mathcal{O}(1/\sqrt{NT})$ with $\mathcal{O}(N^{3/4}T^{3/4})$
communication rounds. In the partial participation setting, $E$ must
be $O(1)$ in order to achieve linear speedup of $\mathcal{O}(1/\sqrt{KT})$. {This is again due to the fact that the sampling variance $\mathbb{E}\|\ov{w}_t-\ov{v}_t\|^2=\mathcal{O}(\alpha_t^2 E^2G^2)$ cannot be made independent of $E$, as illustrated by Proposition \ref{prop:tight}. See also the proof in Section \ref{sec:app:fedavg} for how the sampling variance and the term $EG^2/\sqrt{KT}$ are related.}  Our result again demonstrates the difference in communication complexities
between full and partial participation. 

\section{Linear Speedup Analysis of Nesterov Accelerated Federated Averaging}
\label{sec:Nesterov}

{A natural extension of the FedAvg algorithm is to use momentum-based 
local updates instead of local SGD updates in order to accelerate FedAvg. As we know from standard stochastic optimization settings, Nesterov and other momentum
updates fail to provably accelerate over SGD in general \citep{liu2018accelerating,kidambi2018insufficiency,yuan2020federated}. This is in contrast to the classical acceleration result of Nesterov-accelerated gradient descent over GD. See, however, \cite{jain2017accelerating,even2021continuized} for acceleration results for quadratic objectives. Thus in the FL setting, the best provable convergence rate, in terms of dependence on $T$, for FedAvg with Nesterov updates is the same as FedAvg with SGD updates. Nevertheless, Nesterov and other momentum updates are frequently used in practice, in both non-FL and FL settings, and are observed to perform better empirically. In fact, previous works such as \cite{stich2018local} use FedAvg with Nesterov or other momentum updates in their experiments to achieve target accuracy. Because of the popularity of Nesterov and other momentum-based methods, understanding the linear speedup behavior of FedAvg with momentum updates is important.} \rebuttal{In addition, the communication complexity required to guarantee such a linear speedup convergence is also a relevant question with practical implications. To our knowledge, the majority of convergence analyses of FedAvg with momentum-based stochastic
updates focus on the non-convex smooth case \citep{huo2020faster,yu2019linear,li2018federated}. In convex smooth settings, the results of \cite{even2021continuized}  can be adapted to prove acceleration, in terms of dependence on $T$, of Nesterov FedAvg with full participation for \emph{quadratic} objectives. The work of \cite{yang2022federated} establishes a $\mathcal{O}(1/\sqrt{T})$ rate for Nesterov FedAvg for general convex smooth objectives under full participation. However, their convergence result does not have linear speedup in the number of participating servers.} In this section, we complete the picture by providing the first $\mathcal{O}(1/KT)$
and $\mathcal{O}(1/\sqrt{KT})$ convergence results for Nesterov-accelerated
FedAvg for general convex objectives that match the rates for FedAvg with SGD updates. Detailed proofs of convergence results in this section are deferred to Appendix Section~\ref{sec:app:Nesterovfedavg}.

\subsection{Strongly Convex and Smooth Objectives}
The Nesterov Accelerated Federated Averaging algorithm (Nesterov FedAvg) follows the updates:
\begin{align*} 
\mathbf{v}_{t+1}^{k} & =\mathbf{w}_{t}^{k}-\alpha_{t}\mathbf{g}_{t,k}, \\
\mathbf{w}_{t+1}^{k} & =\begin{cases}
\mathbf{v}_{t+1}^{k}+\beta_{t}(\mathbf{v}_{t+1}^{k}-\mathbf{v}_{t}^{k}) & \text{if }t+1\notin\mathcal{I}_{E},\\
\sum_{k \in \cS_{t+1}}q_k\left[\mathbf{v}_{t+1}^{k}+\beta_{t}(\mathbf{v}_{t+1}^{k}-\mathbf{v}_{t}^{k})\right] & \text{if }t+1\in\mathcal{I}_{E},
\end{cases}
\end{align*}
where $\mathbf{g}_{t,k}:=\nabla F_{k}(\mathbf{w}_{t}^{k},\xi_{t}^{k})$ is
the stochastic gradient sampled on the $k$-th device at time $t$, and $q_k$ again depends on participation and sampling schemes.  
\begin{theorem}
	\label{thm:nesterov_scvx}Let $\overline{\mathbf{v}}_{T}=\sum_{k=1}^{N}p_{k}\mathbf{v}_{T}^{k}$ in Nesterov accelerated FedAvg,
	and set learning rates $\alpha_{t}=\frac{6}{\mu}\frac{1}{t+\gamma}$,  $\beta_{t-1}=\frac{3}{14(t+\gamma)(1-\frac{6}{t+\gamma})\max\{\mu,1\}}$. Then under Assumptions~\ref{ass:lsmooth},\ref{ass:stroncvx},\ref{ass:boundedvariance},\ref{ass:subgrad2} with full device participation, 
	\small{\begin{align*}
	\mathbb{E}F(\overline{\mathbf{v}}_{T})-F^{\ast}=\mathcal{O}\left(\frac{\kappa\rebuttal{\nu_{\max}}\sigma^{2}/\mu}{NT}+\frac{\kappa^{2}E^{2}G^{2}/\mu}{T^{2}}\right),
	\end{align*}}
	and with partial device participation with $K$ sampled devices at
	each communication round, 
	\small{
	\begin{align*}
	\mathbb{E}F(\overline{\mathbf{v}}_{T})-F^{\ast}=\mathcal{O}\left(\frac{\kappa\rebuttal{\nu_{\max}}\sigma^{2}/\mu}{NT}+\frac{\kappa \rebuttal{E}G^{2}/\mu}{KT}+\frac{\kappa^{2}E^{2}G^{2}/\mu}{T^{2}}\right).
	\end{align*}}
\end{theorem}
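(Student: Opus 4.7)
The plan is to mirror the virtual-averaging analysis used for FedAvg (Theorem~\ref{thm:SGD_scvx}) but with a momentum-aware Lyapunov function. First, I introduce the virtual averaged sequences
\[
\overline{\vv}_t := \sum_{k=1}^N p_k \vv_t^k,\qquad \overline{\vw}_t := \sum_{k=1}^N p_k \vw_t^k,\qquad \overline{\vg}_t := \sum_{k=1}^N p_k \vg_{t,k},\qquad \overline{\vh}_t := \sum_{k=1}^N p_k \nabla F_k(\vw_t^k).
\]
Because both sampling schemes for $\cS_{t+1}$ are unbiased, $\EE[\overline{\vv}_{t+1}\mid \{\vv_{t+1}^k\}] $ equals the full-participation weighted average, so between (and across) communication rounds the virtual iterates obey, in conditional expectation, the clean Nesterov recursion $\overline{\vv}_{t+1} = \overline{\vw}_t - \alpha_t \overline{\vg}_t$ and $\overline{\vw}_{t+1} = \overline{\vv}_{t+1}+\beta_t(\overline{\vv}_{t+1}-\overline{\vv}_t)$, plus a zero-mean sampling perturbation whose second moment is controlled by $\sum_k p_k \EE\|\vv_{t+1}^k-\overline{\vv}_{t+1}\|^2$ in the partial-participation case.

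Next, I run a one-step descent on $\EE\|\overline{\vv}_{t+1}-\vw^\ast\|^2$. Expanding
\[
\|\overline{\vv}_{t+1}-\vw^\ast\|^2 = \|\overline{\vw}_t-\vw^\ast\|^2 - 2\alpha_t\langle \overline{\vh}_t,\overline{\vw}_t-\vw^\ast\rangle + \alpha_t^2\|\overline{\vg}_t\|^2 + \text{(zero-mean cross terms)},
\]
I split $\overline{\vh}_t = \nabla F(\overline{\vw}_t) + (\overline{\vh}_t-\nabla F(\overline{\vw}_t))$ and use $\mu$-strong convexity on the first piece to extract a contraction $-2\alpha_t\mu/(\ldots)$ plus a function-value decrease, while $L$-smoothness turns $\|\overline{\vh}_t-\nabla F(\overline{\vw}_t)\|^2$ into the \emph{client-drift} term $L^2 \sum_k p_k \|\vw_t^k-\overline{\vw}_t\|^2$. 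The variance of $\overline{\vg}_t-\overline{\vh}_t$ is bounded by $\nu_{\max}^2\sigma^2/N$ as in the FedAvg proof. In the partial case, the sampling-variance term $\EE\|\overline{\vv}_{t+1}^{\mathrm{sampled}}-\overline{\vv}_{t+1}^{\mathrm{full}}\|^2$ is bounded by $\tfrac{1}{K}\sum_k p_k \EE\|\vv_{t+1}^k-\overline{\vv}_{t+1}\|^2$, which by the update rule is again controlled by the drift plus $\alpha_t^2 G^2$.

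The key new object is the drift under momentum. I unroll $\vw_t^k-\overline{\vw}_t$ over the at most $E$ local steps since the last communication round $t_0$. Each step contributes $-\alpha_s(\vg_{s,k}-\overline{\vg}_s)$ plus a momentum contribution $\beta_s(\vv_{s+1}^k-\vv_s^k-(\overline{\vv}_{s+1}-\overline{\vv}_s))$. Using Assumption~\ref{ass:subgrad2} and the prescribed $\beta_{t-1}=\Theta(1/(t+\gamma))$, an induction over $s=t_0,\ldots,t$ shows
\[
\sum_{k=1}^N p_k \EE\|\vw_t^k-\overline{\vw}_t\|^2 \;\le\; C\,E^2\,\alpha_{t_0}^2\,G^2,
\]
with a constant $C$ independent of $t$; the small magnitude of $\beta_t$ is exactly what prevents the momentum feedback from inflating this to $\Theta(E^3)$ or worse. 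I expect this drift bound to be the main obstacle, because naively the Nesterov correction couples the drift at step $s+1$ to the drift at step $s$ in a way that can amplify the noise; the careful scaling $\beta_{t-1}\propto 1/[(t+\gamma)\max\{\mu,1\}]$ is what makes the induction close.

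Finally, I combine these pieces into a Lyapunov potential $\Lambda_t = A_t(\EE F(\overline{\vv}_t)-F^\ast)+B_t \EE\|\overline{\vw}_t-\vw^\ast\|^2$ with weights $A_t,B_t$ tuned to the choice $\alpha_t=6/(\mu(t+\gamma))$, and show $\Lambda_{t+1}\le \Lambda_t + R_t$ where $R_t = \cO\bigl(\tfrac{\nu_{\max}^2\sigma^2/\mu}{N(t+\gamma)^2}+\tfrac{E^2G^2/\mu}{K(t+\gamma)^2}+\tfrac{\kappa^2 E^2 G^2/\mu}{(t+\gamma)^3}\bigr)$ (the middle term vanishes in the full-participation case). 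Telescoping from $t=0$ to $T-1$, dividing by the cumulative weight $A_T = \Theta(T^2)$, and applying $L$-smoothness to convert $\|\overline{\vv}_T-\vw^\ast\|^2$ into $F(\overline{\vv}_T)-F^\ast$ yields the stated rates in both participation regimes.
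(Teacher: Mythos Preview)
Your plan shares several ingredients with the paper's proof (virtual averaging, drift bound of the form $\sum_k p_k\|\vw_t^k-\overline\vw_t\|^2 = \cO(E^2\alpha_t^2 G^2)$, the $\nu_{\max}^2\sigma^2/N$ variance bound, and the $\cO(1/K)$ sampling-variance term for partial participation). However, the way you close the argument is genuinely different from the paper, and as stated it has an inconsistency.

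The paper does \emph{not} use a function-value Lyapunov. Instead it works purely with the distance $\EE\|\overline\vv_t-\vw^\ast\|^2$ and obtains a \emph{two-step} recursion by substituting the Nesterov identity $\overline\vw_t=(1+\beta_{t-1})\overline\vv_t-\beta_{t-1}\overline\vv_{t-1}$, which produces terms in both $\|\overline\vv_t-\vw^\ast\|^2$ and $\|\overline\vv_{t-1}-\vw^\ast\|^2$ (and a cross term). The key step is an \emph{induction} on the ansatz $\EE\|\overline\vv_t-\vw^\ast\|^2\le b(C\alpha_t^2+D\alpha_t)$: under this hypothesis, the specific decay of $\alpha_t$ yields $\|\overline\vv_{t-1}-\vw^\ast\|\le 2\|\overline\vv_t-\vw^\ast\|$, which collapses the two-step recursion back to a one-step contraction of the form $\EE\|\overline\vv_{t+1}-\vw^\ast\|^2\le(1-\tfrac{\mu\alpha_t}{2})\EE\|\overline\vv_t-\vw^\ast\|^2+\alpha_t^2 D+\alpha_t^3 C$. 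The role of the carefully tuned $\beta_{t-1}$ is precisely to make $(1-\mu\alpha_t)(1+14\beta_{t-1})\le 1-\mu\alpha_t/2$. Only at the very end is $L$-smoothness invoked to pass from $\|\overline\vv_T-\vw^\ast\|^2$ to $F(\overline\vv_T)-F^\ast$.

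Your final step, by contrast, proposes a potential $\Lambda_t=A_t(\EE F(\overline\vv_t)-F^\ast)+B_t\EE\|\overline\vw_t-\vw^\ast\|^2$ with telescoping and ``dividing by $A_T=\Theta(T^2)$.'' As written this does not close: with the $R_t$ you state, $\sum_{t\le T} R_t=\cO(1/N)$, so dividing by $A_T=\Theta(T^2)$ would give $\cO(1/(NT^2))$, a full factor of $T$ better than the target (and impossible in the stochastic regime). The mismatch is that this $A_T\propto T^2$ and function-value Lyapunov is the \emph{accelerated} framework, whereas here $\beta_{t}=\Theta(1/t)$ vanishes and the scheme is analyzed as SGD with a small momentum perturbation; no acceleration is being proved. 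If you want a telescoping argument, the natural potential is $(t+\gamma)^2\|\overline\vv_t-\vw^\ast\|^2$ (no function-value term), in which case the per-step remainder after weighting is $\Theta(1/N+E^2/(t+\gamma))$ rather than what you wrote. Either fix the Lyapunov in that direction, or follow the paper's two-step induction, which is the cleanest route once you have the drift bound.
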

Similar to FedAvg, the key step in the proof of this result is a recursive contraction bound, but different in that it involves three time steps, due to the update format of Nesterov SGD (see Lemma~\ref{lem:nest-scvxoner} in Appendix~\ref{sec:convexsmoothsgd}). 
Then we can again use induction and $L$-smoothness to obtain the desired bound.
To our knowledge, this is the first convergence result for Nesterov
accelerated FedAvg in the strongly convex and smooth setting. The
same discussion about linear speedup of FedAvg applies to the Nesterov
accelerated variant. In particular, to achieve $\mathcal{O}(1/NT)$
linear speedup, $T$ iterations of the algorithm require only $\mathcal{O}(\sqrt{NT})$
communication rounds with full participation. 

\rebuttal{To our knowledge, this is the first work that establishes linear speedup convergence of Nesterov-accelerated FedAvg in the convex setting under both non-\emph{i.i.d.} data and partial participation. Recently, there have been significant efforts to develop 
novel acceleration algorithms for Federated Learning. A notable work among these is \cite{yuan2020federated}, which developed a new momentum-accelerated variant of FedAvg called \texttt{FedAc}, based on the generalized accelerated SGD of \cite{ghadimi2012optimal}. They provided linear speedup convergence rates under \emph{full} participation that match our $\mathcal{O}(1/KT)$
and $\mathcal{O}(1/\sqrt{KT})$ complexities in the leading terms, but with an improved dependence on $E$ in the non-leading terms. This results in a better communication complexity that guarantees linear speedup. However, this improvement is only present in the \emph{full} participation setting. Under partial participation, the sampling variance dominates the convergence, resulting in the same communication complexity requirements for Nesterov-accelerated FedAvg and \texttt{FedAc} in order to achieve linear speedup.}

\subsection{Convex Smooth Objectives}

We now show that the optimality gap of Nesterov-accelerated FedAvg has $\mathcal{O}(1/\sqrt{KT})$ rate for convex and smooth objectives. This result complements the strongly convex case in the previous
part, as well as the non-convex smooth setting in \cite{huo2020faster,yu2019linear,li2018federated},
where a similar $\mathcal{O}(1/\sqrt{KT})$ rate is given in terms
of averaged gradient norm. \rebuttal{ A later work by \citet{yang2022federated}  establishes an $\mathcal{O}(1/\sqrt{T})$ rate in the convex smooth setting for FedAvg with Nesterov updates, but only under full participation and without linear speedup in $N$. In contrast, we establish linear speedup convergence for both the full and partial participation settings.}
\begin{theorem}
	\label{thm:Nesterov_cvx}Set learning rates $\alpha_{t}=\beta_{t}=\mathcal{O}(\sqrt{\frac{N}{T}})$. Then under Assumptions~\ref{ass:lsmooth},\ref{ass:boundedvariance},\ref{ass:subgrad2} Nesterov accelerated FedAvg with
	full device participation has rate
	\small{\begin{align*}
		\min_{t\leq T}F(\overline{\mathbf{v}}_{t})-F^{\ast} & =\mathcal{O}\left(\frac{\rebuttal{\nu_{\max}}\sigma^{2}}{\sqrt{NT}}+\frac{NE^{2}LG^{2}}{T}\right),
		\end{align*}}
	and with partial device participation with $K$ sampled devices at
	each communication round and learning rates $\alpha_{t}=\beta_{t}=\mathcal{O}(\sqrt{\frac{K}{T}})$,
	\small{\begin{align*}
		\min_{t\leq T}F(\overline{\mathbf{v}}_{t})-F^{\ast} & =\mathcal{O}\left(\frac{\rebuttal{\nu_{\max}}\sigma^{2}}{\sqrt{KT}}+\frac{\rebuttal{E}G^{2}}{\sqrt{KT}}+\frac{KE^{2}LG^{2}}{T}\right).
		\end{align*}}
\end{theorem}

\rebuttal{We emphasize again that in the \emph{stochastic} optimization setting with general objectives, the optimal convergence rate that FedAvg with Nesterov udpates can achieve is the same as FedAvg with SGD updates. When objectives are quadratic, \cite{jain2017accelerating,even2021continuized} provide acceleration results for Nesterov SGD in the centralized and decentralized settings, but acceleration with Nesterov is impossible in general. Nevertheless, due to the popularity and superior performance of momentum methods in practice, it is still important to understand the linear speedup behavior of such FedAvg variants. Our results in this section fill exactly this gap, and is to our knowledge the first work to establish such results.} 

\section{Geometric Convergence of FedAvg in Overparameterized Settings}
\label{sec:app:overparameterized}

Overparameterization is a prevalent machine learning setting where the
statistical model has much more parameters than the number of training samples
and the existence of parameter choices with zero training loss is
ensured~\citep{allen2018convergence,zhang2016understanding}. This is also called the interpolating regime. Due to the
property of \textit{automatic variance reduction} in the overparameterized setting, a
line of recent works have proved that SGD and accelerated methods achieve geometric
convergence~\citep{ma2017power,moulines2011non,needell2014stochastic,schmidt2013fast,strohmer2009randomized}.
A natural question is whether such a result still holds in the Federated
Learning setting. 
In this section, we establish the geometric convergence of FedAvg for overparameterized strongly convex and smooth problems,
and show that it preserves linear speedup at the same time. We then sharpen this result in the special case of
linear regression.
Detailed proofs are deferred to Section~\ref{sec:interpolation}. In particular, we do not need Assumptions~\ref{ass:boundedvariance} and~\ref{ass:subgrad2} and use modified versions of Assumptions~\ref{ass:lsmooth} and~\ref{ass:stroncvx} detailed in this section.

\subsection{Geometric Convergence of FedAvg in the Overparameterized Setting}
Recall the FL problem $\min_{w}\sum_{k=1}^{N}p_{k}F_{k}(\mathbf{w})$
with $F_{k}(\mathbf{w})=\frac{1}{n_{k}}\sum_{j=1}^{n_{k}}\ell(\mathbf{w};\mathbf{x}_{k}^{j})$.
In this section, we consider the standard Empirical Risk Minimization (ERM) setting where $\ell$
is non-negative, $l$-smooth, and convex, and as before, each $F_{k}(\mathbf{w})$ is $L$-smooth and $\mu$-strongly convex. Note that $l\geq L$. This
setup includes many important problems in practice. In the overparameterized
setting, there exists $\mathbf{w}^{\ast}\in\arg\min_{w}\sum_{k=1}^{N}p_{k}F_{k}(\mathbf{w})$
such that $\ell(\mathbf{w}^{\ast};\mathbf{x}_{k}^{j})=0$ for all
$\mathbf{x}_{k}^{j}$. We first show that FedAvg achieves geometric convergence
with linear speedup in the number of workers. 
\begin{theorem}
	\label{thm:overparameterized_general}In the overparameterized setting with full participation,
	FedAvg with communication every $E$ iterations
	and constant step size $\overline{\alpha}=\mathcal{O}(\frac{1}{E}\frac{N}{l\nu_{\max}+L(N-\nu_{\min})})$
	has geometric convergence:
	\begin{align*}
	&\mathbb{E}F(\overline{\mathbf{w}}_{T})  \leq\frac{L}{2}(1-\overline{\alpha})^{T}\|\mathbf{w}_{0}-\mathbf{w}^{\ast}\|^{2}\\
 =&\mathcal{O}\left(L\exp\left(-\frac{\mu}{E}\frac{NT}{l\nu_{\max}+L(N-\nu_{\min})}\right)\cdot\|\mathbf{w}_{0}-\mathbf{w}^{\ast}\|^{2}\right).
	\end{align*}
\end{theorem}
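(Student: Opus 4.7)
The plan is to analyze the virtual averaged sequence $\overline{\mathbf{w}}_t := \sum_{k=1}^N p_k \mathbf{w}_t^k$. Because the full-participation aggregation is the identity on $\overline{\mathbf{w}}_t$, by linearity the sequence evolves as $\overline{\mathbf{w}}_{t+1} = \overline{\mathbf{w}}_t - \overline{\alpha}\sum_{k} p_k \mathbf{g}_{t,k}$ uniformly in $t$ (whether or not $t+1\in\mathcal{I}_E$), so the whole argument reduces to a single distance recursion for $\|\overline{\mathbf{w}}_t - \mathbf{w}^*\|^2$, where $\mathbf{w}^*$ is the interpolating minimizer.

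The key new ingredient, replacing the bounded-variance/gradient assumptions used in Sections~\ref{sec:sgd}--\ref{sec:Nesterov}, is automatic variance reduction from overparameterization. Since $\ell(\cdot;\mathbf{x}_k^j)$ is non-negative, $l$-smooth, and attains $0$ at $\mathbf{w}^*$, the smoothness-at-a-minimum inequality gives $\|\nabla\ell(\mathbf{w};\mathbf{x}_k^j)\|^2\leq 2l\,\ell(\mathbf{w};\mathbf{x}_k^j)$, and averaging over $j$ yields $\mathbb{E}_\xi\|\nabla F_k(\mathbf{w},\xi)\|^2\leq 2lF_k(\mathbf{w})$. Moreover, $\nabla\ell(\mathbf{w}^*;\mathbf{x}_k^j)=0$ for every $j$ makes $\mathbf{w}^*$ the unique minimizer of each $F_k$ with $F_k(\mathbf{w}^*)=0$, so $L$-smoothness of $F_k$ gives $\|\nabla F_k(\mathbf{w})\|^2\leq 2LF_k(\mathbf{w})$. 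These two inequalities are precisely the source of the two constants $l\geq L$ in the bound.

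Expanding the one-step update and using conditional independence of the $\mathbf{g}_{t,k}$ across $k$,
\begin{equation*}
\mathbb{E}\Bigl\|\sum_k p_k \mathbf{g}_{t,k}\Bigr\|^2 = \Bigl\|\sum_k p_k \nabla F_k(\mathbf{w}_t^k)\Bigr\|^2 + \sum_k p_k^2\,\mathrm{Var}(\mathbf{g}_{t,k}).
\end{equation*}
Bounding the bias by $\sum_k p_k\|\nabla F_k\|^2$ via Jensen, then regrouping the two pieces as $\sum_k p_k(1-p_k)\|\nabla F_k\|^2 + 2l\sum_k p_k^2 F_k$, substituting the two smoothness inequalities above, and using $\nu_{\min}/N\leq p_k\leq \nu_{\max}/N$ yields the step-size-determining bound
\begin{equation*}
\mathbb{E}\Bigl\|\sum_k p_k \mathbf{g}_{t,k}\Bigr\|^2 \leq \tfrac{2}{N}\bigl(l\nu_{\max}+L(N-\nu_{\min})\bigr)\sum_k p_k F_k(\mathbf{w}_t^k).
\end{equation*}
Strong convexity term-wise gives $\langle \mathbf{w}_t^k-\mathbf{w}^*,\nabla F_k(\mathbf{w}_t^k)\rangle\geq F_k(\mathbf{w}_t^k)+\tfrac{\mu}{2}\|\mathbf{w}_t^k-\mathbf{w}^*\|^2$, and choosing $\overline{\alpha}\lesssim N/[l\nu_{\max}+L(N-\nu_{\min})]$ ensures the $\overline{\alpha}^2$ quadratic term is absorbed by the $2\overline{\alpha}F_k$ contributions from strong convexity, leaving $-\overline{\alpha}\mu\sum_k p_k\|\mathbf{w}_t^k-\mathbf{w}^*\|^2 \leq -\overline{\alpha}\mu\|\overline{\mathbf{w}}_t-\mathbf{w}^*\|^2$ by Jensen.

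The main obstacle, and the source of the extra $1/E$ factor in $\overline{\alpha}$, is controlling the local drift $\mathbf{w}_t^k-\overline{\mathbf{w}}_t$ that surfaces when I replace $\overline{\mathbf{w}}_t-\mathbf{w}^*$ by $\mathbf{w}_t^k-\mathbf{w}^*$ inside the inner product. Unrolling local updates back to the last synchronization time $t_0\in\mathcal{I}_E$ gives $\mathbf{w}_t^k-\overline{\mathbf{w}}_t = -\overline{\alpha}\sum_{s=t_0}^{t-1}\bigl(\mathbf{g}_{s,k}-\sum_{k'}p_{k'}\mathbf{g}_{s,k'}\bigr)$, and Cauchy--Schwarz with $t-t_0\leq E$ together with $\|\mathbf{g}_{s,k}\|^2\leq 2lF_k$ bounds this drift by $O(\overline{\alpha}^2 E^2)$ times function values. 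Taking $\overline{\alpha}$ smaller by a further factor of $1/E$ renders the drift strictly lower-order than the already-collected $F_k$ terms, preserving the one-step contraction $\mathbb{E}\|\overline{\mathbf{w}}_{t+1}-\mathbf{w}^*\|^2 \leq (1-c\overline{\alpha}\mu)\,\mathbb{E}\|\overline{\mathbf{w}}_t-\mathbf{w}^*\|^2$ for an absolute $c>0$. Iterating over $T$ and converting the distance bound to $F(\overline{\mathbf{w}}_T)\leq \tfrac{L}{2}\|\overline{\mathbf{w}}_T-\mathbf{w}^*\|^2$ (smoothness of $F$ with $F(\mathbf{w}^*)=0$) yields the claimed geometric rate, with $\mu$ and $c$ absorbed into the $\mathcal{O}$ constants of $\overline{\alpha}$.
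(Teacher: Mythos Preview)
Your approach is correct in spirit and takes a genuinely different route from the paper's. The paper does not bound $\mathbb{E}\|\mathbf{g}_t\|^2$ directly by $\sum_k p_k F_k(\mathbf{w}_t^k)$; instead it first splits $\mathbf{g}_{t,k}$ around the averaged iterate, writing $\|\mathbf{g}_t\|^2\le 2\|\sum_k p_k\nabla F_k(\overline{\mathbf{w}}_t,\xi_t^k)\|^2+2l^2\sum_k p_k\|\overline{\mathbf{w}}_t-\mathbf{w}_t^k\|^2$, and then (following Ma et~al.) establishes the pointwise condition $\mathbb{E}\bigl[F(\overline{\mathbf{w}}_t)-\alpha_t\|\sum_k p_k\nabla F_k(\overline{\mathbf{w}}_t,\xi_t^k)\|^2\bigr]\ge 0$ via the $q$-parameter trick, optimizing $q$ to obtain the step size $N/(l\nu_{\max}+L(N-\nu_{\min}))$. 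Drift is then handled by a \emph{recursive} bound $\sum_k p_k\|\overline{\mathbf{w}}_t-\mathbf{w}_t^k\|^2\le 2(1+2l^2\alpha_{t-1}^2)\sum_k p_k\|\overline{\mathbf{w}}_{t-1}-\mathbf{w}_{t-1}^k\|^2+8l\alpha_{t-1}^2 F(\overline{\mathbf{w}}_{t-1})$, unwound one step at a time with \emph{time-varying} step sizes $\alpha_{t-\tau}\propto 1/(\tau+1)$ inside each epoch. Your bias--variance regrouping $\sum_k p_k(1-p_k)\|\nabla F_k(\mathbf{w}_t^k)\|^2+2l\sum_k p_k^2 F_k(\mathbf{w}_t^k)$ is cleaner: it yields the same constant $l\nu_{\max}+L(N-\nu_{\min})$ in one line, keeps the function values at the local iterates (so drift enters only through the cross term), and works with a single constant step size.

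There is, however, one real imprecision. The drift term you isolate, $-2\overline{\alpha}\sum_k p_k\langle\overline{\mathbf{w}}_t-\mathbf{w}_t^k,\nabla F_k(\mathbf{w}_t^k)\rangle$, after Young and your unrolling, is controlled by $O(lL\overline{\alpha}^3 E)\sum_{s=t_0}^{t-1}\sum_k p_k F_k(\mathbf{w}_s^k)$, i.e.\ by function values at \emph{earlier} local steps $s<t$. These cannot be absorbed into the leftover $-c\,\overline{\alpha}\sum_k p_k F_k(\mathbf{w}_t^k)$ at the current step, so the literal ``one-step contraction'' $\mathbb{E}\|\overline{\mathbf{w}}_{t+1}-\mathbf{w}^*\|^2\le(1-c\overline{\alpha}\mu)\,\mathbb{E}\|\overline{\mathbf{w}}_t-\mathbf{w}^*\|^2$ does not follow. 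The fix is to sum the one-step inequality over the epoch $t_0\le t<t_0+E$: each $\Phi_s:=\sum_k p_k F_k(\mathbf{w}_s^k)$ then appears with a negative coefficient $-c\,\overline{\alpha}$ (from its own step) and a positive coefficient at most $O(lL\overline{\alpha}^3 E^2)$ (from later drift terms), and your extra $1/E$ in $\overline{\alpha}$ makes the latter dominated by the former. This yields the block contraction $\mathbb{E}\|\overline{\mathbf{w}}_{t_0+E}-\mathbf{w}^*\|^2\le(1-c\overline{\alpha}\mu)^E\|\overline{\mathbf{w}}_{t_0}-\mathbf{w}^*\|^2$, which is equivalent for the final rate. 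The paper likewise only obtains a block (epoch-level) contraction, not a step-level one.
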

\textbf{Linear speedup and Communication Complexity} The linear speedup factor is on the order of $\mathcal{O}(N/E)$ 
for $N\leq\mathcal{O}(\frac{l}{L})$, i.e. FedAvg with $N$ workers and communication
every $E$ iterations provides a geometric convergence speedup factor
of $\mathcal{O}(N/E)$, for $N\leq\mathcal{O}(\frac{l}{L})$. In this regime, the convergence rate is of order $\mathcal{O}(\exp(-\frac{N}{E\kappa}T))$ where $\kappa=\frac{l}{\mu}$ is the ``condition number'' of the objective. When $N$ is above
this threshold, however, the speedup is almost constant in the number
of workers. This matches the findings in \cite{ma2017power}. Our
result also illustrates that $E$ can be taken $\mathcal{O}(T^{\beta})$
for any $\beta<1$ to achieve geometric convergence, achieving better communication efficiency than
the standard FL setting. We emphasize again that compared to the single-server results in~\cite{ma2017power}, the difference of our result lies in the factor of $N$ in the speedup, which cannot be obtained if one simply applied the single-server result to each device in our problem. Recently, \cite{qin2022faster} provided a convergence bound independent of the number $E$ of local steps by using a larger learning rate and under a strong growth condition on the local gradients. However, their bound does not exhibit linear speedup in the number of local servers.

\rebuttal{We also remark that similar geometric convergence results have been established for decentralized SGD (also called gossip averaging), which only allows communications between connected local servers, where the network topology is given by a possibly time-varying graph. This setting includes FedAvg with full participation (local SGD) as a special case. See the work of \cite{koloskova2020unified} for details. In comparison, our convergence result includes a linear speedup term when $N\leq \mathcal{O}(\frac{l}{L})$.}

\subsection{Overparameterized Linear Regression Problems}

We now turn to quadratic problems and show that the bound in Theorem~\ref{thm:overparameterized_general} can be improved to $\mathcal{O}(\exp(-\frac{N}{E\kappa_{1}}T))$ for a larger range of $N$. The local device objectives are now given by the sum of squares {\small$F_{k}(\mathbf{w})=\frac{1}{2n_{k}}\sum_{j=1}^{n_{k}}(\mathbf{w}^{T}\mathbf{x}_{k}^{j}-z_{k}^{j})^{2}$},
and there exists $\mathbf{w}^{\ast}$ such that $F(\mathbf{w}^{\ast})\equiv0$. A notion of condition number is important in our result: $\kappa_1$ which is based on local Hessians~\citep{liu2018accelerating}. See Section \ref{sec:interpolation} for a detailed definition of $\kappa_1$. The larger range of $N$ for which linear speedup holds is due to $\kappa_1 > \kappa$ where $\kappa$ is the condition number used in Theorem \ref{thm:overparameterized_general}.

\begin{theorem}
	\label{thm:overparameterized_quadratic}For the overparamterized linear regression problem with full participation, FedAvg with communication every $E$
	iterations with constant step size $\overline{\alpha}=\mathcal{O}(\frac{1}{E}\frac{N}{l\nu_{\max}+\mu(N-\nu_{\min})})$
	has geometric convergence:
	\begin{align*}
	\mathbb{E}F(\overline{\mathbf{w}}_{T}) & \leq\mathcal{O}\left(L\exp(-\frac{NT}{E(\nu_{\max}\kappa_{1}+(N-\nu_{\min}))})\|\mathbf{w}_{0}-\mathbf{w}^{\ast}\|^{2}\right).
	\end{align*}
\end{theorem}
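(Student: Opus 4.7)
The plan is to follow the proof template of Theorem~\ref{thm:overparameterized_general} but sharpen every bound by exploiting the exact quadratic structure of the linear regression objective. Because each $F_{k}(\vw) = \frac{1}{2}\|\vw - \vw^{*}\|_{H^{k}}^{2}$ with $H^{k} = \frac{1}{n_{k}}\sum_{j}\vx_{k}^{j}(\vx_{k}^{j})^{T}$ and $F(\vw^{*})\equiv 0$, the stochastic gradient is exactly linear in the residual: $\nabla F_{k}(\vw_{t}^{k},\xi_{t}^{k}) = \tilde{H}_{t}^{k}(\vw_{t}^{k}-\vw^{*})$ with $\tilde{H}_{t}^{k} = \xi_{t}^{k}(\xi_{t}^{k})^{T}$. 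Hence the per-device SGD iteration becomes the linear recursion $\vw_{t+1}^{k}-\vw^{*} = (I - \alpha\tilde{H}_{t}^{k})(\vw_{t}^{k}-\vw^{*})$, and after a communication round of length $E$ in which all devices start from a common iterate $\bar{\vw}_{\tau}$, one obtains the closed form $\bar{\vw}_{\tau+E}-\vw^{*} = \bigl(\sum_{k} p_{k} M_{\tau}^{k}\bigr)(\bar{\vw}_{\tau}-\vw^{*})$, where $M_{\tau}^{k} := \prod_{s=0}^{E-1}(I - \alpha\tilde{H}_{\tau+s}^{k})$ are independent across $k$.

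First I would establish a per-step matrix contraction for each local iterate. Using the rank-one identity $(\tilde{H}_{t}^{k})^{2} = \|\xi_{t}^{k}\|^{2}\tilde{H}_{t}^{k}$ together with the definition of $l$, namely $\EE[\|\xi_{t}^{k}\|^{2}\tilde{H}_{t}^{k}] \preceq l H^{k}$, one has $\EE[(I - \alpha\tilde{H}_{t}^{k})^{2}] \preceq I - \alpha(2 - \alpha l)H^{k}$. I would then square the averaged recursion and use cross-device independence to split
\begin{align*}
\EE\Bigl[\bigl(\textstyle\sum_{k} p_{k} M_{\tau}^{k}\bigr)^{T}\bigl(\textstyle\sum_{k'} p_{k'} M_{\tau}^{k'}\bigr)\Bigr] \;=\; \sum_{k} p_{k}^{2}\,\EE[(M_{\tau}^{k})^{T} M_{\tau}^{k}] \;+\; \sum_{k\neq k'} p_{k}p_{k'}\,(\EE M_{\tau}^{k})^{T}\,\EE M_{\tau}^{k'}.
\end{align*}
Iterating the per-step bound $E$ times, the diagonal variance term contributes a coefficient governed by $l\nu_{\max}/N$, while the off-diagonal deterministic term reduces to products of $(I - \alpha H^{k})^{E}$ and, crucially for the quadratic case, can be controlled using only the strong-convexity constant $\mu$ on the range of $H$ rather than the smoothness constant $L$. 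Balancing the two contributions yields the step size $\overline{\alpha} = \Theta\bigl(\frac{1}{E}\cdot\frac{N}{l\nu_{\max} + \mu(N-\nu_{\min})}\bigr)$ and a per-iteration contraction factor of $1 - \overline{\alpha}\mu$ on $\EE\|\bar{\vw}_{t}-\vw^{*}\|^{2}$ (restricted to the range of $H$, on which the residual lives throughout).

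Iterating the contraction over $T$ steps and converting to function value via $F(\bar{\vw}_{T}) \leq \frac{L}{2}\|\bar{\vw}_{T}-\vw^{*}\|^{2}$ then yields the claimed rate $\mathcal{O}\bigl(L\exp(-NT/(E(\nu_{\max}\kappa_{1} + (N-\nu_{\min}))))\cdot\|\vw_{0}-\vw^{*}\|^{2}\bigr)$. The main obstacle is extracting the sharper $\mu$-dependence (rather than $L$-dependence) on the off-diagonal $(N-\nu_{\min})$ term: a naive bound $H^{k}H^{k'} \preceq L H^{k}$ only recovers the weaker rate of Theorem~\ref{thm:overparameterized_general}, and removing the extraneous factor of $\kappa = L/\mu$ requires a careful spectral argument that exploits $H^{k}$ being the \emph{exact} local Hessian (no smoothness slack) and working on the range of $H$ where strong convexity is tight. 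A secondary technical point is that the $H^{k}$ generally do not commute, so expanding each $M_{\tau}^{k}$ over the $E$ local steps requires repeated applications of the single-step matrix inequality $\EE[(I - \alpha\tilde{H}_{t}^{k})^{2}] \preceq I - \alpha(2-\alpha l)H^{k}$ conditional on the preceding iterates, rather than a direct scalar contraction.
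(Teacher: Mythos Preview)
Your closed-form $E$-step product decomposition is elegant and indeed correct as written, but you leave the crucial step unproved: you flag the ``main obstacle'' of extracting the $\mu$-dependence on the off-diagonal term and then only assert that ``a careful spectral argument'' would handle it. That is precisely the heart of the theorem, and your proposal does not supply it. In fact, for $E>1$ the off-diagonal contribution involves $\sum_{k\neq k'} p_k p_{k'} (I-\alpha H^{k})^{E}(I-\alpha H^{k'})^{E}$ with non-commuting $H^{k}$, and it is not clear how to reduce this to a polynomial in the \emph{global} Hessian $H$ (which is what you need for the eigenvalue analysis to produce $\lambda_{\min}=\mu$ rather than $\lambda_{\max}=L$). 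A naive bound via $\|(I-\alpha H^k)^E v\|$ fails because the residual $v$ need not lie in the range of each individual $H^k$, so $(I-\alpha H^k)$ does not contract on all of it.

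The paper takes a quite different route that sidesteps this difficulty entirely. Rather than analyzing the $E$-step product, it works step-by-step with the virtual average $\overline{\vw}_t$ and splits $\vg_t$ into the ``centralized'' part $\sum_k p_k \tilde{H}_t^k(\overline{\vw}_t-\vw^\ast)$ and a deviation term. The key lemma is the \emph{single-step} operator bound
\[
\mathbb{E}_t\Bigl[\Bigl(\sum_k p_k\tilde{H}_t^k\Bigr)^2\Bigr]\ \preceq\ \frac{\nu_{\max}l}{N}\,H\;+\;\frac{N-\nu_{\min}}{N}\,H^2,
\]
so one must control the eigenvalues of $I-\alpha H+2\alpha^2\bigl(\tfrac{\nu_{\max}l}{N}H+\tfrac{N-\nu_{\min}}{N}H^2\bigr)$. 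Because this is a scalar quadratic $1-\alpha\lambda+2\alpha^2(\tfrac{\nu_{\max}l}{N}\lambda+\tfrac{N-\nu_{\min}}{N}\lambda^2)$ in each eigenvalue $\lambda$, a min-max over $\alpha$ and $\lambda\in[\mu,L]$ shows (for $N=\mathcal{O}(\kappa_1)$) that the worst $\lambda$ is $\lambda_{\min}=\mu$, giving the optimal $\alpha=\Theta\bigl(\tfrac{N}{l\nu_{\max}+\mu(N-\nu_{\min})}\bigr)$ and the desired contraction. The passage from $E=1$ to general $E$ is then handled not by composing operators, but by a recursive bound on the deviation $\sum_k p_k\|\overline{\vw}_t-\vw_t^k\|^2$ in terms of $F(\overline{\vw}_{t-1})$ and the previous deviation, together with step sizes that shrink by a factor of order $1/\tau$ across the $\tau$ local steps since the last synchronization. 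This recursive deviation control is inherited from the general overparameterized proof (Theorem~\ref{thm:overparameterized_general}) and is what absorbs the $1/E$ factor in $\overline{\alpha}$; the $\mu$-improvement comes entirely from the single-step $H^2$ term, which your product formulation never isolates.
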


When $N=\mathcal{O}(\kappa_{1})$, the convergence rate is $\mathcal{O}((1-\frac{N}{E\kappa_{1}})^{T})=\mathcal{O}(\exp(-\frac{NT}{E\kappa_{1}}))$,
which exhibits linear speedup in the number of workers, as well as
a $1/\kappa_{1}$ dependence on the condition number $\kappa_{1}$.

\begin{figure*}[ht!]
\centering
{\small
\begin{tabular}{ccc}
\hspace{-2em}\includegraphics[width=0.33\textwidth]{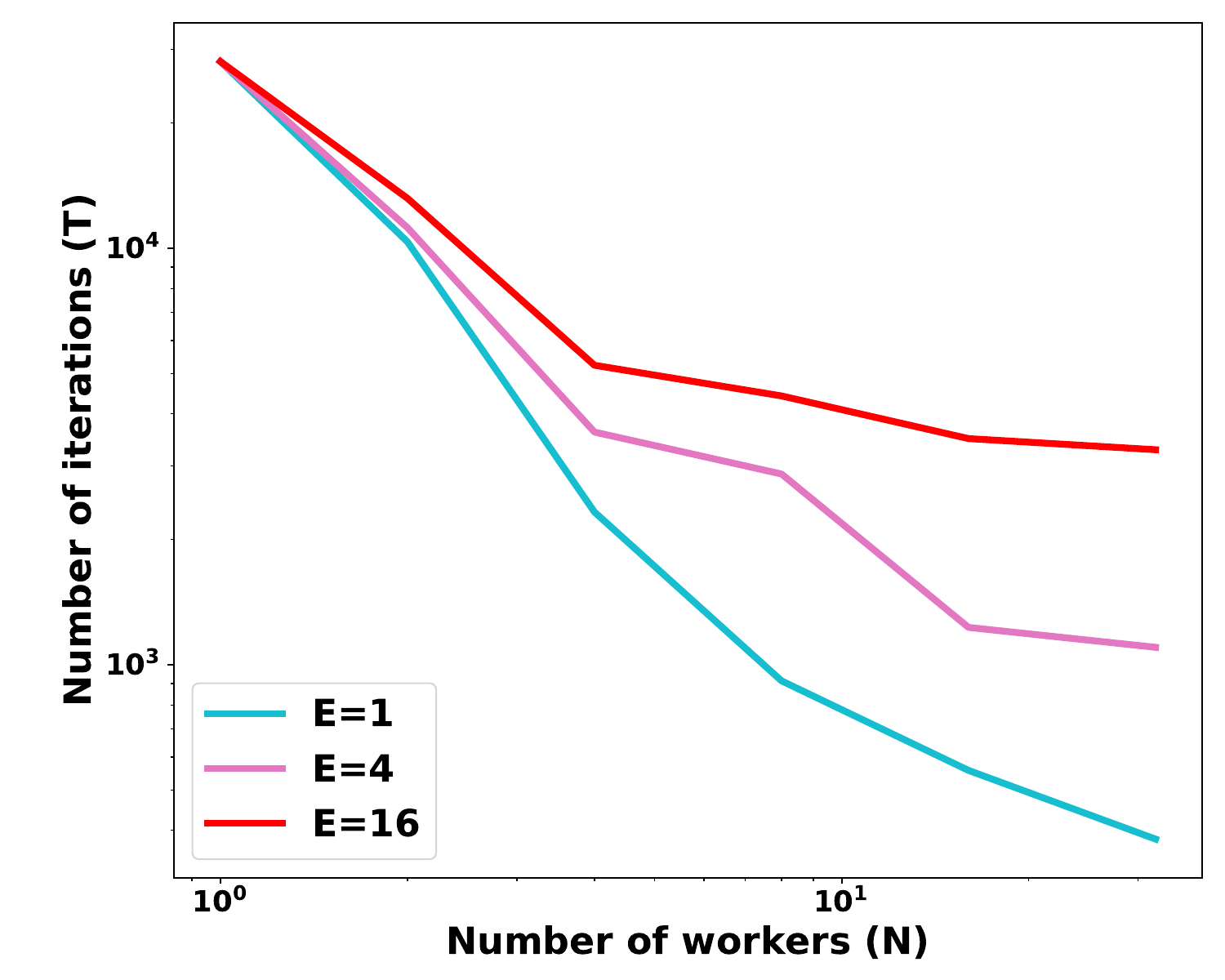} &
\includegraphics[width=0.33\textwidth]{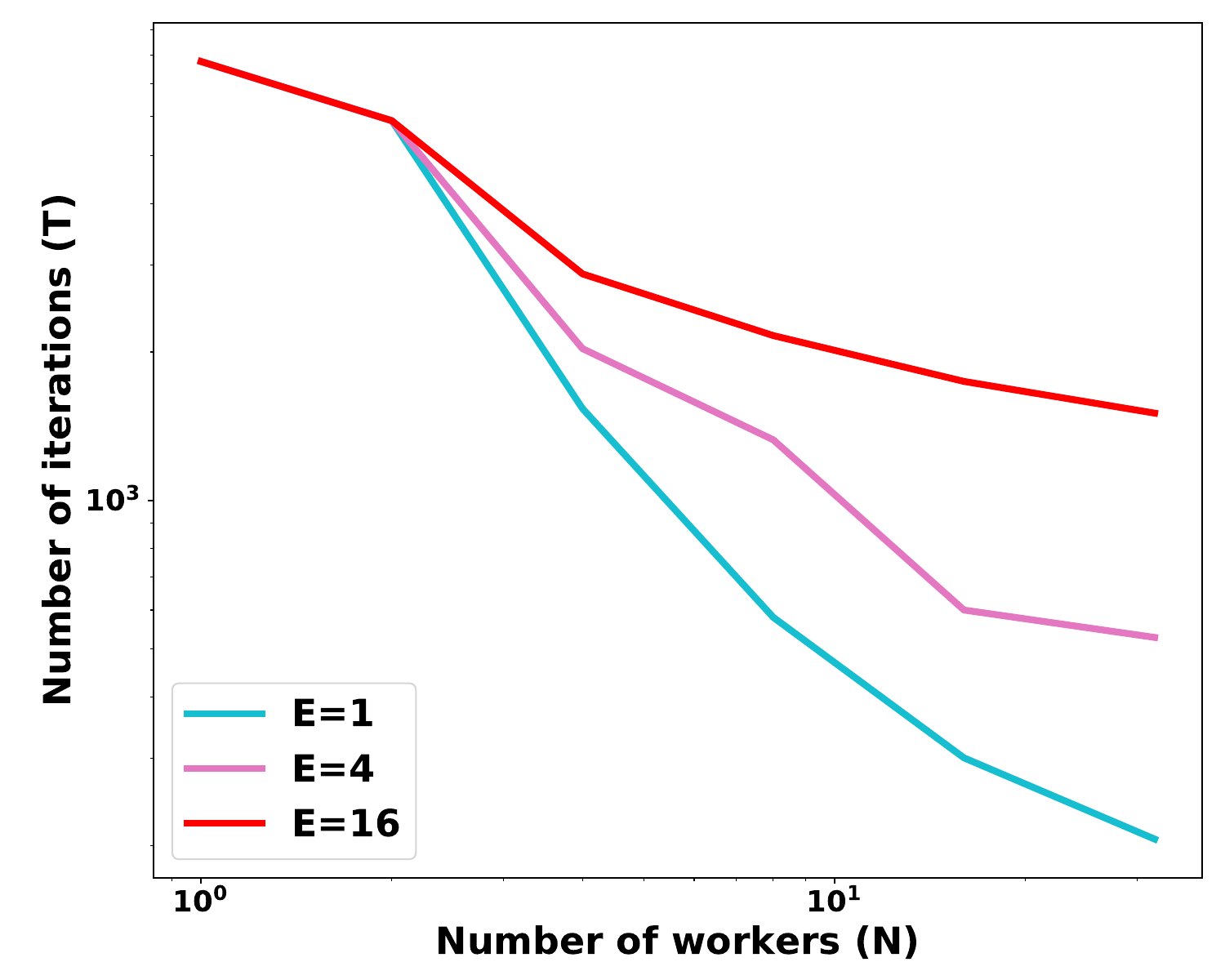} &
\includegraphics[width=0.33\textwidth]{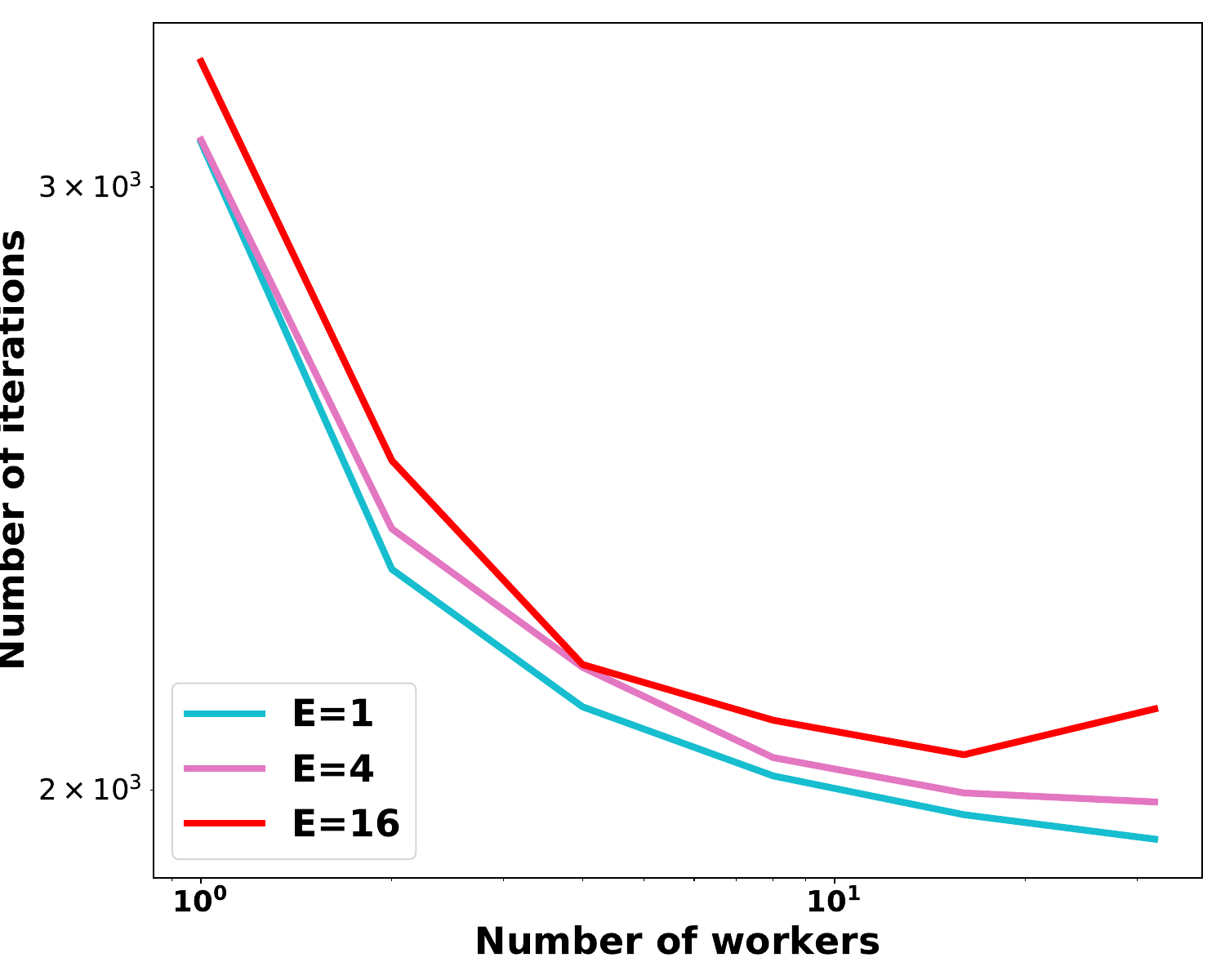}\\
\hspace{-2em}\includegraphics[width=0.33\textwidth]{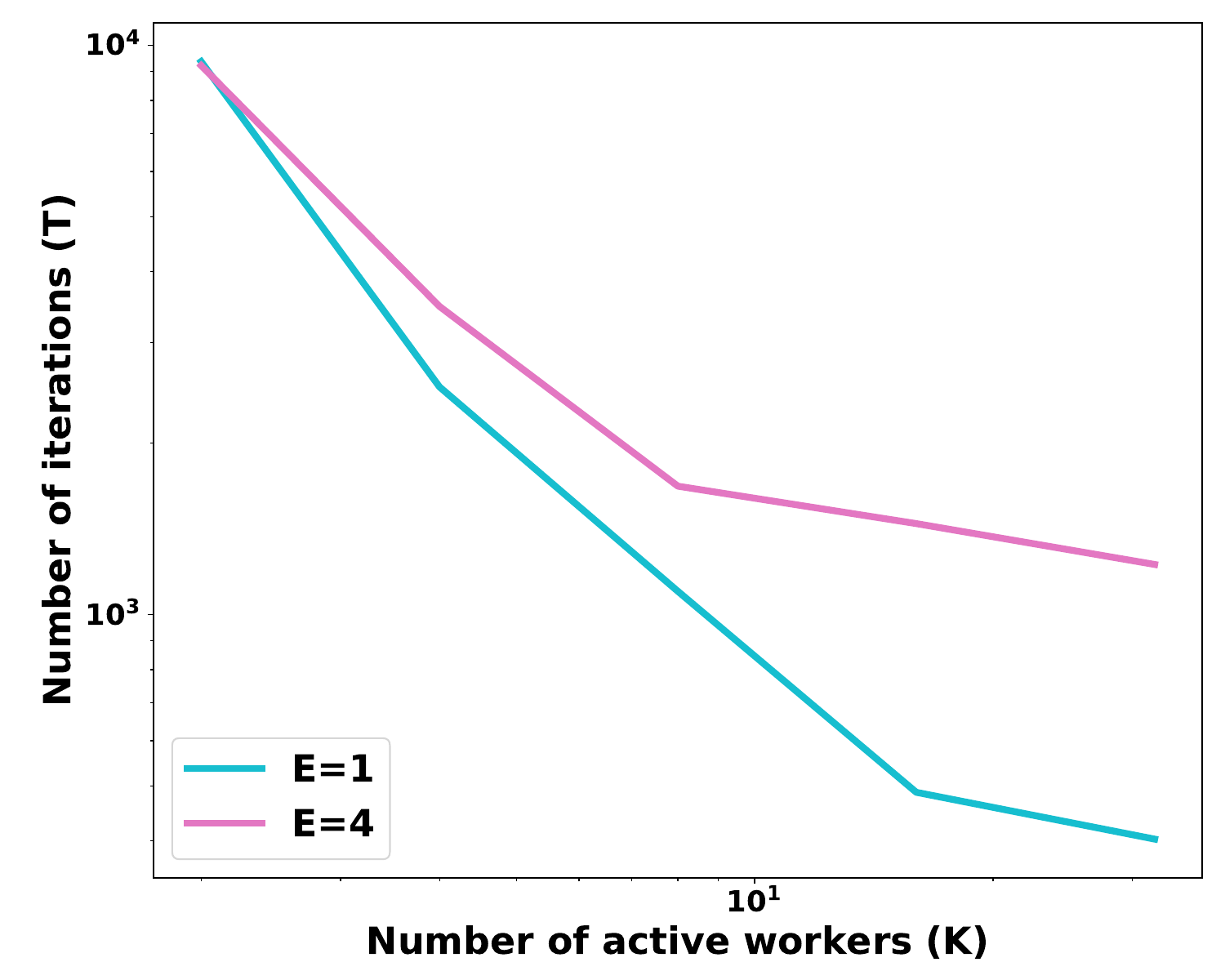} &
\includegraphics[width=0.33\textwidth]{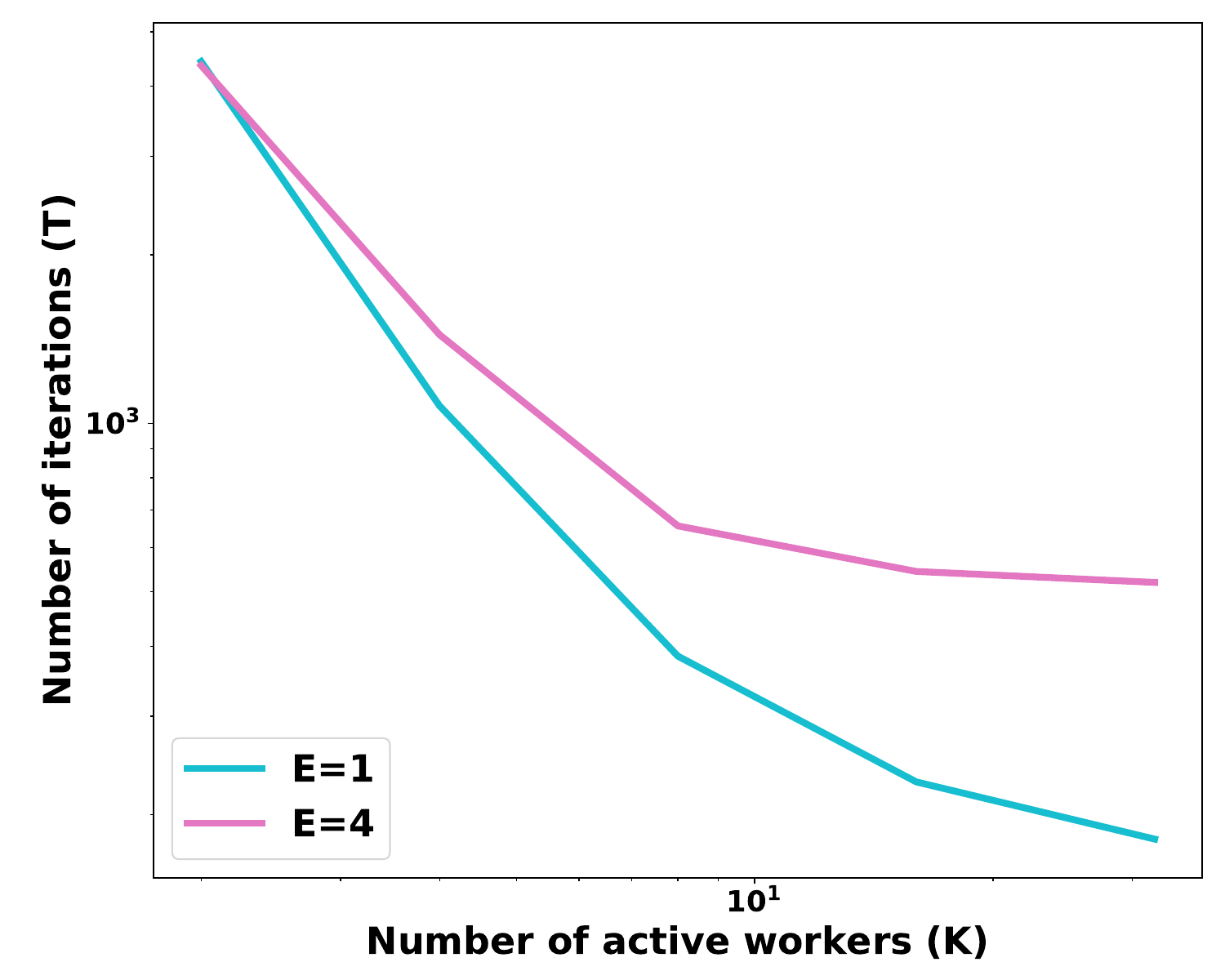} &
\includegraphics[width=0.33\textwidth]{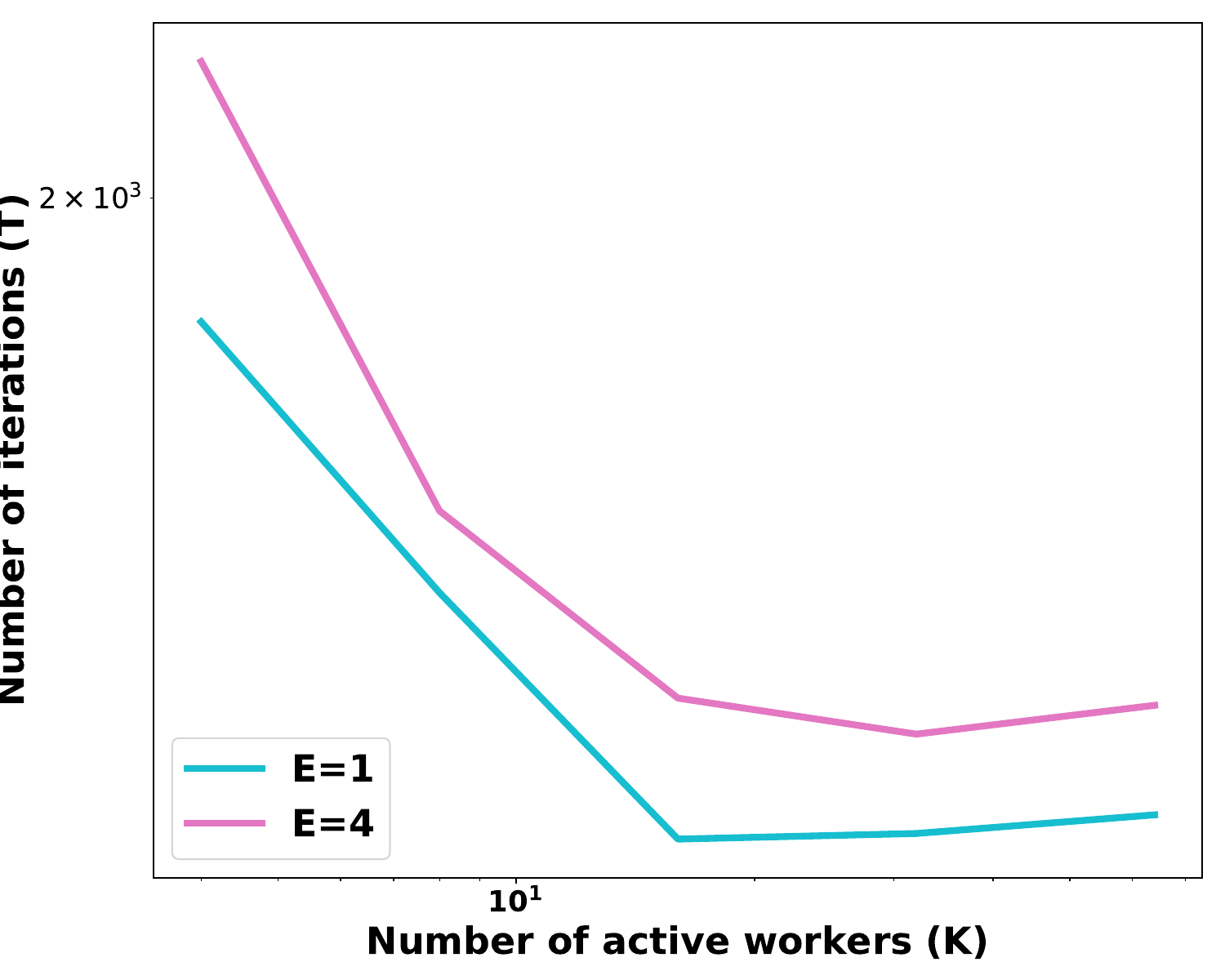}\\
\hspace{-2em}\includegraphics[width=0.33\textwidth]{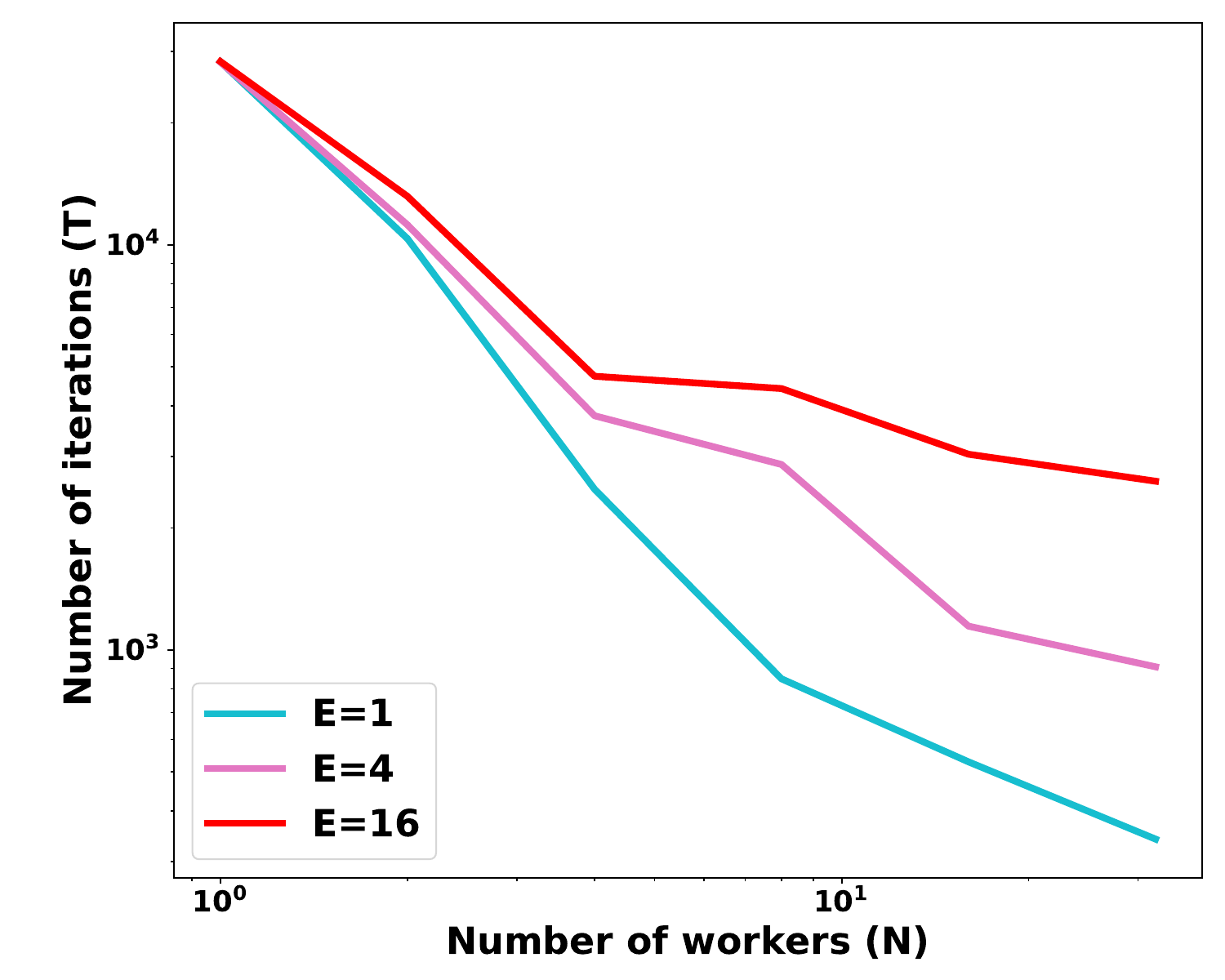} & 
\includegraphics[width=0.33\textwidth]{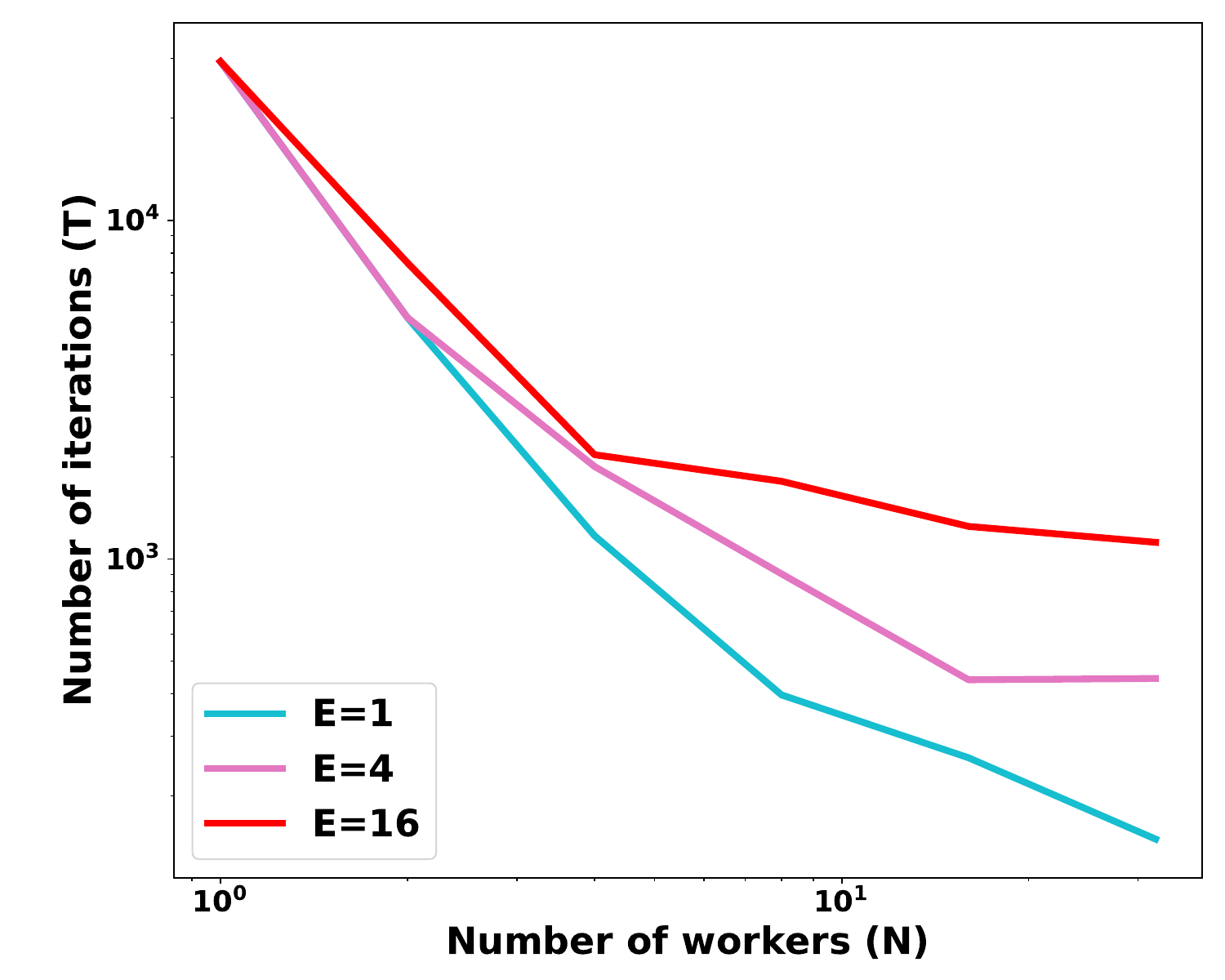}
& 
\includegraphics[width=0.33\textwidth]{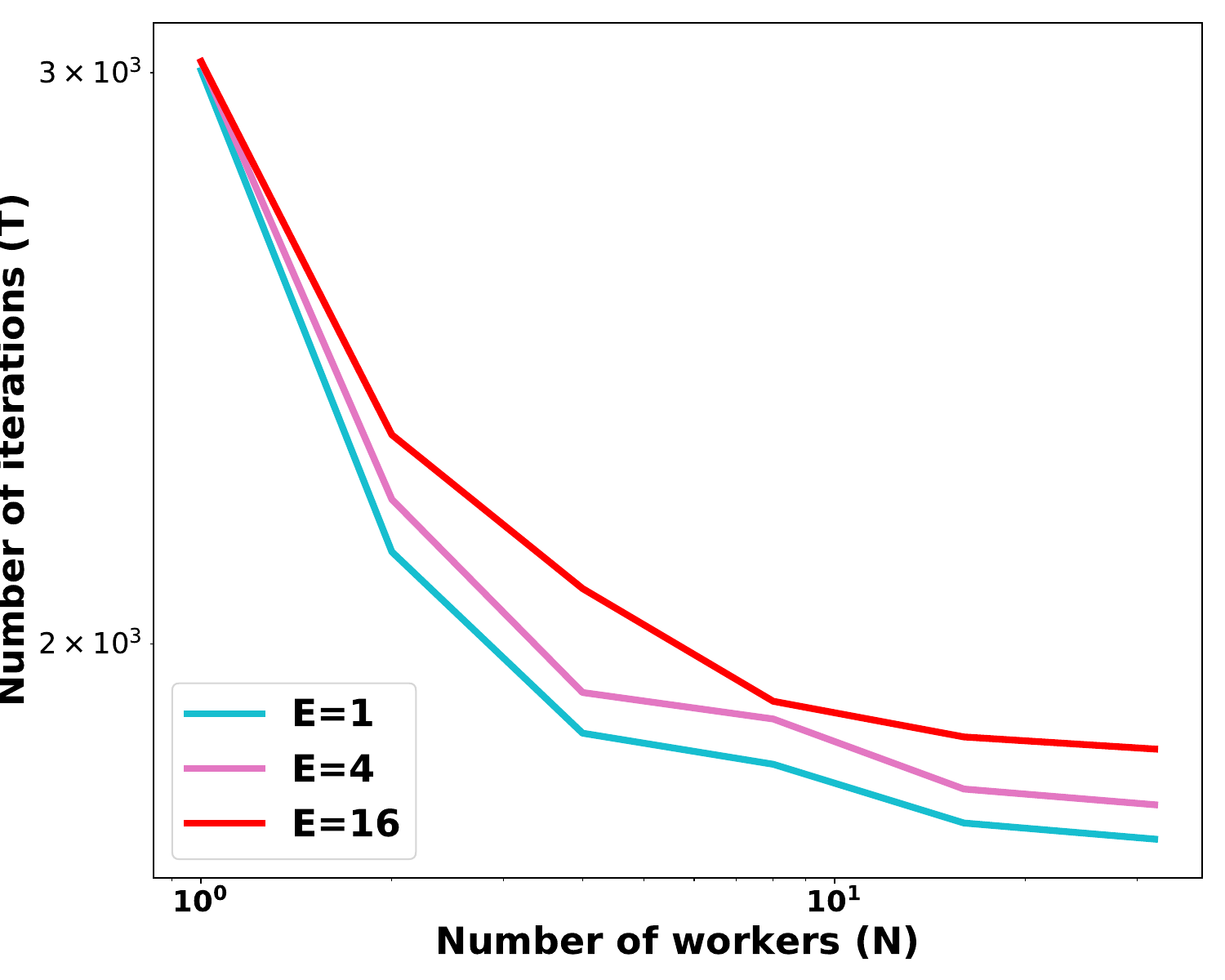}\\
(a) Strongly convex objective & (b) Convex smooth objective & (c) Linear regression
	\end{tabular}}
	\vspace{-1em}
\caption{The linear speedup of FedAvg in full participation, partial participation, and the linear speedup of Nesterov accelerated FedAvg, respectively. Both the x-axis and y-axis are logarithmic-scale.
}
\vspace{-1em}
\label{fig:speedup}
\end{figure*}

\section{Numerical Experiments}
\label{sec:exp}

In this section, we empirically examine the linear speedup convergence of FedAvg and Nesterov accelerated FedAvg in various settings, including strongly convex function, convex smooth function, and overparameterized objectives. 

\textbf{Setup.} Following the experimental setting in~\cite{stich2018local}, we
conduct experiments on both synthetic datasets and real-world dataset
w8a~\citep{platt1998fast} $(d=300, n=49749)$. See \citet{lai2022fedscale} for a comprehensive FL bench-marking suite containing more datasets.

We consider the distributed
objectives $F(\vw) = \sum_{k=1}^N p_kF_k(\vw)$, and the objective function on the
$k$-th local device includes three cases: 1) \textbf{Strongly convex
objective}: the regularized binary logistic regression problem, $F_k(\vw) =
\frac{1}{N_k} \sum_{i=1}^{N_k} \log( 1+ \exp(-y_i^k \vw^T\vx_i^k) + \frac{\lambda}{2}
\|\vw\|^2$. The regularization parameter is set to $\lambda = 1/n \approx
2e-5$. 2) \textbf{Convex smooth objective}: the binary logistic regression
problem without regularization. 3) \textbf{Overparameterized setting}:
the linear regression problem without adding noise to the label, $F_k(\vw) =
\frac{1}{N_k} \sum_{i=1}^{N_k} (\vw^T\vx_i^k + b  - y_i^k)^2$.  

\textbf{Linear speedup of FedAvg and Nesterov accelerated FedAvg.} To verify the linear speedup convergence as shown in Theorems~\ref{thm:SGD_scvx}~\ref{thm:SGD_cvx}~\ref{thm:nesterov_scvx}~\ref{thm:Nesterov_cvx}, we evaluate the number of iterations needed to reach
$\epsilon$-accuracy in three objectives. We initialize all runs with $\vw_0 = \textbf{0}_d$ and measure the number of iterations to reach the target accuracy $\epsilon$. For each configuration $(E, K)$, we extensively search the learning rate from $\min(\eta_0, \frac{nc}{1 + t})$, where
$\eta_0 \in \{0.1, 0.12, 1, 32 \}$ according to different problems and $c$ can
take the values $c = 2^i \ \forall i \in \ZZ$. As the results shown in Figure~\ref{fig:speedup},
the number of iterations decreases as the number of (active) workers increasing, which is consistent for FedAvg and Nesterov accelerated FedAvg across all scenarios.
For additional experiments on the impact of $E$, detailed experimental setup, and hyperparameter setting, please refer to the Appendix Section~\ref{sec:expsupp}.

 \textbf{Partial participation.} We verify the linear speedup in the
 partial participation settings, where we set $50\%$ of devices
 are active. As the results are shown in Figure~\ref{fig:speedup} (2nd row), FedAvg
 enjoys linear speedup in various settings even with partial
 device participation.

 \textbf{Nesterov accelerated FedAvg.} In the third row of Figure~\ref{fig:speedup}, 
 we report the last iteration to converge to $\epsilon$-accuracy of Nesterov accelerated FedAvg. The empirical observations align with Theorems~\ref{thm:nesterov_scvx}~\ref{thm:Nesterov_cvx} that the accelerated version of FedAvg can also achieve the linear speedup
 w.r.t the number of workers.
 
 \section{Concluding Remarks}
 
 In this paper, we provided a unified linear speedup analysis of the convergence of stochastic FedAvg and Nesterov accelerated FedAvg in convex smooth, strongly convex smooth, and overparameterized regimes in the presence of both system and data heterogeneity, while also highlighting the distinct communication efficiency differences between full and partial participation of local devices. It is well known that Nesterov and other momentum variants fail to accelerate over SGD in both the overparameterized
and convex settings.
Thus in general one cannot hope to obtain theoretical acceleration results for
the FedAvg algorithm with stochastic Nesterov updates, unless objectives are quadratic \citep{even2021continuized}. We refer to recent works such as \cite{yuan2020federated} for new federated learning algorithms that achieve linear speedup with better communication complexities in the full participation setting only. 

In this paper,  participating devices are assumed to be non-adversarial. However, in real-world applications, there may be malicious devices that try to poison the learning process by sending adversarial updates to the central server. An interesting direction is to investigate the robustness of Federated Learning algorithms to such attacks \citep{bhagoji2019analyzing,zhang2023delving}.

Lastly, we remark that the desirable linear speedup property has been studied in other federated versions of classical learning environments, such as federated reinforcement learning \citep{khodadadian2022federated}, and in entirely new FL regimes, such as the so-called anarchic federated learning \citep{yang2022anarchic}, where local devices have greater freedom in choosing when to participate in FL and the number of local updates.


\section*{Acknowledgments}
The first two authors ZQ and KL contributed equally to this work. This research is supported in part by a Stanford Interdisciplinary Graduate Fellowship (ZQ), NSF CMMI-2030411 (ZL), NSF CNS-2219488 (ZL), NSF IIS-1749940 (JZ), ONR N00014-20-1-2382 (JZ), NSF CCF-2312205 (ZZ), and NSF CCF-2106508 (ZZ). 

\appendix
\section{Additional Notations and Bounds for Sampling Schemes}
\label{sec:app:notations}
In this section, we introduce additional notations that are used throughout
the proofs. Following common practice, e.g.~\cite{stich2018local,li2019convergence}, we define two virtual sequences $\overline{\mathbf{v}}_{t}=\sum_{k=1}^{N} p_{k} \mathbf{v}_{t}^{k}$ and $\overline{\mathbf{w}}_{t}=\sum_{k=1}^{N} p_{k} \mathbf{w}_{t}^{k}$, where we recall the FedAvg updates from \eqref{eq:fedavg updates}:
\begin{align*}
\vv_{t+1}^{k} & =\vw_{t}^{k}-\alpha_{t}\vg_{t,k}, \hspace{1em}
\mathbf{w}_{t+1}^{k} =\left\{
\begin{array}{ll}
\mathbf{v}_{t+1}^{k} & \text { if } t+1 \notin \mathcal{I}_{E}, \\ 
\sum_{k \in \cS_{t+1}} q_k \mathbf{v}_{t+1}^{k} & \text { if } t+1 \in \mathcal{I}_{E}.
\end{array}\right.
\end{align*}
The following observations apply to FedAvg updates, while Nesterov accelerated FedAvg requires modifications. For full device participation or partial participation with $t \notin \cI_E$, note that
$\ov{v}_t = \ov{w}_t =\sum_{k=1}^{N} p_{k} \mathbf{v}_{t}^{k}$. For partial participation with $t \in \cI_E$, $\ov{w}_t \neq \ov{v}_t$ since $\ov{v}_t=\sum_{k=1}^{N} p_{k} \mathbf{v}_{t}^{k}$ while $\ov{w}_t=\sum_{k\in \cS_t}q_k\mathbf{w}_{t}^{k}$. However, we can
use unbiased sampling strategies such that $ \EE_{\cS_t} \ov{w}_t = \ov{v}_t$.
Note that $\overline{\mathbf{v}}_{t+1}$ is one-step SGD from $\overline{\mathbf{w}}_{t}$. 
\begin{align}
\overline{\mathbf{v}}_{t+1}=\overline{\mathbf{w}}_{t}-\alpha_{t} \mathbf{g}_{t},	\label{eq:vbar}
\end{align}
where $\vg_{t} = \sum_{k=1}^{N} p_{k} \vg_{t,k} $ is the one-step stochastic gradient averaged over all devices. 
\begin{align*}
\vg_{t,k} = \nabla F_{k}\left(\mathbf{w}_{t}^{k},\mathbf{\xi}_{t}^{k} \right),   
\end{align*}
Similarly, we denote the expected one-step gradient $\ov{g}_{t}= \EE_{\xi_t}[\vg_t] = \sum_{k=1}^{N} p_{k} \EE_{\mathbf{\xi}_{t}^{k}} \vg_{t,k}$, where
\begin{align}
\EE_{\mathbf{\xi}_{t}^{k}} \vg_{t,k}  = \nabla F_{k}\left(\mathbf{w}_{t}^{k}\right), 
\label{eq:gradient}
\end{align}
and $\mathbf{\xi}_t = \{\mathbf{\xi}_t^k\}_{k=1}^N$ denotes random samples at all devices at time step $t$. \\
Since in this work we also consider the case of partial participation, the sampling strategy to approximate the system heterogeneity can also affect the convergence. Here we
follow the prior works~\cite{li2019convergence} and~\cite{li2018federated} and consider two types of sampling
schemes that guarantee $ \EE_{\cS_t} \ov{w}_t = \ov{v}_t$. 
The sampling scheme I establishes $\cS_{t+1}$ by \emph{i.i.d.} sampling the devices according to probabilities $p_k$ with replacement, and setting $q_k=\frac{1}{K}$.
In this case the upper bound of expected square norm of $\ov{w}_{t+1} - \ov{v}_{t+1}$ is given by~\cite[Lemma 5]{li2019convergence}:
\begin{align}
\EE_{\cS_{t+1}}\left\|\ov{w}_{t+1} - \ov{v}_{t+1}\right\|^2	\leq \frac{4}{K} \alpha_t^2 E^2G^2.
\end{align}
The sampling scheme II establishes $\cS_{t+1}$ by uniformly sampling all devices without
replacement and setting $q_k=p_k\frac{N}{K}$, in which case we have
\begin{align}
\EE_{\cS_{t+1}}\left\|\ov{w}_{t+1} - \ov{v}_{t+1}\right\|^2	\leq \frac{4(N - K)}{K(N-1)} \alpha_t^2 E^2G^2.
\end{align}
We summarize these upper bounds as follows: 
\begin{align}
	\EE_{\cS_{t+1}}\left\|\ov{w}_{t+1} - \ov{v}_{t+1}\right\|^2 \leq  \frac{4}{K} \alpha_t^2 E^2G^2.
	\label{eq:partialsample}
\end{align}
and this bound will be used in the convergence proof of the partial participation result.

\section{Comparison of Convergence Rates with Related Works}
\label{sec:app:comparison}
In this section, we compare our convergence rate with the best-known results in the literature (see Table~\ref{tb:convergenceratev3}). 
In~\cite{haddadpour2019convergence}, the authors provide $\cO(1/NT)$ convergence rate of non-convex problems under Polyak-Łojasiewicz (PL) condition, which
means their results can directly apply to the strongly convex problems. However, their assumption is based on bounded gradient diversity, defined as follows: 
\begin{align*}
	\Lambda(\vw) = \frac{\sum_{k}p_{k}\|\nabla F_{k}(\mathbf{w})\|_{2}^{2}}{\|\sum_{k}p_{k}\nabla F_{k}(\mathbf{w})\|_{2}^{2}} \leq B
\end{align*} 
This is a more restrictive assumption comparing to assuming bounded gradient under the case of target accuracy $\epsilon \rightarrow 0$ and PL condition.
To see this, consider the gradient diversity at the global optimal $\vw^*$, i.e., $\Lambda(\vw^*) = \frac{\sum_{k}p_{k}\|\nabla F_{k}(\mathbf{w})\|_{2}^{2}}{\|\sum_{k}p_{k}\nabla F_{k}(\mathbf{w})\|_{2}^{2}}$. For $\Lambda(\vw^*)$ to be bounded, it requires $\|\nabla F_{k}(\mathbf{w}^*)\|_{2}^{2} = 0$, $\forall \ k$. This indicates 
$\vw^*$ is also the minimizer of each local objective, which contradicts to the practical setting of heterogeneous data. Therefore, their bound 
is not effective for arbitrary small $\epsilon$-accuracy under general heterogeneous data while our convergence results still hold in this case.

\begin{table*}[h!]
{
\tiny
\centering
\resizebox{\textwidth}{!}{\begin{tabular}{|c|c|c|c|c|c|c|c|}
\hline Reference                 & Convergence rate    & E                           			& NonIID & Participation & Extra Assumptions  		  & Setting  \\ \hline\hline 
FedAvg\cite{li2019convergence}         & $\cO(\frac{E^2}{T})$& $\cO(1)$                    		& \cmark & Partial       & Bounded gradient   		  & Strongly convex  \\ \hline
FedAvg\cite{haddadpour2019convergence} & $\cO(\frac{1}{KT})$ & $\cO(K^{-1/3}T^{2/3})^{\dagger}$ & \cmark$^{\ddagger\ddagger}$ & Partial       & Bounded gradient diversity   & Strongly convex$^{\mathsection}$  \\ \hline
FedAvg\cite{koloskova2020unified} & $\cO(\frac{1}{NT})$ & $\cO(N^{-1/2}T^{1/2})$     	& \cmark & Full       & Bounded gradient   & Strongly convex  \\ \hline
FedAvg\cite{karimireddy2019scaffold} & $\cO(\frac{1}{KT})$ & $\cO(N^{-1/2}T^{1/2})$   	& \cmark & Partial       & Bounded gradient dissimilarity   & Strongly convex  \\ \hline
FedAvg/N-FedAvg (our analysis)                 & $\cO(\frac{1}{KT})$ & $\cO(N^{-1/2}T^{1/2})^{\ddagger}$ & \cmark	 & Partial       & Bounded gradient             & Strongly convex  \\\hline\hline
FedAvg\cite{khaled2020tighter}  & $\cO(\frac{1}{\sqrt{NT}})$ & $\cO(N^{-3/2}T^{1/2})$     	    & \cmark& Full        & Bounded gradient             & Convex  \\\hline
FedAvg\cite{koloskova2020unified} & $\cO(\frac{1}{\sqrt{NT}})$ & $\cO(N^{-3/4}T^{1/4})$    & \cmark & Full       & Bounded gradient             &  Convex  \\ \hline
FedAvg\cite{karimireddy2019scaffold} & $\cO(\frac{1}{\sqrt{KT}})$ & $\cO(N^{-3/4}T^{1/4})$  & \cmark & Partial       & Bounded gradient dissimilarity   &  Convex  \\ \hline
FedAvg/N-FedAvg (our analysis)      & $\cO\left(\frac{1}{\sqrt{KT}}\right)$ & $\cO(N^{-3/4}T^{1/4})^{\ddagger}$& \cmark			& Partial     & Bounded gradient            &  Convex   \\ \hline
FedAvg (our analysis) & $\cO\left(\exp(-\frac{NT}{E\kappa_1})\right)$ & $ \cO(T^{\beta})$                   & \cmark&  Partial     & Bounded gradient    & Overparameterized \\
\hline
\end{tabular}}
}
\caption{A high-level summary of the convergence results in this paper compared to prior state-of-the-art FL algorithms. This table only highlights the
dependence on $T$ (number of iterations), $E$ (the maximal number of local steps), $N$ (the total number of devices), and $K\leq N$ the number of participated devices. 
$\kappa$ is the condition number of the system and $\beta \in (0,1)$. We denote Nesterov accelerated FedAvg as N-FedAvg in this table.}
{\raggedright 
         $^{\dagger}$ This $E$ is obtained under i.i.d. setting. \\
         $^{\ddagger}$ This $E$ is obtained under full participation setting. \\ 
         $^{\mathsection}$ In~\cite{haddadpour2019convergence}, the convergence rate is for non-convex smooth problems with PL condition, which also applies to strongly convex problems. Therefore, we compare it with our strongly convex results here.\\
         $^{\ddagger\ddagger}$ The bounded gradient diversity assumption is not applicable for general heterogeneous data when converging to arbitrarily small $\epsilon$-accuracy (see discussions in Sec~\ref{sec:app:comparison}).\\
           \par}
\label{tb:convergenceratev3}
\end{table*}

\section{A High-level Summary of Our FedAvg Analysis}
\label{sec:app:sum}

To facilitate the understanding of our analysis and highlight the improvement of our work comparing to prior arts, we summarize the general steps 
used in the proofs across the various settings. In this section,
we take the strongly convex case as an example to illustrate our analysis. The corresponding proof for general convex functions follows the 
same framework.  

\begin{algorithm}[h!]\small
\begin{algorithmic}[1]
\STATE \textbf{Server input:} initial model $\vw_0$, initial step size $\alpha_0$, local steps $E$. 
\STATE \textbf{Client input:} 
\FOR {each round $r = 0, 1, ..., R$, where $r = t*E$} 
\STATE  Sample clients $\cS_t \subseteq \{1,...,N\}$
\STATE Broadcast $\vw$ to all clients $k \in \cS_t$
\FOR {each client $k \subseteq \cS_t$}
\STATE initialize local model $\vw_t^k = \vw$
\FOR {$t = r * E + 1, \dots, (r+1)*E$}
\STATE $\vw_{t+1}^{k}  =\vw_{t}^{k}-\alpha_{t}\vg_{t,k}$
\ENDFOR
\ENDFOR
\STATE Average the local models at server end: $\ov{w}_t = \sum_{k\in \cS_t} \vw_t^k$.
\ENDFOR 
\end{algorithmic}
\caption{\textsc{FedAvg}: Federated Averaging}
\label{alg:fedavg}
\end{algorithm}

\textbf{One step progress bound} \\ 
This step establishes the progress of distance ($\|\ov{w}_{t}-\vw^{\ast}\|^{2}$) to optimal solution after one step SGD update (see line 9, Alg~\ref{alg:fedavg}), as the following equation shows:
\begin{align*}
	\mathbb{E}\|\ov{w}_{t+1}-\vw^{\ast}\|^{2} & \leq \cO(\eta_t\mathbb{E}\|\ov{w}_{t}-\vw^{\ast}\|^{2} + \alpha_t^2 \sigma^2/N + \alpha_t^3E^2G^2).
\end{align*}
The above bound consists of three main ingredients, the distance to optima
in previous step (with $\eta_t \in (0, 1)$ to obtained a contraction bound), 
the variance of stochastic gradients in local clients (second term), the variance
across different clients (third term). 
Notice that the third term in this bound is the primary source of improvement in the rate.  Comparing to the bound in~\cite{li2019convergence}, we improve 
the third term from $\cO(\alpha_t^2E^2G^2)$ to $\cO(\alpha_t^2E^2G^2)$, which enables the linear speedup in the convergence rate.

\textbf{Iterating the one-step bound}\\
This step uses the \textit{one step progress bound} iteratively to 
connect the the current distance to optimal solution with the initial distance ($\|\ov{w}_{0}-\vw^{\ast}\|^{2}$), as follows:
\begin{align*}
	\mathbb{E}\|\ov{w}_{t+1}-\vw^{\ast}\|^{2} & \leq \cO(\mathbb{E}\|\ov{w}_{0}-\vw^{\ast}\|^{2} \frac{1}{T}).
\end{align*}
Then we can use the distance to optima to upper bound the optimality gap ($F(\vw_t) - F^* \leq \cO(1/T)$), as follows:
\begin{align*}
	\mathbb{E}(F(\ov{w}_{t}))-F^{\ast} & \leq\cO(\mathbb{E}\|\ov{w}_{t}-\vw^{\ast}\|^{2}). 
\end{align*}
The convergence rate of the optimality gap is equally obtained as the convergence rate of the distance to optima.

\textbf{From full participation to partial participation}\\
There are three sources of variances that affect the convergence rate.
The first two sources come from the variances of within local clients and
across clients (second and third term in one step progress bound). 
The partial participation, which involves a sampling procedure, is the third
source of variance. Therefore, comparing to the rate in full participation, this will add another term of variance into the convergence rate, where we follow a similar derivation as in~\cite{li2019convergence}.

\section{Technical Lemmas}

To facilitate reading, we first summarize some basic properties of $L$-smooth and $\mu$-strongly
convex functions, found in e.g.~\cite{rockafellar1970convex}, which are used in various steps of proofs in the appendix. 
\begin{lemma}
	Let $F$ be a convex $L$-smooth function. Then we have the following
	inequalities:
	
	1. Quadratic upper bound: $0\leq F(\vw)-F(\vw')-\langle\nabla F(\vw'),\vw-\vw'\rangle\leq\frac{L}{2}\|\vw-\vw'\|^{2}$. 
	
	2. Coercivity: $\frac{1}{L}\|\nabla F(\vw)-\nabla F(\vw')\|^{2}\leq\langle\nabla F(\vw)-\nabla F(\vw'),\vw-\vw'\rangle$.
	
	3. Lower bound: $F(\vw)\geq F(\vw')+\langle\nabla F(\vw'),\vw-\vw'\rangle+\frac{1}{2L}\|\nabla F(\vw)-\nabla F(\vw')\|^{2}$.
	In particular, $\|\nabla F(\vw)\|^{2}\leq2L(F(\vw)-F(\vw^{\ast}))$.
	
	4. Optimality gap: $F(\vw)-F(\vw^{\ast})\leq$$\langle\nabla F(\vw),\vw-\vw^{\ast}\rangle$.
\label{lem:lsmooth}
\end{lemma}
\begin{lemma}
	Let $F$ be a $\mu$-strongly convex function. Then 
	\begin{align*}
	& F(\vw)  \leq F(\vw')+\langle\nabla F(\vw'),\vw-\vw'\rangle+\frac{1}{2\mu}\|\nabla F(\vw)-\nabla F(\vw')\|^{2}\\
	& F(\vw)-F(\vw^{\ast})  \leq\frac{1}{2\mu}\|\nabla F(\vw)\|^{2}
	\end{align*}
\end{lemma}


\section{Proof of Convergence Results for FedAvg}
\label{sec:app:fedavg}

\subsection{Strongly Convex Smooth Objectives}

To organize our proofs more effectively and highlight the significance of our results compared to prior works, 
we first state the following key lemmas used in proofs of main results and defer their proofs to later. 
\begin{lemma}[\textbf{One step progress, strongly convex}] Let $\overline{\mathbf{w}}_{t}=\sum_{k=1}^{N}p_{k}\mathbf{w}_{t}^{k}$, and
suppose our functions satisfy Assumptions~\ref{ass:lsmooth},\ref{ass:stroncvx},\ref{ass:boundedvariance},\ref{ass:subgrad2}, and set step size $\alpha_{t}=\frac{4}{\mu(\gamma+t)}$
	with $\gamma=\max\{32\kappa,E\}$ and $\kappa=\frac{L}{\mu}$, then the updates of FedAvg with full participation satisfy
	\begin{align*}
	\mathbb{E}\|\ov{w}_{t+1}-\vw^{\ast}\|^{2} & \leq(1-\mu\alpha_{t})\mathbb{E}\|\ov{w}_{t}-\vw^{\ast}\|^{2}+\alpha_{t}^{2}\frac{1}{N}\nu_{\max}\sigma^{2}+6E^{2}L\alpha_{t}^{3}G^{2}.
	\end{align*}
\label{lem:scvxoner}
\end{lemma}
We emphasize that the above lemma is the key step that allows us to obtain a bound that improves on the convergence result of~\cite{li2019convergence} with linear speedup. Its proof will make use of the following two results. 
\begin{lemma}[\textbf{Bounding gradient variance (Lemma 2 \cite{li2019convergence})} ]
Given Assumption~\ref{ass:boundedvariance}, the upper bound of gradient variance is given as follows,
\begin{align*}
	\mathbb{E}\|\vg_{t}-\ov{g}_{t}\|^{2} \leq \sum_{k=1}^{N}p_{k}^{2}\sigma_{k}^{2}.
	\end{align*}
\label{lem:bgv}
\end{lemma}

\begin{lemma}[\textbf{Bounding the divergence of $\vw_t^k$ (Lemma 3 \cite{li2019convergence})} ]
Given Assumption~\ref{ass:subgrad2}, and assume that $\alpha_t$ is non-increasing and $\alpha_t \leq 2\alpha_{t+E}$ for all $t\geq 0$, we have
	\begin{align*}
	\mathbb{E}\left[\sum_{k=1}^{N}p_{k}\|\ov{w}_{t}-\vw_{t}^{k}\|^{2} \right]\leq 4E^{2}\alpha_{t}^{2}G^{2}.
	\end{align*}
\label{lem:bdw}
\end{lemma}

We now restate Theorem~\ref{th:scvx_sgd} from the main text and then prove it using Lemma~\ref{lem:scvxoner}.
\begin{thm}
	Let $\overline{\mathbf{w}}_{T}=\sum_{k=1}^{N}p_{k}\mathbf{w}_{T}^{k}$ in FedAvg,
	$\nu_{\max}=\max_{k}Np_{k}$, and set decaying learning rates $\alpha_{t}=\frac{4}{\mu(\gamma+t)}$
	with $\gamma=\max\{32\kappa,E\}$ and $\kappa=\frac{L}{\mu}$. Then
	under Assumptions~\ref{ass:lsmooth},\ref{ass:stroncvx},\ref{ass:boundedvariance},\ref{ass:subgrad2} with full device participation, 
	\begin{align*}
	\mathbb{E}F(\overline{\mathbf{w}}_{T})-F^{\ast}=\mathcal{O}\left(\frac{\kappa\nu_{\max}\sigma^{2}/\mu}{NT}+\frac{\kappa^{2}E^{2}G^{2}/\mu}{T^{2}}\right)
	\end{align*}
	and with partial device participation with at most $K$ sampled devices
	at each communication round, 
	\begin{align*}
	\mathbb{E}F(\overline{\mathbf{w}}_{T})-F^{\ast}=\mathcal{O}\left(\frac{\kappa EG^{2}/\mu}{KT}+\frac{\kappa\nu_{\max}\sigma^{2}/\mu}{NT}+\frac{\kappa^{2}E^{2}G^{2}/\mu}{T^{2}}\right)
	\end{align*}
\end{thm}

\begin{proof}
	The road map of the proof for full device participation contains three steps. First, we establish a recursive relationship between $\mathbb{E}\|\ov{w}_{t+1}-\vw^{\ast}\|^{2}$
	and $\mathbb{E}\|\ov{w}_{t}-\vw^{\ast}\|^{2}$, upper bounding the progress of FedAvg from step $t$ to step $t+1$. 
	Second, we show that $\mathbb{E}\|\ov{w}_{t}-\vw^{\ast}\|^{2}=\mathcal{O}(\frac{\nu_{\max}\sigma^{2}/\mu}{tN}+\frac{E^{2}LG^{2}/\mu^2}{t^{2}})$ by induction using the recursive relationship from the previous step. 
	Third, we use the property of $L$-smoothness to bound the optimality gap by $\mathbb{E}\|\ov{w}_{t}-\vw^{\ast}\|^{2}$. 
	
	By Lemma~\ref{lem:scvxoner}, we have the following upper bound for the one step progress: 
	\begin{align*}
	\mathbb{E}\|\ov{w}_{t+1}-\vw^{\ast}\|^{2} & \leq(1-\mu\alpha_{t})\mathbb{E}\|\ov{w}_{t}-\vw^{\ast}\|^{2}+\alpha_{t}^{2}\frac{1}{N}\nu_{\max}\sigma^{2}+6E^{2}L\alpha_{t}^{3}G^{2}.
	\end{align*}
	We show next that $\mathbb{E}\|\ov{w}_{t}-\vw^{\ast}\|^{2}=\mathcal{O}(\frac{\nu_{\max}\sigma^{2}/\mu}{tN}+\frac{E^{2}LG^{2}/\mu^2}{t^{2}})$ using induction. 
	To simplify the presentation, we denote $C\equiv6E^{2}LG^{2}$ and $D\equiv\frac{1}{N}\nu_{\max}\sigma^{2}$.
	Suppose that we have the bound $\mathbb{E}\|\ov{w}_{t}-\vw^{\ast}\|^{2}\leq b\cdot(\alpha_{t}D+\alpha_{t}^{2}C)$
	for some constant $b$ and learning rates $\alpha_{t}$. Then the one step progress from Lemma~\ref{lem:scvxoner} becomes:
	\begin{align*}
	\mathbb{E}\|\ov{w}_{t+1}-\vw^{\ast}\|^{2} & \leq (b(1-\mu\alpha_{t})+\alpha_{t})\alpha_{t}D+(b(1-\mu\alpha_{t})+\alpha_{t})\alpha_{t}^{2}C
	\end{align*}
	To establish the result at step $t+1$, it remains to choose $\alpha_{t}$ and $b$ such that $(b(1-\mu\alpha_{t})+\alpha_{t})\alpha_{t}\leq b\alpha_{t+1}$
	and $(b(1-\mu\alpha_{t})+\alpha_{t})\alpha_{t}^{2}\leq b\alpha_{t+1}^{2}$.
	If we let $\alpha_{t}=\frac{4}{\mu(t+\gamma)}$
	where $\gamma=\max\{E,32\kappa\}$ (choice of $\gamma$ required to guarantee the one step progress) and set $b=\frac{4}{\mu}$, we have:
	\begin{align*}
	& (b(1-\mu\alpha_{t})+\alpha_{t})\alpha_{t}  =\left(b(1-\frac{4}{t+\gamma})+\frac{4}{\mu(t+\gamma)}\right)\frac{4}{\mu(t+\gamma)}
	 \leq b\frac{4}{\mu(t+\gamma+1)}=b\alpha_{t+1}\\
	& (b(1-\mu\alpha_{t})+\alpha_{t})\alpha_{t}^{2} 
	 =b(\frac{t+\gamma-2}{t+\gamma})\frac{16}{\mu^{2}(t+\gamma)^{2}} \leq b\frac{16}{\mu^{2}(t+\gamma+1)^{2}}=b\alpha_{t+1}^{2}
	\end{align*}
	where we have used the following inequalities:
	\begin{align*}
	\frac{t+\gamma-1}{(t+\gamma)^{2}} \leq\frac{1}{(t+\gamma+1)}   \hspace{2em}
	\frac{t+\gamma-2}{(t+\gamma)^{3}} \leq\frac{1}{(t+\gamma+1)^{2}}  \hspace{2em}  \forall\ \gamma\geq1
	\end{align*}
	Thus we have established the result at step $t+1$ assuming the result is correct at step $t$:
	\begin{align*}
	\mathbb{E}\|\ov{w}_{t+1}-\vw^{\ast}\|^{2} & \leq b\cdot(\alpha_{t+1}D+\alpha_{t+1}^{2}C)
	\end{align*}
	At step $t=0$, we can ensure the following inequality by scaling $b$ with $c\|\vw_{0}-\vw^{\ast}\|^{2}$ for a sufficiently large constant
	$c$:
	\begin{align*}
	\|\vw_{0}-\vw^{\ast}\|^{2} \leq b\cdot(\alpha_{0}D+\alpha_{0}^{2}C) =b\cdot(\frac{4}{\mu\gamma}D+\frac{16}{\mu^{2}\gamma^{2}}C)
	\end{align*}
	It follows that 
	\begin{align}
	\mathbb{E}\|\ov{w}_{t}-\vw^{\ast}\|^{2} & \leq c\|\vw_{0}-\vw^{\ast}\|^{2}\frac{4}{\mu}(D\alpha_{t}+C\alpha_{t}^{2})
	\label{eq:th1:1}
	\end{align}
	for all $t\geq0$. 
	
	Finally, the $L$-smoothness of $F$ implies 
	\begin{align*}
	\mathbb{E}(F(\ov{w}_{T}))-F^{\ast} & \leq\frac{L}{2}\mathbb{E}\|\ov{w}_{T}-\vw^{\ast}\|^{2}\\ 
	& \leq\frac{L}{2}c\|\vw_{0}-\vw^{\ast}\|^{2}\frac{4}{\mu}(D\alpha_{T}+C\alpha_{T}^{2})\\
	& =2c\|\vw_{0}-\vw^{\ast}\|^{2}\kappa(D\alpha_{T}+C\alpha_{T}^{2})\\
	& \leq2c\|\vw_{0}-\vw^{\ast}\|^{2}\kappa\left[\frac{4}{\mu(T+\gamma)}\cdot\frac{1}{N}\nu_{\max}\sigma^{2}+6E^{2}LG^{2}\cdot(\frac{4}{\mu(T+\gamma)})^{2}\right]\\
	& =\mathcal{O}(\frac{\kappa}{\mu}\frac{1}{N}\nu_{\max}\sigma^{2}\cdot\frac{1}{T}+\frac{\kappa^{2}}{\mu}E^{2}G^{2}\cdot\frac{1}{T^{2}})
	\end{align*}
	where in the first line, we use the property of $L$-smooth function (see Lemma~\ref{lem:lsmooth}), and in the second line, we use the conclusion in \eq{\ref{eq:th1:1}}. 
	
	\rebuttal{
	With partial participation, the update at each communication round
	is now given by weighted averages over a subset of sampled devices. When $t+1\notin\mathcal{I}_{E}$,
	$\ov{v}_{t+1}=\ov{w}_{t+1}$, while when $t+1\in\mathcal{I}_{E}$,
	we have $\mathbb{E}\ov{w}_{t+1}=\ov{v}_{t+1}$ by design
	of the sampling schemes~(\cite{li2019convergence}, Lemma 4).
	
	Let $t+1$ and $t+1+E$ be two consecutive communication
rounds. Using $\mathbb{E}\ov{w}_{t+E+1}=\ov{v}_{t+E+1}$,
we can write 
\begin{align*}
\mathbb{E}\|{\ov{w}}_{t+E+1}-\vw^{\ast}\|^{2} & =\mathbb{E}\|\ov{w}_{t+E+1}-\ov{v}_{t+E+1}+\ov{v}_{t+E+1}-\vw^{\ast}\|^{2}\\
 & =\mathbb{E}\|\ov{v}_{t+E+1}-\vw^{\ast}\|^{2}+\mathbb{E}\|\ov{w}_{t+E+1}-\ov{v}_{t+E+1}\|^{2}
\end{align*}
We note that applying the one step progress consecutively to the sequence $\ov{v}_{t}$ with full participation setting starting
from round $t+1$ and stopping at round $t+1+E$, we can bound the
first term as 
\begin{align*}
\mathbb{E}\|\ov{v}_{t+E+1}-\vw^{\ast}\|^{2} & \leq(1-\mu\alpha_{t+1})\mathbb{E}\|\ov{w}_{t+1}-\vw^{\ast}\|^{2}+E\alpha_{t+1}^{2}\frac{1}{N}\nu_{\max}\sigma^{2}+6E^{3}L\alpha_{t+1}^{3}G^{2}
\end{align*}
	
	The bound for $\mathbb{E}\|\ov{w}_{t+E+1}-\ov{v}_{t+E+1}\|^{2}$ for the two sampling schemes we consider is provided in \eq{\ref{eq:partialsample}} as
\begin{align*}
\mathbb{E}\|\ov{w}_{t+E+1}-\ov{v}_{t+E+1}\|^{2} & \leq\mathbb{E}\|\ov{w}_{t+E+1}-\ov{w}_{t+1}\|^{2}\\
 & \leq\frac{4}{K}\alpha_{t}^{2}E^{2}G^{2}
\end{align*}
 which yields 
\begin{align*}
\mathbb{E}\|\ov{w}_{t+E+1}-\vw^{\ast}\|^{2} & \leq(1-\mu\alpha_{t+1})\mathbb{E}\|\ov{w}_{t+1}-\vw^{\ast}\|^{2}+E\alpha_{t+1}^{2}\frac{1}{N}\nu_{\max}\sigma^{2}\\
&+6E^{3}L\alpha_{t+1}^{3}G^{2}+\frac{4}{K}\alpha_{t}^{2}E^{2}G^{2}
\end{align*}
We note that this is similar to the one-step progress bound in \cite{karimireddy2019scaffold} for two consecutive communication rounds. From here,
using the same induction argument (effectively $T/E$ times instead of $T$ times) and $L$-smoothness as the full participation case implies 
	\begin{align*}
	\mathbb{E}F(\ov{w}_{T})-F^{\ast}=\mathcal{O}(\frac{\kappa\nu_{\max}\sigma^{2}/\mu}{NT}+\frac{\kappa EG^{2}/\mu}{KT}+\frac{\kappa^{2}E^{2}G^{2}/\mu}{T^{2}})
	\end{align*}
	
The advantage of bounding the square distance to optimum between consecutive communication rounds is that it results in bounding the sampling variance $\mathbb{E}\|\ov{w}_{t}-\ov{v}_{t}\|^{2} $ $T/E$ instead of $T$ times, which gives an  $\mathcal{O}(E/KT)$ term instead of $\mathcal{O}(E^2/KT)$ in the convergence result.}
	\end{proof}

\subsubsection{Deferred Proofs of Key Lemmas}
Here we first rewrite the proofs of lemmas \ref{lem:bgv} and  \ref{lem:bdw} from~\cite{li2019convergence} with slight modifications for the consistency and completeness of this work, since later we will use modified versions of these results in the convergence proof for Nesterov accelerated FedAvg.
\begin{proof}[Proof of lemma~\ref{lem:bgv}]
	\begin{align*}
	\mathbb{E}\|\vg_{t}-\ov{g}_{t}\|^{2} & =\mathbb{E}\|\vg_{t}-\mathbb{E}\vg_{t}\|^{2}=\sum_{k=1}^{N}p_{k}^{2}\|\vg_{t,k}-\mathbb{E}\vg_{t,k}\|^{2}\leq \sum_{k=1}^{N}p_{k}^{2}\sigma_{k}^{2}
	\end{align*}
\end{proof}

\begin{proof}[Proof of lemma~\ref{lem:bdw}]
	Now we bound $\mathbb{E}\sum_{k=1}^{N}p_{k}\|\ov{w}_{t}-\vw_{t}^{k}\|^{2}$ following \cite{li2019convergence}.
	Since communication is done every $E$ steps, for any $t\geq0$, we
	can find a $t_{0}\leq t$ such that $t-t_{0}\leq E-1$ and $\vw_{t_{0}}^{k}=\ov{w}_{t_{0}}$for
	all $k$. Moreover, using $\alpha_{t}$ is non-increasing and $\alpha_{t_{0}}\leq2\alpha{}_{t}$
	for any $t-t_{0}\leq E-1$, we have 
		\begin{align*}
	& \mathbb{E}\sum_{k=1}^{N}p_{k}\|\ov{w}_{t}-\vw_{t}^{k}\|^{2}\\
= & \mathbb{E}\sum_{k=1}^{N}p_{k}\|\vw_{t}^{k}-\ov{w}_{t_{0}}-(\ov{w}_{t}-\ov{w}_{t_{0}})\|^{2}\\
\leq &\mathbb{E}\sum_{k=1}^{N}p_{k}\|\vw_{t}^{k}-\ov{w}_{t_{0}}\|^{2}\\
	= &\mathbb{E}\sum_{k=1}^{N}p_{k}\|\vw_{t}^{k}-\vw_{t_{0}}^{k}\|^{2}\\
	= &\mathbb{E}\sum_{k=1}^{N}p_{k}\|-\sum_{i=t_{0}}^{t-1}\alpha_{i}\vg_{i,k}\|^{2}\\
	\leq & 2\sum_{k=1}^{N}p_{k}\mathbb{E}\sum_{i=t_{0}}^{t-1}E\alpha_{i}^{2}\|\vg_{i,k}\|^{2}\\
	\leq & 2\sum_{k=1}^{N}p_{k}E^{2}\alpha_{t_{0}}^{2}G^{2}\\
	\leq & 4E^{2}\alpha_{t}^{2}G^{2}
	\end{align*}
\end{proof} 
Based on the results of Lemma~\ref{lem:bgv},~\ref{lem:bdw}, we now prove the upper bound of one step SGD progress. This 
proof improves on the previous work~\cite{li2019convergence} and reveals the linear speedup of convergence of FedAvg. 
\begin{proof}[Proof of lemma~\ref{lem:scvxoner}]
	We have 
	\begin{align*}
	\|\ov{w}_{t+1}-\vw^{\ast}\|^{2} & =\|(\ov{w}_{t}-\alpha_{t}\vg_{t})-\vw^{\ast}\|^{2} =\|(\ov{w}_{t}-\alpha_{t}\ov{g}_{t}-\vw^{\ast})-\alpha_{t}(\vg_{t}-\ov{g}_{t})\|^{2}\\
	& = \underbrace{\|\ov{w}_{t}-\vw^{\ast}-\alpha_{t}\ov{g}_{t}\|^{2}}_{A_1} + \underbrace{2\alpha_{t}\langle\ov{w}_{t}-\vw^{\ast}-\alpha_{t}\ov{g}_{t},\ov{g}_{t}-\vg_{t}\rangle}_{A_2} + \underbrace{\alpha_{t}^{2}\|\vg_{t}-\ov{g}_{t}\|^{2}}_{A_3}
	\end{align*}
	where we denote: 
	\begin{align*}
	A_{1} & =\|\ov{w}_{t}-\vw^{\ast}-\alpha_{t}\ov{g}_{t}\|^{2}\\
	A_{2} & =2\alpha_{t}\langle\ov{w}_{t}-\vw^{\ast}-\alpha_{t}\ov{g}_{t},\ov{g}_{t}-\vg_{t}\rangle\\
	A_{3} & =\alpha_{t}^{2}\|\vg_{t}-\ov{g}_{t}\|^{2}
	\end{align*}
	By definition of $\vg_{t}$ and $\ov{g}_{t}$ (see \eq{\ref{eq:gradient}}), we have $\mathbb{E}A_{2}=0$.
	For $A_{3}$, we have the following upper bound (see Lemma~\ref{lem:bgv}):
	\begin{align*}
	\alpha_{t}^{2}\mathbb{E}\|\vg_{t}-\ov{g}_{t}\|^{2} \leq\alpha_{t}^{2}\sum_{k=1}^{N}p_{k}^{2}\sigma_{k}^{2}
	\end{align*} 
	
	Next we bound $A_{1}$: 
	\begin{align*}
	\|\ov{w}_{t}-\vw^{\ast}-\alpha_{t}\ov{g}_{t}\|^{2} & =\|\ov{w}_{t}-\vw^{\ast}\|^{2}+2\langle\ov{w}_{t}-\vw^{\ast},-\alpha_{t}\ov{g}_{t}\rangle+\|\alpha_{t}\ov{g}_{t}\|^{2}
	\end{align*}
	and we will show that the third term $\|\alpha_{t}\ov{g}_{t}\|^{2}$
	can be canceled by an upper bound of the second term, which is one of major improvement comparing to prior art~\cite{li2019convergence}.
	
	The upper bound of second term can be derived as follows, 	using the strong convexity and $L$-smoothness of $F_{k}$:
	\begin{align*}
	& -2\alpha_{t}\langle\ov{w}_{t}-\vw^{\ast},\ov{g}_{t}\rangle\\
	=& -2\alpha_{t}\sum_{k=1}^{N}p_{k}\langle\ov{w}_{t}-\vw^{\ast},\nabla F_{k}(\vw_{t}^{k})\rangle\\
	=& -2\alpha_{t}\sum_{k=1}^{N}p_{k}\langle\ov{w}_{t}-\vw_{t}^{k},\nabla F_{k}(\vw_{t}^{k})\rangle-2\alpha_{t}\sum_{k=1}^{N}p_{k}\langle \vw_{t}^{k}-\vw^{\ast},\nabla F_{k}(\vw_{t}^{k})\rangle\\
	\leq&-2\alpha_{t}\sum_{k=1}^{N}p_{k}\langle\ov{w}_{t}-\vw_{t}^{k},\nabla F_{k}(\vw_{t}^{k})\rangle+2\alpha_{t}\sum_{k=1}^{N}p_{k}(F_{k}(\vw^{\ast})-F_{k}(\vw_{t}^{k}))-\alpha_{t}\mu\sum_{k=1}^{N}p_{k}\|\vw_{t}^{k}-\vw^{\ast}\|^{2}\\
	\leq& 2\alpha_{t}\sum_{k=1}^{N}p_{k}\left[F_{k}(\vw_{t}^{k})-F_{k}(\ov{w}_{t})+\frac{L}{2}\|\ov{w}_{t}-\vw_{t}^{k}\|^{2}+F_{k}(\vw^{\ast})-F_{k}(\vw_{t}^{k})\right]-\alpha_{t}\mu\|\sum_{k=1}^{N}p_{k}\vw_{t}^{k}-\vw^{\ast}\|^{2}\\
	=& \alpha_{t}L\sum_{k=1}^{N}p_{k}\|\ov{w}_{t}-\vw_{t}^{k}\|^{2}+2\alpha_{t}\sum_{k=1}^{N}p_{k}\left[F_{k}(\vw^{\ast})-F_{k}(\ov{w}_{t})\right]-\alpha_{t}\mu\|\ov{w}_{t}-\vw^{\ast}\|^{2}
	\end{align*}
	We record the bound we have obtained so far, as it will also be used in the proof for convex case: 
	\begin{align*}
	 \mathbb{E}\|\ov{w}_{t+1}-\vw^{\ast}\|^{2}
	\leq & \mathbb{E}(1-\mu\alpha_{t})\|\ov{w}_{t}-\vw^{\ast}\|^{2}+\alpha_{t}L\sum_{k=1}^{N}p_{k}\|\ov{w}_{t}-\vw_{t}^{k}\|^{2}\\
	 & +2\alpha_{t}\sum_{k=1}^{N}p_{k}\left[F_{k}(\vw^{\ast})-F_{k}(\ov{w}_{t})\right]+\alpha_{t}^{2}\sum_{k=1}^{N}p_{k}^{2}\sigma_{k}^{2}+\alpha_{t}^{2}\|\ov{g}_{t}\|^{2} \numberthis \label{eq:common one step}
	\end{align*}
	For the term $2\alpha_{t}\sum_{k=1}^{N}p_{k}\left[F_{k}(\vw^{\ast})-F_{k}(\ov{w}_{t})\right]$, which is negative, we can ignore it, but this
	yields a suboptimal bound that fails to provide the desired linear
	speedup. Instead, we upper bound it using the following derivation:
	\begin{align*}
	& 2\alpha_{t}\sum_{k=1}^{N}p_{k}\left[F_{k}(\vw^{\ast})-F_{k}(\ov{w}_{t})\right]\\
	\leq& 2\alpha_{t}\left[F(\ov{w}_{t+1})-F(\ov{w}_{t})\right]\\
	\leq& 2\alpha_{t}\mathbb{E}\langle\nabla F(\ov{w}_{t}),\ov{w}_{t+1}-\ov{w}_{t}\rangle+\alpha_{t}L\mathbb{E}\|\ov{w}_{t+1}-\ov{w}_{t}\|^{2}\\
	 =& -2\alpha_{t}^{2}\mathbb{E}\langle\nabla F(\ov{w}_{t}),\vg_{t}\rangle+\alpha_{t}^{3}L\mathbb{E}\|\vg_{t}\|^{2}\\
	=&-2\alpha_{t}^{2}\mathbb{E}\langle\nabla F(\ov{w}_{t}),\ov{g}_{t}\rangle+\alpha_{t}^{3}L\mathbb{E}\|\vg_{t}\|^{2}\\
	=&-\alpha_{t}^{2}\left[\|\nabla F(\ov{w}_{t})\|^{2}+\|\ov{g}_{t}\|^{2}-\|\nabla F(\ov{w}_{t})-\ov{g}_{t}\|^{2}\right]+\alpha_{t}^{3}L\mathbb{E}\|\vg_{t}\|^{2}\\
	=&-\alpha_{t}^{2}\left[\|\nabla F(\ov{w}_{t})\|^{2}+\|\ov{g}_{t}\|^{2}-\|\nabla F(\ov{w}_{t})-\sum_{k}p_{k}\nabla F(\vw_{t}^{k})\|^{2}\right]+\alpha_{t}^{3}L\mathbb{E}\|\vg_{t}\|^{2}\\
   \leq&-\alpha_{t}^{2}\left[\|\nabla F(\ov{w}_{t})\|^{2}+\|\ov{g}_{t}\|^{2}-\sum_{k}p_{k}\|\nabla F(\ov{w}_{t})-\nabla F(\vw_{t}^{k})\|^{2}\right]+\alpha_{t}^{3}L\mathbb{E}\|\vg_{t}\|^{2}\\
	\leq&-\alpha_{t}^{2}\left[\|\nabla F(\ov{w}_{t})\|^{2}+\|\ov{g}_{t}\|^{2}-L^{2}\sum_{k}p_{k}\|\ov{w}_{t}-\vw_{t}^{k}\|^{2}\right]+\alpha_{t}^{3}L\mathbb{E}\|\vg_{t}\|^{2}\\
	\leq&-\alpha_{t}^{2}\|\ov{g}_{t}\|^{2}+\alpha_{t}^{2}L^{2}\sum_{k}p_{k}\|\ov{w}_{t}-\vw_{t}^{k}\|^{2}+\alpha_{t}^{3}L\mathbb{E}\|\vg_{t}\|^{2}-\alpha_{t}^{2}\|\nabla F(\ov{w}_{t})\|^{2}
	\end{align*}
	where we have used the smoothness of $F$ twice. 
	
	Note that the term $-\alpha_{t}^{2}\|\ov{g}_{t}\|^{2}$ exactly
	cancels the $\alpha_{t}^{2}\|\ov{g}_{t}\|^{2}$ in the bound in \eq{\ref{eq:common one step}}, so that plugging in the bound for $-2\alpha_{t}\langle\ov{w}_{t}-\vw^{\ast},\ov{g}_{t}\rangle$,
	we have so far proved 
	\begin{align*}
	\mathbb{E}\|\ov{w}_{t+1}-\vw^{\ast}\|^{2} & \leq\mathbb{E}(1-\mu\alpha_{t})\|\ov{w}_{t}-\vw^{\ast}\|^{2}+\alpha_{t}L\sum_{k=1}^{N}p_{k}\|\ov{w}_{t}-\vw_{t}^{k}\|^{2}+\alpha_{t}^{2}\sum_{k=1}^{N}p_{k}^{2}\sigma_{k}^{2}\\
	& +\alpha_{t}^{2}L^{2}\sum_{k=1}^{N}p_{k}\|\ov{w}_{t}-\vw_{t}^{k}\|^{2}+\alpha_{t}^{3}L\mathbb{E}\|\vg_{t}\|^{2}-\alpha_{t}^{2}\|\nabla F(\ov{w}_{t})\|^{2} \numberthis \label{eq:common recursion}
	\end{align*}
	Under Assumption~\ref{ass:subgrad2}, we have $\mathbb{E}\|\vg_{t}\|^{2}\leq G^{2}$. Furthermore, we can check that our choice of $\alpha_t$ satisfies $\alpha_t$ is non-increasing and $\alpha_t \leq 2\alpha_{t+E}$, so we may plug in the bound $\mathbb{E}\sum_{k=1}^{N}p_{k}\|\ov{w}_{t}-\vw_{t}^{k}\|^{2} \leq 4E^{2}\alpha_{t}^{2}G^{2}$ to the above inequality (see Lemma~\ref{lem:bdw}).
	
	Therefore, we can conclude that, with $\nu_{\max}:=N\cdot\max_{k}p_{k}$ and $\nu_{\min}:=N\cdot\min_{k}p_{k}$, 
	\begin{align*}
	& \mathbb{E}\|\ov{w}_{t+1}-\vw^{\ast}\|^{2}\\
	\leq& \mathbb{E}(1-\mu\alpha_{t})\|\ov{w}_{t}-\vw^{\ast}\|^{2}+4E^{2}L\alpha_{t}^{3}G^{2}+4E^{2}L^{2}\alpha_{t}^{4}G^{2}+\alpha_{t}^{2}\sum_{k=1}^{N}p_{k}^{2}\sigma_{k}^{2}+\alpha_{t}^{3}LG^{2}\\
	= &\mathbb{E}(1-\mu\alpha_{t})\|\ov{w}_{t}-\vw^{\ast}\|^{2}+4E^{2}L\alpha_{t}^{3}G^{2}+4E^{2}L^{2}\alpha_{t}^{4}G^{2}+\alpha_{t}^{2}\frac{1}{N}\sum_{k=1}^{N}(p_{k}N)p_k\sigma_{k}^{2}+\alpha_{t}^{3}LG^{2}\\
	\leq &\mathbb{E}(1-\mu\alpha_{t})\|\ov{w}_{t}-\vw^{\ast}\|^{2}+4E^{2}L\alpha_{t}^{3}G^{2}+4E^{2}L^{2}\alpha_{t}^{4}G^{2}+\alpha_{t}^{2}\frac{1}{N}\nu_{\max}\sum_{k=1}^{N}p_k\sigma_{k}^{2}+\alpha_{t}^{3}LG^{2}\\
	\leq &\mathbb{E}(1-\mu\alpha_{t})\|\ov{w}_{t}-\vw^{\ast}\|^{2}+6E^{2}L\alpha_{t}^{3}G^{2}+\alpha_{t}^{2}\frac{1}{N}\nu_{\max}\sigma^{2}
	\end{align*}
	where in the last inequality we use $\sigma^2=\sum_{k=1}^{N}p_k\sigma_{k}^{2}$, and that by construction $\alpha_{t}$
	satisfies $L\alpha_{t}\leq\frac{1}{8}$. 
\end{proof}

One may ask whether the dependence on $E$ in the term $\frac{\kappa E^{2}G^{2}/\mu}{KT}$
can be removed, or equivalently whether $\sum_{k}p_{k}\|\mathbf{w}_{t}^{k}-\overline{\mathbf{w}}_{t}\|^{2}=\mathcal{O}(1/T^{2})$
can be independent of $E$. We provide a simple counterexample that
shows that this is not possible in general. 
\begin{proposition} \label{prop:tight}
	There exists a dataset such that if $E=\mathcal{O}(T^{\beta})$ for
	any $\beta>0$ then $\sum_{k}p_{k}\|\mathbf{w}_{t}^{k}-\overline{\mathbf{w}}_{t}\|^{2}=\Omega(\frac{1}{T^{2-2\beta}})$
	.
\end{proposition}
\begin{proof}
	Suppose that we have an even number of devices and each $F_{k}(\mathbf{w})=\frac{1}{n_{k}}\sum_{j=1}^{n_{k}}(\mathbf{x}_{k}^{j}-\mathbf{w})^{2}$
	contains data points $\mathbf{x}_{k}^{j}=\mathbf{w}^{\ast,k}$, with
	$n_{k}\equiv n$. Moreover, the $\mathbf{w}{}^{\ast,k}$'s come in
	pairs around the origin. As a result, the global objective $F$ is
	minimized at $\mathbf{w}^{\ast}=0$. Moreover, if we start from $\overline{\mathbf{w}}_{0}=0$,
	then by design of the dataset the updates in local steps exactly cancel
	each other at each iteration, resulting in $\overline{\mathbf{w}}_{t}=0$
	for all $t$. On the other hand, if $E=T^{\beta}$, then starting
	from any $t=\mathcal{O}(T)$ with constant step size $\mathcal{O}(\frac{1}{T})$,
	after $E$ iterations of local steps, the local parameters are updated
	towards $\mathbf{w}^{\ast,k}$ with $\|\mathbf{w}_{t+E}^{k}\|^{2}=\Omega((T^{\beta}\cdot\frac{1}{T})^{2})=\Omega(\frac{1}{T^{2-2\beta}})$.
	This implies that 
	\begin{align*}
	\sum_{k}p_{k}\|\mathbf{w}_{t+E}^{k}-\overline{\mathbf{w}}_{t+E}\|^{2} & =\sum_{k}p_{k}\|\mathbf{w}_{t+E}^{k}\|^{2}\\
	& =\Omega(\frac{1}{T^{2-2\beta}})
	\end{align*}
	which is at a slower rate than $\frac{1}{T^{2}}$ for any $\beta>0$.
	Thus the sampling variance $\mathbb{E}\|\overline{\mathbf{w}}_{t+1}-\overline{\mathbf{v}}_{t+1}\|^{2}=\Omega(\sum_{k}p_{k}\mathbb{E}\|\mathbf{w}_{t+1}^{k}-\overline{\mathbf{w}}_{t+1}\|^{2})$
	decays at a slower rate than $\frac{1}{T^{2}}$, resulting in a convergence
	rate slower than $\mathcal{O}(\frac{1}{T})$ with partial participation. 
\end{proof}

\subsection{Convex Smooth Objectives}
\label{sec:nasgdscvxsmth}
In this section we provide the proof of the convergence result for FedAvg with convex and smooth objectives. The key step is a one step progress result analogous to that in the strongly convex case, and their proofs share identical components as well. 
\begin{lemma} [\textbf{One step progress, convex case}]
Let $\overline{\mathbf{w}}_{t}=\sum_{k=1}^{N}p_{k}\mathbf{w}_{t}^{k}$ in FedAvg. Under assumptions~\ref{ass:lsmooth},\ref{ass:boundedvariance},\ref{ass:subgrad2}, the following bound holds for all $t$:
\begin{align*}
	\mathbb{E}\|\ov{w}_{t+1}-\vw^{\ast}\|^{2}+\alpha_{t}(F(\ov{w}_{t})-F(\vw^{\ast})) & \leq \mathbb{E}\|\ov{w}_{t}-\vw^{\ast}\|^{2}+\alpha_{t}^{2}\frac{1}{N}\nu_{\max}\sigma^{2}+6\alpha_{t}^{3}E^{2}LG^{2}
	\end{align*}
	\label{lem:cvxoner}
\end{lemma}
\begin{proof}
    The first part of the proof follows directly from \eq{\ref{eq:common one step}} in the proof of Lemma \ref{lem:scvxoner}. Setting $\mu=0$ in \eq{\ref{eq:common one step}} (since we are in the convex setting instead of strongly convex), we obtain 
    \begin{align*}
	\|\ov{w}_{t+1}-\vw^{\ast}\|^{2} & \leq\|\ov{w}_{t}-\vw^{\ast}\|^{2}+\alpha_{t}L\sum_{k=1}^{N}p_{k}\|\ov{w}_{t}-\vw_{t}^{k}\|^{2} \\ 
	& +2\alpha_{t}\sum_{k=1}^{N}p_{k}\left[F_{k}(\vw^{\ast})-F_{k}(\ov{w}_{t})\right]+\alpha_{t}^{2}\|\ov{g}_{t}\|^{2}+\alpha_{t}^{2}\sum_{k=1}^{N}p_{k}^{2}\sigma_{k}^{2}
	\end{align*}
	The difference of this bound with that in the strongly convex case
	is that we no longer have a contraction factor of $1-\mu\alpha_t$ in front of $\|\ov{w}_{t}-\vw^{\ast}\|^{2}$.
	In the strongly convex case, we were able to cancel $\alpha_{t}^{2}\|\ov{g}_{t}\|^{2}$
	with $2\alpha_{t}\sum_{k=1}^{N}p_{k}\left[F_{k}(\vw^{\ast})-F_{k}(\ov{w}_{t})\right]$
	and obtain only lower order terms. In the convex case, we use a different
	strategy and preserve $\sum_{k=1}^{N}p_{k}\left[F_{k}(\vw^{\ast})-F_{k}(\ov{w}_{t})\right]$
	in order to obtain the desired optimality gap. 
	
	More precisely, we have
	\begin{align*}
	\|\ov{g}_{t}\|^{2} & =\|\sum_{k}p_{k}\nabla F_{k}(\vw_{t}^{k})\|^{2}\\
	& =\|\sum_{k}p_{k}\nabla F_{k}(\vw_{t}^{k})-\sum_{k}p_{k}\nabla F_{k}(\ov{w}_{t})+\sum_{k}p_{k}\nabla F_{k}(\ov{w}_{t})\|^{2}\\
	& \leq2\|\sum_{k}p_{k}\nabla F_{k}(\vw_{t}^{k})-\sum_{k}p_{k}\nabla F_{k}(\ov{w}_{t})\|^{2}+2\|\sum_{k}p_{k}\nabla F_{k}(\ov{w}_{t})\|^{2}\\
	& \leq2L^{2}\sum_{k}p_{k}\|\vw_{t}^{k}-\ov{w}_{t}\|^{2}+2\|\sum_{k}p_{k}\nabla F_{k}(\ov{w}_{t})\|^{2}\\
	& =2L^{2}\sum_{k}p_{k}\|\vw_{t}^{k}-\ov{w}_{t}\|^{2}+2\|\nabla F(\ov{w}_{t})\|^{2}
	\end{align*}
	using $\nabla F(\vw^{\ast})=0$. Now using the $L$ smoothness of $F$,
	we have $\|\nabla F(\ov{w}_{t})\|^{2}\leq2L(F(\ov{w}_{t})-F(\vw^{\ast}))$,
	so that 
	\begin{align*}
	& \|\ov{w}_{t+1}-\vw^{\ast}\|^{2}\\
	\leq & \|\ov{w}_{t}-\vw^{\ast}\|^{2}+\alpha_{t}L\sum_{k=1}^{N}p_{k}\|\ov{w}_{t}-\vw_{t}^{k}\|^{2}+2\alpha_{t}\sum_{k=1}^{N}p_{k}\left[F_{k}(\vw^{\ast})-F_{k}(\ov{w}_{t})\right]\\
	& +2\alpha_{t}^{2}L^{2}\sum_{k}p_{k}\|\vw_{t}^{k}-\ov{w}_{t}\|^{2}+4\alpha_{t}^{2}L(F(\ov{w}_{t})-F(\vw^{\ast}))+\alpha_{t}^{2}\sum_{k=1}^{N}p_{k}^{2}\sigma_{k}^{2}\\
	= & \|\ov{w}_{t}-\vw^{\ast}\|^{2}+(2\alpha_{t}^{2}L^{2}+\alpha_{t}L)\sum_{k=1}^{N}p_{k}\|\ov{w}_{t}-\vw_{t}^{k}\|^{2}+\alpha_{t}\sum_{k=1}^{N}p_{k}\left[F_{k}(\vw^{\ast})-F_{k}(\ov{w}_{t})\right] \\ 
	 & +\alpha_{t}^{2}\sum_{k=1}^{N}p_{k}^{2}\sigma_{k}^{2}
	 +\alpha_{t}(1-4\alpha_{t}L)(F(\vw^{\ast})-F(\ov{w}_{t}))
	\end{align*}
	Since $F(\vw^{\ast})\leq F(\ov{w}_{t})$, as long as $4\alpha_{t}L\leq1$,
	we can ignore the last term, and rearrange the inequality to obtain
	\begin{align*}
	& \|\ov{w}_{t+1}-\vw^{\ast}\|^{2}+\alpha_{t}(F(\ov{w}_{t})-F(\vw^{\ast}))\\
 \leq & \|\ov{w}_{t}-\vw^{\ast}\|^{2}+(2\alpha_{t}^{2}L^{2}+\alpha_{t}L)\sum_{k=1}^{N}p_{k}\|\ov{w}_{t}-\vw_{t}^{k}\|^{2}+\alpha_{t}^{2}\sum_{k=1}^{N}p_{k}^{2}\sigma_{k}^{2}\\
	\leq & \|\ov{w}_{t}-\vw^{\ast}\|^{2}+\frac{3}{2}\alpha_{t}L\sum_{k=1}^{N}p_{k}\|\ov{w}_{t}-\vw_{t}^{k}\|^{2}+\alpha_{t}^{2}\sum_{k=1}^{N}p_{k}^{2}\sigma_{k}^{2}
	\end{align*}
	
	The same argument as before yields $\mathbb{E}\sum_{k=1}^{N}p_{k}\|\ov{w}_{t}-\vw_{t}^{k}\|^{2}\leq4E^{2}\alpha_{t}^{2}G^{2}$
	which gives 
	\begin{align*}
	\|\ov{w}_{t+1}-\vw^{\ast}\|^{2}+\alpha_{t}(F(\ov{w}_{t})-F(\vw^{\ast})) & \leq\|\ov{w}_{t}-\vw^{\ast}\|^{2}+\alpha_{t}^{2}\sum_{k=1}^{N}p_{k}^{2}\sigma_{k}^{2}+6\alpha_{t}^{3}E^{2}LG^{2}\\
	& \leq\|\ov{w}_{t}-\vw^{\ast}\|^{2}+\alpha_{t}^{2}\frac{1}{N}\nu_{\max}\sigma^{2}+6\alpha_{t}^{3}E^{2}LG^{2}
	\end{align*}
\end{proof}
With the one step progress result, we can now prove the convergence result in the convex setting, which we restate below.
\begin{thm}
	Under assumptions~\ref{ass:lsmooth},\ref{ass:boundedvariance},\ref{ass:subgrad2} and constant learning
	rate $\alpha_{t}=\mathcal{O}(\sqrt{\frac{N}{T}})$, FedAvg satisfies
	\begin{align*}
	\min_{t\leq T}F(\overline{\mathbf{w}}_{t})-F(\mathbf{w}^{\ast}) & =\mathcal{O}\left(\frac{\nu_{\max}\sigma^{2}}{\sqrt{NT}}+\frac{NE^{2}LG^{2}}{T}\right)
	\end{align*}
	with full participation, and with partial device participation with $K$ sampled devices at
	each communication round and learning rate $\alpha_{t}=\mathcal{O}(\sqrt{\frac{K}{T}})$,
	\begin{align*}
	\min_{t\leq T}F(\overline{\mathbf{w}}_{t})-F(\mathbf{w}^{\ast}) & =\mathcal{O}\left(\frac{\nu_{\max}\sigma^{2}}{\sqrt{KT}}+\frac{EG^{2}}{\sqrt{KT}}+\frac{KE^{2}LG^{2}}{T}\right)
	\end{align*}
\end{thm}

\begin{proof}
	We first prove the bound for full participation. Applying Lemma~\ref{lem:cvxoner}, we have
	\begin{align*}
	\|\ov{w}_{t+1}-\vw^{\ast}\|^{2}+\alpha_{t}(F(\ov{w}_{t})-F(\vw^{\ast})) & \leq\|\ov{w}_{t}-\vw^{\ast}\|^{2}+\alpha_{t}^{2}\frac{1}{N}\nu_{\max}\sigma^{2}+6\alpha_{t}^{3}E^{2}LG^{2}
	\end{align*}
	Summing the inequalities from $t=0$ to $t=T$, we obtain 
	\begin{align*}
	\sum_{t=0}^{T}\alpha_{t}(F(\ov{w}_{t})-F(\vw^{\ast})) & \leq\|\vw_{0}-\vw^{\ast}\|^{2}+\sum_{t=0}^{T}\alpha_{t}^{2}\cdot\frac{1}{N}\nu_{\max}\sigma^{2}+\sum_{t=0}^{T}\alpha_{t}^{3}\cdot6E^{2}LG^{2}
	\end{align*}
	so that
	\begin{align*}
	\min_{t\leq T}F(\ov{w}_{t})-F(\vw^{\ast}) & \leq\frac{1}{\sum_{t=0}^{T}\alpha_{t}}\left(\|\vw_{0}-\vw^{\ast}\|^{2}+\sum_{t=0}^{T}\alpha_{t}^{2}\cdot\frac{1}{N}\nu_{\max}\sigma^{2}+\sum_{t=0}^{T}\alpha_{t}^{3}\cdot6E^{2}LG^{2}\right)
	\end{align*}
	
	By setting the constant learning rate $\alpha_{t}\equiv\sqrt{\frac{N}{T}}$,
	we have 
\begin{align*}
&\min_{t\leq T}F(\ov{w}_{t})-F(\vw^{\ast}) \\
 & \leq\frac{1}{\sqrt{NT}}\cdot\|\vw_{0}-\vw^{\ast}\|^{2}+\frac{1}{\sqrt{NT}}T\cdot\frac{N}{T}\cdot\frac{1}{N}\nu_{\max}\sigma^{2}+\frac{1}{\sqrt{NT}}T(\sqrt{\frac{N}{T}})^{3}6E^{2}LG^{2}\\
& \leq\frac{1}{\sqrt{NT}}\cdot\|\vw_{0}-\vw^{\ast}\|^{2}+\frac{1}{\sqrt{NT}}T\cdot\frac{N}{T}\cdot\frac{1}{N}\nu_{\max}\sigma^{2}+\frac{N}{T}6E^{2}LG^{2}\\
& =(\|\vw_{0}-\vw^{\ast}\|^{2}+\nu_{\max}\sigma^{2})\frac{1}{\sqrt{NT}}+\frac{N}{T}6E^{2}LG^{2}\\
& =\mathcal{O}(\frac{\nu_{\max}\sigma^{2}}{\sqrt{NT}}+\frac{NE^{2}LG^{2}}{T})
\end{align*}
	
	\rebuttal{
	For partial participation, the one step progress bound in Lemma \ref{lem:cvxoner} is updated in a similar manner as the strongly convex case to incorporate the sampling variance. More precisely, with partial participation, we consider two consecutive communication rounds $t+1$ and $t+1+E$. Using $\mathbb{E}\ov{w}_{t+E+1}=\ov{v}_{t+E+1}$,
we can write 
\begin{align*}
\mathbb{E}\|{\ov{w}}_{t+E+1}-\vw^{\ast}\|^{2} & =\mathbb{E}\|\ov{w}_{t+E+1}-\ov{v}_{t+E+1}+\ov{v}_{t+E+1}-\vw^{\ast}\|^{2}\\
 & =\mathbb{E}\|\ov{v}_{t+E+1}-\vw^{\ast}\|^{2}+\mathbb{E}\|\ov{w}_{t+E+1}-\ov{v}_{t+E+1}\|^{2}
\end{align*}
	
	Again, the first term can be bounded by applying the one-step bound $E$ times and summing it up, giving
	\begin{align*}
	& \mathbb{E}\|\ov{v}_{t+E+1}-\vw^{\ast}\|^{2}+\sum_{t+1}^{t+E}\alpha_{t}(F(\ov{v}_{t})-F(\vw^{\ast}))\\  \leq & \mathbb{E}\|\ov{w}_{t+1}-\vw^{\ast}\|^{2}+\sum_{t+1}^{t+E}\alpha_{t}^{2}\frac{1}{N}\nu_{\max}\sigma^{2}+6\sum_{t+1}^{t+E}\alpha_{t}^{3}E^{2}LG^{2}
	\end{align*}
	
	The bound for $\mathbb{E}\|\ov{w}_{t+E+1}-\ov{v}_{t+E+1}\|^{2}$ for the two sampling schemes we consider is again provided in \eq{\ref{eq:partialsample}}, giving the following $E$-step progress bound
	\begin{align*}
	\mathbb{E}\|{\ov{w}}_{t+E+1}-\vw^{\ast}\|^{2} + \sum_{t+1}^{t+E}\alpha_{t}(F(\ov{v}_{t})-F(\vw^{\ast})) &
	   \leq\mathbb{E}\|\ov{w}_{t+1}-\vw^{\ast}\|^{2}+\sum_{t+1}^{t+E}\alpha_{t}^{2}\frac{1}{N}\nu_{\max}\sigma^{2}\\
&+6\sum_{t+1}^{t+E}\alpha_{t}^{3}E^{2}LG^{2} +\frac{4}{K}\alpha_{t+1}^{2}E^{2}G^{2}
	\end{align*}

	Summing up the above bounds $T/E$ times, 
	\begin{align*}
	& \min_{t\leq T}F(\ov{w}_{t})-F(\vw^{\ast})\\ \leq&\frac{1}{\sum_{t=0}^{T}\alpha_{t}}\left(\|\vw_{0}-\vw^{\ast}\|^{2}+\sum_{t=0}^{T}\alpha_{t}^{2}\frac{1}{N}\nu_{\max}\sigma^{2}+ \sum_{t=E,2E,\dots} \frac{4}{K}\alpha_{t+1}^{2}E^{2}G^{2} +\sum_{t=0}^{T}\alpha_{t}^{3}\cdot6E^{2}LG^{2}\right),
	\end{align*}
	so that with $\alpha_{t}=\sqrt{\frac{K}{T}}$, we have 
	\begin{align*}
	\min_{t\leq T}F(\ov{w}_{t})-F(\vw^{\ast}) & =\mathcal{O}(\frac{\nu_{\max}\sigma^{2}}{\sqrt{KT}}+\frac{EG^{2}}{\sqrt{KT}}+\frac{KE^{2}LG^{2}}{T}).
	\end{align*}
	}
\end{proof}

\section{Proof of Convergence Results for Nesterov Accelerated FedAvg}
\label{sec:app:Nesterovfedavg}
\subsection{Strongly Convex Smooth Objectives}
\label{sec:convexsmoothsgd}
Recall that the Nesterov accelerated FedAvg follows the updates 
\begin{align*}
\mathbf{v}_{t+1}^{k} & =\mathbf{w}_{t}^{k}-\alpha_{t}\mathbf{g}_{t,k}, \hspace{1em}
\mathbf{w}_{t+1}^{k} =\begin{cases}
\mathbf{v}_{t+1}^{k}+\beta_{t}(\mathbf{v}_{t+1}^{k}-\mathbf{v}_{t}^{k}) & \text{if }t+1\notin\mathcal{I}_{E},\\
\sum_{k \in \cS_{t+1}}q_k\left[\mathbf{v}_{t+1}^{k}+\beta_{t}(\mathbf{v}_{t+1}^{k}-\mathbf{v}_{t}^{k})\right] & \text{if }t+1\in\mathcal{I}_{E}.
\end{cases}
\end{align*}

The proofs of convergence results for Nesterov Accelerated FedAvg consists of components that are direct analogues of the FedAvg case. We first state these analogue results before proving the main theorem. Like before, the proofs of the lemmas are deferred to after the main proof. 

\begin{lemma}[\textbf{One step progress, Nesterov}] Let $\overline{\mathbf{v}}_{t}=\sum_{k=1}^{N}p_{k}\mathbf{v}_{t}^{k}$ in Nesterov accelerated FedAvg,
and suppose our functions satisfy Assumptions~\ref{ass:lsmooth},\ref{ass:stroncvx},\ref{ass:boundedvariance},\ref{ass:subgrad2}, and set step sizes $\alpha_{t}=\frac{6}{\mu}\frac{1}{t+\gamma}$,  $\beta_{t-1}=\frac{3}{14(t+\gamma)(1-\frac{6}{t+\gamma})\max\{\mu,1\}}$
	with $\gamma=\max\{32\kappa,E\}$ and $\kappa=\frac{L}{\mu}$, the updates of Nesterov accelerated FedAvg satisfy
\begin{align*}
\mathbb{E}\|\ov{v}_{t+1}-\vw^{\ast}\|^{2} & \leq\mathbb{E}(1-\mu\alpha_{t})(1+\beta_{t-1})^{2}\|\ov{v}_{t}-\vw^{\ast}\|^{2}+20E^{2}L\alpha_{t}^{3}G^{2}\\
&+(1-\alpha_{t}\mu)\beta_{t-1}^{2}\|(\ov{v}_{t-1}-\vw^{\ast})\|^{2}
+\alpha_{t}^{2}\frac{1}{N}\nu_{\max}\sigma^{2}\\
&+2\beta_{t-1}(1+\beta_{t-1})(1-\alpha_{t}\mu)\|\ov{v}_{t}-\vw^{\ast}\|\cdot\|\ov{v}_{t-1}-\vw^{\ast}\|.
\end{align*}
\label{lem:nest-scvxoner}
\end{lemma}
The one step progress result makes use of the same bound on the gradient variance in~Lemma~\ref{lem:bgv}, as well as a divergence bound analogous to Lemma~\ref{lem:bdw}, which we state below.
\begin{lemma}[\textbf{Bounding the divergence of $\vw_t^k$, Nesterov}]
Given Assumption~\ref{ass:subgrad2}, and assume that $\alpha_t$ is non-increasing, $\alpha_t \leq 2\alpha_{t+E}$, and $2\beta_{t-1}^{2}+2\alpha_{t}^{2}\leq1/2$ for all $t\geq 0$, $\overline{\mathbf{w}}_{t}=\sum_{k=1}^{N}p_{k}\mathbf{w}_{t}^{k}$ in Nesterov accelerated FedAvg satisfies
	\begin{align*}
	\mathbb{E}\left[\sum_{k=1}^{N}p_{k}\|\ov{w}_{t}-\vw_{t}^{k}\|^{2} \right]\leq16(E-1)^{2}\alpha_{t}^{2}G^{2}.
	\end{align*}
\label{lem:nest-bdw}
\end{lemma}

\begin{thm}
	Let $\overline{\mathbf{v}}_{T}=\sum_{k=1}^{N}p_{k}\mathbf{v}_{T}^{k}$ in Nesterov accelerated FedAvg
	and set learning rates $\alpha_{t}=\frac{6}{\mu}\frac{1}{t+\gamma}$,  $\beta_{t-1}=\frac{3}{14(t+\gamma)(1-\frac{6}{t+\gamma})\max\{\mu,1\}}$. Then under Assumptions~\ref{ass:lsmooth},\ref{ass:stroncvx},\ref{ass:boundedvariance},\ref{ass:subgrad2} with full device participation, 
	\begin{align*}
	\mathbb{E}F(\overline{\mathbf{v}}_{T})-F^{\ast}=\mathcal{O}\left(\frac{\kappa\nu_{\max}\sigma^{2}/\mu}{NT}+\frac{\kappa^{2}E^{2}G^{2}/\mu}{T^{2}}\right),
	\end{align*}
	and with partial device participation with $K$ sampled devices at
	each communication round, 
	\begin{align*}
	\mathbb{E}F(\overline{\mathbf{v}}_{T})-F^{\ast}=\mathcal{O}\left(\frac{\kappa\nu_{\max}\sigma^{2}/\mu}{NT}+\frac{\kappa E^{2}G^{2}/\mu}{KT}+\frac{\kappa^{2}E^{2}G^{2}/\mu}{T^{2}}\right).
	\end{align*}
\end{thm}
\textbf{}%

\begin{proof}
We first prove the result for full participation. Applying the one step progress bound in Lemma~\ref{lem:nest-scvxoner}, we have
\begin{align*}
\mathbb{E}\|\ov{v}_{t+1}-\vw^{\ast}\|^{2} & \leq\mathbb{E}(1-\mu\alpha_{t})(1+\beta_{t-1})^{2}\|\ov{v}_{t}-\vw^{\ast}\|^{2}+20E^{2}L\alpha_{t}^{3}G^{2}\\
&+(1-\alpha_{t}\mu)\beta_{t-1}^{2}\|(\ov{v}_{t-1}-\vw^{\ast})\|^{2} +\alpha_{t}^{2}\frac{1}{N}\nu_{\max}\sigma^{2}\\
&+2\beta_{t-1}(1+\beta_{t-1})(1-\alpha_{t}\mu)\|\ov{v}_{t}-\vw^{\ast}\|\cdot\|\ov{v}_{t-1}-\vw^{\ast}\|.
\end{align*}
Recall that we require $\alpha_{t_{0}}\leq2\alpha_{t}$ for any
$t-t_{0}\leq E-1$, $L\alpha_{t}\leq\frac{1}{5}$, and $2\beta_{t-1}^{2}+2\alpha_{t}^{2}\leq1/2$ in order for Lemmas~\ref{lem:nest-bdw} and~\ref{lem:nest-scvxoner} to hold,
which we can check by definition of $\alpha_{t}$ and
$\beta_{t}$.\\
 We show next that $\mathbb{E}\|\ov{v}_{t}-\vw^{\ast}\|^{2}=\mathcal{O}(\frac{\nu_{\max}\sigma^{2}/\mu}{tN}+\frac{E^{2}LG^{2}/\mu^2}{t^{2}})$ 
by induction. Assume that we have shown 
\begin{align*}
\mathbb{E}\|\ov{v}_{t}-\vw^{\ast}\|^{2} & \leq b(C\alpha_{t}^{2}+D\alpha_{t})
\end{align*}
for all iterations until $t$, where $C=20E^{2}LG^{2}$, $D=\frac{1}{N}\nu_{\max}\sigma^{2}$,
and $b$ is some constant to be chosen later. For step sizes recall that we choose $\alpha_{t}=\frac{6}{\mu}\frac{1}{t+\gamma}$
and $\beta_{t-1}=\frac{3}{14(t+\gamma)(1-\frac{6}{t+\gamma})\max\{\mu,1\}}$
where $\gamma=\max\{32\kappa,E\}$, so that $\beta_{t-1}\leq\alpha_{t}$
and 
\begin{align*}
(1-\mu\alpha_{t})(1+14\beta_{t-1}) & \leq(1-\frac{6}{t+\gamma})(1+\frac{3}{(t+\gamma)(1-\frac{6}{t+\gamma})})\\
& =1-\frac{6}{t+\gamma}+\frac{3}{t+\gamma}=1-\frac{3}{t+\gamma}=1-\frac{\mu\alpha_{t}}{2}
\end{align*}

Moreover, $\mathbb{E}\|\ov{v}_{t-1}-\vw^{\ast}\|^{2}\leq b(C\alpha_{t-1}^{2}+D\alpha_{t-1})\leq 4b(C\alpha_{t}^{2}+D\alpha_{t})$
with the chosen step sizes.
Therefore the bound for $\mathbb{E}\|\ov{v}_{t+1}-\vw^{\ast}\|^{2}$
can be further simplified with 
\begin{align*}
2\beta_{t-1}(1+\beta_{t-1})(1-\alpha_{t}\mu)\mathbb{E}\|\ov{v}_{t}-\vw^{\ast}\|\|\ov{v}_{t-1}-\vw^{\ast}\| & \leq4\beta_{t-1}(1+\beta_{t-1})(1-\alpha_{t}\mu) b(C\alpha_{t}^{2}+D\alpha_{t})
\end{align*}
and 
\begin{align*}
(1-\alpha_{t}\mu)\beta_{t-1}^{2}\mathbb{E}\|(\ov{v}_{t-1}-\vw^{\ast})\|^{2} & \leq4(1-\alpha_{t}\mu)\beta_{t-1}^{2}\cdot b(C\alpha_{t}^{2}+D\alpha_{t})
\end{align*}
so that
\begin{align*}
\mathbb{E}\|\ov{v}_{t+1}-\vw^{\ast}\|^{2} & \leq(1-\mu\alpha_{t})((1+\beta_{t-1})^{2}+4\beta_{t-1}(1+\beta_{t-1})+4\beta_{t-1}^{2})\cdot b(C\alpha_{t}^{2}+D\alpha_{t})\\
& +20E^{2}L\alpha_{t}^{3}G^{2}+\alpha_{t}^{2}\frac{1}{N}\nu_{\max}\sigma^{2}\\
& \leq\mathbb{E}(1-\mu\alpha_{t})(1+14\beta_{t-1})\cdot b(C\alpha_{t}^{2}+D\alpha_{t})+20E^{2}L\alpha_{t}^{3}G^{2}+\alpha_{t}^{2}\frac{1}{N}\nu_{\max}\sigma^{2}\\
& \leq b(1-\frac{\mu\alpha_{t}}{2})(C\alpha_{t}^{2}+D\alpha_{t})+C\alpha_{t}^{3}+D\alpha_{t}^{2}\\
& =(b(1-\frac{\mu\alpha_{t}}{2})+\alpha_{t})\alpha_{t}^{2}C+(b(1-\frac{\mu\alpha_{t}}{2})+\alpha_{t})\alpha_{t}D
\end{align*}
and so it remains to choose $b$ such that 
\begin{align*}
(b(1-\frac{\mu\alpha_{t}}{2})+\alpha_{t})\alpha_{t} & \leq b\alpha_{t+1}\\
(b(1-\frac{\mu\alpha_{t}}{2})+\alpha_{t})\alpha_{t}^{2} & \leq b\alpha_{t+1}^{2}
\end{align*}
from which we can conclude $\mathbb{E}\|\ov{v}_{t+1}-\vw^{\ast}\|^{2}\leq\alpha_{t+1}^{2}C+\alpha_{t+1}D$.

With $b=\frac{6}{\mu}$, we have
\begin{align*}
(b(1-\frac{\mu\alpha_{t}}{2})+\alpha_{t})\alpha_{t} & =(b(1-(\frac{3}{t+\gamma})+\frac{6}{\mu(t+\gamma)})\frac{6}{\mu(t+\gamma)}\\
& =(b\frac{t+\gamma-3}{t+\gamma}+\frac{6}{\mu(t+\gamma)})\frac{6}{\mu(t+\gamma)}\\
& \leq b(\frac{t+\gamma-1}{t+\gamma})\frac{6}{\mu(t+\gamma)}\\
& \leq b\frac{6}{\mu(t+\gamma+1)}=b\alpha_{t+1}
\end{align*}
where we have used $\frac{t+\gamma-1}{(t+\gamma)^{2}}\leq\frac{1}{t+\gamma+1}$.

Similarly 
\begin{align*}
(b(1-\frac{\mu\alpha_{t}}{2})+\alpha_{t})\alpha_{t}^{2} & =(b(1-(\frac{3}{t+\gamma})+\frac{6}{\mu(t+\gamma)})(\frac{6}{\mu(t+\gamma)})^{2}\\
& =(b\frac{t+\gamma-3}{t+\gamma}+\frac{6}{\mu(t+\gamma)})(\frac{6}{\mu(t+\gamma)})^{2}\\
& =b(\frac{t+\gamma-2}{t+\gamma})(\frac{6}{\mu(t+\gamma)})^{2}\\
& \leq b\frac{36}{\mu^{2}(t+\gamma+1)^{2}}=b\alpha_{t+1}^{2}
\end{align*}
where we have used $\frac{t+\gamma-2}{(t+\gamma)^{3}}\leq\frac{1}{(t+\gamma+1)^{2}}$.

Finally, to ensure $\|\vv_{0}-\vw^{\ast}\|^{2}\leq b(C\alpha_{0}^{2}+D\alpha_{0})$,
we can rescale $b$ by $c\|\vv_{0}-\vw^{\ast}\|^{2}$ for some $c.$ It
follows that $\mathbb{E}\|\ov{v}_{t}-\vw^{\ast}\|^{2}\leq b(C\alpha_{t}^{2}+D\alpha_{t})$
for all $t\geq0$. Using the $L$-smooothness of $F$,
\begin{align*}
\mathbb{E}(F(\ov{v}_{T}))-F^{\ast} & =\mathbb{E}(F(\ov{v}_{T})-F(\vw^{\ast}))\\
& \leq\frac{L}{2}\mathbb{E}\|\ov{v}_{T}-\vw^{\ast}\|^{2}\leq\frac{L}{2}c\|\vv_{0}-\vw^{\ast}\|^{2}\frac{6}{\mu}(D\alpha_{T}+C\alpha_{T}^{2})\\
& =3c\|\vv_{0}-\vw^{\ast}\|^{2}\kappa(D\alpha_{T}+C\alpha_{T}^{2})\\
& \leq3c\|\vv_{0}-\vw^{\ast}\|^{2}\kappa\left[\frac{6}{\mu(T+\gamma)}\cdot\frac{1}{N}\nu_{\max}\sigma^{2}+20E^{2}LG^{2}\cdot(\frac{6}{\mu(T+\gamma)})^{2}\right]\\
& =\mathcal{O}(\frac{\kappa}{\mu}\frac{1}{N}\nu_{\max}\sigma^{2}\cdot\frac{1}{T}+\frac{\kappa^{2}}{\mu}E^{2}G^{2}\cdot\frac{1}{T^{2}})
\end{align*}

With partial participation, the same argument as in the FedAvg case in Theorem \ref{thm:SGD_scvx} by adding a term for sampling error every $E$ steps yields
\begin{align*}
\mathbb{E}F(\ov{w}_{T})-F^{\ast}=\mathcal{O}(\frac{\kappa\nu_{\max}\sigma^{2}/\mu}{NT}+\frac{\kappa EG^{2}/\mu}{KT}+\frac{\kappa^{2}E^{2}G^{2}/\mu}{T^{2}})
\end{align*}
\end{proof}

\subsubsection{Deferred Proofs of Key Lemmas}

\begin{proof}[Proof of lemma~\ref{lem:nest-bdw}]
	The proof of bound for $\mathbb{E}\sum_{k=1}^{N}p_{k}\|\ov{w}_{t}-\vw_{t}^{k}\|^{2}$ in the Nesterov accelerated FedAvg follows a similar logic as in Lemma~\ref{lem:bdw}, but requires extra reasoning.  
Since communication is done every $E$ steps, for any $t\geq0$, we
can find a $t_{0}\leq t$ such that $t-t_{0}\leq E-1$ and $w_{t_{0}}^{k}=\ov{w}_{t_{0}}$for
all $k$. Moreover, using $\alpha_{t}$ is non-increasing, $\alpha_{t_{0}}\leq2\alpha{}_{t}$,
and $\beta_{t}\leq\alpha_{t}$ for any $t-t_{0}\leq E-1$, we have
\begin{align*}
\mathbb{E}\sum_{k=1}^{N}p_{k}\|\ov{w}_{t}-\vw_{t}^{k}\|^{2} & =\mathbb{E}\sum_{k=1}^{N}p_{k}\|\vw_{t}^{k}-\ov{w}_{t_{0}}-(\ov{w}_{t}-\ov{w}_{t_{0}})\|^{2}\\
& \leq\mathbb{E}\sum_{k=1}^{N}p_{k}\|\vw_{t}^{k}-\ov{w}_{t_{0}}\|^{2}\\
& =\mathbb{E}\sum_{k=1}^{N}p_{k}\|\vw_{t}^{k}-\vw_{t_{0}}^{k}\|^{2}\\
& =\mathbb{E}\sum_{k=1}^{N}p_{k}\|\sum_{i=t_{0}}^{t-1}\beta_{i}(\vv_{i+1}^{k}-\vv_{i}^{k})-\sum_{i=t_{0}}^{t-1}\alpha_{i}\vg_{i,k}\|^{2}\\
& \leq2\sum_{k=1}^{N}p_{k}\mathbb{E}\sum_{i=t_{0}}^{t-1}(E-1)\alpha_{i}^{2}\|\vg_{i,k}\|^{2}+2\sum_{k=1}^{N}p_{k}\mathbb{E}\sum_{i=t_{0}}^{t-1}(E-1)\beta_{i}^{2}\|(\vv_{i+1}^{k}-\vv_{i}^{k})\|^{2}\\
& \leq2\sum_{k=1}^{N}p_{k}\mathbb{E}\sum_{i=t_{0}}^{t-1}(E-1)\alpha_{i}^{2}(\|\vg_{i,k}\|^{2}+\|(\vv_{i+1}^{k}-\vv_{i}^{k})\|^{2})\\
& \leq4\sum_{k=1}^{N}p_{k}\mathbb{E}\sum_{i=t_{0}}^{t-1}(E-1)\alpha_{i}^{2}G^{2}\\
& \leq4(E-1)^{2}\alpha_{t_{0}}^{2}G^{2}\leq16(E-1)^{2}\alpha_{t}^{2}G^{2}
\end{align*}
where we have used $\mathbb{E}\|\vv_{t}^{k}-\vv_{t-1}^{k}\|^{2}\leq G^{2}$.
To see this identity for appropriate $\alpha_{t},\beta_{t}$, note
the recursion 
\begin{align*}
\vv_{t+1}^{k}-\vv_{t}^{k} & =\vw_{t}^{k}-\vw_{t-1}^{k}-(\alpha_{t}\vg_{t,k}-\alpha_{t-1}\vg_{t-1,k})\\
\vw_{t+1}^{k}-\vw_{t}^{k} & =-\alpha_{t}\vg_{t,k}+\beta_{t}(\vv_{t+1}^{k}-\vv_{t}^{k})
\end{align*}
so that 
\begin{align*}
\vv_{t+1}^{k}-\vv_{t}^{k} & =-\alpha_{t-1}\vg_{t-1,k}+\beta_{t-1}(\vv_{t}^{k}-\vv_{t-1}^{k})-(\alpha_{t}\vg_{t,k}-\alpha_{t-1}\vg_{t-1,k})\\
& =\beta_{t-1}(\vv_{t}^{k}-\vv_{t-1}^{k})-\alpha_{t}\vg_{t,k}
\end{align*}
Since the identity $\vv_{t+1}^{k}-\vv_{t}^{k}=\beta_{t-1}(\vv_{t}^{k}-\vv_{t-1}^{k})-\alpha_{t}\vg_{t,k}$
implies 
\begin{align*}
\mathbb{E}\|\vv_{t+1}^{k}-\vv_{t}^{k}\|^{2} & \leq2\beta_{t-1}^{2}\mathbb{E}\|\vv_{t}^{k}-\vv_{t-1}^{k}\|^{2}+2\alpha_{t}^{2}G^{2}
\end{align*}
as long as $\alpha_{t},\beta_{t-1}$ satisfy $2\beta_{t-1}^{2}+2\alpha_{t}^{2}\leq1/2$,
we can guarantee that $\mathbb{E}\|\vv_{t}^{k}-\vv_{t-1}^{k}\|^{2}\leq G^{2}$
for all $k$ by induction. This together with Jensen's inequality
also gives $\mathbb{E}\|\ov{v}_{t}-\ov{v}_{t-1}\|^{2}\leq G^{2}$
for all $t$. 
\end{proof}

Now we are ready to prove the one step progress result for Nesterov accelerated FedAvg. The first part of the proof is identical to that of the FedAvg case, while the main recursion takes a different form.
\begin{proof}[Proof of lemma~\ref{lem:nest-scvxoner}]
We again have 
\begin{align*}
\|\ov{v}_{t+1}-\vw^{\ast}\|^{2} & =\|(\ov{w}_{t}-\alpha_{t}\vg_{t})-\vw^{\ast}\|^{2}
\end{align*}
and using exactly the same derivation as the FedAvg case, we can obtain the following bound (same as \eq{\ref{eq:common recursion}} in the proof of Lemma~\ref{lem:scvxoner}):
\begin{align*}
	\mathbb{E}\|\ov{w}_{t+1}-\vw^{\ast}\|^{2} & \leq\mathbb{E}(1-\mu\alpha_{t})\|\ov{w}_{t}-\vw^{\ast}\|^{2}+\alpha_{t}L\sum_{k=1}^{N}p_{k}\|\ov{w}_{t}-\vw_{t}^{k}\|^{2}+\alpha_{t}^{2}\sum_{k=1}^{N}p_{k}^{2}\sigma_{k}^{2}\\
	& +\alpha_{t}^{2}L^{2}\sum_{k=1}^{N}p_{k}\|\ov{w}_{t}-\vw_{t}^{k}\|^{2}+\alpha_{t}^{3}L\mathbb{E}\|\vg_{t}\|^{2}-\alpha_{t}^{2}\|\nabla F(\ov{w}_{t})\|^{2}
	\end{align*}

Different from the FedAvg case, we no longer have $\ov{w}_{t}=\ov{v}_{t}$. Instead,
\begin{align*}
&\|\ov{w}_{t}-\vw^{\ast}\|^{2} \\
 =&\|\ov{v}_{t}+\beta_{t-1}(\ov{v}_{t}-\ov{v}_{t-1})-\vw^{\ast}\|^{2}\\
 =&\|(1+\beta_{t-1})(\ov{v}_{t}-\vw^{\ast})-\beta_{t-1}(\ov{v}_{t-1}-\vw^{\ast})\|^{2}\\
 =&(1+\beta_{t-1})^{2}\|\ov{v}_{t}-\vw^{\ast}\|^{2}-2\beta_{t-1}(1+\beta_{t-1})\langle\ov{v}_{t}-\vw^{\ast},\ov{v}_{t-1}-\vw^{\ast}\rangle+\beta_{t-1}^{2}\|(\ov{v}_{t-1}-\vw^{\ast})\|^{2}\\
 \leq &(1+\beta_{t-1})^{2}\|\ov{v}_{t}-\vw^{\ast}\|^{2}+2\beta_{t-1}(1+\beta_{t-1})\|\ov{v}_{t}-\vw^{\ast}\|\cdot\|\ov{v}_{t-1}-\vw^{\ast}\|+\beta_{t-1}^{2}\|(\ov{v}_{t-1}-\vw^{\ast})\|^{2}
\end{align*}
which gives a recursion involving both $\ov{v}_{t}$ and $\ov{v}_{t-1}$:
\begin{align*}
&\|\ov{v}_{t+1}-\vw^{\ast}\|^{2} \\
& \leq(1-\alpha_{t}\mu)(1+\beta_{t-1})^{2}\|\ov{v}_{t}-\vw^{\ast}\|^{2}+2(1-\alpha_{t}\mu)\beta_{t-1}(1+\beta_{t-1})\|\ov{v}_{t}-\vw^{\ast}\|\cdot\|\ov{v}_{t-1}-\vw^{\ast}\|\\
&+\alpha_{t}^{2}\sum_{k=1}^{N}p_{k}^{2}\sigma_{k}^{2} +\beta_{t-1}^{2}(1-\alpha_{t}\mu)\|(\ov{v}_{t-1}-\vw^{\ast})\|^{2}+\alpha_{t}L\sum_{k=1}^{N}p_{k}\|\ov{w}_{t}-\vw_{t}^{k}\|^{2}\\
&+\alpha_{t}^{2}L^{2}\sum_{k}p_{k}\|\ov{w}_{t}-\vw_{t}^{k}\|^{2}+\alpha_{t}^{3}LG^{2}
\end{align*}
and we will using this recursive relation to obtain the desired bound. 

We can check that our choice of $\alpha_t$ and $\beta_t$ satisfy $\alpha_t$ is non-increasing, $\alpha_t \leq 2\alpha_{t+E}$, and $2\beta_{t-1}^{2}+2\alpha_{t}^{2}\leq1/2$ for all $t\geq 0$, so that we can apply the bound from Lemma~\ref{lem:nest-bdw} on $\mathbb{E}\sum_{k=1}^{N}p_{k}\|\ov{w}_{t}-\vw_{t}^{k}\|^{2}$ to conclude that, with $\nu_{\max}:=N\cdot\max_{k}p_{k}$,
\begin{align*}
&\mathbb{E}\|\ov{v}_{t+1}-\vw^{\ast}\|^{2}\\  \leq & \mathbb{E}(1-\mu\alpha_{t})(1+\beta_{t-1})^{2}\|\ov{v}_{t}-\vw^{\ast}\|^{2}+16E^{2}L\alpha_{t}^{3}G^{2}+16E^{2}L^{2}\alpha_{t}^{4}G^{2}+\alpha_{t}^{3}LG^{2}\\
 +&(1-\alpha_{t}\mu)\beta_{t-1}^{2}\|(\ov{v}_{t-1}-\vw^{\ast})\|^{2} \\
 +&\alpha_{t}^{2}\sum_{k=1}^{N}p_{k}^{2}\sigma_{k}^{2}+2\beta_{t-1}(1+\beta_{t-1})(1-\alpha_{t}\mu)\|\ov{v}_{t}-\vw^{\ast}\|\cdot\|\ov{v}_{t-1}-\vw^{\ast}\|\\
 \leq&\mathbb{E}(1-\mu\alpha_{t})(1+\beta_{t-1})^{2}\|\ov{v}_{t}-\vw^{\ast}\|^{2}+20E^{2}L\alpha_{t}^{3}G^{2}+(1-\alpha_{t}\mu)\beta_{t-1}^{2}\|(\ov{v}_{t-1}-\vw^{\ast})\|^{2}\\
+&\alpha_{t}^{2}\frac{1}{N}\nu_{\max}\sigma^{2}+2\beta_{t-1}(1+\beta_{t-1})(1-\alpha_{t}\mu)\|\ov{v}_{t}-\vw^{\ast}\|\cdot\|\ov{v}_{t-1}-\vw^{\ast}\|
\end{align*}
where we have used $\sigma^{2}=\sum_{k}p_{k}\sigma_{k}^{2}$, and by construction our $\alpha_{t}$
satisfies $L\alpha_{t}\leq\frac{1}{5}$.
\end{proof}

\subsection{Convex Smooth Objectives}
\label{sec:nasgdcvxsmth}
In this section we provide proof of the convergence result for Nesterov accelerated FedAvg with convex and smooth objectives. Unlike with the FedAvg algorithm, where convex and strongly convex results share identical components, the proof for the convergence result in the convex setting for Nesterov FedAvg uses a change of variables, although the general ideas are in the same vein: we have a one step progress bound for $	\mathbb{E}\|\ov{w}_{t+1}-\vw^{\ast}\|^{2}+\eta_{t}(F(\ov{w}_{t})-F(\vw^{\ast}))$, which is then used to form a telescoping sum that gives an upper bound on $\min_{t\leq T}F(\ov{w}_{t})-F(\vw^{\ast})$.

\begin{lemma} [\textbf{One step progress, convex case, Nesterov}]
Let $\overline{\mathbf{w}}_{t}=\sum_{k=1}^{N}p_{k}\mathbf{w}_{t}^{k}$ in Nesterov accelerated FedAvg, and define $\eta_{t}=\frac{\alpha_{t}}{1-\beta_{t}}$. Under assumptions~\ref{ass:lsmooth},\ref{ass:boundedvariance},\ref{ass:subgrad2}, the following bound holds for all $t$:
\begin{align*}
	&\mathbb{E}\|\ov{w}_{t+1}-\vw^{\ast}\|^{2}+\eta_{t}(F(\ov{w}_{t})-F(\vw^{\ast})) \\
 \leq & \mathbb{E}\|\ov{w}_{t}-\vw^{\ast}\|^{2}+32LE^{2}\alpha_{t}^{2}\eta_{t}G^{2}+\eta_{t}^{2}\nu_{\max}\frac{1}{N}\sigma^{2}+2\eta_{t}\frac{\beta_{t}^{2}}{1-\beta_{t}}G^{2}.
	\end{align*}
	\label{lem:nest-cvxoner}
\end{lemma}

\begin{thm}
	Set learning rates $\alpha_{t}=\beta_{t}=\mathcal{O}(\sqrt{\frac{N}{T}})$. Then under Assumptions~\ref{ass:lsmooth},\ref{ass:boundedvariance},\ref{ass:subgrad2} Nesterov accelerated FedAvg with
	full device participation has rate
	\begin{align*}
	\min_{t\leq T}F(\overline{\mathbf{w}}_{t})-F^{\ast} & =\mathcal{O}\left(\frac{\nu_{\max}\sigma^{2}}{\sqrt{NT}}+\frac{NE^{2}LG^{2}}{T}\right),
	\end{align*}
	and with partial device participation with $K$ sampled devices at
	each communication round and learning rates $\alpha_{t}=\beta_{t}=\mathcal{O}(\sqrt{\frac{K}{T}})$,
	\begin{align*}
	\min_{t\leq T}F(\overline{\mathbf{w}}_{t})-F^{\ast} & =\mathcal{O}\left(\frac{\nu_{\max}\sigma^{2}}{\sqrt{KT}}+\frac{EG^{2}}{\sqrt{KT}}+\frac{KE^{2}LG^{2}}{T}\right).
	\end{align*}
\end{thm}

\begin{proof}
	
    Applying the bound from Lemma~\ref{lem:nest-cvxoner}, with $\eta_{t}=\frac{\alpha_{t}}{1-\beta_{t}}$ we have
	\begin{align*}
	& \mathbb{E}\|\ov{w}_{t+1}-\vw^{\ast}\|^{2}+\eta_{t}(F(\ov{w}_{t})-F(\vw^{\ast})) \\
  \leq & \mathbb{E}\|\ov{w}_{t}-\vw^{\ast}\|^{2}+32LE^{2}\alpha_{t}^{2}\eta_{t}G^{2}+\eta_{t}^{2}\nu_{\max}\frac{1}{N}\sigma^{2}+2\eta_{t}\frac{\beta_{t}^{2}}{1-\beta_{t}}G^{2}
	\end{align*}
	Summing the inequalities from $t=0$ to $t=T$, we obtain 
	\begin{align*}
	& \sum_{t=0}^{T}\eta_{t}(F(\ov{w}_{t})-F(\vw^{\ast})) \\
 \leq & \|\vw_{0}-\vw^{\ast}\|^{2}+\sum_{t=0}^{T}\eta_{t}^{2}\cdot\frac{1}{N}\nu_{\max}\sigma^{2}+\sum_{t=0}^{T}\eta_{t}\alpha_{t}^{2}\cdot32LE^{2}G^{2}+\sum_{t=0}^{T}2\eta_{t}\frac{\beta_{t}^{2}}{1-\beta_{t}}G^{2}
	\end{align*}
	so that
	\begin{align*}
	&\min_{t\leq T}F(\ov{w}_{t})-F(\vw^{\ast}) \\
  \leq & \frac{1}{\sum_{t=0}^{T}\eta_{t}}\left(\|\vw_{0}-\vw^{\ast}\|^{2}+\sum_{t=0}^{T}\eta_{t}^{2}\cdot\frac{1}{N}\nu_{\max}\sigma^{2}+\sum_{t=0}^{T}\eta_{t}\alpha_{t}^{2}\cdot32LE^{2}G^{2}+\sum_{t=0}^{T}2\eta_{t}\frac{\beta_{t}^{2}}{1-\beta_{t}}G^{2}\right)
	\end{align*}
	
	By setting the constant learning rates $\alpha_{t}\equiv\sqrt{\frac{N}{T}}$
	and $\beta_{t}\equiv c\sqrt{\frac{N}{T}}$ so that $\eta_{t}=\frac{\alpha_{t}}{1-\beta_{t}}=\frac{\sqrt{\frac{N}{T}}}{1-c\sqrt{\frac{N}{T}}}\leq2\sqrt{\frac{N}{T}}$,
	we have 
	\begin{align*}
	&\min_{t\leq T}F(\ov{w}_{t})-F(\vw^{\ast})\leq\frac{1}{2\sqrt{NT}}\cdot\|\vw_{0}-\vw^{\ast}\|^{2}\\
 &+\frac{2}{\sqrt{NT}}T\cdot\frac{N}{T}\cdot\frac{1}{N}\nu_{\max}\sigma^{2}+\frac{1}{\sqrt{NT}}T(\sqrt{\frac{N}{T}})^{3}32LE^{2}G^{2}+\frac{2}{\sqrt{NT}}T(\sqrt{\frac{N}{T}})^{3}G^{2}\\
	& =(\frac{1}{2}\|\vw_{0}-\vw^{\ast}\|^{2}+2\nu_{\max}\sigma^{2})\frac{1}{\sqrt{NT}}+\frac{N}{T}(32LE^{2}G^{2}+2G^{2})\\
	& =O(\frac{\nu_{\max}\sigma^{2}}{\sqrt{NT}}+\frac{NE^{2}LG^{2}}{T})
	\end{align*}
	
	Similarly, for partial participation, using the same argument to get the $E$-step bound in the proof of Theorem \ref{thm:SGD_cvx},
	we have 
	\begin{align*}
	\min_{t\leq T}F(\ov{w}_{t})-F(\vw^{\ast}) & =\mathcal{O}(\frac{\nu_{\max}\sigma^{2}}{\sqrt{KT}}+\frac{EG^{2}}{\sqrt{KT}}+\frac{KE^{2}LG^{2}}{T})
	\end{align*}
\end{proof}

\subsubsection{Deferred Proofs of Key Lemmas}
\begin{proof}[Proof of lemma~\ref{lem:nest-cvxoner}]
    Define $\ov{p}_{t}:=\frac{\beta_{t}}{1-\beta_{t}}\left[\ov{w}_{t}-\ov{w}_{t-1}+\alpha_{t}\vg_{t-1}\right]=\frac{\beta_{t}^{2}}{1-\beta_{t}}(\ov{v}_{t}-\ov{v}_{t-1})$
	for $t\geq1$ and 0 for $t=0$. We can check that 
	\begin{align*}
	\ov{w}_{t+1}+\ov{p}_{t+1} & =\ov{w}_{t}+\ov{p}_{t}-\frac{\alpha_{t}}{1-\beta_{t}}\vg_{t}
	\end{align*}
	Now we define $\ov{z}_{t}:=\ov{w}_{t}+\ov{p}_{t}$
	and $\eta_{t}=\frac{\alpha_{t}}{1-\beta_{t}}$ for all $t$, so that
	we have the recursive relation 
	\begin{align*}
	\ov{z}_{t+1} & =\ov{z}_{t}-\eta_{t}\vg_{t}
	\end{align*}
	Now 
	\begin{align*}
	\|\ov{z}_{t+1}-\vw^{\ast}\|^{2} & =\|(\ov{z}_{t}-\eta_{t}\vg_{t})-\vw^{\ast}\|^{2}\\
	& =\|(\ov{z}_{t}-\eta_{t}\ov{g}_{t}-\vw^{\ast})-\eta_{t}(\vg_{t}-\ov{g}_{t})\|^{2}\\
	& =A_{1}+A_{2}+A_{3}
	\end{align*}
	where 
	\begin{align*}
	A_{1} & =\|\ov{z}_{t}-\vw^{\ast}-\eta_{t}\ov{g}_{t}\|^{2}\\
	A_{2} & =2\eta_{t}\langle\ov{z}_{t}-\vw^{\ast}-\eta_{t}\ov{g}_{t},\ov{g}_{t}-\vg_{t}\rangle\\
	A_{3} & =\eta_{t}^{2}\|\vg_{t}-\ov{g}_{t}\|^{2}
	\end{align*}
	where again $\mathbb{E}A_{2}=0$ and $\mathbb{E}A_{3}\leq\eta_{t}^{2}\sum_{k}p_{k}^{2}\sigma_{k}^{2}$.
	For $A_{1}$ we have 
	\begin{align*}
	\|\ov{z}_{t}-\vw^{\ast}-\eta_{t}\ov{g}_{t}\|^{2} & =\|\ov{z}_{t}-\vw^{\ast}\|^{2}+2\langle\ov{z}_{t}-\vw^{\ast},-\eta_{t}\ov{g}_{t}\rangle+\|\eta_{t}\ov{g}_{t}\|^{2}
	\end{align*}
	Using the convexity and $L$-smoothness of $F_{k}$, 
	\begin{align*}
	& -2\eta_{t}\langle\ov{z}_{t}-\vw^{\ast},\ov{g}_{t}\rangle\\
	& =-2\eta_{t}\sum_{k=1}^{N}p_{k}\langle\ov{z}_{t}-\vw^{\ast},\nabla F_{k}(\vw_{t}^{k})\rangle\\
	& =-2\eta_{t}\sum_{k=1}^{N}p_{k}\langle\ov{z}_{t}-\vw_{t}^{k},\nabla F_{k}(\vw_{t}^{k})\rangle-2\eta_{t}\sum_{k=1}^{N}p_{k}\langle \vw_{t}^{k}-\vw^{\ast},\nabla F_{k}(\vw_{t}^{k})\rangle\\
	& =-2\eta_{t}\sum_{k=1}^{N}p_{k}\langle\ov{z}_{t}-\ov{w}_{t},\nabla F_{k}(\vw_{t}^{k})\rangle-2\eta_{t}\sum_{k=1}^{N}p_{k}\langle\ov{w}_{t}-\vw_{t}^{k},\nabla F_{k}(\vw_{t}^{k})\rangle\\
 &-2\eta_{t}\sum_{k=1}^{N}p_{k}\langle \vw_{t}^{k}-\vw^{\ast},\nabla F_{k}(\vw_{t}^{k})\rangle\\
	& \leq-2\eta_{t}\sum_{k=1}^{N}p_{k}\langle\ov{z}_{t}-\ov{w}_{t},\nabla F_{k}(\vw_{t}^{k})\rangle-2\eta_{t}\sum_{k=1}^{N}p_{k}\langle\ov{w}_{t}-\vw_{t}^{k},\nabla F_{k}(\vw_{t}^{k})\rangle\\
 &+2\eta_{t}\sum_{k=1}^{N}p_{k}(F_{k}(\vw^{\ast})-F_{k}(\vw_{t}^{k}))\\
	& \leq2\eta_{t}\sum_{k=1}^{N}p_{k}\left[F_{k}(\vw_{t}^{k})-F_{k}(\ov{w}_{t})+\frac{L}{2}\|\ov{w}_{t}-\vw_{t}^{k}\|^{2}+F_{k}(\vw^{\ast})-F_{k}(\vw_{t}^{k})\right]\\
	& -2\eta_{t}\sum_{k=1}^{N}p_{k}\langle\ov{z}_{t}-\ov{w}_{t},\nabla F_{k}(\vw_{t}^{k})\rangle\\
	& =\eta_{t}L\sum_{k=1}^{N}p_{k}\|\ov{w}_{t}-\vw_{t}^{k}\|^{2}+2\eta_{t}\sum_{k=1}^{N}p_{k}\left[F_{k}(\vw^{\ast})-F_{k}(\ov{w}_{t})\right]-2\eta_{t}\sum_{k=1}^{N}p_{k}\langle\ov{z}_{t}-\ov{w}_{t},\nabla F_{k}(\vw_{t}^{k})\rangle
	\end{align*}
	which results in 
	\begin{align*}
	\mathbb{E}\|\ov{w}_{t+1}-\vw^{\ast}\|^{2} & \leq\mathbb{E}\|\ov{w}_{t}-\vw^{\ast}\|^{2}+\eta_{t}L\sum_{k=1}^{N}p_{k}\|\ov{w}_{t}-\vw_{t}^{k}\|^{2}+2\eta_{t}\sum_{k=1}^{N}p_{k}\left[F_{k}(\vw^{\ast})-F_{k}(\ov{w}_{t})\right]\\
	& +\eta_{t}^{2}\|\ov{g}_{t}\|^{2}+\eta_{t}^{2}\sum_{k=1}^{N}p_{k}^{2}\sigma_{k}^{2}-2\eta_{t}\sum_{k=1}^{N}p_{k}\langle\ov{z}_{t}-\ov{w}_{t},\nabla F_{k}(\vw_{t}^{k})\rangle
	\end{align*}
	As before, $\|\ov{g}_{t}\|^{2}\leq2L^{2}\sum_{k}p_{k}\|\vw_{t}^{k}-\ov{w}_{t}\|^{2}+4L(F(\ov{w}_{t})-F(\vw^{\ast}))$,
	so that 
	\begin{align*}
\eta_{t}^{2}\|\ov{g}_{t}\|^{2}+\eta_{t}\sum_{k=1}^{N}p_{k}\left[F_{k}(\vw^{\ast})-F_{k}(\ov{w}_{t})\right] & \leq2L^{2}\eta_{t}^{2}\sum_{k}p_{k}\|\vw_{t}^{k}-\ov{w}_{t}\|^{2}\\
&+\eta_{t}(1-4\eta_{t}L)(F(\vw^{\ast})-F(\ov{w}_{t}))\\
	& \leq2L^{2}\eta_{t}^{2}\sum_{k}p_{k}\|\vw_{t}^{k}-\ov{w}_{t}\|^{2}
	\end{align*}
	for $\eta_{t}\le1/4L$. Using $\sum_{k=1}^{N}p_{k}\|\ov{w}_{t}-\vw_{t}^{k}\|^{2}\leq16E^{2}\alpha_{t}^{2}G^{2}$
	and $\sum_{k=1}^{N}p_{k}^{2}\sigma_{k}^{2}\leq\nu_{\max}\frac{1}{N}\sigma^{2}$,
	it follows that 
	\begin{align*}
	\mathbb{E}\|\ov{w}_{t+1}-\vw^{\ast}\|^{2}+\eta_{t}(F(\ov{w}_{t})-F(\vw^{\ast})) & \leq\mathbb{E}\|\ov{w}_{t}-\vw^{\ast}\|^{2}+(\eta_{t}L+2L^{2}\eta_{t}^{2})\sum_{k=1}^{N}p_{k}\|\ov{w}_{t}-\vw_{t}^{k}\|^{2}\\
 &+\eta_{t}^{2}\sum_{k=1}^{N}p_{k}^{2}\sigma_{k}^{2} -2\eta_{t}\sum_{k=1}^{N}p_{k}\langle\ov{z}_{t}-\ov{w}_{t},\nabla F_{k}(\vw_{t}^{k})\rangle\\
	& \leq\mathbb{E}\|\ov{w}_{t}-\vw^{\ast}\|^{2}+32LE^{2}\alpha_{t}^{2}\eta_{t}G^{2}+\eta_{t}^{2}\nu_{\max}\frac{1}{N}\sigma^{2}\\
	& -2\eta_{t}\sum_{k=1}^{N}p_{k}\langle\ov{z}_{t}-\ov{w}_{t},\nabla F_{k}(\vw_{t}^{k})\rangle
	\end{align*}
	if $\eta_{t}\leq\frac{1}{2L}$. It remains to bound $\mathbb{E}\sum_{k=1}^{N}p_{k}\langle\ov{z}_{t}-\ov{w}_{t},\nabla F_{k}(\vw_{t}^{k})\rangle$.
	Recall that $\ov{z}_{t}-\ov{w}_{t}=\frac{\beta_{t}}{1-\beta_{t}}\left[\ov{w}_{t}-\ov{w}_{t-1}+\alpha_{t}\vg_{t-1}\right]=\frac{\beta_{t}^{2}}{1-\beta_{t}}(\ov{v}_{t}-\ov{v}_{t-1})$
	and $\mathbb{E}\|\ov{v}_{t}-\ov{v}_{t-1}\|^{2}\leq G^{2}$,
	$\mathbb{E}\|\nabla F_{k}(\vw_{t}^{k})\|^{2}\leq G^{2}$. 
	
	Cauchy-Schwarz gives
	\begin{align*}
	\mathbb{E}\sum_{k=1}^{N}p_{k}\langle\ov{z}_{t}-\ov{w}_{t},\nabla F_{k}(\vw_{t}^{k})\rangle & \leq\sum_{k=1}^{N}p_{k}\sqrt{\mathbb{E}\|\ov{z}_{t}-\ov{w}_{t}\|^{2}}\cdot\sqrt{\mathbb{E}\|\nabla F_{k}(\vw_{t}^{k})\|^{2}}\\
	& \leq\frac{\beta_{t}^{2}}{1-\beta_{t}}G^{2}
	\end{align*}
	Thus 
	\begin{align*}
	\mathbb{E}\|\ov{w}_{t+1}-\vw^{\ast}\|^{2}+\eta_{t}(F(\ov{w}_{t})-F(\vw^{\ast})) & \leq\mathbb{E}\|\ov{w}_{t}-\vw^{\ast}\|^{2}+32LE^{2}\alpha_{t}^{2}\eta_{t}G^{2}\\
 &+\eta_{t}^{2}\nu_{\max}\frac{1}{N}\sigma^{2}+2\eta_{t}\frac{\beta_{t}^{2}}{1-\beta_{t}}G^{2}
	\end{align*}
\end{proof}

\section{Proof of Geometric Convergence Results for Overparameterized Problems}
\label{sec:interpolation}

\subsection{Geometric Convergence of FedAvg for General Strongly Convex and Smooth Objectives}

\begin{thm}
	For the overparameterized setting with general strongly convex and
	smooth objectives, FedAvg with local SGD updates and communication
	every $E$ iterations with constant step size $\overline{\alpha}=\frac{1}{2E}\frac{N}{l\nu_{\max}+L(N-\nu_{\min})}$
	gives the exponential convergence guarantee 
	\begin{align*}
	\mathbb{E}F(\ov{w}_{t}) & \leq\frac{L}{2}(1-\mu\overline{\alpha})^{t}\|\vw_{0}-\vw^{\ast}\|^{2}=O(\exp(-\frac{\mu}{2E}\frac{N}{l\nu_{\max}+L(N-\nu_{\min})}t)\cdot\|\vw_{0}-\vw^{\ast}\|^{2})
	\end{align*}
\end{thm}
\begin{proof}
	To illustrate the main ideas of the proof, we first present the proof
	for $E=2$. Let $t-1$ be a communication round, so that $\vw_{t-1}^{k}=\ov{w}_{t-1}$.
	We show that 
	
	\begin{align*}
	\|\ov{w}_{t+1}-\vw^{\ast}\|^{2} & \leq(1-\alpha_{t}\mu)(1-\alpha_{t-1}\mu)\|\ov{w}_{t-1}-\vw^{\ast}\|^{2}
	\end{align*}
	for appropriately chosen constant step sizes $\alpha_{t},\alpha_{t-1}$.
	We have 
	
	\begin{align*}
	\|\ov{w}_{t+1}-\vw^{\ast}\|^{2} & =\|(\ov{w}_{t}-\alpha_{t}\vg_{t})-\vw^{\ast}\|^{2}\\
	& =\|\ov{w}_{t}-\vw^{\ast}\|^{2}-2\alpha_{t}\langle\ov{w}_{t}-\vw^{\ast},\vg_{t}\rangle+\alpha_{t}^{2}\|\vg_{t}\|^{2}
	\end{align*}
	and the cross term can be bounded as usual using $\mu$-convexity
	and $L$-smoothness of $F_{k}$:
	\begin{align*}
	&-2\alpha_{t}\mathbb{E}_{t}\langle\ov{w}_{t}-\vw^{\ast},\vg_{t}\rangle\\
	& =-2\alpha_{t}\sum_{k=1}^{N}p_{k}\langle\ov{w}_{t}-\vw^{\ast},\nabla F_{k}(\vw_{t}^{k})\rangle\\
	& =-2\alpha_{t}\sum_{k=1}^{N}p_{k}\langle\ov{w}_{t}-\vw_{t}^{k},\nabla F_{k}(\vw_{t}^{k})\rangle-2\alpha_{t}\sum_{k=1}^{N}p_{k}\langle \vw_{t}^{k}-\vw^{\ast},\nabla F_{k}(\vw_{t}^{k})\rangle\\
	& \leq-2\alpha_{t}\sum_{k=1}^{N}p_{k}\langle\ov{w}_{t}-\vw_{t}^{k},\nabla F_{k}(\vw_{t}^{k})\rangle+2\alpha_{t}\sum_{k=1}^{N}p_{k}(F_{k}(\vw^{\ast})-F_{k}(\vw_{t}^{k}))-\alpha_{t}\mu\sum_{k=1}^{N}p_{k}\|\vw_{t}^{k}-\vw^{\ast}\|^{2}\\
	& \leq2\alpha_{t}\sum_{k=1}^{N}p_{k}\left[F_{k}(\vw_{t}^{k})-F_{k}(\ov{w}_{t})+\frac{L}{2}\|\ov{w}_{t}-\vw_{t}^{k}\|^{2}+F_{k}(\vw^{\ast})-F_{k}(\vw_{t}^{k})\right]-\alpha_{t}\mu\|\sum_{k=1}^{N}p_{k}(\vw_{t}^{k}-\vw^{\ast})\|^{2}\\
	& =\alpha_{t}L\sum_{k=1}^{N}p_{k}\|\ov{w}_{t}-\vw_{t}^{k}\|^{2}+2\alpha_{t}\sum_{k=1}^{N}p_{k}\left[F_{k}(\vw^{\ast})-F_{k}(\ov{w}_{t})\right]-\alpha_{t}\mu\|\ov{w}_{t}-\vw^{\ast}\|^{2}\\
	& =\alpha_{t}L\sum_{k=1}^{N}p_{k}\|\ov{w}_{t}-\vw_{t}^{k}\|^{2}-2\alpha_{t}\sum_{k=1}^{N}p_{k}F_{k}(\ov{w}_{t})-\alpha_{t}\mu\|\ov{w}_{t}-\vw^{\ast}\|^{2}
	\end{align*}
	and so 
	\begin{align*}
	\mathbb{E}\|\ov{w}_{t+1}-\vw^{\ast}\|^{2} & \leq\mathbb{E}(1-\alpha_{t}\mu)\|\ov{w}_{t}-\vw^{\ast}\|^{2}-2\alpha_{t}F(\ov{w}_{t})+\alpha_{t}^{2}\|\vg_{t}\|^{2}+\alpha_{t}L\sum_{k=1}^{N}p_{k}\|\ov{w}_{t}-\vw_{t}^{k}\|^{2}
	\end{align*}
	
	Applying this recursive relation to $\|\ov{w}_{t}-\vw^{\ast}\|^{2}$
	and using $\|\ov{w}_{t-1}-\vw_{t-1}^{k}\|^{2}\equiv0$, we further
	obtain 
	\begin{align*}
	\mathbb{E}\|\ov{w}_{t+1}-\vw^{\ast}\|^{2} & \leq\mathbb{E}(1-\alpha_{t}\mu)\left((1-\alpha_{t-1}\mu)\|\ov{w}_{t-1}-\vw^{\ast}\|^{2}-2\alpha_{t-1}F(\ov{w}_{t-1})+\alpha_{t-1}^{2}\|\vg_{t-1}\|^{2}\right)\\
	& -2\alpha_{t}F(\ov{w}_{t})+\alpha_{t}^{2}\|\vg_{t}\|^{2}+\alpha_{t}L\sum_{k=1}^{N}p_{k}\|\ov{w}_{t}-\vw_{t}^{k}\|^{2}
	\end{align*}
	Now instead of bounding $\sum_{k=1}^{N}p_{k}\|\ov{w}_{t}-\vw_{t}^{k}\|^{2}$
	using the arguments in the general convex case, we follow~\cite{ma2017power} and use the fact that
	in the overparameterized setting, $\vw^{\ast}$ is a minimizer of each
	$\ell(\vw,x_{k}^{j})$ and that each $\ell$ is $l$-smooth to obtain
	$\|\nabla F_{k}(\ov{w}_{t-1},\xi_{t-1}^{k})\|^{2}\leq2l(F_{k}(\ov{w}_{t-1},\xi_{t-1}^{k})-F_{k}(\vw^{\ast},\xi_{t-1}^{k}))$,
	where recall $F_{k}(\vw,\xi_{t-1}^{k})=\ell(\vw,\xi_{t-1}^{k})$, so that
	\begin{align*}
	\sum_{k=1}^{N}p_{k}\|\ov{w}_{t}-\vw_{t}^{k}\|^{2} & =\sum_{k=1}^{N}p_{k}\|\ov{w}_{t-1}-\alpha_{t-1}\vg_{t-1}-\vw_{t-1}^{k}+\alpha_{t-1}\vg_{t-1,k}\|^{2}\\
	& =\sum_{k=1}^{N}p_{k}\alpha_{t-1}^{2}\|\vg_{t-1}-\vg_{t-1,k}\|^{2}\\
	& =\alpha_{t-1}^{2}\sum_{k=1}^{N}p_{k}(\|\vg_{t-1,k}\|^{2}-\|\vg_{t-1}\|^{2})\\
	& =\alpha_{t-1}^{2}\sum_{k=1}^{N}p_{k}\|\nabla F_{k}(\ov{w}_{t-1},\xi_{t-1}^{k})\|^{2}-\alpha_{t-1}^{2}\|\vg_{t-1}\|^{2}\\
	& \le\alpha_{t-1}^{2}\sum_{k=1}^{N}p_{k}2l(F_{k}(\ov{w}_{t-1},\xi_{t-1}^{k})-F_{k}(\vw^{\ast},\xi_{t-1}^{k}))-\alpha_{t-1}^{2}\|\vg_{t-1}\|^{2}
	\end{align*}
	again using $\ov{w}_{t-1}=\vw_{t-1}^{k}$. Taking expectation
	with respect to $\xi_{t-1}^{k}$'s and using the fact that $F(\vw^{\ast})=0$,
	we have 
	\begin{align*}
	\mathbb{E}_{t-1}\sum_{k=1}^{N}p_{k}\|\ov{w}_{t}-\vw_{t}^{k}\|^{2} & \leq2l\alpha_{t-1}^{2}\sum_{k=1}^{N}p_{k}F_{k}(\ov{w}_{t-1})-\alpha_{t-1}^{2}\|\vg_{t-1}\|^{2}\\
	& =2l\alpha_{t-1}^{2}F(\ov{w}_{t-1})-\alpha_{t-1}^{2}\|\vg_{t-1}\|^{2}
	\end{align*}
	
	Note also that 
	\begin{align*}
	\|\vg_{t-1}\|^{2} & =\|\sum_{k=1}^{N}p_{k}\nabla F_{k}(\ov{w}_{t-1},\xi_{t-1}^{k})\|^{2}
	\end{align*}
	while
	\begin{align*}
&\|\vg_{t}\|^{2}=\|\sum_{k=1}^{N}p_{k}\nabla F_{k}(\vw_{t}^{k},\xi_{t}^{k})\|^{2} \\
 \leq& 2\|\sum_{k=1}^{N}p_{k}\nabla F_{k}(\ov{w}_{t},\xi_{t}^{k})\|^{2}+2\|\sum_{k=1}^{N}p_{k}(\nabla F_{k}(\ov{w}_{t},\xi_{t}^{k})-\nabla F_{k}(\vw_{t}^{k},\xi_{t}^{k}))\|^{2}\\
	 \leq& 2\|\sum_{k=1}^{N}p_{k}\nabla F_{k}(\ov{w}_{t},\xi_{t}^{k})\|^{2}+2\sum_{k=1}^{N}p_{k}l^{2}\|\ov{w}_{t}-\vw_{t}^{k}\|^{2}
	\end{align*}
	Substituting these into the bound for $\|\ov{w}_{t+1}-\vw^{\ast}\|^{2}$,
	we have 
	
	\begin{align*}
	&\mathbb{E}\|\ov{w}_{t+1}-\vw^{\ast}\|^{2} \\
 &\leq\mathbb{E}(1-\alpha_{t}\mu)((1-\alpha_{t-1}\mu)\|\ov{w}_{t-1}-\vw^{\ast}\|^{2}-2\alpha_{t-1}F(\ov{w}_{t-1})+\alpha_{t-1}^{2}\|\vg_{t-1}\|^{2})\\
	& -2\alpha_{t}F(\ov{w}_{t})+2\alpha_{t}^{2}\|\sum_{k=1}^{N}p_{k}\nabla F_{k}(\ov{w}_{t},\xi_{t}^{k})\|^{2}+\left(2l^{2}\alpha_{t-1}^{2}\alpha_{t}^{2}+\alpha_{t}\alpha_{t-1}^{2}L\right)\left(2lF(\ov{w}_{t-1})-\|\vg_{t-1}\|^{2}\right)\\
	& =\mathbb{E}(1-\alpha_{t}\mu)(1-\alpha_{t-1}\mu)\|\ov{w}_{t-1}-\vw^{\ast}\|^{2}\\
	& -2\alpha_{t}(F(\ov{w}_{t})-\alpha_{t}\|\sum_{k=1}^{N}p_{k}\nabla F_{k}(\ov{w}_{t},\xi_{t}^{k})\|^{2})\\
	& -2\alpha_{t-1}(1-\alpha_{t}\mu)\left((1-\frac{l\alpha_{t-1}(2l^{2}\alpha_{t}^{2}+\alpha_{t}L)}{1-\alpha_{t}\mu})F(\ov{w}_{t-1})-\frac{\alpha_{t-1}}{2}\|\sum_{k=1}^{N}p_{k}\nabla F_{k}(\ov{w}_{t-1},\xi_{t-1}^{k})\|^{2}\right)
	\end{align*}
	from which we can conclude that 
	\begin{align*}
	\mathbb{E}\|\ov{w}_{t+1}-\vw^{\ast}\|^{2} & \leq(1-\alpha_{t}\mu)(1-\alpha_{t-1}\mu)\mathbb{E}\|\ov{w}_{t-1}-\vw^{\ast}\|^{2}
	\end{align*}
	if we can choose $\alpha_{t},\alpha_{t-1}$ to guarantee
	\begin{align*}
	\mathbb{E}(F(\ov{w}_{t})-\alpha_{t}\|\sum_{k=1}^{N}p_{k}\nabla F_{k}(\ov{w}_{t},\xi_{t}^{k})\|^{2}) & \geq0\\
	\mathbb{E}\left((1-\frac{l\alpha_{t-1}(2l^{2}\alpha_{t}^{2}+\alpha_{t}L)}{1-\alpha_{t}\mu})F(\ov{w}_{t-1})-\frac{\alpha_{t-1}}{2}\|\sum_{k=1}^{N}p_{k}\nabla F_{k}(\ov{w}_{t-1},\xi_{t-1}^{k})\|^{2}\right) & \geq0
	\end{align*}
	
	Note that 
	\begin{align*}
 &\mathbb{E}_{t}\|\sum_{k=1}^{N}p_{k}\nabla F_{k}(\ov{w}_{t},\xi_{t}^{k})\|^{2} \\
=&\mathbb{E}_{t}\langle\sum_{k=1}^{N}p_{k}\nabla F_{k}(\ov{w}_{t},\xi_{t}^{k}),\sum_{k=1}^{N}p_{k}\nabla F_{k}(\ov{w}_{t},\xi_{t}^{k})\rangle\\
	& =\sum_{k=1}^{N}p_{k}^{2}\mathbb{E}_{t}\|\nabla F_{k}(\ov{w}_{t},\xi_{t}^{k})\|^{2}+\sum_{k=1}^{N}\sum_{j\neq k}p_{j}p_{k}\mathbb{E}_{t}\langle\nabla F_{k}(\ov{w}_{t},\xi_{t}^{k}),\nabla F_{j}(\ov{w}_{t},\xi_{t}^{j})\rangle\\
	& =\sum_{k=1}^{N}p_{k}^{2}\mathbb{E}_{t}\|\nabla F_{k}(\ov{w}_{t},\xi_{t}^{k})\|^{2}+\sum_{k=1}^{N}\sum_{j\neq k}p_{j}p_{k}\langle\nabla F_{k}(\ov{w}_{t}),\nabla F_{j}(\ov{w}_{t})\rangle\\
	& =\sum_{k=1}^{N}p_{k}^{2}\mathbb{E}_{t}\|\nabla F_{k}(\ov{w}_{t},\xi_{t}^{k})\|^{2}+\sum_{k=1}^{N}\sum_{j=1}^{N}p_{j}p_{k}\langle\nabla F_{k}(\ov{w}_{t}),\nabla F_{j}(\ov{w}_{t})\rangle-\sum_{k=1}^{N}p_{k}^{2}\|\nabla F_{k}(\ov{w}_{t})\|^{2}\\
	& \leq\sum_{k=1}^{N}p_{k}^{2}\mathbb{E}_{t}\|\nabla F_{k}(\ov{w}_{t},\xi_{t}^{k})\|^{2}+\|\sum_{k}p_{k}\nabla F_{k}(\ov{w}_{t})\|^{2}-\frac{1}{N}\nu_{\min}\|\sum_{k}p_{k}\nabla F_{k}(\ov{w}_{t})\|^{2}\\
	& =\sum_{k=1}^{N}p_{k}^{2}\mathbb{E}_{t}\|\nabla F_{k}(\ov{w}_{t},\xi_{t}^{k})\|^{2}+(1-\frac{1}{N}\nu_{\min})\|\nabla F(\ov{w}_{t})\|^{2}
	\end{align*}
	and so following~\cite{ma2017power} if we let $\alpha_{t}=\min\{\frac{qN}{2l\nu_{\max}},\frac{1-q}{2L(1-\frac{1}{N}\nu_{\min})}\}$
	for a $q\in[0,1]$ to be optimized later, we have 
	\begin{align*}
	& \mathbb{E}_{t}(F(\ov{w}_{t})-\alpha_{t}\|\sum_{k=1}^{N}p_{k}\nabla F_{k}(\ov{w}_{t},\xi_{t}^{k})\|^{2})\\
	& \geq\mathbb{E}_{t}\sum_{k=1}^{N}p_{k}F_{k}(\ov{w}_{t})-\alpha_{t}\left[\sum_{k=1}^{N}p_{k}^{2}\mathbb{E}_{t}\|\nabla F_{k}(\ov{w}_{t},\xi_{t}^{k})\|^{2}+(1-\frac{1}{N}\nu_{\min})\|\nabla F(\ov{w}_{t})\|^{2}\right]\\
	& \geq\mathbb{E}_{t}\sum_{k=1}^{N}p_{k}(qF_{k}(\ov{w}_{t},\xi_{t}^{k})-\alpha_{t}\frac{1}{N}\nu_{\max}\|\nabla F_{k}(\ov{w}_{t},\xi_{t}^{k})\|^{2})\\
 &+((1-q)F(\ov{w}_{t})-\alpha_{t}(1-\frac{1}{N}\nu_{\min})\|\nabla F(\ov{w}_{t})\|^{2})\\
	& \geq q\mathbb{E}_{t}\sum_{k=1}^{N}p_{k}(F_{k}(\ov{w}_{t},\xi_{t}^{k})-\frac{1}{2l}\|\nabla F_{k}(\ov{w}_{t},\xi_{t}^{k})\|^{2})+(1-q)(F(\ov{w}_{t})-\frac{1}{2L}\|\nabla F(\ov{w}_{t})\|^{2})\\
	& \geq0
	\end{align*}
	again using $\vw^{\ast}$ optimizes $F_{k}(\vw,\xi_{t}^{k})$ with $F_{k}(\vw^{\ast},\xi_{t}^{k})=0$. 
	
	Maximizing $\alpha_{t}=\min\{\frac{qN}{2l\nu_{\max}},\frac{1-q}{2L(1-\frac{1}{N}\nu_{\min})}\}$
	over $q\in[0,1]$, we see that $q=\frac{l\nu_{\max}}{l\nu_{\max}+L(N-\nu_{\min})}$
	results in the fastest convergence, and this translates to $\alpha_{t}=\frac{1}{2}\frac{N}{l\nu_{\max}+L(N-\nu_{\min})}$.
	Next we claim that $\alpha_{t-1}=c\frac{1}{2}\frac{N}{l\nu_{\max}+L(N-\nu_{\min})}$
	also guarantees
	\begin{align*}
	\mathbb{E}(1-\frac{l\alpha_{t-1}(2l^{2}\alpha_{t}^{2}+\alpha_{t}L)}{1-\alpha_{t}\mu})F(\ov{w}_{t-1})-\frac{\alpha_{t-1}}{2}\|\sum_{k=1}^{N}p_{k}\nabla F_{k}(\ov{w}_{t-1},\xi_{t-1}^{k})\|^{2} & \geq0
	\end{align*}
	
	Note that by scaling $\alpha_{t-1}$ by a constant $c\leq1$ if necessary,
	we can guarantee that $\frac{l\alpha_{t-1}(2l^{2}\alpha_{t}^{2}+\alpha_{t}L)}{1-\alpha_{t}\mu}\leq\frac{1}{2}$,
	and so the condition is equivalent to 
	\begin{align*}
	F(\ov{w}_{t-1})-\alpha_{t-1}\|\sum_{k=1}^{N}p_{k}\nabla F_{k}(\ov{w}_{t-1},\xi_{t-1}^{k})\|^{2} & \geq0
	\end{align*}
	which was shown to hold with $\alpha_{t-1}\leq\frac{1}{2}\frac{N}{l\nu_{\max}+L(N-\nu_{\min})}$. 
	
	For the proof of general $E\ge2$, we use the following two identities:
	\begin{align*}
	\|\vg_{t}\|^{2} & \leq2\|\sum_{k=1}^{N}p_{k}\nabla F_{k}(\ov{w}_{t},\xi_{t}^{k})\|^{2}+2\sum_{k=1}^{N}p_{k}l^{2}\|\ov{w}_{t}-\vw_{t}^{k}\|^{2}\\
	\mathbb{E}\sum_{k=1}^{N}p_{k}\|\ov{w}_{t}-\vw_{t}^{k}\|^{2} & \leq\mathbb{E}2(1+2l^{2}\alpha_{t-1}^{2})\sum_{k=1}^{N}p_{k}\|\ov{w}_{t-1}-\vw_{t-1}^{k}\|^{2}+8\alpha_{t-1}^{2}lF(\ov{w}_{t-1})-2\alpha_{t-1}^{2}\|\vg_{t-1}\|^{2}
	\end{align*}
	where the first inequality has been established before. To establish
	the second inequality, note that 
	\begin{align*}
	\sum_{k=1}^{N}p_{k}\|\ov{w}_{t}-\vw_{t}^{k}\|^{2} & =\sum_{k=1}^{N}p_{k}\|\ov{w}_{t-1}-\alpha_{t-1}\vg_{t-1}-\vw_{t-1}^{k}+\alpha_{t-1}\vg_{t-1,k}\|^{2}\\
	& \leq2\sum_{k=1}^{N}p_{k}\left(\|\ov{w}_{t-1}-\vw_{t-1}^{k}\|^{2}+\|\alpha_{t-1}\vg_{t-1}-\alpha_{t-1}\vg_{t-1,k}\|^{2}\right)
	\end{align*}
	and
	\begin{align*}
	& \sum_{k}p_{k}\|\vg_{t-1,k}-\vg_{t-1}\|^{2}=\sum_{k}p_{k}(\|\vg_{t-1,k}\|^{2}-\|\vg_{t-1}\|^{2})\\
	& =\sum_{k}p_{k}\|\nabla F_{k}(\ov{w}_{t-1},\xi_{t-1}^{k})+\nabla F_{k}(\vw_{t-1}^{k},\xi_{t-1}^{k})-\nabla F_{k}(\ov{w}_{t-1},\xi_{t-1}^{k})\|^{2}-\|\vg_{t-1}\|^{2}\\
	& \leq2\sum_{k}p_{k}\left(\|\nabla F_{k}(\ov{w}_{t-1},\xi_{t-1}^{k})\|^{2}+l^{2}\|\vw_{t-1}^{k}-\ov{w}_{t-1}\|^{2}\right)-\|\vg_{t-1}\|^{2}
	\end{align*}
	so that using the $l$-smoothness of $\ell$, 
	\begin{align*}
	& \mathbb{E}\sum_{k=1}^{N}p_{k}\|\ov{w}_{t}-\vw_{t}^{k}\|^{2}\\
	& \leq\mathbb{E}2(1+2l^{2}\alpha_{t-1}^{2})\sum_{k=1}^{N}p_{k}\|\ov{w}_{t-1}-\vw_{t-1}^{k}\|^{2}+4\alpha_{t-1}^{2}\sum_{k}p_{k}\|\nabla F_{k}(\ov{w}_{t-1},\xi_{t-1}^{k})\|^{2}-2\alpha_{t-1}^{2}\|\vg_{t-1}\|^{2}\\
	& \leq\mathbb{E}2(1+2l^{2}\alpha_{t-1}^{2})\sum_{k=1}^{N}p_{k}\|\ov{w}_{t-1}-\vw_{t-1}^{k}\|^{2}\\
 &+4\alpha_{t-1}^{2}2l\sum_{k}p_{k}(F_{k}(\ov{w}_{t-1},\xi_{t-1}^{k})-F_{k}(\vw^{\ast},\xi_{t-1}^{k}))-2\alpha_{t-1}^{2}\|\vg_{t-1}\|^{2}\\
	& =\mathbb{E}2(1+2l^{2}\alpha_{t-1}^{2})\sum_{k=1}^{N}p_{k}\|\ov{w}_{t-1}-\vw_{t-1}^{k}\|^{2}+8\alpha_{t-1}^{2}lF(\ov{w}_{t-1})-2\alpha_{t-1}^{2}\|\vg_{t-1}\|^{2}
	\end{align*}

	Using the first inequality, we have 
	\begin{align*}
	\mathbb{E}\|\ov{w}_{t+1}-\vw^{\ast}\|^{2} & \leq\mathbb{E}(1-\alpha_{t}\mu)\|\ov{w}_{t}-\vw^{\ast}\|^{2}\\
	& -2\alpha_{t}F(\ov{w}_{t})+2\alpha_{t}^{2}\|\sum_{k=1}^{N}p_{k}\nabla F_{k}(\ov{w}_{t},\xi_{t}^{k})\|^{2}\\
	& +(2\alpha_{t}^{2}l^{2}+\alpha_{t}L)\sum_{k=1}^{N}p_{k}\|\ov{w}_{t}-\vw_{t}^{k}\|^{2}
	\end{align*}
	and we choose $\alpha_{t}$ and $\alpha_{t-1}$ such that $\mathbb{E}(F(\ov{w}_{t})-\alpha_{t}\|\sum_{k=1}^{N}p_{k}\nabla F_{k}(\ov{w}_{t},\xi_{t}^{k})\|^{2})\geq0$
	and $(2\alpha_{t}^{2}l^{2}+\alpha_{t}L)\leq(1-\alpha_{t}\mu)(2\alpha_{t-1}^{2}l^{2}+\alpha_{t-1}L)/3$.
	This gives 
	\begin{align*}
	&\mathbb{E}\|\ov{w}_{t+1}-\vw^{\ast}\|^{2}\\
 & \leq\mathbb{E}(1-\alpha_{t}\mu)[(1-\alpha_{t-1}\mu)\|\ov{w}_{t-1}-\vw^{\ast}\|^{2}-2\alpha_{t-1}F(\ov{w}_{t-1})+2\alpha_{t-1}^{2}\|\sum_{k=1}^{N}p_{k}\nabla F_{k}(\ov{w}_{t-1},\xi_{t-1}^{k})\|^{2}\\
	& +(2\alpha_{t-1}^{2}l^{2}+\alpha_{t-1}L)(\sum_{k=1}^{N}p_{k}\|\ov{w}_{t-1}-\vw_{t-1}^{k}\|^{2}+\sum_{k=1}^{N}p_{k}\|\ov{w}_{t}-\vw_{t}^{k}\|^{2})/3]
	\end{align*}
	
	Using the second inequality
	\begin{align*}
	\sum_{k=1}^{N}p_{k}\|\ov{w}_{t}-\vw_{t}^{k}\|^{2} & \leq\mathbb{E}2(1+2l^{2}\alpha_{t-1}^{2})\sum_{k=1}^{N}p_{k}\|\ov{w}_{t-1}-\vw_{t-1}^{k}\|^{2}+8\alpha_{t-1}^{2}lF(\ov{w}_{t-1})-2\alpha_{t-1}^{2}\|\vg_{t-1}\|^{2}
	\end{align*}
	and that $2(1+2l^{2}\alpha_{t-1}^{2})\leq3$, $2\alpha_{t-1}^{2}l^{2}+\alpha_{t-1}L\le1$,
	we have 
	\begin{align*}
	\mathbb{E}\|\ov{w}_{t+1}-\vw^{\ast}\|^{2} & \leq\mathbb{E}(1-\alpha_{t}\mu)[(1-\alpha_{t-1}\mu)\|\ov{w}_{t-1}-\vw^{\ast}\|^{2}\\
	& -2\alpha_{t-1}F(\ov{w}_{t-1})+2\alpha_{t-1}^{2}\|\sum_{k=1}^{N}p_{k}\nabla F_{k}(\ov{w}_{t-1},\xi_{t-1}^{k})\|^{2}+8\alpha_{t-1}^{2}lF(\ov{w}_{t-1})\\
	& +(2\alpha_{t-1}^{2}l^{2}+\alpha_{t-1}L)(2\sum_{k=1}^{N}p_{k}\|\ov{w}_{t-1}-\vw_{t-1}^{k}\|^{2})]
	\end{align*}
	and if $\alpha_{t-1}$ is chosen such that 
	\begin{align*}
	(F(\ov{w}_{t-1})-4\alpha_{t-1}lF(\ov{w}_{t-1}))-\alpha_{t-1}\|\sum_{k=1}^{N}p_{k}\nabla F_{k}(\ov{w}_{t-1},\xi_{t-1}^{k})\|^{2}\geq0
	\end{align*}
	and
	\begin{align*} (2\alpha_{t-1}^{2}l^{2}+\alpha_{t-1}L)(1-\alpha_{t-1}\mu)
	&\leq(2\alpha_{t-2}^{2}l^{2}+\alpha_{t-2}L)/3
	\end{align*}
	we again have 
	\begin{align*}
	&\mathbb{E}\|\ov{w}_{t+1}-\vw^{\ast}\|^{2} \\
 & \leq\mathbb{E}(1-\alpha_{t}\mu)(1-\alpha_{t-1}\mu)[\|\ov{w}_{t-1}-\vw^{\ast}\|^{2}+(2\alpha_{t-2}^{2}l^{2}+\alpha_{t-2}L)\cdot(2\sum_{k=1}^{N}p_{k}\|\ov{w}_{t-1}-\vw_{t-1}^{k}\|^{2})/3]
	\end{align*}
	
	Applying the above derivation iteratively $\tau<E$ times, we have
	\begin{align*}
	\mathbb{E}\|\ov{w}_{t+1}-\vw^{\ast}\|^{2} & \leq\mathbb{E}(1-\alpha_{t}\mu)\cdots(1-\alpha_{t-\tau+1}\mu)[(1-\alpha_{t-\tau}\mu)\|\ov{w}_{t-\tau}-\vw^{\ast}\|^{2}\\
	& -2\alpha_{t-\tau}F(\ov{w}_{t-\tau})+2\alpha_{t-\tau}^{2}\|\sum_{k=1}^{N}p_{k}\nabla F_{k}(\ov{w}_{t-\tau},\xi_{t-\tau}^{k})\|^{2}+8\tau\alpha_{t-\tau}^{2}lF(\ov{w}_{t-\tau})\\
	& +(2\alpha_{t-\tau}^{2}l^{2}+\alpha_{t-\tau}L)((\tau+1)\sum_{k=1}^{N}p_{k}\|\ov{w}_{t-\tau}-\vw_{t-\tau}^{k}\|^{2})]
	\end{align*}
	as long as the step sizes $\alpha_{t-\tau}$ are chosen such that
	the following inequalities hold 
	\begin{align*}
	(2\alpha_{t-\tau}^{2}l^{2}+\alpha_{t-\tau}L)(1-\alpha_{t-\tau}\mu) & \leq(2\alpha_{t-\tau-1}^{2}l^{2}+\alpha_{t-\tau-1}L)/3\\
	2(1+2l^{2}\alpha_{t-\tau}^{2}) & \leq3\\
	2\alpha_{t-\tau}^{2}l^{2}+\alpha_{t-\tau}L & \leq1\\
	(F(\ov{w}_{t-\tau})-4\tau\alpha_{t-\tau}lF(\ov{w}_{t-\tau}))-\alpha_{t-\tau}\|\sum_{k=1}^{N}p_{k}\nabla F_{k}(\ov{w}_{t-\tau},\xi_{t-\tau}^{k})\|^{2} & \geq0
	\end{align*}
	We can check that setting $\alpha_{t-\tau}=c\frac{1}{\tau+1}\frac{N}{l\nu_{\max}+L(N-\nu_{\min})}$
	for some small constant $c$ satisfies the requirements. 
	
	Since communication is done every $E$ iterations, $\ov{w}_{t_{0}}=\vw_{t_{0}}^{k}$
	for some $t_{0}>t-E$ , from which we can conclude that 
	
	\begin{align*}
	\mathbb{E}\|\ov{w}_{t}-\vw^{\ast}\|^{2} & \leq(\prod_{\tau=1}^{t-t_{0}-1}(1-\mu\alpha_{t-\tau}))\|\vw_{t_{0}}-\vw^{\ast}\|^{2}\\
	& \leq(1-c\frac{\mu}{E}\frac{N}{l\nu_{\max}+L(N-\nu_{\min})})^{t-t_{0}}\|\vw_{t_{0}}-\vw^{\ast}\|^{2}
	\end{align*}
	and applying this inequality to iterations between each communication
	round, 
	\begin{align*}
	\mathbb{E}\|\ov{w}_{t}-\vw^{\ast}\|^{2} & \leq(1-c\frac{\mu}{E}\frac{N}{l\nu_{\max}+L(N-\nu_{\min})})^{t}\|\vw_{0}-\vw^{\ast}\|^{2}\\
	& =O(\exp(\frac{\mu}{E}\frac{N}{l\nu_{\max}+L(N-\nu_{\min})}t))\|\vw_{0}-\vw^{\ast}\|^{2}
	\end{align*}
	
	With partial participation, we note that 
	\begin{align*}
	\mathbb{E}\|\ov{w}_{t+1}-\vw^{\ast}\|^{2} & =\mathbb{E}\|\ov{w}_{t+1}-\ov{v}_{t+1}+\ov{v}_{t+1}-\vw^{\ast}\|^{2}\\
	& =\mathbb{E}\|\ov{w}_{t+1}-\ov{v}_{t+1}\|^{2}+\mathbb{E}\|\ov{v}_{t+1}-\vw^{\ast}\|^{2}\\
	& =\frac{1}{K}\sum_{k}p_{k}\mathbb{E}\|\vw_{t+1}^{k}-\ov{w}_{t+1}\|^{2}+\mathbb{E}\|\ov{v}_{t+1}-\vw^{\ast}\|^{2}
	\end{align*}
	and so the recursive identity becomes 
	\begin{align*}
	\mathbb{E}\|\ov{w}_{t+1}-\vw^{\ast}\|^{2} & \leq\mathbb{E}(1-\alpha_{t}\mu)\cdots(1-\alpha_{t-\tau+1}\mu)[(1-\alpha_{t-\tau}\mu)\|\ov{w}_{t-\tau}-\vw^{\ast}\|^{2}\\
	& -2\alpha_{t-\tau}F(\ov{w}_{t-\tau})+2\alpha_{t-\tau}^{2}\|\sum_{k=1}^{N}p_{k}\nabla F_{k}(\ov{w}_{t-\tau},\xi_{t-\tau}^{k})\|^{2}+8\tau\alpha_{t-\tau}^{2}lF(\ov{w}_{t-\tau})\\
	& +(2\alpha_{t-\tau}^{2}l^{2}+\alpha_{t-\tau}L+\frac{1}{K})((\tau+1)\sum_{k=1}^{N}p_{k}\|\ov{w}_{t-\tau}-\vw_{t-\tau}^{k}\|^{2})]
	\end{align*}
	which requires 
	\begin{align*}
	(2\alpha_{t-\tau}^{2}l^{2}+\alpha_{t-\tau}L+\frac{1}{K})(1-\alpha_{t-\tau}\mu) & \leq(2\alpha_{t-\tau-1}^{2}l^{2}+\alpha_{t-\tau-1}L+\frac{1}{K})/3\\
	2(1+2l^{2}\alpha_{t-\tau}^{2}) & \leq3\\
	2\alpha_{t-\tau}^{2}l^{2}+\alpha_{t-\tau}L+\frac{1}{K} & \leq1\\
	(F(\ov{w}_{t-\tau})-4\tau\alpha_{t-\tau}lF(\ov{w}_{t-\tau}))&-\alpha_{t-\tau}\|\sum_{k=1}^{N}p_{k}\nabla F_{k}(\ov{w}_{t-\tau},\xi_{t-\tau}^{k})\|^{2}  \geq0
	\end{align*}
	to hold. Again setting $\alpha_{t-\tau}=c\frac{1}{\tau+1}\frac{N}{l\nu_{\max}+L(N-\nu_{\min})}$
	for a possibly different constant from before satisfies the requirements.
	
	Finally, using the $L$-smoothness of $F$, 
	\begin{align*}
	F(\ov{w}_{T})-F(\vw^{\ast}) & \leq\frac{L}{2}\mathbb{E}\|\ov{w}_{T}-\vw^{\ast}\|^{2}=O(L\exp(-\frac{\mu}{E}\frac{N}{l\nu_{\max}+L(N-\nu_{\min})}T))\|\vw_{0}-\vw^{\ast}\|^{2}
	\end{align*}
\end{proof}

\subsection{Geometric Convergence of FedAvg for Overparameterized Linear Regression}
\label{app:geometric_proof}

We first provide details on quantities used in the proof of results on linear regression in Section~\ref{sec:app:overparameterized}. Recall that the local device objectives are now given by the sum of squares {\small$F_{k}(\mathbf{w})=\frac{1}{2n_{k}}\sum_{j=1}^{n_{k}}(\mathbf{w}^{T}\mathbf{x}_{k}^{j}-\vz_{k}^{j})^{2}$},
and there exists $\mathbf{w}^{\ast}$ such that $F(\mathbf{w}^{\ast})\equiv0$. 
Define the local Hessian matrix as $\vH^{k}:=\frac{1}{n_{k}}\sum_{j=1}^{n_{k}}\mathbf{x}_{k}^{j}(\mathbf{x}_{k}^{j})^{T}$, and the stochastic Hessian matrix as $\tilde{\vH}_{t}^{k}:=\xi_{t}^{k}(\xi_{t}^{k})^{T}$, where $\xi_{t}^{k}$ is the stochastic sample on the $k$th device at
time $t$. Define $l$ to be the smallest positive number such that $\mathbb{E}\|\xi_{t}^{k}\|^{2}$$\mathbf{\xi}_{t}^{k}$($\mathbf{\xi}_{t}^{k})^{T}\preceq l\vH^{k}$ for all $k$. Note that $l\leq\max_{k,j}\|\mathbf{x}_{k}^{j}\|^{2}$.
Let $L$ and $\mu$ be lower and upper bounds of non-zero eigenvalues
of $\vH^{k}$. Define $\kappa_{1}:=l/\mu$ and $\kappa:=L/\mu$. The condition number $\kappa_{1}$ is important in the characterization of convergence rates for FedAvg
algorithms. Note that $\kappa_{1}>\kappa$.

Let $\vH=\sum_{k}p_k\vH^k$. In general $\vH$ has zero eigenvalues. However, because the null space
of $\vH$ and range of $\vH$ are orthogonal, in our subsequence analysis
it suffices to project $\overline{\mathbf{w}}_{t}-\mathbf{w}^{\ast}$
onto the range of $\vH$, thus we may restrict to the non-zero eigenvalue
of $\vH$. 

A useful observation is that we can use $\mathbf{w}^{\ast T}\mathbf{x}_{k}^{j}-\vz_{k}^{j}\equiv0$
to rewrite the local objectives as $F_{k}(\mathbf{w})=\frac{1}{2}\langle\mathbf{w}-\mathbf{w}^{\ast},\vH^{k}(\mathbf{w}-\mathbf{w}^{\ast})\rangle\equiv\frac{1}{2}\|\mathbf{w}-\mathbf{w}^{\ast}\|_{\vH^{k}}^{2}$:
\begin{align*}
F_{k}(\vw) & =\frac{1}{2n_{k}}\sum_{j=1}^{n_{k}}(\vw^{T}\vx_{k,j}-\vz_{k,j}-(\vw^{\ast T}\vx_{k,j}-\vz_{k,j}))^{2}=\frac{1}{2n_{k}}\sum_{j=1}^{n_{k}}((\vw-\vw^{\ast})^{T}\vx_{k,j})^{2}\\
& =\frac{1}{2}\langle \vw-\vw^{\ast},\vH^{k}(\vw-\vw^{\ast})\rangle=\frac{1}{2}\|\vw-\vw^{\ast}\|_{\vH^{k}}^{2}
\end{align*}

so that $F(\mathbf{w})=\frac{1}{2}\|\mathbf{w}-\mathbf{w}^{\ast}\|_{H}^{2}$.

Finally, note that $\mathbb{E}\tilde{\vH}_{t}^{k}=\frac{1}{n_{k}}\sum_{j=1}^{n_{k}}\mathbf{x}_{k}^{j}(\mathbf{x}_{k}^{j})^{T}=\vH^{k}$
and $\mathbf{g}_{t,k}=\nabla F_{k}(\mathbf{w}_{t}^{k},\xi_{t}^{k})=\tilde{\vH}_{t}^{k}(\mathbf{w}_{t}^{k}-\mathbf{w}^{\ast})$
while $\mathbf{g}_{t}=\sum_{k=1}^{N}p_{k}\nabla F_{k}(\mathbf{w}_{t}^{k},\xi_{t}^{k})=\sum_{k=1}^{N}p_{k}\tilde{\vH}_{t}^{k}(\mathbf{w}_{t}^{k}-\mathbf{w}^{\ast})$ and $\overline{\mathbf{g}}_{t}=\sum_{k=1}^{N}p_{k}\vH^{k}(\mathbf{w}_{t}^{k}-\mathbf{w}^{\ast})$ 
\\
	\begin{thm}
		For the overparamterized linear regression problem, FedAvg with communication every $E$
		iterations with constant step size $\overline{\alpha}=\mathcal{O}(\frac{1}{E}\frac{N}{l\nu_{\max}+\mu(N-\nu_{\min})})$
		has geometric convergence:
		\begin{align*}
		\mathbb{E}F(\overline{\mathbf{w}}_{T}) & \leq\mathcal{O}\left(L\exp(-\frac{NT}{E(\nu_{\max}\kappa_{1}+(N-\nu_{\min}))})\|\mathbf{w}_{0}-\mathbf{w}^{\ast}\|^{2}\right).
		\end{align*}
	\end{thm}
\begin{proof}
	We again show the result first when $E=2$ and $t-1$ is a communication
	round. We have 
	\begin{align*}
	\|\ov{w}_{t+1}-\vw^{\ast}\|^{2} & =\|(\ov{w}_{t}-\alpha_{t}\vg_{t})-\vw^{\ast}\|^{2}\\
	& =\|\ov{w}_{t}-\vw^{\ast}\|^{2}-2\alpha_{t}\langle\ov{w}_{t}-\vw^{\ast},\vg_{t}\rangle+\alpha_{t}^{2}\|\vg_{t}\|^{2}
	\end{align*}
	and 
	\begin{align*}
	& -2\alpha_{t}\mathbb{E}_{t}\langle\ov{w}_{t}-\vw^{\ast},\vg_{t}\rangle\\
	& =-2\alpha_{t}\sum_{k=1}^{N}p_{k}\langle\ov{w}_{t}-\vw^{\ast},\nabla F_{k}(\vw_{t}^{k})\rangle\\
	& =-2\alpha_{t}\sum_{k=1}^{N}p_{k}\langle\ov{w}_{t}-\vw_{t}^{k},\nabla F_{k}(\vw_{t}^{k})\rangle-2\alpha_{t}\sum_{k=1}^{N}p_{k}\langle \vw_{t}^{k}-\vw^{\ast},\nabla F_{k}(\vw_{t}^{k})\rangle\\
	& =-2\alpha_{t}\sum_{k=1}^{N}p_{k}\langle\ov{w}_{t}-\vw_{t}^{k},\nabla F_{k}(\vw_{t}^{k})\rangle-2\alpha_{t}\sum_{k=1}^{N}p_{k}\langle \vw_{t}^{k}-\vw^{\ast},\vH^{k}(\vw_{t}^{k}-\vw^{\ast})\rangle\\
	& =-2\alpha_{t}\sum_{k=1}^{N}p_{k}\langle\ov{w}_{t}-\vw_{t}^{k},\nabla F_{k}(\vw_{t}^{k})\rangle-4\alpha_{t}\sum_{k=1}^{N}p_{k}F_{k}(\vw_{t}^{k})\\
	& \leq2\alpha_{t}\sum_{k=1}^{N}p_{k}(F_{k}(\vw_{t}^{k})-F_{k}(\ov{w}_{t})+\frac{L}{2}\|\ov{w}_{t}-\vw_{t}^{k}\|^{2})-4\alpha_{t}\sum_{k=1}^{N}p_{k}F_{k}(\vw_{t}^{k})\\
	& =\alpha_{t}L\sum_{k=1}^{N}p_{k}\|\ov{w}_{t}-\vw_{t}^{k}\|^{2}-2\alpha_{t}\sum_{k=1}^{N}p_{k}F_{k}(\ov{w}_{t})-2\alpha_{t}\sum_{k=1}^{N}p_{k}F_{k}(\vw_{t}^{k})\\
	& =\alpha_{t}L\sum_{k=1}^{N}p_{k}\|\ov{w}_{t}-\vw_{t}^{k}\|^{2}-\alpha_{t}\sum_{k=1}^{N}p_{k}\langle(\ov{w}_{t}-\vw^{\ast}),\vH^{k}(\ov{w}_{t}-\vw^{\ast})\rangle-2\alpha_{t}\sum_{k=1}^{N}p_{k}F_{k}(\vw_{t}^{k})
	\end{align*}
	and 
	\begin{align*}
	\|\vg_{t}\|^{2} & =\|\sum_{k=1}^{N}p_{k}\tilde{\vH}_{t}^{k}(\vw_{t}^{k}-\vw^{\ast})\|^{2}\\
	& =\|\sum_{k=1}^{N}p_{k}\tilde{\vH}_{t}^{k}(\ov{w}_{t}-\vw^{\ast})+\sum_{k=1}^{N}p_{k}\tilde{\vH}_{t}^{k}(\vw_{t}^{k}-\ov{w}_{t})\|^{2}\\
	& \leq2\|\sum_{k=1}^{N}p_{k}\tilde{\vH}_{t}^{k}(\ov{w}_{t}-\vw^{\ast})\|^{2}+2\|\sum_{k=1}^{N}p_{k}\tilde{\vH}_{t}^{k}(\vw_{t}^{k}-\ov{w}_{t})\|^{2}
	\end{align*}
	which gives 
	\begin{align*}
	&\mathbb{E}\|\ov{w}_{t+1}-\vw^{\ast}\|^{2} \\
 & \leq\mathbb{E}\|\ov{w}_{t}-\vw^{\ast}\|^{2}-\alpha_{t}\sum_{k=1}^{N}p_{k}\langle\ov{w}_{t}-\vw^{\ast},\vH^{k}\ov{w}_{t}-\vw^{\ast}\rangle+2\alpha_{t}^{2}\|\sum_{k=1}^{N}p_{k}\tilde{\vH}_{t}^{k}(\ov{w}_{t}-\vw^{\ast})\|^{2}\\
	& +\alpha_{t}L\sum_{k=1}^{N}p_{k}\|\ov{w}_{t}-\vw_{t}^{k}\|^{2}+2\alpha_{t}^{2}\|\sum_{k=1}^{N}p_{k}\tilde{\vH}_{t}^{k}(\vw_{t}^{k}-\ov{w}_{t})\|^{2}-2\alpha_{t}\sum_{k=1}^{N}p_{k}F_{k}(\vw_{t}^{k})
	\end{align*}
	following~\cite{ma2017power} we first prove that 
	\begin{align*}
	\mathbb{E}\|\ov{w}_{t}-\vw^{\ast}\|^{2}-\alpha_{t}\sum_{k=1}^{N}p_{k}\langle(\ov{w}_{t}-\vw^{\ast}),\vH^{k}(\ov{w}_{t}-\vw^{\ast})\rangle+2\alpha_{t}^{2}\|\sum_{k=1}^{N}p_{k}\tilde{\vH}_{t}^{k}(\ov{w}_{t}-\vw^{\ast})\|^{2}\\
	\leq(1-\frac{N}{8(\nu_{\max}\kappa_{1}+(N-\nu_{\min}))})\mathbb{E}\|\ov{w}_{t}-\vw^{\ast}\|^{2}
	\end{align*}
	with appropriately chosen $\alpha_{t}$. Compared to the rate $O(\frac{\mu N}{l\nu_{\max}+L(N-\nu_{\min})})=O(\frac{N}{\nu_{\max}\kappa_{1}+(N-\nu_{\min})\kappa})$
	for general strongly convex and smooth objectives, this is an improvement
	as linear speedup is now available for a larger range of $N$. 
	
	We have 
	\begin{align*}
	& \mathbb{E}_{t}\|\sum_{k=1}^{N}p_{k}\tilde{\vH}_{t}^{k}(\ov{w}_{t}-\vw^{\ast})\|^{2}\\
	& =\mathbb{E}_{t}\langle\sum_{k=1}^{N}p_{k}\tilde{\vH}_{t}^{k}(\ov{w}_{t}-\vw^{\ast}),\sum_{k=1}^{N}p_{k}\tilde{\vH}_{t}^{k}(\ov{w}_{t}-\vw^{\ast})\rangle\\
	& =\sum_{k=1}^{N}p_{k}^{2}\mathbb{E}_{t}\|\tilde{\vH}_{t}^{k}(\ov{w}_{t}-\vw^{\ast})\|^{2}+\sum_{k=1}^{N}\sum_{j\neq k}p_{j}p_{k}\mathbb{E}_{t}\langle\tilde{\vH}_{t}^{k}(\ov{w}_{t}-\vw^{\ast}),\tilde{\vH}_{t}^{j}(\ov{w}_{t}-\vw^{\ast})\rangle\\
	& =\sum_{k=1}^{N}p_{k}^{2}\mathbb{E}_{t}\|\tilde{\vH}_{t}^{k}(\ov{w}_{t}-\vw^{\ast})\|^{2}+\sum_{k=1}^{N}\sum_{j\neq k}p_{j}p_{k}\mathbb{E}_{t}\langle \vH^{k}(\ov{w}_{t}-\vw^{\ast}),\vH^{j}(\ov{w}_{t}-\vw^{\ast})\rangle\\
	& =\sum_{k=1}^{N}p_{k}^{2}\mathbb{E}_{t}\|\tilde{\vH}_{t}^{k}(\ov{w}_{t}-\vw^{\ast})\|^{2}+\sum_{k=1}^{N}\sum_{j=1}^{N}p_{j}p_{k}\mathbb{E}_{t}\langle \vH^{k}(\ov{w}_{t}-\vw^{\ast}),\vH^{j}(\ov{w}_{t}-\vw^{\ast})\rangle\\
 &-\sum_{k=1}^{N}p_{k}^{2}\|\vH^{k}(\ov{w}_{t}-\vw^{\ast})\|^{2}\\
	& =\sum_{k=1}^{N}p_{k}^{2}\mathbb{E}_{t}\|\tilde{\vH}_{t}^{k}(\ov{w}_{t}-\vw^{\ast})\|^{2}+\|\sum_{k}p_{k}\vH^{k}(\ov{w}_{t}-\vw^{\ast})\|^{2}-\sum_{k=1}^{N}p_{k}^{2}\|\vH^{k}(\ov{w}_{t}-\vw^{\ast})\|^{2}\\
	& \leq\sum_{k=1}^{N}p_{k}^{2}\mathbb{E}_{t}\|\tilde{\vH}_{t}^{k}(\ov{w}_{t}-\vw^{\ast})\|^{2}+\|\sum_{k}p_{k}\vH^{k}(\ov{w}_{t}-\vw^{\ast})\|^{2}-\frac{1}{N}\nu_{\min}\|\sum_{k}p_{k}\vH^{k}(\ov{w}_{t}-\vw^{\ast})\|^{2}\\
	& \leq\frac{1}{N}\nu_{\max}\sum_{k=1}^{N}p_{k}\mathbb{E}_{t}\|\tilde{\vH}_{t}^{k}(\ov{w}_{t}-\vw^{\ast})\|^{2}+(1-\frac{1}{N}\nu_{\min})\|\sum_{k}p_{k}\vH^{k}(\ov{w}_{t}-\vw^{\ast})\|^{2}\\
	& \leq\frac{1}{N}\nu_{\max}l\sum_{k=1}^{N}p_{k}\langle(\ov{w}_{t}-\vw^{\ast}),\vH^{k}(\ov{w}_{t}-\vw^{\ast})\rangle+(1-\frac{1}{N}\nu_{\min})\|\sum_{k}p_{k}\vH^{k}(\ov{w}_{t}-\vw^{\ast})\|^{2}\\
	& =\frac{1}{N}\nu_{\max}l\langle(\ov{w}_{t}-\vw^{\ast}),\vH(\ov{w}_{t}-\vw^{\ast})\rangle+(1-\frac{1}{N}\nu_{\min})\langle\ov{w}_{t}-\vw^{\ast},\vH^{2}(\ov{w}_{t}-\vw^{\ast})\rangle
	\end{align*}
	using $\|\tilde{\vH}_{t}^{k}\|\leq l$. 
	
	Now we have 
	\begin{align*}
	\mathbb{E}\|\ov{w}_{t}-\vw^{\ast}\|^{2}-\alpha_{t}\sum_{k=1}^{N}p_{k}\langle(\ov{w}_{t}-\vw^{\ast}),\vH^{k}(\ov{w}_{t}-\vw^{\ast})\rangle+2\alpha_{t}^{2}\|\sum_{k=1}^{N}p_{k}\tilde{\vH}_{t}^{k}(\ov{w}_{t}-\vw^{\ast})\|^{2} & =\\
	\langle\ov{w}_{t}-\vw^{\ast},(I-\alpha_{t}\vH+2\alpha_{t}^{2}(\frac{\nu_{\max}l}{N}\vH+\frac{N-\nu_{\min}}{N}\vH^{2}))(\ov{w}_{t}-\vw^{\ast})\rangle
	\end{align*}
	and it remains to bound the maximum eigenvalue of 
	\begin{align*}
	(I-\alpha_{t}\vH+2\alpha_{t}^{2}(\frac{\nu_{\max}l}{N}\vH+\frac{N-\nu_{\min}}{N}\vH^{2}))
	\end{align*}
	and we bound this following~\cite{ma2017power}. If we choose $\alpha_{t}<\frac{N}{2(\nu_{\max}l+(N-\nu_{\min})L)}$,
	then 
	\begin{align*}
	-\alpha_{t}\vH+2\alpha_{t}^{2}(\frac{\nu_{\max}l}{N}\vH+\frac{N-\nu_{\min}}{N}\vH^{2}) & \prec0
	\end{align*}
	and the convergence rate is given by the maximum of $1-\alpha_{t}\lambda+2\alpha_{t}^{2}(\frac{\nu_{\max}l}{N}\lambda+\frac{N-\nu_{\min}}{N}\lambda^{2})$
	maximized over the non-zero eigenvalues $\lambda$ of $\vH$. To select
	the step size $\alpha_{t}$ that gives the smallest upper bound, we
	then minimize over $\alpha_{t}$, resulting in 
	\begin{align*}
	\min_{\alpha_{t}<\frac{N}{2(\nu_{\max}l+(N-\nu_{\min})L)}}\max_{\lambda>0:\exists v,\vH v=\lambda v}\left\{ 1-\alpha_{t}\lambda+2\alpha_{t}^{2}(\frac{\nu_{\max}l}{N}\lambda+\frac{N-\nu_{\min}}{N}\lambda^{2})\right\} 
	\end{align*}
	Since the objective is quadratic in $\lambda$, the maximum is achieved
	at either the largest eigenvalue $\lambda_{\max}$ of $\vH$ or the
	smallest non-zero eigenvalue $\lambda_{\min}$ of $\vH$. 
	
	When $N\leq\frac{4\nu_{\max}l}{L-\lambda_{\min}}+4\nu_{\min}$, i.e.
	when $N=O(l/\lambda_{\min})=O(\kappa_{1})$, the optimal objective
	value is achieved at $\lambda_{\min}$ and the optimal step size is
	given by $\alpha_{t}=\frac{N}{4(\nu_{\max}l+(N-\nu_{\min})\lambda_{\min})}$.
	The optimal convergence rate (i.e. the optimal objective value) is
	equal to $1-\frac{1}{8}\frac{N\lambda_{\min}}{(\nu_{\max}l+(N-\nu_{\min})\lambda_{\min})}=1-\frac{1}{8}\frac{N}{(\nu_{\max}\kappa_{1}+(N-\nu_{\min}))}$.
	This implies that when $N=O(\kappa_{1})$, the optimal convergence
	rate has a linear speedup in $N$. When $N$ is larger, this step
	size is no longer optimal, but we still have $1-\frac{1}{8}\frac{N}{(\nu_{\max}\kappa_{1}+(N-\nu_{\min}))}$
	as an upper bound on the convergence rate. 
	
	Now we have proved 
	\begin{align*}
	\mathbb{E}\|\ov{w}_{t+1}-\vw^{\ast}\|^{2} & \leq(1-\frac{1}{8}\frac{N}{(\nu_{\max}\kappa_{1}+(N-\nu_{\min}))})\mathbb{E}\|\ov{w}_{t}-\vw^{\ast}\|^{2}\\
	& +\alpha_{t}L\sum_{k=1}^{N}p_{k}\|\ov{w}_{t}-\vw_{t}^{k}\|^{2}+2\alpha_{t}^{2}\|\sum_{k=1}^{N}p_{k}\tilde{\vH}_{t}^{k}(\vw_{t}^{k}-\ov{w}_{t})\|^{2}-2\alpha_{t}\sum_{k=1}^{N}p_{k}F_{k}(\vw_{t}^{k})
	\end{align*}
	Next we bound terms in the second line using a similar argument as
	the general case. We have 
	\begin{align*}
	2\alpha_{t}^{2}\|\sum_{k=1}^{N}p_{k}\tilde{\vH}_{t}^{k}(\vw_{t}^{k}-\ov{w}_{t})\|^{2} & \leq2\alpha_{t}^{2}l^{2}\sum_{k=1}^{N}p_{k}\|\ov{w}_{t}-\vw_{t}^{k}\|^{2}
	\end{align*}
	and 
	\begin{align*}
	\mathbb{E}\sum_{k=1}^{N}p_{k}\|\ov{w}_{t}-\vw_{t}^{k}\|^{2} & \leq\mathbb{E}2(1+2l^{2}\alpha_{t-1}^{2})\sum_{k=1}^{N}p_{k}\|\ov{w}_{t-1}-\vw_{t-1}^{k}\|^{2}+8\alpha_{t-1}^{2}lF(\ov{w}_{t-1})\\
	& =4\alpha_{t-1}^{2}l\langle\ov{w}_{t-1}-\vw^{\ast},\vH(\ov{w}_{t-1}-\vw^{\ast})\rangle
	\end{align*}
	and if $\alpha_{t},\alpha_{t-1}$ satisfy 
	\begin{align*}
	\alpha_{t}L+2\alpha_{t}^{2} & \leq(1-\frac{1}{8}\frac{N}{(\nu_{\max}\kappa_{1}+(N-\nu_{\min}))})(\alpha_{t-1}L+2\alpha_{t-1}^{2})/3\\
	2(1+2l^{2}\alpha_{t-1}^{2}) & \leq3\\
	\alpha_{t}L+2\alpha_{t}^{2} & \leq1
	\end{align*}
	we have 
	\begin{align*}
	& \mathbb{E}\|\ov{w}_{t+1}-\vw^{\ast}\|^{2}\\
	& \leq(1-\frac{1}{8}\frac{N}{(\nu_{\max}\kappa_{1}+(N-\nu_{\min}))})
 ( \mathbb{E}\|\ov{w}_{t-1}-\vw^{\ast}\|^{2}-\alpha_{t}\langle\ov{w}_{t-1}-\vw^{\ast},\vH\ov{w}_{t-1}-\vw^{\ast}\rangle\\
 &+2\alpha_{t}^{2}\|\sum_{k=1}^{N}p_{k}\tilde{\vH}_{t}^{k}(\ov{w}_{t}-\vw^{\ast})\|^{2}+(\alpha_{t-1}L+2\alpha_{t-1}^{2})\cdot2\sum_{k=1}^{N}p_{k}\|\ov{w}_{t-1}-\vw_{t-1}^{k}\|^{2}\\
	&+4\alpha_{t-1}^{2}l\langle\ov{w}_{t-1}-\vw^{\ast},\vH(\ov{w}_{t-1}-\vw^{\ast}))
	\end{align*}
	and again by choosing $\alpha_{t-1}=c\frac{N}{8(\nu_{\max}l+(N-\nu_{\min})\lambda_{\min})}$
	for a small constant $c$, we can guarantee that 
	\begin{align*}
	\mathbb{E}\|\ov{w}_{t-1}-\vw^{\ast}\|^{2}-\alpha_{t-1}\langle\ov{w}_{t-1}-\vw^{\ast},\vH\ov{w}_{t-1}-\vw^{\ast}\rangle\\
	+2\alpha_{t-1}^{2}\|\sum_{k=1}^{N}p_{k}\tilde{\vH}_{t-1}^{k}(\ov{w}_{t-1}-\vw^{\ast})\|^{2}+4\alpha_{t-1}^{2}l\langle\ov{w}_{t-1}-\vw^{\ast},\vH(\ov{w}_{t-1}-\vw^{\ast})\rangle\\
	\leq(1-c\frac{N}{16(\nu_{\max}l+(N-\nu_{\min})\lambda_{\min})})\mathbb{E}\|\ov{w}_{t-1}-\vw^{\ast}\|^{2}
	\end{align*}
	
	For general $E$, we have the recursive relation
	\begin{align*}
	&\mathbb{E}\|\ov{w}_{t+1}-\vw^{\ast}\|^{2}\\
 &\leq\mathbb{E}(1-c\frac{1}{8}\frac{N}{(\nu_{\max}\kappa_{1}+(N-\nu_{\min}))})\cdots(1-c\frac{1}{8\tau}\frac{N}{(\nu_{\max}\kappa_{1}+(N-\nu_{\min}))})[\|\ov{w}_{t-\tau}-\vw^{\ast}\|^{2}\\
	& -\alpha_{t-\tau}\langle\ov{w}_{t-\tau}-\vw^{\ast},\vH\ov{w}_{t-\tau}-\vw^{\ast}\rangle+2\alpha_{t-\tau}^{2}\|\sum_{k=1}^{N}p_{k}\tilde{\vH}_{t-\tau}^{k}(\ov{w}_{t-\tau}-\vw^{\ast})\|^{2}\\
	& +4\tau\alpha_{t-1}^{2}l\langle\ov{w}_{t-1}-\vw^{\ast},\vH(\ov{w}_{t-1}-\vw^{\ast})\rangle\\
	& +(2\alpha_{t-\tau}^{2}l^{2}+\alpha_{t-\tau}L)((\tau+1)\sum_{k=1}^{N}p_{k}\|\ov{w}_{t-\tau}-\vw_{t-\tau}^{k}\|^{2})]
	\end{align*}
	as long as the step sizes are chosen $\alpha_{t-\tau}=c\frac{N}{4\tau(\nu_{\max}l+(N-\nu_{\min})\lambda_{\min})}$
	such that the following inequalities hold 
	\begin{align*}
	(2\alpha_{t-\tau}^{2}l^{2}+\alpha_{t-\tau}L) & \leq(1-\alpha_{t-\tau}\mu)(2\alpha_{t-\tau-1}^{2}l^{2}+\alpha_{t-\tau-1}L)/3\\
	2(1+2l^{2}\alpha_{t-\tau}^{2}) & \leq3\\
	2\alpha_{t-\tau}^{2}l^{2}+\alpha_{t-\tau}L & \leq1
	\end{align*}
	and 
	\begin{align*}
	& \|\ov{w}_{t-\tau}-\vw^{\ast}\|^{2}-\alpha_{t-\tau}\langle\ov{w}_{t-\tau}-\vw^{\ast},\vH\ov{w}_{t-\tau}-\vw^{\ast}\rangle\\
	& +2\alpha_{t-\tau}^{2}\|\sum_{k=1}^{N}p_{k}\tilde{\vH}_{t-\tau}^{k}(\ov{w}_{t-\tau}-\vw^{\ast})\|^{2}+4\tau\alpha_{t-1}^{2}l\langle\ov{w}_{t-1}-\vw^{\ast},\vH(\ov{w}_{t-1}-\vw^{\ast})\rangle\\
	& \leq(1-c\frac{N}{8(\tau+1)(\nu_{\max}\kappa_{1}+(N-\nu_{\min}))})\mathbb{E}\|\ov{w}_{t-\tau}-\vw^{\ast}\|^{2}
	\end{align*}
	which gives 
	\begin{align*}
	\mathbb{E}\|\ov{w}_{t}-\vw^{\ast}\|^{2} & \leq(1-c\frac{1}{8E}\frac{N}{(\nu_{\max}\kappa_{1}+(N-\nu_{\min}))})^{t}\|\vw_{0}-\vw^{\ast}\|^{2}\\
	& =O(\exp(-\frac{1}{E}\frac{N}{(\nu_{\max}\kappa_{1}+(N-\nu_{\min}))}t))\|\vw_{0}-\vw^{\ast}\|^{2}.
	\end{align*}
\end{proof}

\section{Details on Experiments and Additional Results}
\label{sec:expsupp}

We describe the precise procedure to reproduce the results in this paper.
As we mentioned in Section~\ref{sec:exp}, we empirically verified the
linear speed up on various convex settings for both FedAvg and its
accelerated variants. For all the results, we set random seeds as $0, 1, 2$
and report the best convergence rate across the three folds. For each
run, we initialize $\vw_0 = \mathbf{0}$ and measure the number of iteration
to reach the target accuracy $\epsilon$. We use the small-scale dataset
w8a~\citep{platt1998fast}, which consists of $n = 49749 $ samples with
feature dimension $d = 300$. The label is either positive one or negative one.
The dataset has sparse binary features in $\{0, 1\}$. Each sample
has 11.15 non-zero feature values out of $300$ features on average.
We set the batch size equal to four across all experiments.
In the next following subsections,
we introduce parameter searching in each objective separately.

\subsection{Strongly Convex Objectives}
We first consider the strongly convex objective function, where we use
a regularized binary logistic regression with regularization $\lambda=1/n\approx 2e-5$. We evenly distributed on $1, 2, 4, 8, 16, 32$ devices and  report the number of iterations/rounds needed to converge to $\epsilon-$accuracy, where $\epsilon=0.005$. The optimal objective function value $f^*$
is set as $f^* = 0.126433176216545$. This is determined numerically and we follow the setting in~\cite{stich2018local}. The learning rate is decayed as the $\eta_t = \min(\eta_0, \frac{nc}{1 + t})$, where we extensively search the best learning rate $c \in \{2^{-1}c_0, 2^{-2}c_0, c_0, 2c_0, 2^{2}c_0\}$. In this case, we search the initial learning rate $\eta_0\in \{1, 32\}$ and $c_0 = 1/8$.

\subsection{Convex Smooth Objectives}
We also use binary logistic regression without regularization.
The setting is almost same as its regularized counter part. We also evenly distributed all the samples on $1, 2, 4, 8, 16, 32$ devices. The figure shows the number of iterations needed to converge to $\epsilon-$accuracy, where $\epsilon=0.02$. The optiaml objective function value is set as $f^*=0.11379089057514849$, determined numerically. 
The learning rate is decayed as the $\eta_t = \min(\eta_0, \frac{nc}{1 + t})$, where we extensively search the best learning rate $c \in \{2^{-1}c_0, 2^{-2}c_0, c_0, 2c_0, 2^{2}c_0\}$. In this case, we search the initial learning rate $\eta_0\in \{1, 32\}$ and $c_0 = 1/8$.

\subsection{Linear Regression}
For linear regression, we use the same feature vectors from w8a dataset 
and generate ground truth $[\vw^*, b^*]$ from a multivariate normal distribution
with zero mean and standard deviation one. Then we generate label 
based on $y_i = \vx_i^t\vw^* + b^*$. This procedure will ensure we satisfy
the over-parameterized setting as required in our theorems. 
We also evenly distributed all the samples on $1, 2, 4, 8, 16, 32$ devices. The figure shows the number of iterations needed to converge to $\epsilon-$accuracy, where $\epsilon=0.02$. The optiaml objective function value is $f^*=0$. 
The learning rate is decayed as the $\eta_t = \min(\eta_0, \frac{nc}{1 + t})$, where we extensively search the best learning rate $c \in \{2^{-1}c_0, 2^{-2}c_0, c_0, 2c_0, 2^{2}c_0\}$. In this case, we search the initial learning rate $\eta_0\in \{0.1, 0.12\}$ and $c_0 = 1/256$.

\subsection{Partial Participation}
To examine the linear speedup of FedAvg in partial participation setting,
we evenly distributed data on $4, 8, 16, 32, 64, 128$ devices and 
uniformly sample $50\%$ devices without replacement. 
All other hyperparameters are the same as previous sections. 

\subsection{Nesterov Accelerated FedAvg}
The experiments of Nesterov accelerated FedAvg (the update formula is given as follows) uses the same setting as
previous three sections for vanilia FedAvg.
\begin{align*}
\vy_{t+1}^{k} & =\vw_{t}^{k}-\alpha_{t}\vg_{t,k}\\
\vw_{t+1}^{k} & =\begin{cases}
\vy_{t+1}^{k}+\beta_{t}(\vy_{t+1}^{k}-\vy_{t}^{k}) & \text{if }t+1\notin\mathcal{I}_{E}\\
\sum_{k\in\mathcal{S}_{t+1}}\left(\vy_{t+1}^{k}+\beta_{t}(\vy_{t+1}^{k}-\vy_{t}^{k})\right) & \text{if }t+1\in\mathcal{I}_{E}
\end{cases}
\end{align*}
We set $\beta_t = 0.1$ and search $\alpha_t$ in the same way as $\eta_t$
in FedAvg.

\subsection{The Impact of $E$.}
In this subsection, we further examine how does the number of local steps ($E$) 
affect convergence. As shown in Figure~\ref{fig:e}, the number of iterations increases as $E$ increase, which slow down the convergence in
terms of gradient computation. However, it can save communication costs as
the number of rounds decreased when the $E$ increases. This showcases that
we need a proper choice of $E$ to trade-off the communication cost and
convergence speed. 
\begin{figure}
\centering
	\begin{tabular}{ccc}
	\hspace{-2em} \includegraphics[width=0.33\textwidth]{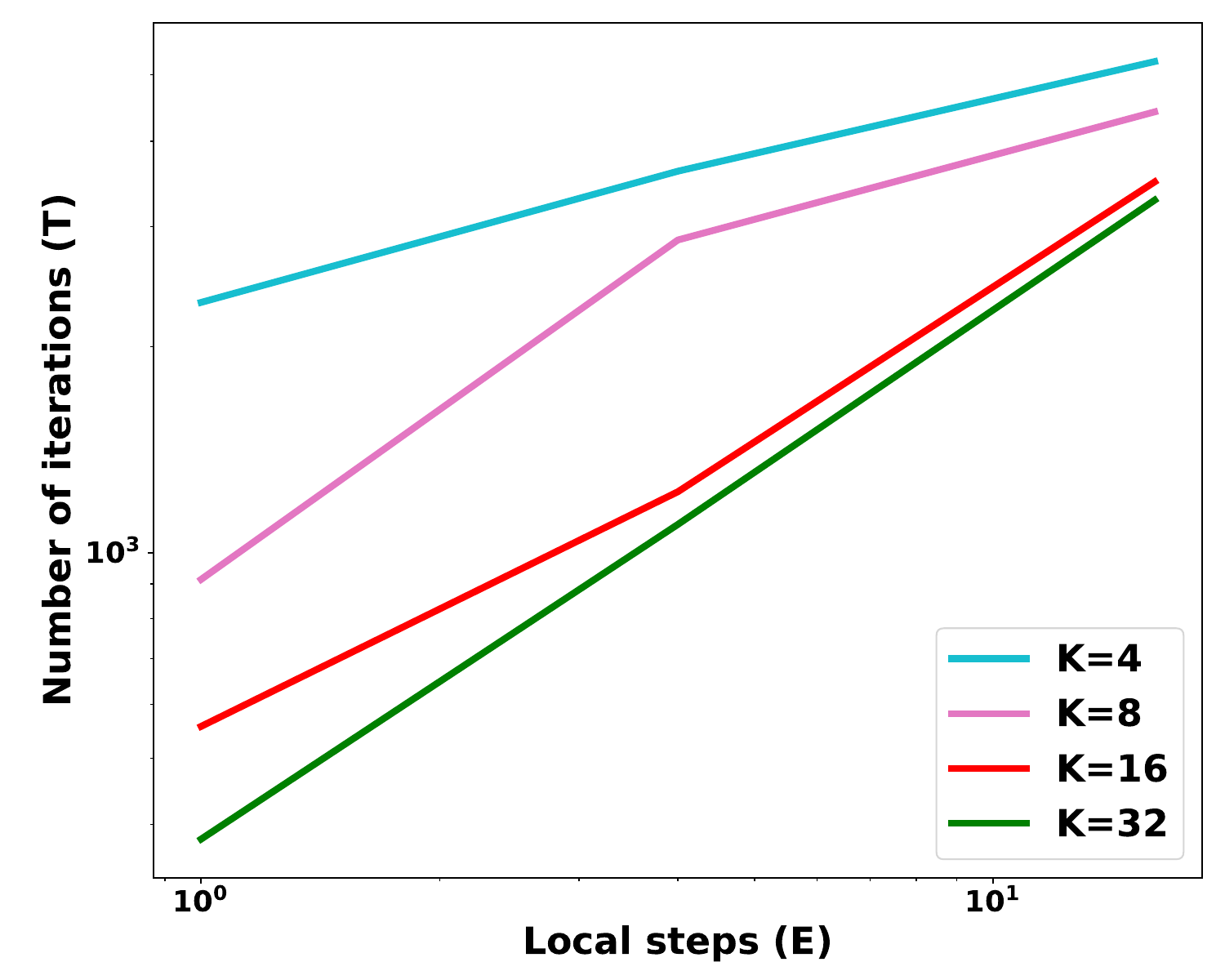} &
	\includegraphics[width=0.33\textwidth]{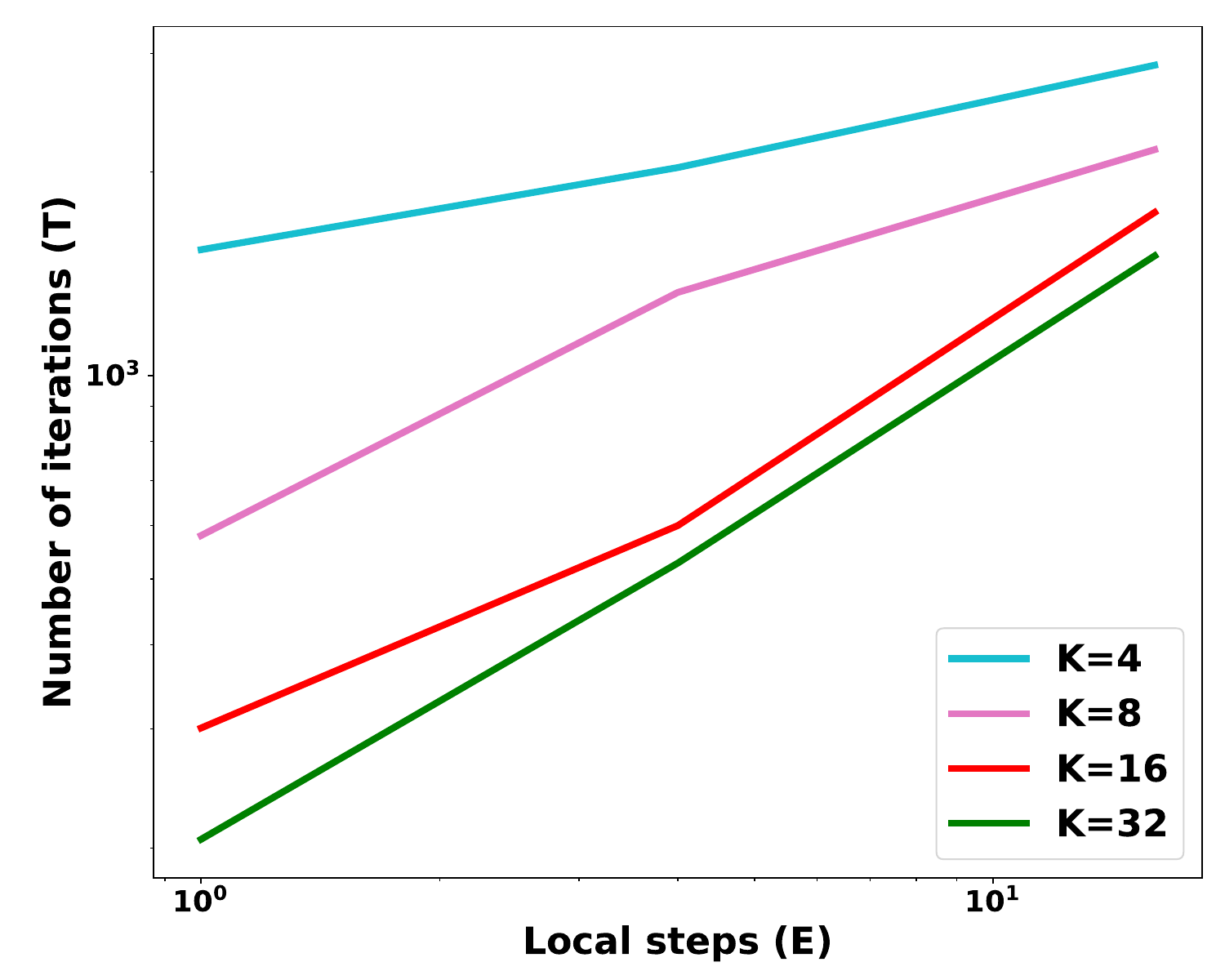} & 
	\includegraphics[width=0.33\textwidth]{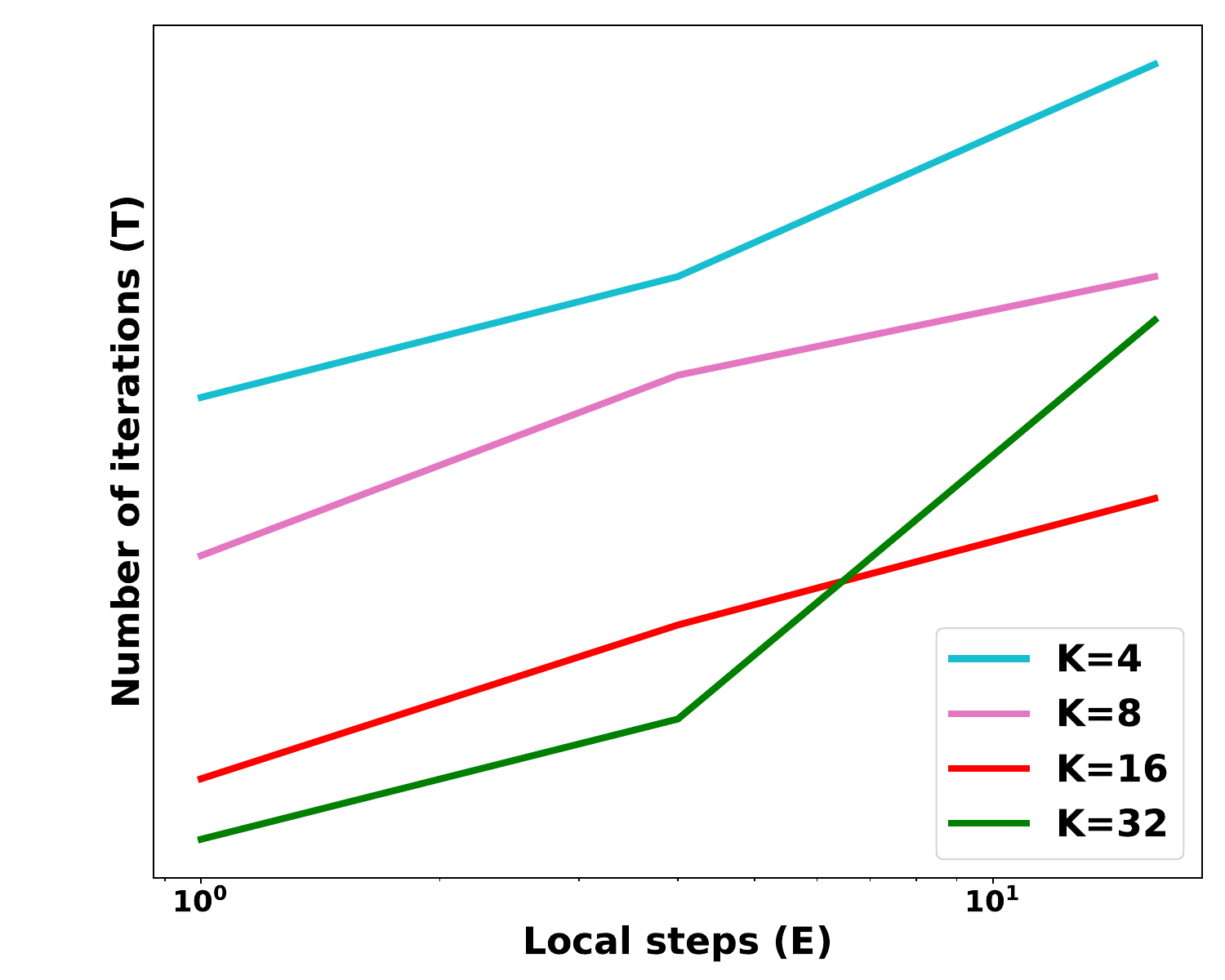} \\
	\hspace{-2em} \includegraphics[width=0.33\textwidth]{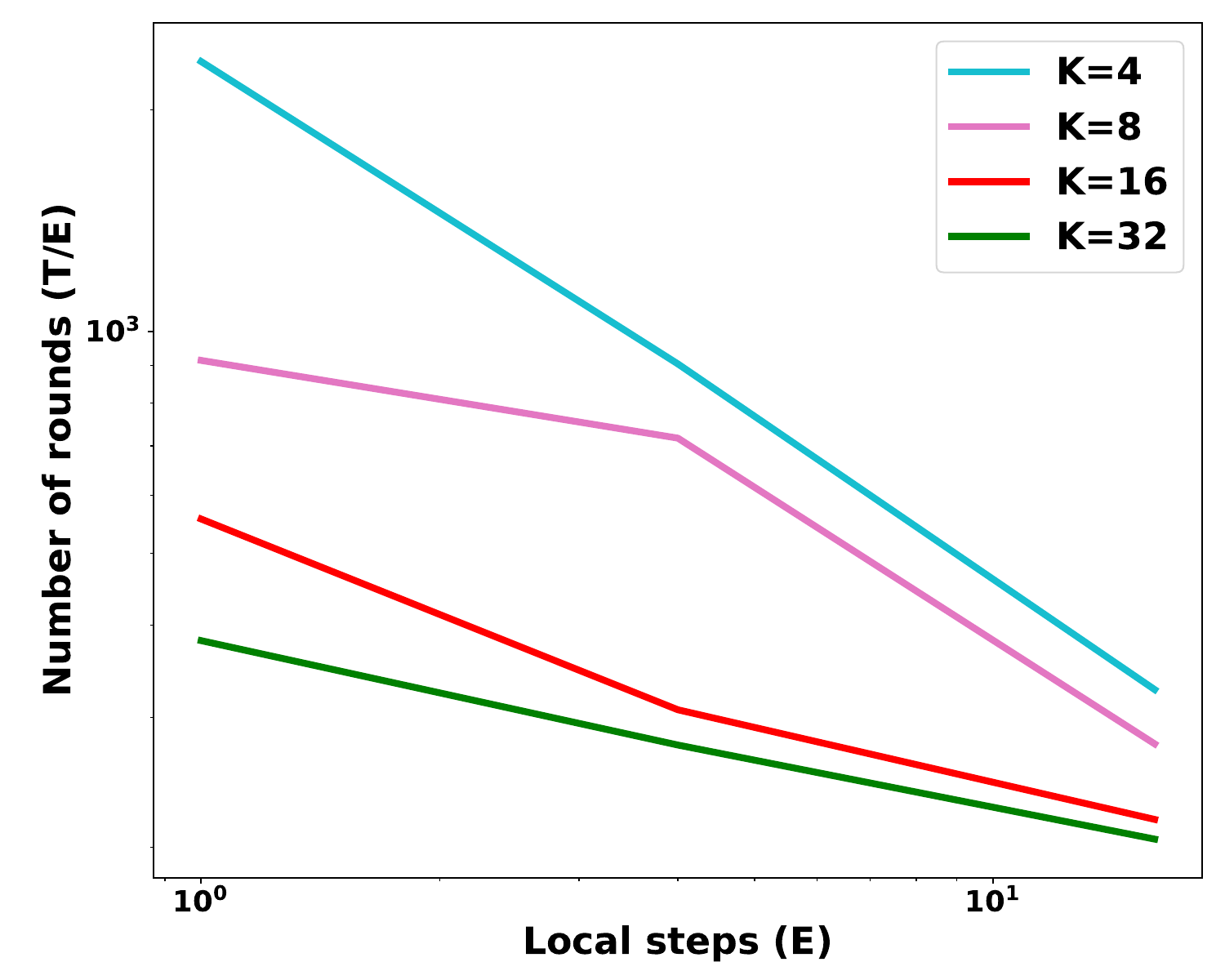} &
	\includegraphics[width=0.33\textwidth]{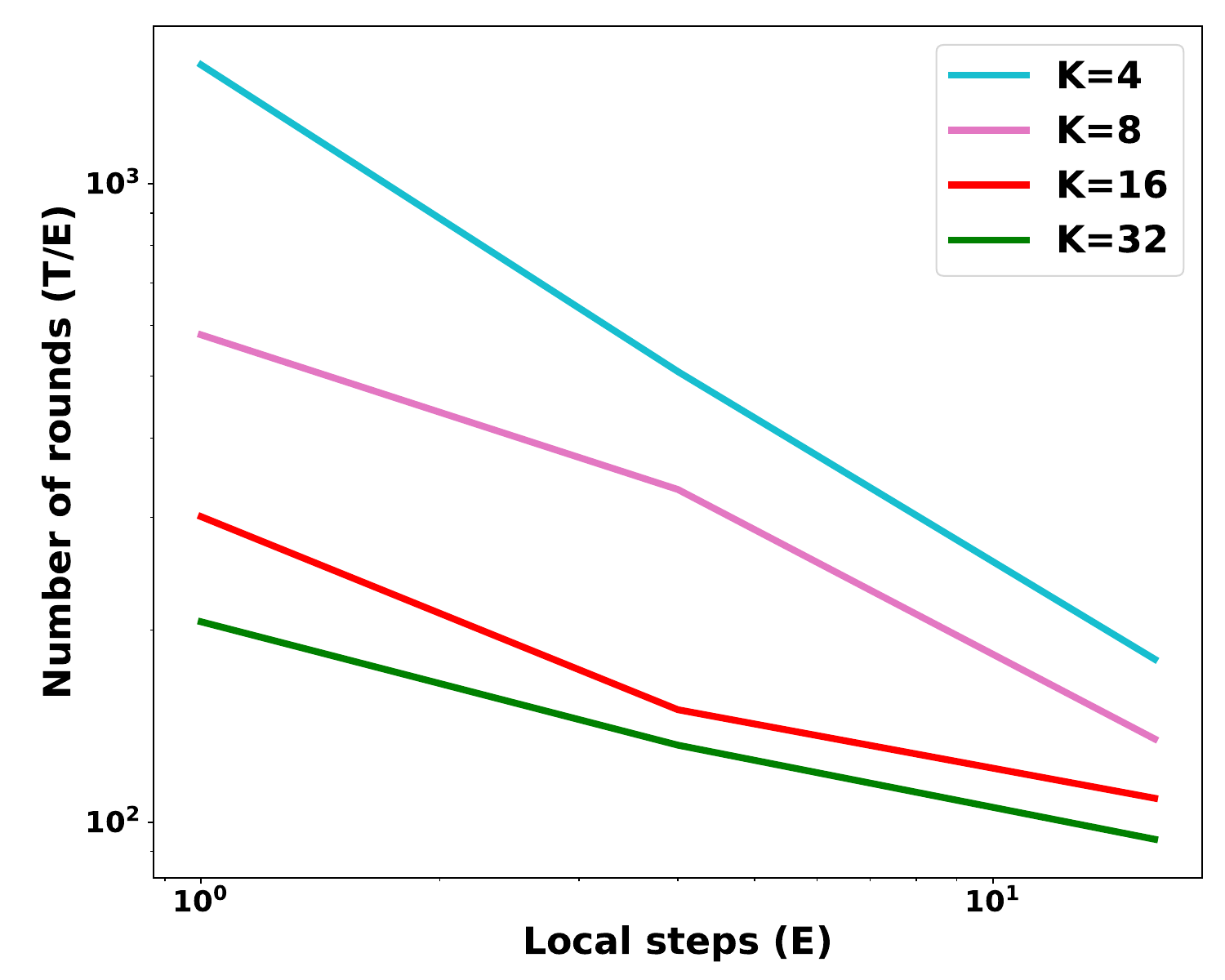} & 
	\includegraphics[width=0.33\textwidth]{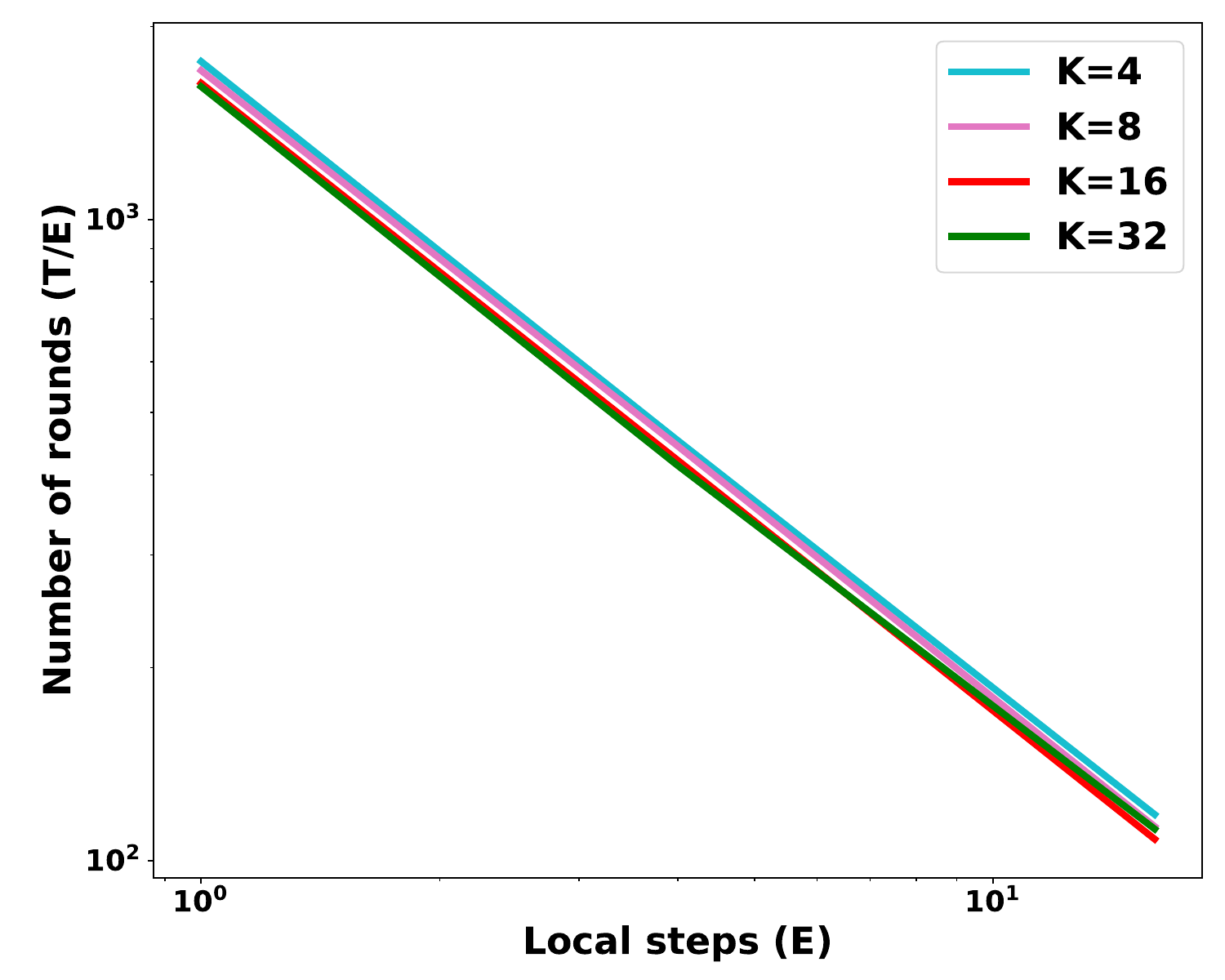} \\
(a) Strongly convex objective & (b) Convex smooth objective & (c) Linear regression
	\end{tabular}
\caption{The convergence of FedAvg w.r.t the number of local steps $E$. }
\label{fig:e}
\end{figure}

\vskip 0.2in
\bibliographystyle{apalike}
\bibliography{ref}

\end{document}


\maketitle
\tableofcontents


\appendix
\section{Additional Notations and Bounds for Sampling Schemes}
\label{sec:app:notations}
In this section, we introduce additional notations that are used throughout
the proofs. Following common practice, e.g.~\cite{stich2018local,li2019convergence}, we define two virtual sequences $\overline{\mathbf{v}}_{t}=\sum_{k=1}^{N} p_{k} \mathbf{v}_{t}^{k}$ and $\overline{\mathbf{w}}_{t}=\sum_{k=1}^{N} p_{k} \mathbf{w}_{t}^{k}$, where we recall the FedAvg updates from \eqref{eq:fedavg updates}:
\begin{align*}
\vv_{t+1}^{k} & =\vw_{t}^{k}-\alpha_{t}\vg_{t,k}, \hspace{1em}
\mathbf{w}_{t+1}^{k} =\left\{
\begin{array}{ll}
\mathbf{v}_{t+1}^{k} & \text { if } t+1 \notin \mathcal{I}_{E}, \\ 
\sum_{k \in \cS_{t+1}} q_k \mathbf{v}_{t+1}^{k} & \text { if } t+1 \in \mathcal{I}_{E}.
\end{array}\right.
\end{align*}
The following observations apply to FedAvg updates, while Nesterov accelerated FedAvg requires modifications. For full device participation or partial participation with $t \notin \cI_E$, note that
$\ov{v}_t = \ov{w}_t =\sum_{k=1}^{N} p_{k} \mathbf{v}_{t}^{k}$. For partial participation with $t \in \cI_E$, $\ov{w}_t \neq \ov{v}_t$ since $\ov{v}_t=\sum_{k=1}^{N} p_{k} \mathbf{v}_{t}^{k}$ while $\ov{w}_t=\sum_{k\in \cS_t}q_k\mathbf{w}_{t}^{k}$. However, we can
use unbiased sampling strategies such that $ \EE_{\cS_t} \ov{w}_t = \ov{v}_t$.
Note that $\overline{\mathbf{v}}_{t+1}$ is one-step SGD from $\overline{\mathbf{w}}_{t}$. 
\begin{align}
\overline{\mathbf{v}}_{t+1}=\overline{\mathbf{w}}_{t}-\alpha_{t} \mathbf{g}_{t},	\label{eq:vbar}
\end{align}
where $\vg_{t} = \sum_{k=1}^{N} p_{k} \vg_{t,k} $ is the one-step stochastic gradient averaged over all devices. 
\begin{align*}
\vg_{t,k} = \nabla F_{k}\left(\mathbf{w}_{t}^{k},\mathbf{\xi}_{t}^{k} \right),   
\end{align*}
Similarly, we denote the expected one-step gradient $\ov{g}_{t}= \EE_{\xi_t}[\vg_t] = \sum_{k=1}^{N} p_{k} \EE_{\mathbf{\xi}_{t}^{k}} \vg_{t,k}$, where
\begin{align}
\EE_{\mathbf{\xi}_{t}^{k}} \vg_{t,k}  = \nabla F_{k}\left(\mathbf{w}_{t}^{k}\right), 
\label{eq:gradient}
\end{align}
and $\mathbf{\xi}_t = \{\mathbf{\xi}_t^k\}_{k=1}^N$ denotes random samples at all devices at time step $t$. \\
Since in this work we also consider the case of partial participation, the sampling strategy to approximate the system heterogeneity can also affect the convergence. Here we
follow the prior works~\cite{li2019convergence} and~\cite{li2018federated} and consider two types of sampling
schemes that guarantee $ \EE_{\cS_t} \ov{w}_t = \ov{v}_t$. 
The sampling scheme I establishes $\cS_{t+1}$ by \emph{i.i.d.} sampling the devices according to probabilities $p_k$ with replacement, and setting $q_k=\frac{1}{K}$.
In this case the upper bound of expected square norm of $\ov{w}_{t+1} - \ov{v}_{t+1}$ is given by~\cite[Lemma 5]{li2019convergence}:
\begin{align}
\EE_{\cS_{t+1}}\left\|\ov{w}_{t+1} - \ov{v}_{t+1}\right\|^2	\leq \frac{4}{K} \alpha_t^2 E^2G^2.
\end{align}
The sampling scheme II establishes $\cS_{t+1}$ by uniformly sampling all devices without
replacement and setting $q_k=p_k\frac{N}{K}$, in which case we have
\begin{align}
\EE_{\cS_{t+1}}\left\|\ov{w}_{t+1} - \ov{v}_{t+1}\right\|^2	\leq \frac{4(N - K)}{K(N-1)} \alpha_t^2 E^2G^2.
\end{align}
We summarize these upper bounds as follows: 
\begin{align}
	\EE_{\cS_{t+1}}\left\|\ov{w}_{t+1} - \ov{v}_{t+1}\right\|^2 \leq  \frac{4}{K} \alpha_t^2 E^2G^2.
	\label{eq:partialsample}
\end{align}
and this bound will be used in the convergence proof of the partial participation result.

\section{Comparison of Convergence Rates with Related Works}
\label{sec:app:comparison}
In this section, we compare our convergence rate with the best-known results in the literature (see Table~\ref{tb:convergenceratev3}). 
In~\cite{haddadpour2019convergence}, the authors provide $\cO(1/NT)$ convergence rate of non-convex problems under Polyak-Łojasiewicz (PL) condition, which
means their results can directly apply to the strongly convex problems. However, their assumption is based on bounded gradient diversity, defined as follows: 
\begin{align*}
	\Lambda(\vw) = \frac{\sum_{k}p_{k}\|\nabla F_{k}(\mathbf{w})\|_{2}^{2}}{\|\sum_{k}p_{k}\nabla F_{k}(\mathbf{w})\|_{2}^{2}} \leq B
\end{align*} 
This is a more restrictive assumption comparing to assuming bounded gradient under the case of target accuracy $\epsilon \rightarrow 0$ and PL condition.
To see this, consider the gradient diversity at the global optimal $\vw^*$, i.e., $\Lambda(\vw^*) = \frac{\sum_{k}p_{k}\|\nabla F_{k}(\mathbf{w})\|_{2}^{2}}{\|\sum_{k}p_{k}\nabla F_{k}(\mathbf{w})\|_{2}^{2}}$. For $\Lambda(\vw^*)$ to be bounded, it requires $\|\nabla F_{k}(\mathbf{w}^*)\|_{2}^{2} = 0$, $\forall \ k$. This indicates 
$\vw^*$ is also the minimizer of each local objective, which contradicts to the practical setting of heterogeneous data. Therefore, their bound 
is not effective for arbitrary small $\epsilon$-accuracy under general heterogeneous data while our convergence results still hold in this case.





\begin{table*}[h!]
{
\tiny
\centering
\resizebox{\textwidth}{!}{\begin{tabular}{|c|c|c|c|c|c|c|c|}
\hline Reference                 & Convergence rate    & E                           			& NonIID & Participation & Extra Assumptions  		  & Setting  \\ \hline\hline 
FedAvg\cite{li2019convergence}         & $\cO(\frac{E^2}{T})$& $\cO(1)$                    		& \cmark & Partial       & Bounded gradient   		  & Strongly convex  \\ \hline
FedAvg\cite{haddadpour2019convergence} & $\cO(\frac{1}{KT})$ & $\cO(K^{-1/3}T^{2/3})^{\dagger}$ & \cmark$^{\ddagger\ddagger}$ & Partial       & Bounded gradient diversity   & Strongly convex$^{\mathsection}$  \\ \hline
FedAvg\cite{koloskova2020unified} & $\cO(\frac{1}{NT})$ & $\cO(N^{-1/2}T^{1/2})$     	& \cmark & Full       & Bounded gradient   & Strongly convex  \\ \hline
FedAvg\cite{karimireddy2019scaffold} & $\cO(\frac{1}{KT})$ & $\cO(N^{-1/2}T^{1/2})$   	& \cmark & Partial       & Bounded gradient dissimilarity   & Strongly convex  \\ \hline
FedAvg/N-FedAvg (our analysis)                 & $\cO(\frac{1}{KT})$ & $\cO(N^{-1/2}T^{1/2})^{\ddagger}$ & \cmark	 & Partial       & Bounded gradient             & Strongly convex  \\\hline\hline
FedAvg\cite{khaled2020tighter}  & $\cO(\frac{1}{\sqrt{NT}})$ & $\cO(N^{-3/2}T^{1/2})$     	    & \cmark& Full        & Bounded gradient             & Convex  \\\hline
FedAvg\cite{koloskova2020unified} & $\cO(\frac{1}{\sqrt{NT}})$ & $\cO(N^{-3/4}T^{1/4})$    & \cmark & Full       & Bounded gradient             &  Convex  \\ \hline
FedAvg\cite{karimireddy2019scaffold} & $\cO(\frac{1}{\sqrt{KT}})$ & $\cO(N^{-3/4}T^{1/4})$  & \cmark & Partial       & Bounded gradient dissimilarity   &  Convex  \\ \hline
FedAvg/N-FedAvg (our analysis)      & $\cO\left(\frac{1}{\sqrt{KT}}\right)$ & $\cO(N^{-3/4}T^{1/4})^{\ddagger}$& \cmark			& Partial     & Bounded gradient            &  Convex   \\ \hline
FedAvg (our analysis) & $\cO\left(\exp(-\frac{NT}{E\kappa_1})\right)$ & $ \cO(T^{\beta})$                   & \cmark&  Partial     & Bounded gradient    & Overparameterized \\
\hline
\end{tabular}}
}
\caption{A high-level summary of the convergence results in this paper compared to prior state-of-the-art FL algorithms. This table only highlights the
dependence on $T$ (number of iterations), $E$ (the maximal number of local steps), $N$ (the total number of devices), and $K\leq N$ the number of participated devices. 
$\kappa$ is the condition number of the system and $\beta \in (0,1)$. We denote Nesterov accelerated FedAvg as N-FedAvg in this table.}
{\raggedright 
         $^{\dagger}$ This $E$ is obtained under i.i.d. setting. \\
         $^{\ddagger}$ This $E$ is obtained under full participation setting. \\ 
         $^{\mathsection}$ In~\cite{haddadpour2019convergence}, the convergence rate is for non-convex smooth problems with PL condition, which also applies to strongly convex problems. Therefore, we compare it with our strongly convex results here.\\
         $^{\ddagger\ddagger}$ The bounded gradient diversity assumption is not applicable for general heterogeneous data when converging to arbitrarily small $\epsilon$-accuracy (see discussions in Sec~\ref{sec:app:comparison}).\\
           \par}
\label{tb:convergenceratev3}
\end{table*}

\section{A High-level Summary of Our FedAvg Analysis}
\label{sec:app:sum}

To facilitate the understanding of our analysis and highlight the improvement of our work comparing to prior arts, we summarize the general steps 
used in the proofs across the various settings. In this section,
we take the strongly convex case as an example to illustrate our analysis. The corresponding proof for general convex functions follows the 
same framework.  

\begin{algorithm}[h!]\small
\begin{algorithmic}[1]
\STATE \textbf{Server input:} initial model $\vw_0$, initial step size $\alpha_0$, local steps $E$. 
\STATE \textbf{Client input:} 
\FOR {each round $r = 0, 1, ..., R$, where $r = t*E$} 
\STATE  Sample clients $\cS_t \subseteq \{1,...,N\}$
\STATE Broadcast $\vw$ to all clients $k \in \cS_t$
\FOR {each client $k \subseteq \cS_t$}
\STATE initialize local model $\vw_t^k = \vw$
\FOR {$t = r * E + 1, \dots, (r+1)*E$}
\STATE $\vw_{t+1}^{k}  =\vw_{t}^{k}-\alpha_{t}\vg_{t,k}$
\ENDFOR
\ENDFOR
\STATE Average the local models at server end: $\ov{w}_t = \sum_{k\in \cS_t} \vw_t^k$.
\ENDFOR 
\end{algorithmic}
\caption{\textsc{FedAvg}: Federated Averaging}
\label{alg:fedavg}
\end{algorithm}

\textbf{One step progress bound} \\ 
This step establishes the progress of distance ($\|\ov{w}_{t}-\vw^{\ast}\|^{2}$) to optimal solution after one step SGD update (see line 9, Alg~\ref{alg:fedavg}), as the following equation shows:
\begin{align*}
	\mathbb{E}\|\ov{w}_{t+1}-\vw^{\ast}\|^{2} & \leq \cO(\eta_t\mathbb{E}\|\ov{w}_{t}-\vw^{\ast}\|^{2} + \alpha_t^2 \sigma^2/N + \alpha_t^3E^2G^2).
\end{align*}
The above bound consists of three main ingredients, the distance to optima
in previous step (with $\eta_t \in (0, 1)$ to obtained a contraction bound), 
the variance of stochastic gradients in local clients (second term), the variance
across different clients (third term). 
Notice that the third term in this bound is the primary source of improvement in the rate.  Comparing to the bound in~\cite{li2019convergence}, we improve 
the third term from $\cO(\alpha_t^2E^2G^2)$ to $\cO(\alpha_t^2E^2G^2)$, which enables the linear speedup in the convergence rate.

\textbf{Iterating the one-step bound}\\
This step uses the \textit{one step progress bound} iteratively to 
connect the the current distance to optimal solution with the initial distance ($\|\ov{w}_{0}-\vw^{\ast}\|^{2}$), as follows:
\begin{align*}
	\mathbb{E}\|\ov{w}_{t+1}-\vw^{\ast}\|^{2} & \leq \cO(\mathbb{E}\|\ov{w}_{0}-\vw^{\ast}\|^{2} \frac{1}{T}).
\end{align*}
Then we can use the distance to optima to upper bound the optimality gap ($F(\vw_t) - F^* \leq \cO(1/T)$), as follows:
\begin{align*}
	\mathbb{E}(F(\ov{w}_{t}))-F^{\ast} & \leq\cO(\mathbb{E}\|\ov{w}_{t}-\vw^{\ast}\|^{2}). 
\end{align*}
The convergence rate of the optimality gap is equally obtained as the convergence rate of the distance to optima.

\textbf{From full participation to partial participation}\\
There are three sources of variances that affect the convergence rate.
The first two sources come from the variances of within local clients and
across clients (second and third term in one step progress bound). 
The partial participation, which involves a sampling procedure, is the third
source of variance. Therefore, comparing to the rate in full participation, this will add another term of variance into the convergence rate, where we follow a similar derivation as in~\cite{li2019convergence}.

\section{Technical Lemmas}

To facilitate reading, we first summarize some basic properties of $L$-smooth and $\mu$-strongly
convex functions, found in e.g.~\cite{rockafellar1970convex}, which are used in various steps of proofs in the appendix. 
\begin{lemma}
	Let $F$ be a convex $L$-smooth function. Then we have the following
	inequalities:
	
	1. Quadratic upper bound: $0\leq F(\vw)-F(\vw')-\langle\nabla F(\vw'),\vw-\vw'\rangle\leq\frac{L}{2}\|\vw-\vw'\|^{2}$. 
	
	2. Coercivity: $\frac{1}{L}\|\nabla F(\vw)-\nabla F(\vw')\|^{2}\leq\langle\nabla F(\vw)-\nabla F(\vw'),\vw-\vw'\rangle$.
	
	3. Lower bound: $F(\vw)\geq F(\vw')+\langle\nabla F(\vw'),\vw-\vw'\rangle+\frac{1}{2L}\|\nabla F(\vw)-\nabla F(\vw')\|^{2}$.
	In particular, $\|\nabla F(\vw)\|^{2}\leq2L(F(\vw)-F(\vw^{\ast}))$.
	
	4. Optimality gap: $F(\vw)-F(\vw^{\ast})\leq$$\langle\nabla F(\vw),\vw-\vw^{\ast}\rangle$.
\label{lem:lsmooth}
\end{lemma}
%
\begin{lemma}
	Let $F$ be a $\mu$-strongly convex function. Then 
	\begin{align*}
	& F(\vw)  \leq F(\vw')+\langle\nabla F(\vw'),\vw-\vw'\rangle+\frac{1}{2\mu}\|\nabla F(\vw)-\nabla F(\vw')\|^{2}\\
	& F(\vw)-F(\vw^{\ast})  \leq\frac{1}{2\mu}\|\nabla F(\vw)\|^{2}
	\end{align*}
\end{lemma}


\section{Proof of Convergence Results for FedAvg}
\label{sec:app:fedavg}

\subsection{Strongly Convex Smooth Objectives}

To organize our proofs more effectively and highlight the significance of our results compared to prior works, 
we first state the following key lemmas used in proofs of main results and defer their proofs to later. 
\begin{lemma}[\textbf{One step progress, strongly convex}] Let $\overline{\mathbf{w}}_{t}=\sum_{k=1}^{N}p_{k}\mathbf{w}_{t}^{k}$, and
suppose our functions satisfy Assumptions~\ref{ass:lsmooth},\ref{ass:stroncvx},\ref{ass:boundedvariance},\ref{ass:subgrad2}, and set step size $\alpha_{t}=\frac{4}{\mu(\gamma+t)}$
	with $\gamma=\max\{32\kappa,E\}$ and $\kappa=\frac{L}{\mu}$, then the updates of FedAvg with full participation satisfy
	\begin{align*}
	\mathbb{E}\|\ov{w}_{t+1}-\vw^{\ast}\|^{2} & \leq(1-\mu\alpha_{t})\mathbb{E}\|\ov{w}_{t}-\vw^{\ast}\|^{2}+\alpha_{t}^{2}\frac{1}{N}\nu_{\max}\sigma^{2}+6E^{2}L\alpha_{t}^{3}G^{2}.
	\end{align*}
\label{lem:scvxoner}
\end{lemma}
We emphasize that the above lemma is the key step that allows us to obtain a bound that improves on the convergence result of~\cite{li2019convergence} with linear speedup. Its proof will make use of the following two results. 
\begin{lemma}[\textbf{Bounding gradient variance (Lemma 2 \cite{li2019convergence})} ]
Given Assumption~\ref{ass:boundedvariance}, the upper bound of gradient variance is given as follows,
\begin{align*}
	\mathbb{E}\|\vg_{t}-\ov{g}_{t}\|^{2} \leq \sum_{k=1}^{N}p_{k}^{2}\sigma_{k}^{2}.
	\end{align*}
\label{lem:bgv}
\end{lemma}

\begin{lemma}[\textbf{Bounding the divergence of $\vw_t^k$ (Lemma 3 \cite{li2019convergence})} ]
Given Assumption~\ref{ass:subgrad2}, and assume that $\alpha_t$ is non-increasing and $\alpha_t \leq 2\alpha_{t+E}$ for all $t\geq 0$, we have
	\begin{align*}
	\mathbb{E}\left[\sum_{k=1}^{N}p_{k}\|\ov{w}_{t}-\vw_{t}^{k}\|^{2} \right]\leq 4E^{2}\alpha_{t}^{2}G^{2}.
	\end{align*}
\label{lem:bdw}
\end{lemma}

We now restate Theorem~\ref{th:scvx_sgd} from the main text and then prove it using Lemma~\ref{lem:scvxoner}.
\begin{thm}
	Let $\overline{\mathbf{w}}_{T}=\sum_{k=1}^{N}p_{k}\mathbf{w}_{T}^{k}$ in FedAvg,
	$\nu_{\max}=\max_{k}Np_{k}$, and set decaying learning rates $\alpha_{t}=\frac{4}{\mu(\gamma+t)}$
	with $\gamma=\max\{32\kappa,E\}$ and $\kappa=\frac{L}{\mu}$. Then
	under Assumptions~\ref{ass:lsmooth},\ref{ass:stroncvx},\ref{ass:boundedvariance},\ref{ass:subgrad2} with full device participation, 
	\begin{align*}
	\mathbb{E}F(\overline{\mathbf{w}}_{T})-F^{\ast}=\mathcal{O}\left(\frac{\kappa\nu_{\max}\sigma^{2}/\mu}{NT}+\frac{\kappa^{2}E^{2}G^{2}/\mu}{T^{2}}\right)
	\end{align*}
	and with partial device participation with at most $K$ sampled devices
	at each communication round, 
	\begin{align*}
	\mathbb{E}F(\overline{\mathbf{w}}_{T})-F^{\ast}=\mathcal{O}\left(\frac{\kappa EG^{2}/\mu}{KT}+\frac{\kappa\nu_{\max}\sigma^{2}/\mu}{NT}+\frac{\kappa^{2}E^{2}G^{2}/\mu}{T^{2}}\right)
	\end{align*}
\end{thm}

\begin{proof}
	The road map of the proof for full device participation contains three steps. First, we establish a recursive relationship between $\mathbb{E}\|\ov{w}_{t+1}-\vw^{\ast}\|^{2}$
	and $\mathbb{E}\|\ov{w}_{t}-\vw^{\ast}\|^{2}$, upper bounding the progress of FedAvg from step $t$ to step $t+1$. 
	Second, we show that $\mathbb{E}\|\ov{w}_{t}-\vw^{\ast}\|^{2}=\mathcal{O}(\frac{\nu_{\max}\sigma^{2}/\mu}{tN}+\frac{E^{2}LG^{2}/\mu^2}{t^{2}})$ by induction using the recursive relationship from the previous step. 
	Third, we use the property of $L$-smoothness to bound the optimality gap by $\mathbb{E}\|\ov{w}_{t}-\vw^{\ast}\|^{2}$. 
	
	By Lemma~\ref{lem:scvxoner}, we have the following upper bound for the one step progress: 
	\begin{align*}
	\mathbb{E}\|\ov{w}_{t+1}-\vw^{\ast}\|^{2} & \leq(1-\mu\alpha_{t})\mathbb{E}\|\ov{w}_{t}-\vw^{\ast}\|^{2}+\alpha_{t}^{2}\frac{1}{N}\nu_{\max}\sigma^{2}+6E^{2}L\alpha_{t}^{3}G^{2}.
	\end{align*}
	We show next that $\mathbb{E}\|\ov{w}_{t}-\vw^{\ast}\|^{2}=\mathcal{O}(\frac{\nu_{\max}\sigma^{2}/\mu}{tN}+\frac{E^{2}LG^{2}/\mu^2}{t^{2}})$ using induction. 
	To simplify the presentation, we denote $C\equiv6E^{2}LG^{2}$ and $D\equiv\frac{1}{N}\nu_{\max}\sigma^{2}$.
	Suppose that we have the bound $\mathbb{E}\|\ov{w}_{t}-\vw^{\ast}\|^{2}\leq b\cdot(\alpha_{t}D+\alpha_{t}^{2}C)$
	for some constant $b$ and learning rates $\alpha_{t}$. Then the one step progress from Lemma~\ref{lem:scvxoner} becomes:
	\begin{align*}
	\mathbb{E}\|\ov{w}_{t+1}-\vw^{\ast}\|^{2} & \leq (b(1-\mu\alpha_{t})+\alpha_{t})\alpha_{t}D+(b(1-\mu\alpha_{t})+\alpha_{t})\alpha_{t}^{2}C
	\end{align*}
	To establish the result at step $t+1$, it remains to choose $\alpha_{t}$ and $b$ such that $(b(1-\mu\alpha_{t})+\alpha_{t})\alpha_{t}\leq b\alpha_{t+1}$
	and $(b(1-\mu\alpha_{t})+\alpha_{t})\alpha_{t}^{2}\leq b\alpha_{t+1}^{2}$.
	If we let $\alpha_{t}=\frac{4}{\mu(t+\gamma)}$
	where $\gamma=\max\{E,32\kappa\}$ (choice of $\gamma$ required to guarantee the one step progress) and set $b=\frac{4}{\mu}$, we have:
	\begin{align*}
	& (b(1-\mu\alpha_{t})+\alpha_{t})\alpha_{t}  =\left(b(1-\frac{4}{t+\gamma})+\frac{4}{\mu(t+\gamma)}\right)\frac{4}{\mu(t+\gamma)}
	 \leq b\frac{4}{\mu(t+\gamma+1)}=b\alpha_{t+1}\\
	& (b(1-\mu\alpha_{t})+\alpha_{t})\alpha_{t}^{2} 
	 =b(\frac{t+\gamma-2}{t+\gamma})\frac{16}{\mu^{2}(t+\gamma)^{2}} \leq b\frac{16}{\mu^{2}(t+\gamma+1)^{2}}=b\alpha_{t+1}^{2}
	\end{align*}
	where we have used the following inequalities:
	\begin{align*}
	\frac{t+\gamma-1}{(t+\gamma)^{2}} \leq\frac{1}{(t+\gamma+1)}   \hspace{2em}
	\frac{t+\gamma-2}{(t+\gamma)^{3}} \leq\frac{1}{(t+\gamma+1)^{2}}  \hspace{2em}  \forall\ \gamma\geq1
	\end{align*}
	Thus we have established the result at step $t+1$ assuming the result is correct at step $t$:
	\begin{align*}
	\mathbb{E}\|\ov{w}_{t+1}-\vw^{\ast}\|^{2} & \leq b\cdot(\alpha_{t+1}D+\alpha_{t+1}^{2}C)
	\end{align*}
	At step $t=0$, we can ensure the following inequality by scaling $b$ with $c\|\vw_{0}-\vw^{\ast}\|^{2}$ for a sufficiently large constant
	$c$:
	\begin{align*}
	\|\vw_{0}-\vw^{\ast}\|^{2} \leq b\cdot(\alpha_{0}D+\alpha_{0}^{2}C) =b\cdot(\frac{4}{\mu\gamma}D+\frac{16}{\mu^{2}\gamma^{2}}C)
	\end{align*}
	It follows that 
	\begin{align}
	\mathbb{E}\|\ov{w}_{t}-\vw^{\ast}\|^{2} & \leq c\|\vw_{0}-\vw^{\ast}\|^{2}\frac{4}{\mu}(D\alpha_{t}+C\alpha_{t}^{2})
	\label{eq:th1:1}
	\end{align}
	for all $t\geq0$. 
	
	Finally, the $L$-smoothness of $F$ implies 
	\begin{align*}
	\mathbb{E}(F(\ov{w}_{T}))-F^{\ast} & \leq\frac{L}{2}\mathbb{E}\|\ov{w}_{T}-\vw^{\ast}\|^{2}\\ 
	& \leq\frac{L}{2}c\|\vw_{0}-\vw^{\ast}\|^{2}\frac{4}{\mu}(D\alpha_{T}+C\alpha_{T}^{2})\\
	& =2c\|\vw_{0}-\vw^{\ast}\|^{2}\kappa(D\alpha_{T}+C\alpha_{T}^{2})\\
	& \leq2c\|\vw_{0}-\vw^{\ast}\|^{2}\kappa\left[\frac{4}{\mu(T+\gamma)}\cdot\frac{1}{N}\nu_{\max}\sigma^{2}+6E^{2}LG^{2}\cdot(\frac{4}{\mu(T+\gamma)})^{2}\right]\\
	& =\mathcal{O}(\frac{\kappa}{\mu}\frac{1}{N}\nu_{\max}\sigma^{2}\cdot\frac{1}{T}+\frac{\kappa^{2}}{\mu}E^{2}G^{2}\cdot\frac{1}{T^{2}})
	\end{align*}
	where in the first line, we use the property of $L$-smooth function (see Lemma~\ref{lem:lsmooth}), and in the second line, we use the conclusion in \eq{\ref{eq:th1:1}}. 
	
	\rebuttal{
	With partial participation, the update at each communication round
	is now given by weighted averages over a subset of sampled devices. When $t+1\notin\mathcal{I}_{E}$,
	$\ov{v}_{t+1}=\ov{w}_{t+1}$, while when $t+1\in\mathcal{I}_{E}$,
	we have $\mathbb{E}\ov{w}_{t+1}=\ov{v}_{t+1}$ by design
	of the sampling schemes~(\cite{li2019convergence}, Lemma 4).
	
	Let $t+1$ and $t+1+E$ be two consecutive communication
rounds. Using $\mathbb{E}\ov{w}_{t+E+1}=\ov{v}_{t+E+1}$,
we can write 
\begin{align*}
\mathbb{E}\|{\ov{w}}_{t+E+1}-\vw^{\ast}\|^{2} & =\mathbb{E}\|\ov{w}_{t+E+1}-\ov{v}_{t+E+1}+\ov{v}_{t+E+1}-\vw^{\ast}\|^{2}\\
 & =\mathbb{E}\|\ov{v}_{t+E+1}-\vw^{\ast}\|^{2}+\mathbb{E}\|\ov{w}_{t+E+1}-\ov{v}_{t+E+1}\|^{2}
\end{align*}
We note that applying the one step progress consecutively to the sequence $\ov{v}_{t}$ with full participation setting starting
from round $t+1$ and stopping at round $t+1+E$, we can bound the
first term as 
\begin{align*}
\mathbb{E}\|\ov{v}_{t+E+1}-\vw^{\ast}\|^{2} & \leq(1-\mu\alpha_{t+1})\mathbb{E}\|\ov{w}_{t+1}-\vw^{\ast}\|^{2}+E\alpha_{t+1}^{2}\frac{1}{N}\nu_{\max}\sigma^{2}+6E^{3}L\alpha_{t+1}^{3}G^{2}
\end{align*}
	
	The bound for $\mathbb{E}\|\ov{w}_{t+E+1}-\ov{v}_{t+E+1}\|^{2}$ for the two sampling schemes we consider is provided in \eq{\ref{eq:partialsample}} as
\begin{align*}
\mathbb{E}\|\ov{w}_{t+E+1}-\ov{v}_{t+E+1}\|^{2} & \leq\mathbb{E}\|\ov{w}_{t+E+1}-\ov{w}_{t+1}\|^{2}\\
 & \leq\frac{4}{K}\alpha_{t}^{2}E^{2}G^{2}
\end{align*}
 which yields 
\begin{align*}
\mathbb{E}\|\ov{w}_{t+E+1}-\vw^{\ast}\|^{2} & \leq(1-\mu\alpha_{t+1})\mathbb{E}\|\ov{w}_{t+1}-\vw^{\ast}\|^{2}+E\alpha_{t+1}^{2}\frac{1}{N}\nu_{\max}\sigma^{2}\\
&+6E^{3}L\alpha_{t+1}^{3}G^{2}+\frac{4}{K}\alpha_{t}^{2}E^{2}G^{2}
\end{align*}
We note that this is similar to the one-step progress bound in \cite{karimireddy2019scaffold} for two consecutive communication rounds. From here,
using the same induction argument (effectively $T/E$ times instead of $T$ times) and $L$-smoothness as the full participation case implies 
	\begin{align*}
	\mathbb{E}F(\ov{w}_{T})-F^{\ast}=\mathcal{O}(\frac{\kappa\nu_{\max}\sigma^{2}/\mu}{NT}+\frac{\kappa EG^{2}/\mu}{KT}+\frac{\kappa^{2}E^{2}G^{2}/\mu}{T^{2}})
	\end{align*}
	
The advantage of bounding the square distance to optimum between consecutive communication rounds is that it results in bounding the sampling variance $\mathbb{E}\|\ov{w}_{t}-\ov{v}_{t}\|^{2} $ $T/E$ instead of $T$ times, which gives an  $\mathcal{O}(E/KT)$ term instead of $\mathcal{O}(E^2/KT)$ in the convergence result.}
	\end{proof}

\subsubsection{Deferred Proofs of Key Lemmas}
Here we first rewrite the proofs of lemmas \ref{lem:bgv} and  \ref{lem:bdw} from~\cite{li2019convergence} with slight modifications for the consistency and completeness of this work, since later we will use modified versions of these results in the convergence proof for Nesterov accelerated FedAvg.
\begin{proof}[Proof of lemma~\ref{lem:bgv}]
	\begin{align*}
	\mathbb{E}\|\vg_{t}-\ov{g}_{t}\|^{2} & =\mathbb{E}\|\vg_{t}-\mathbb{E}\vg_{t}\|^{2}=\sum_{k=1}^{N}p_{k}^{2}\|\vg_{t,k}-\mathbb{E}\vg_{t,k}\|^{2}\leq \sum_{k=1}^{N}p_{k}^{2}\sigma_{k}^{2}
	\end{align*}
\end{proof}

\begin{proof}[Proof of lemma~\ref{lem:bdw}]
	Now we bound $\mathbb{E}\sum_{k=1}^{N}p_{k}\|\ov{w}_{t}-\vw_{t}^{k}\|^{2}$ following \cite{li2019convergence}.
	Since communication is done every $E$ steps, for any $t\geq0$, we
	can find a $t_{0}\leq t$ such that $t-t_{0}\leq E-1$ and $\vw_{t_{0}}^{k}=\ov{w}_{t_{0}}$for
	all $k$. Moreover, using $\alpha_{t}$ is non-increasing and $\alpha_{t_{0}}\leq2\alpha{}_{t}$
	for any $t-t_{0}\leq E-1$, we have 
		\begin{align*}
	& \mathbb{E}\sum_{k=1}^{N}p_{k}\|\ov{w}_{t}-\vw_{t}^{k}\|^{2}\\
= & \mathbb{E}\sum_{k=1}^{N}p_{k}\|\vw_{t}^{k}-\ov{w}_{t_{0}}-(\ov{w}_{t}-\ov{w}_{t_{0}})\|^{2}\\
\leq &\mathbb{E}\sum_{k=1}^{N}p_{k}\|\vw_{t}^{k}-\ov{w}_{t_{0}}\|^{2}\\
	= &\mathbb{E}\sum_{k=1}^{N}p_{k}\|\vw_{t}^{k}-\vw_{t_{0}}^{k}\|^{2}\\
	= &\mathbb{E}\sum_{k=1}^{N}p_{k}\|-\sum_{i=t_{0}}^{t-1}\alpha_{i}\vg_{i,k}\|^{2}\\
	\leq & 2\sum_{k=1}^{N}p_{k}\mathbb{E}\sum_{i=t_{0}}^{t-1}E\alpha_{i}^{2}\|\vg_{i,k}\|^{2}\\
	\leq & 2\sum_{k=1}^{N}p_{k}E^{2}\alpha_{t_{0}}^{2}G^{2}\\
	\leq & 4E^{2}\alpha_{t}^{2}G^{2}
	\end{align*}
\end{proof} 
Based on the results of Lemma~\ref{lem:bgv},~\ref{lem:bdw}, we now prove the upper bound of one step SGD progress. This 
proof improves on the previous work~\cite{li2019convergence} and reveals the linear speedup of convergence of FedAvg. 
\begin{proof}[Proof of lemma~\ref{lem:scvxoner}]
	We have 
	\begin{align*}
	\|\ov{w}_{t+1}-\vw^{\ast}\|^{2} & =\|(\ov{w}_{t}-\alpha_{t}\vg_{t})-\vw^{\ast}\|^{2} =\|(\ov{w}_{t}-\alpha_{t}\ov{g}_{t}-\vw^{\ast})-\alpha_{t}(\vg_{t}-\ov{g}_{t})\|^{2}\\
	& = \underbrace{\|\ov{w}_{t}-\vw^{\ast}-\alpha_{t}\ov{g}_{t}\|^{2}}_{A_1} + \underbrace{2\alpha_{t}\langle\ov{w}_{t}-\vw^{\ast}-\alpha_{t}\ov{g}_{t},\ov{g}_{t}-\vg_{t}\rangle}_{A_2} + \underbrace{\alpha_{t}^{2}\|\vg_{t}-\ov{g}_{t}\|^{2}}_{A_3}
	\end{align*}
	where we denote: 
	\begin{align*}
	A_{1} & =\|\ov{w}_{t}-\vw^{\ast}-\alpha_{t}\ov{g}_{t}\|^{2}\\
	A_{2} & =2\alpha_{t}\langle\ov{w}_{t}-\vw^{\ast}-\alpha_{t}\ov{g}_{t},\ov{g}_{t}-\vg_{t}\rangle\\
	A_{3} & =\alpha_{t}^{2}\|\vg_{t}-\ov{g}_{t}\|^{2}
	\end{align*}
	By definition of $\vg_{t}$ and $\ov{g}_{t}$ (see \eq{\ref{eq:gradient}}), we have $\mathbb{E}A_{2}=0$.
	For $A_{3}$, we have the following upper bound (see Lemma~\ref{lem:bgv}):
	\begin{align*}
	\alpha_{t}^{2}\mathbb{E}\|\vg_{t}-\ov{g}_{t}\|^{2} \leq\alpha_{t}^{2}\sum_{k=1}^{N}p_{k}^{2}\sigma_{k}^{2}
	\end{align*} 
	
	Next we bound $A_{1}$: 
	\begin{align*}
	\|\ov{w}_{t}-\vw^{\ast}-\alpha_{t}\ov{g}_{t}\|^{2} & =\|\ov{w}_{t}-\vw^{\ast}\|^{2}+2\langle\ov{w}_{t}-\vw^{\ast},-\alpha_{t}\ov{g}_{t}\rangle+\|\alpha_{t}\ov{g}_{t}\|^{2}
	\end{align*}
	and we will show that the third term $\|\alpha_{t}\ov{g}_{t}\|^{2}$
	can be canceled by an upper bound of the second term, which is one of major improvement comparing to prior art~\cite{li2019convergence}.
	
	The upper bound of second term can be derived as follows, 	using the strong convexity and $L$-smoothness of $F_{k}$:
	\begin{align*}
	& -2\alpha_{t}\langle\ov{w}_{t}-\vw^{\ast},\ov{g}_{t}\rangle\\
	=& -2\alpha_{t}\sum_{k=1}^{N}p_{k}\langle\ov{w}_{t}-\vw^{\ast},\nabla F_{k}(\vw_{t}^{k})\rangle\\
	=& -2\alpha_{t}\sum_{k=1}^{N}p_{k}\langle\ov{w}_{t}-\vw_{t}^{k},\nabla F_{k}(\vw_{t}^{k})\rangle-2\alpha_{t}\sum_{k=1}^{N}p_{k}\langle \vw_{t}^{k}-\vw^{\ast},\nabla F_{k}(\vw_{t}^{k})\rangle\\
	\leq&-2\alpha_{t}\sum_{k=1}^{N}p_{k}\langle\ov{w}_{t}-\vw_{t}^{k},\nabla F_{k}(\vw_{t}^{k})\rangle+2\alpha_{t}\sum_{k=1}^{N}p_{k}(F_{k}(\vw^{\ast})-F_{k}(\vw_{t}^{k}))-\alpha_{t}\mu\sum_{k=1}^{N}p_{k}\|\vw_{t}^{k}-\vw^{\ast}\|^{2}\\
	\leq& 2\alpha_{t}\sum_{k=1}^{N}p_{k}\left[F_{k}(\vw_{t}^{k})-F_{k}(\ov{w}_{t})+\frac{L}{2}\|\ov{w}_{t}-\vw_{t}^{k}\|^{2}+F_{k}(\vw^{\ast})-F_{k}(\vw_{t}^{k})\right]-\alpha_{t}\mu\|\sum_{k=1}^{N}p_{k}\vw_{t}^{k}-\vw^{\ast}\|^{2}\\
	=& \alpha_{t}L\sum_{k=1}^{N}p_{k}\|\ov{w}_{t}-\vw_{t}^{k}\|^{2}+2\alpha_{t}\sum_{k=1}^{N}p_{k}\left[F_{k}(\vw^{\ast})-F_{k}(\ov{w}_{t})\right]-\alpha_{t}\mu\|\ov{w}_{t}-\vw^{\ast}\|^{2}
	\end{align*}
	We record the bound we have obtained so far, as it will also be used in the proof for convex case: 
	\begin{align*}
	 \mathbb{E}\|\ov{w}_{t+1}-\vw^{\ast}\|^{2}
	\leq & \mathbb{E}(1-\mu\alpha_{t})\|\ov{w}_{t}-\vw^{\ast}\|^{2}+\alpha_{t}L\sum_{k=1}^{N}p_{k}\|\ov{w}_{t}-\vw_{t}^{k}\|^{2}\\
	 & +2\alpha_{t}\sum_{k=1}^{N}p_{k}\left[F_{k}(\vw^{\ast})-F_{k}(\ov{w}_{t})\right]+\alpha_{t}^{2}\sum_{k=1}^{N}p_{k}^{2}\sigma_{k}^{2}+\alpha_{t}^{2}\|\ov{g}_{t}\|^{2} \numberthis \label{eq:common one step}
	\end{align*}
	For the term $2\alpha_{t}\sum_{k=1}^{N}p_{k}\left[F_{k}(\vw^{\ast})-F_{k}(\ov{w}_{t})\right]$, which is negative, we can ignore it, but this
	yields a suboptimal bound that fails to provide the desired linear
	speedup. Instead, we upper bound it using the following derivation:
	\begin{align*}
	& 2\alpha_{t}\sum_{k=1}^{N}p_{k}\left[F_{k}(\vw^{\ast})-F_{k}(\ov{w}_{t})\right]\\
	\leq& 2\alpha_{t}\left[F(\ov{w}_{t+1})-F(\ov{w}_{t})\right]\\
	\leq& 2\alpha_{t}\mathbb{E}\langle\nabla F(\ov{w}_{t}),\ov{w}_{t+1}-\ov{w}_{t}\rangle+\alpha_{t}L\mathbb{E}\|\ov{w}_{t+1}-\ov{w}_{t}\|^{2}\\
	 =& -2\alpha_{t}^{2}\mathbb{E}\langle\nabla F(\ov{w}_{t}),\vg_{t}\rangle+\alpha_{t}^{3}L\mathbb{E}\|\vg_{t}\|^{2}\\
	=&-2\alpha_{t}^{2}\mathbb{E}\langle\nabla F(\ov{w}_{t}),\ov{g}_{t}\rangle+\alpha_{t}^{3}L\mathbb{E}\|\vg_{t}\|^{2}\\
	=&-\alpha_{t}^{2}\left[\|\nabla F(\ov{w}_{t})\|^{2}+\|\ov{g}_{t}\|^{2}-\|\nabla F(\ov{w}_{t})-\ov{g}_{t}\|^{2}\right]+\alpha_{t}^{3}L\mathbb{E}\|\vg_{t}\|^{2}\\
	=&-\alpha_{t}^{2}\left[\|\nabla F(\ov{w}_{t})\|^{2}+\|\ov{g}_{t}\|^{2}-\|\nabla F(\ov{w}_{t})-\sum_{k}p_{k}\nabla F(\vw_{t}^{k})\|^{2}\right]+\alpha_{t}^{3}L\mathbb{E}\|\vg_{t}\|^{2}\\
   \leq&-\alpha_{t}^{2}\left[\|\nabla F(\ov{w}_{t})\|^{2}+\|\ov{g}_{t}\|^{2}-\sum_{k}p_{k}\|\nabla F(\ov{w}_{t})-\nabla F(\vw_{t}^{k})\|^{2}\right]+\alpha_{t}^{3}L\mathbb{E}\|\vg_{t}\|^{2}\\
	\leq&-\alpha_{t}^{2}\left[\|\nabla F(\ov{w}_{t})\|^{2}+\|\ov{g}_{t}\|^{2}-L^{2}\sum_{k}p_{k}\|\ov{w}_{t}-\vw_{t}^{k}\|^{2}\right]+\alpha_{t}^{3}L\mathbb{E}\|\vg_{t}\|^{2}\\
	\leq&-\alpha_{t}^{2}\|\ov{g}_{t}\|^{2}+\alpha_{t}^{2}L^{2}\sum_{k}p_{k}\|\ov{w}_{t}-\vw_{t}^{k}\|^{2}+\alpha_{t}^{3}L\mathbb{E}\|\vg_{t}\|^{2}-\alpha_{t}^{2}\|\nabla F(\ov{w}_{t})\|^{2}
	\end{align*}
	where we have used the smoothness of $F$ twice. 
	
	Note that the term $-\alpha_{t}^{2}\|\ov{g}_{t}\|^{2}$ exactly
	cancels the $\alpha_{t}^{2}\|\ov{g}_{t}\|^{2}$ in the bound in \eq{\ref{eq:common one step}}, so that plugging in the bound for $-2\alpha_{t}\langle\ov{w}_{t}-\vw^{\ast},\ov{g}_{t}\rangle$,
	we have so far proved 
	\begin{align*}
	\mathbb{E}\|\ov{w}_{t+1}-\vw^{\ast}\|^{2} & \leq\mathbb{E}(1-\mu\alpha_{t})\|\ov{w}_{t}-\vw^{\ast}\|^{2}+\alpha_{t}L\sum_{k=1}^{N}p_{k}\|\ov{w}_{t}-\vw_{t}^{k}\|^{2}+\alpha_{t}^{2}\sum_{k=1}^{N}p_{k}^{2}\sigma_{k}^{2}\\
	& +\alpha_{t}^{2}L^{2}\sum_{k=1}^{N}p_{k}\|\ov{w}_{t}-\vw_{t}^{k}\|^{2}+\alpha_{t}^{3}L\mathbb{E}\|\vg_{t}\|^{2}-\alpha_{t}^{2}\|\nabla F(\ov{w}_{t})\|^{2} \numberthis \label{eq:common recursion}
	\end{align*}
	Under Assumption~\ref{ass:subgrad2}, we have $\mathbb{E}\|\vg_{t}\|^{2}\leq G^{2}$. Furthermore, we can check that our choice of $\alpha_t$ satisfies $\alpha_t$ is non-increasing and $\alpha_t \leq 2\alpha_{t+E}$, so we may plug in the bound $\mathbb{E}\sum_{k=1}^{N}p_{k}\|\ov{w}_{t}-\vw_{t}^{k}\|^{2} \leq 4E^{2}\alpha_{t}^{2}G^{2}$ to the above inequality (see Lemma~\ref{lem:bdw}).
	
	Therefore, we can conclude that, with $\nu_{\max}:=N\cdot\max_{k}p_{k}$ and $\nu_{\min}:=N\cdot\min_{k}p_{k}$, 
	\begin{align*}
	& \mathbb{E}\|\ov{w}_{t+1}-\vw^{\ast}\|^{2}\\
	\leq& \mathbb{E}(1-\mu\alpha_{t})\|\ov{w}_{t}-\vw^{\ast}\|^{2}+4E^{2}L\alpha_{t}^{3}G^{2}+4E^{2}L^{2}\alpha_{t}^{4}G^{2}+\alpha_{t}^{2}\sum_{k=1}^{N}p_{k}^{2}\sigma_{k}^{2}+\alpha_{t}^{3}LG^{2}\\
	= &\mathbb{E}(1-\mu\alpha_{t})\|\ov{w}_{t}-\vw^{\ast}\|^{2}+4E^{2}L\alpha_{t}^{3}G^{2}+4E^{2}L^{2}\alpha_{t}^{4}G^{2}+\alpha_{t}^{2}\frac{1}{N}\sum_{k=1}^{N}(p_{k}N)p_k\sigma_{k}^{2}+\alpha_{t}^{3}LG^{2}\\
	\leq &\mathbb{E}(1-\mu\alpha_{t})\|\ov{w}_{t}-\vw^{\ast}\|^{2}+4E^{2}L\alpha_{t}^{3}G^{2}+4E^{2}L^{2}\alpha_{t}^{4}G^{2}+\alpha_{t}^{2}\frac{1}{N}\nu_{\max}\sum_{k=1}^{N}p_k\sigma_{k}^{2}+\alpha_{t}^{3}LG^{2}\\
	\leq &\mathbb{E}(1-\mu\alpha_{t})\|\ov{w}_{t}-\vw^{\ast}\|^{2}+6E^{2}L\alpha_{t}^{3}G^{2}+\alpha_{t}^{2}\frac{1}{N}\nu_{\max}\sigma^{2}
	\end{align*}
	where in the last inequality we use $\sigma^2=\sum_{k=1}^{N}p_k\sigma_{k}^{2}$, and that by construction $\alpha_{t}$
	satisfies $L\alpha_{t}\leq\frac{1}{8}$. 
\end{proof}

One may ask whether the dependence on $E$ in the term $\frac{\kappa E^{2}G^{2}/\mu}{KT}$
can be removed, or equivalently whether $\sum_{k}p_{k}\|\mathbf{w}_{t}^{k}-\overline{\mathbf{w}}_{t}\|^{2}=\mathcal{O}(1/T^{2})$
can be independent of $E$. We provide a simple counterexample that
shows that this is not possible in general. 
\begin{proposition} \label{prop:tight}
	There exists a dataset such that if $E=\mathcal{O}(T^{\beta})$ for
	any $\beta>0$ then $\sum_{k}p_{k}\|\mathbf{w}_{t}^{k}-\overline{\mathbf{w}}_{t}\|^{2}=\Omega(\frac{1}{T^{2-2\beta}})$
	.
\end{proposition}
\begin{proof}
	Suppose that we have an even number of devices and each $F_{k}(\mathbf{w})=\frac{1}{n_{k}}\sum_{j=1}^{n_{k}}(\mathbf{x}_{k}^{j}-\mathbf{w})^{2}$
	contains data points $\mathbf{x}_{k}^{j}=\mathbf{w}^{\ast,k}$, with
	$n_{k}\equiv n$. Moreover, the $\mathbf{w}{}^{\ast,k}$'s come in
	pairs around the origin. As a result, the global objective $F$ is
	minimized at $\mathbf{w}^{\ast}=0$. Moreover, if we start from $\overline{\mathbf{w}}_{0}=0$,
	then by design of the dataset the updates in local steps exactly cancel
	each other at each iteration, resulting in $\overline{\mathbf{w}}_{t}=0$
	for all $t$. On the other hand, if $E=T^{\beta}$, then starting
	from any $t=\mathcal{O}(T)$ with constant step size $\mathcal{O}(\frac{1}{T})$,
	after $E$ iterations of local steps, the local parameters are updated
	towards $\mathbf{w}^{\ast,k}$ with $\|\mathbf{w}_{t+E}^{k}\|^{2}=\Omega((T^{\beta}\cdot\frac{1}{T})^{2})=\Omega(\frac{1}{T^{2-2\beta}})$.
	This implies that 
	\begin{align*}
	\sum_{k}p_{k}\|\mathbf{w}_{t+E}^{k}-\overline{\mathbf{w}}_{t+E}\|^{2} & =\sum_{k}p_{k}\|\mathbf{w}_{t+E}^{k}\|^{2}\\
	& =\Omega(\frac{1}{T^{2-2\beta}})
	\end{align*}
	which is at a slower rate than $\frac{1}{T^{2}}$ for any $\beta>0$.
	Thus the sampling variance $\mathbb{E}\|\overline{\mathbf{w}}_{t+1}-\overline{\mathbf{v}}_{t+1}\|^{2}=\Omega(\sum_{k}p_{k}\mathbb{E}\|\mathbf{w}_{t+1}^{k}-\overline{\mathbf{w}}_{t+1}\|^{2})$
	decays at a slower rate than $\frac{1}{T^{2}}$, resulting in a convergence
	rate slower than $\mathcal{O}(\frac{1}{T})$ with partial participation. 
\end{proof}

\subsection{Convex Smooth Objectives}
\label{sec:nasgdscvxsmth}
In this section we provide the proof of the convergence result for FedAvg with convex and smooth objectives. The key step is a one step progress result analogous to that in the strongly convex case, and their proofs share identical components as well. 
\begin{lemma} [\textbf{One step progress, convex case}]
Let $\overline{\mathbf{w}}_{t}=\sum_{k=1}^{N}p_{k}\mathbf{w}_{t}^{k}$ in FedAvg. Under assumptions~\ref{ass:lsmooth},\ref{ass:boundedvariance},\ref{ass:subgrad2}, the following bound holds for all $t$:
\begin{align*}
	\mathbb{E}\|\ov{w}_{t+1}-\vw^{\ast}\|^{2}+\alpha_{t}(F(\ov{w}_{t})-F(\vw^{\ast})) & \leq \mathbb{E}\|\ov{w}_{t}-\vw^{\ast}\|^{2}+\alpha_{t}^{2}\frac{1}{N}\nu_{\max}\sigma^{2}+6\alpha_{t}^{3}E^{2}LG^{2}
	\end{align*}
	\label{lem:cvxoner}
\end{lemma}
\begin{proof}
    The first part of the proof follows directly from \eq{\ref{eq:common one step}} in the proof of Lemma \ref{lem:scvxoner}. Setting $\mu=0$ in \eq{\ref{eq:common one step}} (since we are in the convex setting instead of strongly convex), we obtain 
    \begin{align*}
	\|\ov{w}_{t+1}-\vw^{\ast}\|^{2} & \leq\|\ov{w}_{t}-\vw^{\ast}\|^{2}+\alpha_{t}L\sum_{k=1}^{N}p_{k}\|\ov{w}_{t}-\vw_{t}^{k}\|^{2} \\ 
	& +2\alpha_{t}\sum_{k=1}^{N}p_{k}\left[F_{k}(\vw^{\ast})-F_{k}(\ov{w}_{t})\right]+\alpha_{t}^{2}\|\ov{g}_{t}\|^{2}+\alpha_{t}^{2}\sum_{k=1}^{N}p_{k}^{2}\sigma_{k}^{2}
	\end{align*}
	The difference of this bound with that in the strongly convex case
	is that we no longer have a contraction factor of $1-\mu\alpha_t$ in front of $\|\ov{w}_{t}-\vw^{\ast}\|^{2}$.
	In the strongly convex case, we were able to cancel $\alpha_{t}^{2}\|\ov{g}_{t}\|^{2}$
	with $2\alpha_{t}\sum_{k=1}^{N}p_{k}\left[F_{k}(\vw^{\ast})-F_{k}(\ov{w}_{t})\right]$
	and obtain only lower order terms. In the convex case, we use a different
	strategy and preserve $\sum_{k=1}^{N}p_{k}\left[F_{k}(\vw^{\ast})-F_{k}(\ov{w}_{t})\right]$
	in order to obtain the desired optimality gap. 
	
	More precisely, we have
	\begin{align*}
	\|\ov{g}_{t}\|^{2} & =\|\sum_{k}p_{k}\nabla F_{k}(\vw_{t}^{k})\|^{2}\\
	& =\|\sum_{k}p_{k}\nabla F_{k}(\vw_{t}^{k})-\sum_{k}p_{k}\nabla F_{k}(\ov{w}_{t})+\sum_{k}p_{k}\nabla F_{k}(\ov{w}_{t})\|^{2}\\
	& \leq2\|\sum_{k}p_{k}\nabla F_{k}(\vw_{t}^{k})-\sum_{k}p_{k}\nabla F_{k}(\ov{w}_{t})\|^{2}+2\|\sum_{k}p_{k}\nabla F_{k}(\ov{w}_{t})\|^{2}\\
	& \leq2L^{2}\sum_{k}p_{k}\|\vw_{t}^{k}-\ov{w}_{t}\|^{2}+2\|\sum_{k}p_{k}\nabla F_{k}(\ov{w}_{t})\|^{2}\\
	& =2L^{2}\sum_{k}p_{k}\|\vw_{t}^{k}-\ov{w}_{t}\|^{2}+2\|\nabla F(\ov{w}_{t})\|^{2}
	\end{align*}
	using $\nabla F(\vw^{\ast})=0$. Now using the $L$ smoothness of $F$,
	we have $\|\nabla F(\ov{w}_{t})\|^{2}\leq2L(F(\ov{w}_{t})-F(\vw^{\ast}))$,
	so that 
	\begin{align*}
	& \|\ov{w}_{t+1}-\vw^{\ast}\|^{2}\\
	\leq & \|\ov{w}_{t}-\vw^{\ast}\|^{2}+\alpha_{t}L\sum_{k=1}^{N}p_{k}\|\ov{w}_{t}-\vw_{t}^{k}\|^{2}+2\alpha_{t}\sum_{k=1}^{N}p_{k}\left[F_{k}(\vw^{\ast})-F_{k}(\ov{w}_{t})\right]\\
	& +2\alpha_{t}^{2}L^{2}\sum_{k}p_{k}\|\vw_{t}^{k}-\ov{w}_{t}\|^{2}+4\alpha_{t}^{2}L(F(\ov{w}_{t})-F(\vw^{\ast}))+\alpha_{t}^{2}\sum_{k=1}^{N}p_{k}^{2}\sigma_{k}^{2}\\
	= & \|\ov{w}_{t}-\vw^{\ast}\|^{2}+(2\alpha_{t}^{2}L^{2}+\alpha_{t}L)\sum_{k=1}^{N}p_{k}\|\ov{w}_{t}-\vw_{t}^{k}\|^{2}+\alpha_{t}\sum_{k=1}^{N}p_{k}\left[F_{k}(\vw^{\ast})-F_{k}(\ov{w}_{t})\right] \\ 
	 & +\alpha_{t}^{2}\sum_{k=1}^{N}p_{k}^{2}\sigma_{k}^{2}
	 +\alpha_{t}(1-4\alpha_{t}L)(F(\vw^{\ast})-F(\ov{w}_{t}))
	\end{align*}
	Since $F(\vw^{\ast})\leq F(\ov{w}_{t})$, as long as $4\alpha_{t}L\leq1$,
	we can ignore the last term, and rearrange the inequality to obtain
	\begin{align*}
	& \|\ov{w}_{t+1}-\vw^{\ast}\|^{2}+\alpha_{t}(F(\ov{w}_{t})-F(\vw^{\ast}))\\
 \leq & \|\ov{w}_{t}-\vw^{\ast}\|^{2}+(2\alpha_{t}^{2}L^{2}+\alpha_{t}L)\sum_{k=1}^{N}p_{k}\|\ov{w}_{t}-\vw_{t}^{k}\|^{2}+\alpha_{t}^{2}\sum_{k=1}^{N}p_{k}^{2}\sigma_{k}^{2}\\
	\leq & \|\ov{w}_{t}-\vw^{\ast}\|^{2}+\frac{3}{2}\alpha_{t}L\sum_{k=1}^{N}p_{k}\|\ov{w}_{t}-\vw_{t}^{k}\|^{2}+\alpha_{t}^{2}\sum_{k=1}^{N}p_{k}^{2}\sigma_{k}^{2}
	\end{align*}
	
	The same argument as before yields $\mathbb{E}\sum_{k=1}^{N}p_{k}\|\ov{w}_{t}-\vw_{t}^{k}\|^{2}\leq4E^{2}\alpha_{t}^{2}G^{2}$
	which gives 
	\begin{align*}
	\|\ov{w}_{t+1}-\vw^{\ast}\|^{2}+\alpha_{t}(F(\ov{w}_{t})-F(\vw^{\ast})) & \leq\|\ov{w}_{t}-\vw^{\ast}\|^{2}+\alpha_{t}^{2}\sum_{k=1}^{N}p_{k}^{2}\sigma_{k}^{2}+6\alpha_{t}^{3}E^{2}LG^{2}\\
	& \leq\|\ov{w}_{t}-\vw^{\ast}\|^{2}+\alpha_{t}^{2}\frac{1}{N}\nu_{\max}\sigma^{2}+6\alpha_{t}^{3}E^{2}LG^{2}
	\end{align*}
\end{proof}
With the one step progress result, we can now prove the convergence result in the convex setting, which we restate below.
\begin{thm}
	Under assumptions~\ref{ass:lsmooth},\ref{ass:boundedvariance},\ref{ass:subgrad2} and constant learning
	rate $\alpha_{t}=\mathcal{O}(\sqrt{\frac{N}{T}})$, FedAvg satisfies
	\begin{align*}
	\min_{t\leq T}F(\overline{\mathbf{w}}_{t})-F(\mathbf{w}^{\ast}) & =\mathcal{O}\left(\frac{\nu_{\max}\sigma^{2}}{\sqrt{NT}}+\frac{NE^{2}LG^{2}}{T}\right)
	\end{align*}
	with full participation, and with partial device participation with $K$ sampled devices at
	each communication round and learning rate $\alpha_{t}=\mathcal{O}(\sqrt{\frac{K}{T}})$,
	\begin{align*}
	\min_{t\leq T}F(\overline{\mathbf{w}}_{t})-F(\mathbf{w}^{\ast}) & =\mathcal{O}\left(\frac{\nu_{\max}\sigma^{2}}{\sqrt{KT}}+\frac{EG^{2}}{\sqrt{KT}}+\frac{KE^{2}LG^{2}}{T}\right)
	\end{align*}
\end{thm}

\begin{proof}
	We first prove the bound for full participation. Applying Lemma~\ref{lem:cvxoner}, we have
	\begin{align*}
	\|\ov{w}_{t+1}-\vw^{\ast}\|^{2}+\alpha_{t}(F(\ov{w}_{t})-F(\vw^{\ast})) & \leq\|\ov{w}_{t}-\vw^{\ast}\|^{2}+\alpha_{t}^{2}\frac{1}{N}\nu_{\max}\sigma^{2}+6\alpha_{t}^{3}E^{2}LG^{2}
	\end{align*}
	Summing the inequalities from $t=0$ to $t=T$, we obtain 
	\begin{align*}
	\sum_{t=0}^{T}\alpha_{t}(F(\ov{w}_{t})-F(\vw^{\ast})) & \leq\|\vw_{0}-\vw^{\ast}\|^{2}+\sum_{t=0}^{T}\alpha_{t}^{2}\cdot\frac{1}{N}\nu_{\max}\sigma^{2}+\sum_{t=0}^{T}\alpha_{t}^{3}\cdot6E^{2}LG^{2}
	\end{align*}
	so that
	\begin{align*}
	\min_{t\leq T}F(\ov{w}_{t})-F(\vw^{\ast}) & \leq\frac{1}{\sum_{t=0}^{T}\alpha_{t}}\left(\|\vw_{0}-\vw^{\ast}\|^{2}+\sum_{t=0}^{T}\alpha_{t}^{2}\cdot\frac{1}{N}\nu_{\max}\sigma^{2}+\sum_{t=0}^{T}\alpha_{t}^{3}\cdot6E^{2}LG^{2}\right)
	\end{align*}
	
	By setting the constant learning rate $\alpha_{t}\equiv\sqrt{\frac{N}{T}}$,
	we have 
\begin{align*}
&\min_{t\leq T}F(\ov{w}_{t})-F(\vw^{\ast}) \\
 & \leq\frac{1}{\sqrt{NT}}\cdot\|\vw_{0}-\vw^{\ast}\|^{2}+\frac{1}{\sqrt{NT}}T\cdot\frac{N}{T}\cdot\frac{1}{N}\nu_{\max}\sigma^{2}+\frac{1}{\sqrt{NT}}T(\sqrt{\frac{N}{T}})^{3}6E^{2}LG^{2}\\
& \leq\frac{1}{\sqrt{NT}}\cdot\|\vw_{0}-\vw^{\ast}\|^{2}+\frac{1}{\sqrt{NT}}T\cdot\frac{N}{T}\cdot\frac{1}{N}\nu_{\max}\sigma^{2}+\frac{N}{T}6E^{2}LG^{2}\\
& =(\|\vw_{0}-\vw^{\ast}\|^{2}+\nu_{\max}\sigma^{2})\frac{1}{\sqrt{NT}}+\frac{N}{T}6E^{2}LG^{2}\\
& =\mathcal{O}(\frac{\nu_{\max}\sigma^{2}}{\sqrt{NT}}+\frac{NE^{2}LG^{2}}{T})
\end{align*}
	
	\rebuttal{
	For partial participation, the one step progress bound in Lemma \ref{lem:cvxoner} is updated in a similar manner as the strongly convex case to incorporate the sampling variance. More precisely, with partial participation, we consider two consecutive communication rounds $t+1$ and $t+1+E$. Using $\mathbb{E}\ov{w}_{t+E+1}=\ov{v}_{t+E+1}$,
we can write 
\begin{align*}
\mathbb{E}\|{\ov{w}}_{t+E+1}-\vw^{\ast}\|^{2} & =\mathbb{E}\|\ov{w}_{t+E+1}-\ov{v}_{t+E+1}+\ov{v}_{t+E+1}-\vw^{\ast}\|^{2}\\
 & =\mathbb{E}\|\ov{v}_{t+E+1}-\vw^{\ast}\|^{2}+\mathbb{E}\|\ov{w}_{t+E+1}-\ov{v}_{t+E+1}\|^{2}
\end{align*}
	
	Again, the first term can be bounded by applying the one-step bound $E$ times and summing it up, giving
	\begin{align*}
	& \mathbb{E}\|\ov{v}_{t+E+1}-\vw^{\ast}\|^{2}+\sum_{t+1}^{t+E}\alpha_{t}(F(\ov{v}_{t})-F(\vw^{\ast}))\\  \leq & \mathbb{E}\|\ov{w}_{t+1}-\vw^{\ast}\|^{2}+\sum_{t+1}^{t+E}\alpha_{t}^{2}\frac{1}{N}\nu_{\max}\sigma^{2}+6\sum_{t+1}^{t+E}\alpha_{t}^{3}E^{2}LG^{2}
	\end{align*}
	
	The bound for $\mathbb{E}\|\ov{w}_{t+E+1}-\ov{v}_{t+E+1}\|^{2}$ for the two sampling schemes we consider is again provided in \eq{\ref{eq:partialsample}}, giving the following $E$-step progress bound
	\begin{align*}
	\mathbb{E}\|{\ov{w}}_{t+E+1}-\vw^{\ast}\|^{2} + \sum_{t+1}^{t+E}\alpha_{t}(F(\ov{v}_{t})-F(\vw^{\ast})) &
	   \leq\mathbb{E}\|\ov{w}_{t+1}-\vw^{\ast}\|^{2}+\sum_{t+1}^{t+E}\alpha_{t}^{2}\frac{1}{N}\nu_{\max}\sigma^{2}\\
&+6\sum_{t+1}^{t+E}\alpha_{t}^{3}E^{2}LG^{2} +\frac{4}{K}\alpha_{t+1}^{2}E^{2}G^{2}
	\end{align*}

	Summing up the above bounds $T/E$ times, 
	\begin{align*}
	& \min_{t\leq T}F(\ov{w}_{t})-F(\vw^{\ast})\\ \leq&\frac{1}{\sum_{t=0}^{T}\alpha_{t}}\left(\|\vw_{0}-\vw^{\ast}\|^{2}+\sum_{t=0}^{T}\alpha_{t}^{2}\frac{1}{N}\nu_{\max}\sigma^{2}+ \sum_{t=E,2E,\dots} \frac{4}{K}\alpha_{t+1}^{2}E^{2}G^{2} +\sum_{t=0}^{T}\alpha_{t}^{3}\cdot6E^{2}LG^{2}\right),
	\end{align*}
	so that with $\alpha_{t}=\sqrt{\frac{K}{T}}$, we have 
	\begin{align*}
	\min_{t\leq T}F(\ov{w}_{t})-F(\vw^{\ast}) & =\mathcal{O}(\frac{\nu_{\max}\sigma^{2}}{\sqrt{KT}}+\frac{EG^{2}}{\sqrt{KT}}+\frac{KE^{2}LG^{2}}{T}).
	\end{align*}
	}
\end{proof}

\section{Proof of Convergence Results for Nesterov Accelerated FedAvg}
\label{sec:app:Nesterovfedavg}
\subsection{Strongly Convex Smooth Objectives}
\label{sec:convexsmoothsgd}
Recall that the Nesterov accelerated FedAvg follows the updates 
\begin{align*}
\mathbf{v}_{t+1}^{k} & =\mathbf{w}_{t}^{k}-\alpha_{t}\mathbf{g}_{t,k}, \hspace{1em}
\mathbf{w}_{t+1}^{k} =\begin{cases}
\mathbf{v}_{t+1}^{k}+\beta_{t}(\mathbf{v}_{t+1}^{k}-\mathbf{v}_{t}^{k}) & \text{if }t+1\notin\mathcal{I}_{E},\\
\sum_{k \in \cS_{t+1}}q_k\left[\mathbf{v}_{t+1}^{k}+\beta_{t}(\mathbf{v}_{t+1}^{k}-\mathbf{v}_{t}^{k})\right] & \text{if }t+1\in\mathcal{I}_{E}.
\end{cases}
\end{align*}

The proofs of convergence results for Nesterov Accelerated FedAvg consists of components that are direct analogues of the FedAvg case. We first state these analogue results before proving the main theorem. Like before, the proofs of the lemmas are deferred to after the main proof. 

\begin{lemma}[\textbf{One step progress, Nesterov}] Let $\overline{\mathbf{v}}_{t}=\sum_{k=1}^{N}p_{k}\mathbf{v}_{t}^{k}$ in Nesterov accelerated FedAvg,
and suppose our functions satisfy Assumptions~\ref{ass:lsmooth},\ref{ass:stroncvx},\ref{ass:boundedvariance},\ref{ass:subgrad2}, and set step sizes $\alpha_{t}=\frac{6}{\mu}\frac{1}{t+\gamma}$,  $\beta_{t-1}=\frac{3}{14(t+\gamma)(1-\frac{6}{t+\gamma})\max\{\mu,1\}}$
	with $\gamma=\max\{32\kappa,E\}$ and $\kappa=\frac{L}{\mu}$, the updates of Nesterov accelerated FedAvg satisfy
\begin{align*}
\mathbb{E}\|\ov{v}_{t+1}-\vw^{\ast}\|^{2} & \leq\mathbb{E}(1-\mu\alpha_{t})(1+\beta_{t-1})^{2}\|\ov{v}_{t}-\vw^{\ast}\|^{2}+20E^{2}L\alpha_{t}^{3}G^{2}\\
&+(1-\alpha_{t}\mu)\beta_{t-1}^{2}\|(\ov{v}_{t-1}-\vw^{\ast})\|^{2}
+\alpha_{t}^{2}\frac{1}{N}\nu_{\max}\sigma^{2}\\
&+2\beta_{t-1}(1+\beta_{t-1})(1-\alpha_{t}\mu)\|\ov{v}_{t}-\vw^{\ast}\|\cdot\|\ov{v}_{t-1}-\vw^{\ast}\|.
\end{align*}
\label{lem:nest-scvxoner}
\end{lemma}
The one step progress result makes use of the same bound on the gradient variance in~Lemma~\ref{lem:bgv}, as well as a divergence bound analogous to Lemma~\ref{lem:bdw}, which we state below.
\begin{lemma}[\textbf{Bounding the divergence of $\vw_t^k$, Nesterov}]
Given Assumption~\ref{ass:subgrad2}, and assume that $\alpha_t$ is non-increasing, $\alpha_t \leq 2\alpha_{t+E}$, and $2\beta_{t-1}^{2}+2\alpha_{t}^{2}\leq1/2$ for all $t\geq 0$, $\overline{\mathbf{w}}_{t}=\sum_{k=1}^{N}p_{k}\mathbf{w}_{t}^{k}$ in Nesterov accelerated FedAvg satisfies
	\begin{align*}
	\mathbb{E}\left[\sum_{k=1}^{N}p_{k}\|\ov{w}_{t}-\vw_{t}^{k}\|^{2} \right]\leq16(E-1)^{2}\alpha_{t}^{2}G^{2}.
	\end{align*}
\label{lem:nest-bdw}
\end{lemma}

\begin{thm}
	Let $\overline{\mathbf{v}}_{T}=\sum_{k=1}^{N}p_{k}\mathbf{v}_{T}^{k}$ in Nesterov accelerated FedAvg
	and set learning rates $\alpha_{t}=\frac{6}{\mu}\frac{1}{t+\gamma}$,  $\beta_{t-1}=\frac{3}{14(t+\gamma)(1-\frac{6}{t+\gamma})\max\{\mu,1\}}$. Then under Assumptions~\ref{ass:lsmooth},\ref{ass:stroncvx},\ref{ass:boundedvariance},\ref{ass:subgrad2} with full device participation, 
	\begin{align*}
	\mathbb{E}F(\overline{\mathbf{v}}_{T})-F^{\ast}=\mathcal{O}\left(\frac{\kappa\nu_{\max}\sigma^{2}/\mu}{NT}+\frac{\kappa^{2}E^{2}G^{2}/\mu}{T^{2}}\right),
	\end{align*}
	and with partial device participation with $K$ sampled devices at
	each communication round, 
	\begin{align*}
	\mathbb{E}F(\overline{\mathbf{v}}_{T})-F^{\ast}=\mathcal{O}\left(\frac{\kappa\nu_{\max}\sigma^{2}/\mu}{NT}+\frac{\kappa E^{2}G^{2}/\mu}{KT}+\frac{\kappa^{2}E^{2}G^{2}/\mu}{T^{2}}\right).
	\end{align*}
\end{thm}
%
%
\textbf{}%

\begin{proof}
We first prove the result for full participation. Applying the one step progress bound in Lemma~\ref{lem:nest-scvxoner}, we have
\begin{align*}
\mathbb{E}\|\ov{v}_{t+1}-\vw^{\ast}\|^{2} & \leq\mathbb{E}(1-\mu\alpha_{t})(1+\beta_{t-1})^{2}\|\ov{v}_{t}-\vw^{\ast}\|^{2}+20E^{2}L\alpha_{t}^{3}G^{2}\\
&+(1-\alpha_{t}\mu)\beta_{t-1}^{2}\|(\ov{v}_{t-1}-\vw^{\ast})\|^{2} +\alpha_{t}^{2}\frac{1}{N}\nu_{\max}\sigma^{2}\\
&+2\beta_{t-1}(1+\beta_{t-1})(1-\alpha_{t}\mu)\|\ov{v}_{t}-\vw^{\ast}\|\cdot\|\ov{v}_{t-1}-\vw^{\ast}\|.
\end{align*}
Recall that we require $\alpha_{t_{0}}\leq2\alpha_{t}$ for any
$t-t_{0}\leq E-1$, $L\alpha_{t}\leq\frac{1}{5}$, and $2\beta_{t-1}^{2}+2\alpha_{t}^{2}\leq1/2$ in order for Lemmas~\ref{lem:nest-bdw} and~\ref{lem:nest-scvxoner} to hold,
which we can check by definition of $\alpha_{t}$ and
$\beta_{t}$.\\
 We show next that $\mathbb{E}\|\ov{v}_{t}-\vw^{\ast}\|^{2}=\mathcal{O}(\frac{\nu_{\max}\sigma^{2}/\mu}{tN}+\frac{E^{2}LG^{2}/\mu^2}{t^{2}})$ 
by induction. Assume that we have shown 
\begin{align*}
\mathbb{E}\|\ov{v}_{t}-\vw^{\ast}\|^{2} & \leq b(C\alpha_{t}^{2}+D\alpha_{t})
\end{align*}
for all iterations until $t$, where $C=20E^{2}LG^{2}$, $D=\frac{1}{N}\nu_{\max}\sigma^{2}$,
and $b$ is some constant to be chosen later. For step sizes recall that we choose $\alpha_{t}=\frac{6}{\mu}\frac{1}{t+\gamma}$
and $\beta_{t-1}=\frac{3}{14(t+\gamma)(1-\frac{6}{t+\gamma})\max\{\mu,1\}}$
where $\gamma=\max\{32\kappa,E\}$, so that $\beta_{t-1}\leq\alpha_{t}$
and 
\begin{align*}
(1-\mu\alpha_{t})(1+14\beta_{t-1}) & \leq(1-\frac{6}{t+\gamma})(1+\frac{3}{(t+\gamma)(1-\frac{6}{t+\gamma})})\\
& =1-\frac{6}{t+\gamma}+\frac{3}{t+\gamma}=1-\frac{3}{t+\gamma}=1-\frac{\mu\alpha_{t}}{2}
\end{align*}

Moreover, $\mathbb{E}\|\ov{v}_{t-1}-\vw^{\ast}\|^{2}\leq b(C\alpha_{t-1}^{2}+D\alpha_{t-1})\leq 4b(C\alpha_{t}^{2}+D\alpha_{t})$
with the chosen step sizes.
Therefore the bound for $\mathbb{E}\|\ov{v}_{t+1}-\vw^{\ast}\|^{2}$
can be further simplified with 
\begin{align*}
2\beta_{t-1}(1+\beta_{t-1})(1-\alpha_{t}\mu)\mathbb{E}\|\ov{v}_{t}-\vw^{\ast}\|\|\ov{v}_{t-1}-\vw^{\ast}\| & \leq4\beta_{t-1}(1+\beta_{t-1})(1-\alpha_{t}\mu) b(C\alpha_{t}^{2}+D\alpha_{t})
\end{align*}
and 
\begin{align*}
(1-\alpha_{t}\mu)\beta_{t-1}^{2}\mathbb{E}\|(\ov{v}_{t-1}-\vw^{\ast})\|^{2} & \leq4(1-\alpha_{t}\mu)\beta_{t-1}^{2}\cdot b(C\alpha_{t}^{2}+D\alpha_{t})
\end{align*}
so that
\begin{align*}
\mathbb{E}\|\ov{v}_{t+1}-\vw^{\ast}\|^{2} & \leq(1-\mu\alpha_{t})((1+\beta_{t-1})^{2}+4\beta_{t-1}(1+\beta_{t-1})+4\beta_{t-1}^{2})\cdot b(C\alpha_{t}^{2}+D\alpha_{t})\\
& +20E^{2}L\alpha_{t}^{3}G^{2}+\alpha_{t}^{2}\frac{1}{N}\nu_{\max}\sigma^{2}\\
& \leq\mathbb{E}(1-\mu\alpha_{t})(1+14\beta_{t-1})\cdot b(C\alpha_{t}^{2}+D\alpha_{t})+20E^{2}L\alpha_{t}^{3}G^{2}+\alpha_{t}^{2}\frac{1}{N}\nu_{\max}\sigma^{2}\\
& \leq b(1-\frac{\mu\alpha_{t}}{2})(C\alpha_{t}^{2}+D\alpha_{t})+C\alpha_{t}^{3}+D\alpha_{t}^{2}\\
& =(b(1-\frac{\mu\alpha_{t}}{2})+\alpha_{t})\alpha_{t}^{2}C+(b(1-\frac{\mu\alpha_{t}}{2})+\alpha_{t})\alpha_{t}D
\end{align*}
and so it remains to choose $b$ such that 
\begin{align*}
(b(1-\frac{\mu\alpha_{t}}{2})+\alpha_{t})\alpha_{t} & \leq b\alpha_{t+1}\\
(b(1-\frac{\mu\alpha_{t}}{2})+\alpha_{t})\alpha_{t}^{2} & \leq b\alpha_{t+1}^{2}
\end{align*}
from which we can conclude $\mathbb{E}\|\ov{v}_{t+1}-\vw^{\ast}\|^{2}\leq\alpha_{t+1}^{2}C+\alpha_{t+1}D$.

With $b=\frac{6}{\mu}$, we have
\begin{align*}
(b(1-\frac{\mu\alpha_{t}}{2})+\alpha_{t})\alpha_{t} & =(b(1-(\frac{3}{t+\gamma})+\frac{6}{\mu(t+\gamma)})\frac{6}{\mu(t+\gamma)}\\
& =(b\frac{t+\gamma-3}{t+\gamma}+\frac{6}{\mu(t+\gamma)})\frac{6}{\mu(t+\gamma)}\\
& \leq b(\frac{t+\gamma-1}{t+\gamma})\frac{6}{\mu(t+\gamma)}\\
& \leq b\frac{6}{\mu(t+\gamma+1)}=b\alpha_{t+1}
\end{align*}
where we have used $\frac{t+\gamma-1}{(t+\gamma)^{2}}\leq\frac{1}{t+\gamma+1}$.

Similarly 
\begin{align*}
(b(1-\frac{\mu\alpha_{t}}{2})+\alpha_{t})\alpha_{t}^{2} & =(b(1-(\frac{3}{t+\gamma})+\frac{6}{\mu(t+\gamma)})(\frac{6}{\mu(t+\gamma)})^{2}\\
& =(b\frac{t+\gamma-3}{t+\gamma}+\frac{6}{\mu(t+\gamma)})(\frac{6}{\mu(t+\gamma)})^{2}\\
& =b(\frac{t+\gamma-2}{t+\gamma})(\frac{6}{\mu(t+\gamma)})^{2}\\
& \leq b\frac{36}{\mu^{2}(t+\gamma+1)^{2}}=b\alpha_{t+1}^{2}
\end{align*}
where we have used $\frac{t+\gamma-2}{(t+\gamma)^{3}}\leq\frac{1}{(t+\gamma+1)^{2}}$.

Finally, to ensure $\|\vv_{0}-\vw^{\ast}\|^{2}\leq b(C\alpha_{0}^{2}+D\alpha_{0})$,
we can rescale $b$ by $c\|\vv_{0}-\vw^{\ast}\|^{2}$ for some $c.$ It
follows that $\mathbb{E}\|\ov{v}_{t}-\vw^{\ast}\|^{2}\leq b(C\alpha_{t}^{2}+D\alpha_{t})$
for all $t\geq0$. Using the $L$-smooothness of $F$,
\begin{align*}
\mathbb{E}(F(\ov{v}_{T}))-F^{\ast} & =\mathbb{E}(F(\ov{v}_{T})-F(\vw^{\ast}))\\
& \leq\frac{L}{2}\mathbb{E}\|\ov{v}_{T}-\vw^{\ast}\|^{2}\leq\frac{L}{2}c\|\vv_{0}-\vw^{\ast}\|^{2}\frac{6}{\mu}(D\alpha_{T}+C\alpha_{T}^{2})\\
& =3c\|\vv_{0}-\vw^{\ast}\|^{2}\kappa(D\alpha_{T}+C\alpha_{T}^{2})\\
& \leq3c\|\vv_{0}-\vw^{\ast}\|^{2}\kappa\left[\frac{6}{\mu(T+\gamma)}\cdot\frac{1}{N}\nu_{\max}\sigma^{2}+20E^{2}LG^{2}\cdot(\frac{6}{\mu(T+\gamma)})^{2}\right]\\
& =\mathcal{O}(\frac{\kappa}{\mu}\frac{1}{N}\nu_{\max}\sigma^{2}\cdot\frac{1}{T}+\frac{\kappa^{2}}{\mu}E^{2}G^{2}\cdot\frac{1}{T^{2}})
\end{align*}

With partial participation, the same argument as in the FedAvg case in Theorem \ref{thm:SGD_scvx} by adding a term for sampling error every $E$ steps yields
\begin{align*}
\mathbb{E}F(\ov{w}_{T})-F^{\ast}=\mathcal{O}(\frac{\kappa\nu_{\max}\sigma^{2}/\mu}{NT}+\frac{\kappa EG^{2}/\mu}{KT}+\frac{\kappa^{2}E^{2}G^{2}/\mu}{T^{2}})
\end{align*}
\end{proof}

\subsubsection{Deferred Proofs of Key Lemmas}

\begin{proof}[Proof of lemma~\ref{lem:nest-bdw}]
	The proof of bound for $\mathbb{E}\sum_{k=1}^{N}p_{k}\|\ov{w}_{t}-\vw_{t}^{k}\|^{2}$ in the Nesterov accelerated FedAvg follows a similar logic as in Lemma~\ref{lem:bdw}, but requires extra reasoning.  
Since communication is done every $E$ steps, for any $t\geq0$, we
can find a $t_{0}\leq t$ such that $t-t_{0}\leq E-1$ and $w_{t_{0}}^{k}=\ov{w}_{t_{0}}$for
all $k$. Moreover, using $\alpha_{t}$ is non-increasing, $\alpha_{t_{0}}\leq2\alpha{}_{t}$,
and $\beta_{t}\leq\alpha_{t}$ for any $t-t_{0}\leq E-1$, we have
\begin{align*}
\mathbb{E}\sum_{k=1}^{N}p_{k}\|\ov{w}_{t}-\vw_{t}^{k}\|^{2} & =\mathbb{E}\sum_{k=1}^{N}p_{k}\|\vw_{t}^{k}-\ov{w}_{t_{0}}-(\ov{w}_{t}-\ov{w}_{t_{0}})\|^{2}\\
& \leq\mathbb{E}\sum_{k=1}^{N}p_{k}\|\vw_{t}^{k}-\ov{w}_{t_{0}}\|^{2}\\
& =\mathbb{E}\sum_{k=1}^{N}p_{k}\|\vw_{t}^{k}-\vw_{t_{0}}^{k}\|^{2}\\
& =\mathbb{E}\sum_{k=1}^{N}p_{k}\|\sum_{i=t_{0}}^{t-1}\beta_{i}(\vv_{i+1}^{k}-\vv_{i}^{k})-\sum_{i=t_{0}}^{t-1}\alpha_{i}\vg_{i,k}\|^{2}\\
& \leq2\sum_{k=1}^{N}p_{k}\mathbb{E}\sum_{i=t_{0}}^{t-1}(E-1)\alpha_{i}^{2}\|\vg_{i,k}\|^{2}+2\sum_{k=1}^{N}p_{k}\mathbb{E}\sum_{i=t_{0}}^{t-1}(E-1)\beta_{i}^{2}\|(\vv_{i+1}^{k}-\vv_{i}^{k})\|^{2}\\
& \leq2\sum_{k=1}^{N}p_{k}\mathbb{E}\sum_{i=t_{0}}^{t-1}(E-1)\alpha_{i}^{2}(\|\vg_{i,k}\|^{2}+\|(\vv_{i+1}^{k}-\vv_{i}^{k})\|^{2})\\
& \leq4\sum_{k=1}^{N}p_{k}\mathbb{E}\sum_{i=t_{0}}^{t-1}(E-1)\alpha_{i}^{2}G^{2}\\
& \leq4(E-1)^{2}\alpha_{t_{0}}^{2}G^{2}\leq16(E-1)^{2}\alpha_{t}^{2}G^{2}
\end{align*}
where we have used $\mathbb{E}\|\vv_{t}^{k}-\vv_{t-1}^{k}\|^{2}\leq G^{2}$.
To see this identity for appropriate $\alpha_{t},\beta_{t}$, note
the recursion 
\begin{align*}
\vv_{t+1}^{k}-\vv_{t}^{k} & =\vw_{t}^{k}-\vw_{t-1}^{k}-(\alpha_{t}\vg_{t,k}-\alpha_{t-1}\vg_{t-1,k})\\
\vw_{t+1}^{k}-\vw_{t}^{k} & =-\alpha_{t}\vg_{t,k}+\beta_{t}(\vv_{t+1}^{k}-\vv_{t}^{k})
\end{align*}
so that 
\begin{align*}
\vv_{t+1}^{k}-\vv_{t}^{k} & =-\alpha_{t-1}\vg_{t-1,k}+\beta_{t-1}(\vv_{t}^{k}-\vv_{t-1}^{k})-(\alpha_{t}\vg_{t,k}-\alpha_{t-1}\vg_{t-1,k})\\
& =\beta_{t-1}(\vv_{t}^{k}-\vv_{t-1}^{k})-\alpha_{t}\vg_{t,k}
\end{align*}
Since the identity $\vv_{t+1}^{k}-\vv_{t}^{k}=\beta_{t-1}(\vv_{t}^{k}-\vv_{t-1}^{k})-\alpha_{t}\vg_{t,k}$
implies 
\begin{align*}
\mathbb{E}\|\vv_{t+1}^{k}-\vv_{t}^{k}\|^{2} & \leq2\beta_{t-1}^{2}\mathbb{E}\|\vv_{t}^{k}-\vv_{t-1}^{k}\|^{2}+2\alpha_{t}^{2}G^{2}
\end{align*}
as long as $\alpha_{t},\beta_{t-1}$ satisfy $2\beta_{t-1}^{2}+2\alpha_{t}^{2}\leq1/2$,
we can guarantee that $\mathbb{E}\|\vv_{t}^{k}-\vv_{t-1}^{k}\|^{2}\leq G^{2}$
for all $k$ by induction. This together with Jensen's inequality
also gives $\mathbb{E}\|\ov{v}_{t}-\ov{v}_{t-1}\|^{2}\leq G^{2}$
for all $t$. 
\end{proof}

Now we are ready to prove the one step progress result for Nesterov accelerated FedAvg. The first part of the proof is identical to that of the FedAvg case, while the main recursion takes a different form.
\begin{proof}[Proof of lemma~\ref{lem:nest-scvxoner}]
We again have 
\begin{align*}
\|\ov{v}_{t+1}-\vw^{\ast}\|^{2} & =\|(\ov{w}_{t}-\alpha_{t}\vg_{t})-\vw^{\ast}\|^{2}
\end{align*}
and using exactly the same derivation as the FedAvg case, we can obtain the following bound (same as \eq{\ref{eq:common recursion}} in the proof of Lemma~\ref{lem:scvxoner}):
\begin{align*}
	\mathbb{E}\|\ov{w}_{t+1}-\vw^{\ast}\|^{2} & \leq\mathbb{E}(1-\mu\alpha_{t})\|\ov{w}_{t}-\vw^{\ast}\|^{2}+\alpha_{t}L\sum_{k=1}^{N}p_{k}\|\ov{w}_{t}-\vw_{t}^{k}\|^{2}+\alpha_{t}^{2}\sum_{k=1}^{N}p_{k}^{2}\sigma_{k}^{2}\\
	& +\alpha_{t}^{2}L^{2}\sum_{k=1}^{N}p_{k}\|\ov{w}_{t}-\vw_{t}^{k}\|^{2}+\alpha_{t}^{3}L\mathbb{E}\|\vg_{t}\|^{2}-\alpha_{t}^{2}\|\nabla F(\ov{w}_{t})\|^{2}
	\end{align*}

Different from the FedAvg case, we no longer have $\ov{w}_{t}=\ov{v}_{t}$. Instead,
\begin{align*}
&\|\ov{w}_{t}-\vw^{\ast}\|^{2} \\
 =&\|\ov{v}_{t}+\beta_{t-1}(\ov{v}_{t}-\ov{v}_{t-1})-\vw^{\ast}\|^{2}\\
 =&\|(1+\beta_{t-1})(\ov{v}_{t}-\vw^{\ast})-\beta_{t-1}(\ov{v}_{t-1}-\vw^{\ast})\|^{2}\\
 =&(1+\beta_{t-1})^{2}\|\ov{v}_{t}-\vw^{\ast}\|^{2}-2\beta_{t-1}(1+\beta_{t-1})\langle\ov{v}_{t}-\vw^{\ast},\ov{v}_{t-1}-\vw^{\ast}\rangle+\beta_{t-1}^{2}\|(\ov{v}_{t-1}-\vw^{\ast})\|^{2}\\
 \leq &(1+\beta_{t-1})^{2}\|\ov{v}_{t}-\vw^{\ast}\|^{2}+2\beta_{t-1}(1+\beta_{t-1})\|\ov{v}_{t}-\vw^{\ast}\|\cdot\|\ov{v}_{t-1}-\vw^{\ast}\|+\beta_{t-1}^{2}\|(\ov{v}_{t-1}-\vw^{\ast})\|^{2}
\end{align*}
which gives a recursion involving both $\ov{v}_{t}$ and $\ov{v}_{t-1}$:
\begin{align*}
&\|\ov{v}_{t+1}-\vw^{\ast}\|^{2} \\
& \leq(1-\alpha_{t}\mu)(1+\beta_{t-1})^{2}\|\ov{v}_{t}-\vw^{\ast}\|^{2}+2(1-\alpha_{t}\mu)\beta_{t-1}(1+\beta_{t-1})\|\ov{v}_{t}-\vw^{\ast}\|\cdot\|\ov{v}_{t-1}-\vw^{\ast}\|\\
&+\alpha_{t}^{2}\sum_{k=1}^{N}p_{k}^{2}\sigma_{k}^{2} +\beta_{t-1}^{2}(1-\alpha_{t}\mu)\|(\ov{v}_{t-1}-\vw^{\ast})\|^{2}+\alpha_{t}L\sum_{k=1}^{N}p_{k}\|\ov{w}_{t}-\vw_{t}^{k}\|^{2}\\
&+\alpha_{t}^{2}L^{2}\sum_{k}p_{k}\|\ov{w}_{t}-\vw_{t}^{k}\|^{2}+\alpha_{t}^{3}LG^{2}
\end{align*}
and we will using this recursive relation to obtain the desired bound. 

We can check that our choice of $\alpha_t$ and $\beta_t$ satisfy $\alpha_t$ is non-increasing, $\alpha_t \leq 2\alpha_{t+E}$, and $2\beta_{t-1}^{2}+2\alpha_{t}^{2}\leq1/2$ for all $t\geq 0$, so that we can apply the bound from Lemma~\ref{lem:nest-bdw} on $\mathbb{E}\sum_{k=1}^{N}p_{k}\|\ov{w}_{t}-\vw_{t}^{k}\|^{2}$ to conclude that, with $\nu_{\max}:=N\cdot\max_{k}p_{k}$,
\begin{align*}
&\mathbb{E}\|\ov{v}_{t+1}-\vw^{\ast}\|^{2}\\  \leq & \mathbb{E}(1-\mu\alpha_{t})(1+\beta_{t-1})^{2}\|\ov{v}_{t}-\vw^{\ast}\|^{2}+16E^{2}L\alpha_{t}^{3}G^{2}+16E^{2}L^{2}\alpha_{t}^{4}G^{2}+\alpha_{t}^{3}LG^{2}\\
 +&(1-\alpha_{t}\mu)\beta_{t-1}^{2}\|(\ov{v}_{t-1}-\vw^{\ast})\|^{2} \\
 +&\alpha_{t}^{2}\sum_{k=1}^{N}p_{k}^{2}\sigma_{k}^{2}+2\beta_{t-1}(1+\beta_{t-1})(1-\alpha_{t}\mu)\|\ov{v}_{t}-\vw^{\ast}\|\cdot\|\ov{v}_{t-1}-\vw^{\ast}\|\\
 \leq&\mathbb{E}(1-\mu\alpha_{t})(1+\beta_{t-1})^{2}\|\ov{v}_{t}-\vw^{\ast}\|^{2}+20E^{2}L\alpha_{t}^{3}G^{2}+(1-\alpha_{t}\mu)\beta_{t-1}^{2}\|(\ov{v}_{t-1}-\vw^{\ast})\|^{2}\\
+&\alpha_{t}^{2}\frac{1}{N}\nu_{\max}\sigma^{2}+2\beta_{t-1}(1+\beta_{t-1})(1-\alpha_{t}\mu)\|\ov{v}_{t}-\vw^{\ast}\|\cdot\|\ov{v}_{t-1}-\vw^{\ast}\|
\end{align*}
where we have used $\sigma^{2}=\sum_{k}p_{k}\sigma_{k}^{2}$, and by construction our $\alpha_{t}$
satisfies $L\alpha_{t}\leq\frac{1}{5}$.
\end{proof}

\subsection{Convex Smooth Objectives}
\label{sec:nasgdcvxsmth}
In this section we provide proof of the convergence result for Nesterov accelerated FedAvg with convex and smooth objectives. Unlike with the FedAvg algorithm, where convex and strongly convex results share identical components, the proof for the convergence result in the convex setting for Nesterov FedAvg uses a change of variables, although the general ideas are in the same vein: we have a one step progress bound for $	\mathbb{E}\|\ov{w}_{t+1}-\vw^{\ast}\|^{2}+\eta_{t}(F(\ov{w}_{t})-F(\vw^{\ast}))$, which is then used to form a telescoping sum that gives an upper bound on $\min_{t\leq T}F(\ov{w}_{t})-F(\vw^{\ast})$.

\begin{lemma} [\textbf{One step progress, convex case, Nesterov}]
Let $\overline{\mathbf{w}}_{t}=\sum_{k=1}^{N}p_{k}\mathbf{w}_{t}^{k}$ in Nesterov accelerated FedAvg, and define $\eta_{t}=\frac{\alpha_{t}}{1-\beta_{t}}$. Under assumptions~\ref{ass:lsmooth},\ref{ass:boundedvariance},\ref{ass:subgrad2}, the following bound holds for all $t$:
\begin{align*}
	&\mathbb{E}\|\ov{w}_{t+1}-\vw^{\ast}\|^{2}+\eta_{t}(F(\ov{w}_{t})-F(\vw^{\ast})) \\
 \leq & \mathbb{E}\|\ov{w}_{t}-\vw^{\ast}\|^{2}+32LE^{2}\alpha_{t}^{2}\eta_{t}G^{2}+\eta_{t}^{2}\nu_{\max}\frac{1}{N}\sigma^{2}+2\eta_{t}\frac{\beta_{t}^{2}}{1-\beta_{t}}G^{2}.
	\end{align*}
	\label{lem:nest-cvxoner}
\end{lemma}

\begin{thm}
	Set learning rates $\alpha_{t}=\beta_{t}=\mathcal{O}(\sqrt{\frac{N}{T}})$. Then under Assumptions~\ref{ass:lsmooth},\ref{ass:boundedvariance},\ref{ass:subgrad2} Nesterov accelerated FedAvg with
	full device participation has rate
	\begin{align*}
	\min_{t\leq T}F(\overline{\mathbf{w}}_{t})-F^{\ast} & =\mathcal{O}\left(\frac{\nu_{\max}\sigma^{2}}{\sqrt{NT}}+\frac{NE^{2}LG^{2}}{T}\right),
	\end{align*}
	and with partial device participation with $K$ sampled devices at
	each communication round and learning rates $\alpha_{t}=\beta_{t}=\mathcal{O}(\sqrt{\frac{K}{T}})$,
	\begin{align*}
	\min_{t\leq T}F(\overline{\mathbf{w}}_{t})-F^{\ast} & =\mathcal{O}\left(\frac{\nu_{\max}\sigma^{2}}{\sqrt{KT}}+\frac{EG^{2}}{\sqrt{KT}}+\frac{KE^{2}LG^{2}}{T}\right).
	\end{align*}
\end{thm}

\begin{proof}
	
    Applying the bound from Lemma~\ref{lem:nest-cvxoner}, with $\eta_{t}=\frac{\alpha_{t}}{1-\beta_{t}}$ we have
	\begin{align*}
	& \mathbb{E}\|\ov{w}_{t+1}-\vw^{\ast}\|^{2}+\eta_{t}(F(\ov{w}_{t})-F(\vw^{\ast})) \\
  \leq & \mathbb{E}\|\ov{w}_{t}-\vw^{\ast}\|^{2}+32LE^{2}\alpha_{t}^{2}\eta_{t}G^{2}+\eta_{t}^{2}\nu_{\max}\frac{1}{N}\sigma^{2}+2\eta_{t}\frac{\beta_{t}^{2}}{1-\beta_{t}}G^{2}
	\end{align*}
	Summing the inequalities from $t=0$ to $t=T$, we obtain 
	\begin{align*}
	& \sum_{t=0}^{T}\eta_{t}(F(\ov{w}_{t})-F(\vw^{\ast})) \\
 \leq & \|\vw_{0}-\vw^{\ast}\|^{2}+\sum_{t=0}^{T}\eta_{t}^{2}\cdot\frac{1}{N}\nu_{\max}\sigma^{2}+\sum_{t=0}^{T}\eta_{t}\alpha_{t}^{2}\cdot32LE^{2}G^{2}+\sum_{t=0}^{T}2\eta_{t}\frac{\beta_{t}^{2}}{1-\beta_{t}}G^{2}
	\end{align*}
	so that
	\begin{align*}
	&\min_{t\leq T}F(\ov{w}_{t})-F(\vw^{\ast}) \\
  \leq & \frac{1}{\sum_{t=0}^{T}\eta_{t}}\left(\|\vw_{0}-\vw^{\ast}\|^{2}+\sum_{t=0}^{T}\eta_{t}^{2}\cdot\frac{1}{N}\nu_{\max}\sigma^{2}+\sum_{t=0}^{T}\eta_{t}\alpha_{t}^{2}\cdot32LE^{2}G^{2}+\sum_{t=0}^{T}2\eta_{t}\frac{\beta_{t}^{2}}{1-\beta_{t}}G^{2}\right)
	\end{align*}
	
	By setting the constant learning rates $\alpha_{t}\equiv\sqrt{\frac{N}{T}}$
	and $\beta_{t}\equiv c\sqrt{\frac{N}{T}}$ so that $\eta_{t}=\frac{\alpha_{t}}{1-\beta_{t}}=\frac{\sqrt{\frac{N}{T}}}{1-c\sqrt{\frac{N}{T}}}\leq2\sqrt{\frac{N}{T}}$,
	we have 
	\begin{align*}
	&\min_{t\leq T}F(\ov{w}_{t})-F(\vw^{\ast})\leq\frac{1}{2\sqrt{NT}}\cdot\|\vw_{0}-\vw^{\ast}\|^{2}\\
 &+\frac{2}{\sqrt{NT}}T\cdot\frac{N}{T}\cdot\frac{1}{N}\nu_{\max}\sigma^{2}+\frac{1}{\sqrt{NT}}T(\sqrt{\frac{N}{T}})^{3}32LE^{2}G^{2}+\frac{2}{\sqrt{NT}}T(\sqrt{\frac{N}{T}})^{3}G^{2}\\
	& =(\frac{1}{2}\|\vw_{0}-\vw^{\ast}\|^{2}+2\nu_{\max}\sigma^{2})\frac{1}{\sqrt{NT}}+\frac{N}{T}(32LE^{2}G^{2}+2G^{2})\\
	& =O(\frac{\nu_{\max}\sigma^{2}}{\sqrt{NT}}+\frac{NE^{2}LG^{2}}{T})
	\end{align*}
	
	Similarly, for partial participation, using the same argument to get the $E$-step bound in the proof of Theorem \ref{thm:SGD_cvx},
	we have 
	\begin{align*}
	\min_{t\leq T}F(\ov{w}_{t})-F(\vw^{\ast}) & =\mathcal{O}(\frac{\nu_{\max}\sigma^{2}}{\sqrt{KT}}+\frac{EG^{2}}{\sqrt{KT}}+\frac{KE^{2}LG^{2}}{T})
	\end{align*}
\end{proof}

\subsubsection{Deferred Proofs of Key Lemmas}
\begin{proof}[Proof of lemma~\ref{lem:nest-cvxoner}]
    Define $\ov{p}_{t}:=\frac{\beta_{t}}{1-\beta_{t}}\left[\ov{w}_{t}-\ov{w}_{t-1}+\alpha_{t}\vg_{t-1}\right]=\frac{\beta_{t}^{2}}{1-\beta_{t}}(\ov{v}_{t}-\ov{v}_{t-1})$
	for $t\geq1$ and 0 for $t=0$. We can check that 
	\begin{align*}
	\ov{w}_{t+1}+\ov{p}_{t+1} & =\ov{w}_{t}+\ov{p}_{t}-\frac{\alpha_{t}}{1-\beta_{t}}\vg_{t}
	\end{align*}
	Now we define $\ov{z}_{t}:=\ov{w}_{t}+\ov{p}_{t}$
	and $\eta_{t}=\frac{\alpha_{t}}{1-\beta_{t}}$ for all $t$, so that
	we have the recursive relation 
	\begin{align*}
	\ov{z}_{t+1} & =\ov{z}_{t}-\eta_{t}\vg_{t}
	\end{align*}
	Now 
	\begin{align*}
	\|\ov{z}_{t+1}-\vw^{\ast}\|^{2} & =\|(\ov{z}_{t}-\eta_{t}\vg_{t})-\vw^{\ast}\|^{2}\\
	& =\|(\ov{z}_{t}-\eta_{t}\ov{g}_{t}-\vw^{\ast})-\eta_{t}(\vg_{t}-\ov{g}_{t})\|^{2}\\
	& =A_{1}+A_{2}+A_{3}
	\end{align*}
	where 
	\begin{align*}
	A_{1} & =\|\ov{z}_{t}-\vw^{\ast}-\eta_{t}\ov{g}_{t}\|^{2}\\
	A_{2} & =2\eta_{t}\langle\ov{z}_{t}-\vw^{\ast}-\eta_{t}\ov{g}_{t},\ov{g}_{t}-\vg_{t}\rangle\\
	A_{3} & =\eta_{t}^{2}\|\vg_{t}-\ov{g}_{t}\|^{2}
	\end{align*}
	where again $\mathbb{E}A_{2}=0$ and $\mathbb{E}A_{3}\leq\eta_{t}^{2}\sum_{k}p_{k}^{2}\sigma_{k}^{2}$.
	For $A_{1}$ we have 
	\begin{align*}
	\|\ov{z}_{t}-\vw^{\ast}-\eta_{t}\ov{g}_{t}\|^{2} & =\|\ov{z}_{t}-\vw^{\ast}\|^{2}+2\langle\ov{z}_{t}-\vw^{\ast},-\eta_{t}\ov{g}_{t}\rangle+\|\eta_{t}\ov{g}_{t}\|^{2}
	\end{align*}
	Using the convexity and $L$-smoothness of $F_{k}$, 
	\begin{align*}
	& -2\eta_{t}\langle\ov{z}_{t}-\vw^{\ast},\ov{g}_{t}\rangle\\
	& =-2\eta_{t}\sum_{k=1}^{N}p_{k}\langle\ov{z}_{t}-\vw^{\ast},\nabla F_{k}(\vw_{t}^{k})\rangle\\
	& =-2\eta_{t}\sum_{k=1}^{N}p_{k}\langle\ov{z}_{t}-\vw_{t}^{k},\nabla F_{k}(\vw_{t}^{k})\rangle-2\eta_{t}\sum_{k=1}^{N}p_{k}\langle \vw_{t}^{k}-\vw^{\ast},\nabla F_{k}(\vw_{t}^{k})\rangle\\
	& =-2\eta_{t}\sum_{k=1}^{N}p_{k}\langle\ov{z}_{t}-\ov{w}_{t},\nabla F_{k}(\vw_{t}^{k})\rangle-2\eta_{t}\sum_{k=1}^{N}p_{k}\langle\ov{w}_{t}-\vw_{t}^{k},\nabla F_{k}(\vw_{t}^{k})\rangle\\
 &-2\eta_{t}\sum_{k=1}^{N}p_{k}\langle \vw_{t}^{k}-\vw^{\ast},\nabla F_{k}(\vw_{t}^{k})\rangle\\
	& \leq-2\eta_{t}\sum_{k=1}^{N}p_{k}\langle\ov{z}_{t}-\ov{w}_{t},\nabla F_{k}(\vw_{t}^{k})\rangle-2\eta_{t}\sum_{k=1}^{N}p_{k}\langle\ov{w}_{t}-\vw_{t}^{k},\nabla F_{k}(\vw_{t}^{k})\rangle\\
 &+2\eta_{t}\sum_{k=1}^{N}p_{k}(F_{k}(\vw^{\ast})-F_{k}(\vw_{t}^{k}))\\
	& \leq2\eta_{t}\sum_{k=1}^{N}p_{k}\left[F_{k}(\vw_{t}^{k})-F_{k}(\ov{w}_{t})+\frac{L}{2}\|\ov{w}_{t}-\vw_{t}^{k}\|^{2}+F_{k}(\vw^{\ast})-F_{k}(\vw_{t}^{k})\right]\\
	& -2\eta_{t}\sum_{k=1}^{N}p_{k}\langle\ov{z}_{t}-\ov{w}_{t},\nabla F_{k}(\vw_{t}^{k})\rangle\\
	& =\eta_{t}L\sum_{k=1}^{N}p_{k}\|\ov{w}_{t}-\vw_{t}^{k}\|^{2}+2\eta_{t}\sum_{k=1}^{N}p_{k}\left[F_{k}(\vw^{\ast})-F_{k}(\ov{w}_{t})\right]-2\eta_{t}\sum_{k=1}^{N}p_{k}\langle\ov{z}_{t}-\ov{w}_{t},\nabla F_{k}(\vw_{t}^{k})\rangle
	\end{align*}
	which results in 
	\begin{align*}
	\mathbb{E}\|\ov{w}_{t+1}-\vw^{\ast}\|^{2} & \leq\mathbb{E}\|\ov{w}_{t}-\vw^{\ast}\|^{2}+\eta_{t}L\sum_{k=1}^{N}p_{k}\|\ov{w}_{t}-\vw_{t}^{k}\|^{2}+2\eta_{t}\sum_{k=1}^{N}p_{k}\left[F_{k}(\vw^{\ast})-F_{k}(\ov{w}_{t})\right]\\
	& +\eta_{t}^{2}\|\ov{g}_{t}\|^{2}+\eta_{t}^{2}\sum_{k=1}^{N}p_{k}^{2}\sigma_{k}^{2}-2\eta_{t}\sum_{k=1}^{N}p_{k}\langle\ov{z}_{t}-\ov{w}_{t},\nabla F_{k}(\vw_{t}^{k})\rangle
	\end{align*}
	As before, $\|\ov{g}_{t}\|^{2}\leq2L^{2}\sum_{k}p_{k}\|\vw_{t}^{k}-\ov{w}_{t}\|^{2}+4L(F(\ov{w}_{t})-F(\vw^{\ast}))$,
	so that 
	\begin{align*}
\eta_{t}^{2}\|\ov{g}_{t}\|^{2}+\eta_{t}\sum_{k=1}^{N}p_{k}\left[F_{k}(\vw^{\ast})-F_{k}(\ov{w}_{t})\right] & \leq2L^{2}\eta_{t}^{2}\sum_{k}p_{k}\|\vw_{t}^{k}-\ov{w}_{t}\|^{2}\\
&+\eta_{t}(1-4\eta_{t}L)(F(\vw^{\ast})-F(\ov{w}_{t}))\\
	& \leq2L^{2}\eta_{t}^{2}\sum_{k}p_{k}\|\vw_{t}^{k}-\ov{w}_{t}\|^{2}
	\end{align*}
	for $\eta_{t}\le1/4L$. Using $\sum_{k=1}^{N}p_{k}\|\ov{w}_{t}-\vw_{t}^{k}\|^{2}\leq16E^{2}\alpha_{t}^{2}G^{2}$
	and $\sum_{k=1}^{N}p_{k}^{2}\sigma_{k}^{2}\leq\nu_{\max}\frac{1}{N}\sigma^{2}$,
	it follows that 
	\begin{align*}
	\mathbb{E}\|\ov{w}_{t+1}-\vw^{\ast}\|^{2}+\eta_{t}(F(\ov{w}_{t})-F(\vw^{\ast})) & \leq\mathbb{E}\|\ov{w}_{t}-\vw^{\ast}\|^{2}+(\eta_{t}L+2L^{2}\eta_{t}^{2})\sum_{k=1}^{N}p_{k}\|\ov{w}_{t}-\vw_{t}^{k}\|^{2}\\
 &+\eta_{t}^{2}\sum_{k=1}^{N}p_{k}^{2}\sigma_{k}^{2} -2\eta_{t}\sum_{k=1}^{N}p_{k}\langle\ov{z}_{t}-\ov{w}_{t},\nabla F_{k}(\vw_{t}^{k})\rangle\\
	& \leq\mathbb{E}\|\ov{w}_{t}-\vw^{\ast}\|^{2}+32LE^{2}\alpha_{t}^{2}\eta_{t}G^{2}+\eta_{t}^{2}\nu_{\max}\frac{1}{N}\sigma^{2}\\
	& -2\eta_{t}\sum_{k=1}^{N}p_{k}\langle\ov{z}_{t}-\ov{w}_{t},\nabla F_{k}(\vw_{t}^{k})\rangle
	\end{align*}
	if $\eta_{t}\leq\frac{1}{2L}$. It remains to bound $\mathbb{E}\sum_{k=1}^{N}p_{k}\langle\ov{z}_{t}-\ov{w}_{t},\nabla F_{k}(\vw_{t}^{k})\rangle$.
	Recall that $\ov{z}_{t}-\ov{w}_{t}=\frac{\beta_{t}}{1-\beta_{t}}\left[\ov{w}_{t}-\ov{w}_{t-1}+\alpha_{t}\vg_{t-1}\right]=\frac{\beta_{t}^{2}}{1-\beta_{t}}(\ov{v}_{t}-\ov{v}_{t-1})$
	and $\mathbb{E}\|\ov{v}_{t}-\ov{v}_{t-1}\|^{2}\leq G^{2}$,
	$\mathbb{E}\|\nabla F_{k}(\vw_{t}^{k})\|^{2}\leq G^{2}$. 
	
	Cauchy-Schwarz gives
	\begin{align*}
	\mathbb{E}\sum_{k=1}^{N}p_{k}\langle\ov{z}_{t}-\ov{w}_{t},\nabla F_{k}(\vw_{t}^{k})\rangle & \leq\sum_{k=1}^{N}p_{k}\sqrt{\mathbb{E}\|\ov{z}_{t}-\ov{w}_{t}\|^{2}}\cdot\sqrt{\mathbb{E}\|\nabla F_{k}(\vw_{t}^{k})\|^{2}}\\
	& \leq\frac{\beta_{t}^{2}}{1-\beta_{t}}G^{2}
	\end{align*}
	Thus 
	\begin{align*}
	\mathbb{E}\|\ov{w}_{t+1}-\vw^{\ast}\|^{2}+\eta_{t}(F(\ov{w}_{t})-F(\vw^{\ast})) & \leq\mathbb{E}\|\ov{w}_{t}-\vw^{\ast}\|^{2}+32LE^{2}\alpha_{t}^{2}\eta_{t}G^{2}\\
 &+\eta_{t}^{2}\nu_{\max}\frac{1}{N}\sigma^{2}+2\eta_{t}\frac{\beta_{t}^{2}}{1-\beta_{t}}G^{2}
	\end{align*}
\end{proof}


\section{Proof of Geometric Convergence Results for Overparameterized Problems}
\label{sec:interpolation}

\subsection{Geometric Convergence of FedAvg for General Strongly Convex and Smooth Objectives}

\begin{thm}
	For the overparameterized setting with general strongly convex and
	smooth objectives, FedAvg with local SGD updates and communication
	every $E$ iterations with constant step size $\overline{\alpha}=\frac{1}{2E}\frac{N}{l\nu_{\max}+L(N-\nu_{\min})}$
	gives the exponential convergence guarantee 
	\begin{align*}
	\mathbb{E}F(\ov{w}_{t}) & \leq\frac{L}{2}(1-\mu\overline{\alpha})^{t}\|\vw_{0}-\vw^{\ast}\|^{2}=O(\exp(-\frac{\mu}{2E}\frac{N}{l\nu_{\max}+L(N-\nu_{\min})}t)\cdot\|\vw_{0}-\vw^{\ast}\|^{2})
	\end{align*}
\end{thm}
%
\begin{proof}
	To illustrate the main ideas of the proof, we first present the proof
	for $E=2$. Let $t-1$ be a communication round, so that $\vw_{t-1}^{k}=\ov{w}_{t-1}$.
	We show that 
	
	\begin{align*}
	\|\ov{w}_{t+1}-\vw^{\ast}\|^{2} & \leq(1-\alpha_{t}\mu)(1-\alpha_{t-1}\mu)\|\ov{w}_{t-1}-\vw^{\ast}\|^{2}
	\end{align*}
	for appropriately chosen constant step sizes $\alpha_{t},\alpha_{t-1}$.
	We have 
	
	\begin{align*}
	\|\ov{w}_{t+1}-\vw^{\ast}\|^{2} & =\|(\ov{w}_{t}-\alpha_{t}\vg_{t})-\vw^{\ast}\|^{2}\\
	& =\|\ov{w}_{t}-\vw^{\ast}\|^{2}-2\alpha_{t}\langle\ov{w}_{t}-\vw^{\ast},\vg_{t}\rangle+\alpha_{t}^{2}\|\vg_{t}\|^{2}
	\end{align*}
	and the cross term can be bounded as usual using $\mu$-convexity
	and $L$-smoothness of $F_{k}$:
	\begin{align*}
	&-2\alpha_{t}\mathbb{E}_{t}\langle\ov{w}_{t}-\vw^{\ast},\vg_{t}\rangle\\
	& =-2\alpha_{t}\sum_{k=1}^{N}p_{k}\langle\ov{w}_{t}-\vw^{\ast},\nabla F_{k}(\vw_{t}^{k})\rangle\\
	& =-2\alpha_{t}\sum_{k=1}^{N}p_{k}\langle\ov{w}_{t}-\vw_{t}^{k},\nabla F_{k}(\vw_{t}^{k})\rangle-2\alpha_{t}\sum_{k=1}^{N}p_{k}\langle \vw_{t}^{k}-\vw^{\ast},\nabla F_{k}(\vw_{t}^{k})\rangle\\
	& \leq-2\alpha_{t}\sum_{k=1}^{N}p_{k}\langle\ov{w}_{t}-\vw_{t}^{k},\nabla F_{k}(\vw_{t}^{k})\rangle+2\alpha_{t}\sum_{k=1}^{N}p_{k}(F_{k}(\vw^{\ast})-F_{k}(\vw_{t}^{k}))-\alpha_{t}\mu\sum_{k=1}^{N}p_{k}\|\vw_{t}^{k}-\vw^{\ast}\|^{2}\\
	& \leq2\alpha_{t}\sum_{k=1}^{N}p_{k}\left[F_{k}(\vw_{t}^{k})-F_{k}(\ov{w}_{t})+\frac{L}{2}\|\ov{w}_{t}-\vw_{t}^{k}\|^{2}+F_{k}(\vw^{\ast})-F_{k}(\vw_{t}^{k})\right]-\alpha_{t}\mu\|\sum_{k=1}^{N}p_{k}(\vw_{t}^{k}-\vw^{\ast})\|^{2}\\
	& =\alpha_{t}L\sum_{k=1}^{N}p_{k}\|\ov{w}_{t}-\vw_{t}^{k}\|^{2}+2\alpha_{t}\sum_{k=1}^{N}p_{k}\left[F_{k}(\vw^{\ast})-F_{k}(\ov{w}_{t})\right]-\alpha_{t}\mu\|\ov{w}_{t}-\vw^{\ast}\|^{2}\\
	& =\alpha_{t}L\sum_{k=1}^{N}p_{k}\|\ov{w}_{t}-\vw_{t}^{k}\|^{2}-2\alpha_{t}\sum_{k=1}^{N}p_{k}F_{k}(\ov{w}_{t})-\alpha_{t}\mu\|\ov{w}_{t}-\vw^{\ast}\|^{2}
	\end{align*}
	and so 
	\begin{align*}
	\mathbb{E}\|\ov{w}_{t+1}-\vw^{\ast}\|^{2} & \leq\mathbb{E}(1-\alpha_{t}\mu)\|\ov{w}_{t}-\vw^{\ast}\|^{2}-2\alpha_{t}F(\ov{w}_{t})+\alpha_{t}^{2}\|\vg_{t}\|^{2}+\alpha_{t}L\sum_{k=1}^{N}p_{k}\|\ov{w}_{t}-\vw_{t}^{k}\|^{2}
	\end{align*}
	
	Applying this recursive relation to $\|\ov{w}_{t}-\vw^{\ast}\|^{2}$
	and using $\|\ov{w}_{t-1}-\vw_{t-1}^{k}\|^{2}\equiv0$, we further
	obtain 
	\begin{align*}
	\mathbb{E}\|\ov{w}_{t+1}-\vw^{\ast}\|^{2} & \leq\mathbb{E}(1-\alpha_{t}\mu)\left((1-\alpha_{t-1}\mu)\|\ov{w}_{t-1}-\vw^{\ast}\|^{2}-2\alpha_{t-1}F(\ov{w}_{t-1})+\alpha_{t-1}^{2}\|\vg_{t-1}\|^{2}\right)\\
	& -2\alpha_{t}F(\ov{w}_{t})+\alpha_{t}^{2}\|\vg_{t}\|^{2}+\alpha_{t}L\sum_{k=1}^{N}p_{k}\|\ov{w}_{t}-\vw_{t}^{k}\|^{2}
	\end{align*}
	Now instead of bounding $\sum_{k=1}^{N}p_{k}\|\ov{w}_{t}-\vw_{t}^{k}\|^{2}$
	using the arguments in the general convex case, we follow~\cite{ma2017power} and use the fact that
	in the overparameterized setting, $\vw^{\ast}$ is a minimizer of each
	$\ell(\vw,x_{k}^{j})$ and that each $\ell$ is $l$-smooth to obtain
	$\|\nabla F_{k}(\ov{w}_{t-1},\xi_{t-1}^{k})\|^{2}\leq2l(F_{k}(\ov{w}_{t-1},\xi_{t-1}^{k})-F_{k}(\vw^{\ast},\xi_{t-1}^{k}))$,
	where recall $F_{k}(\vw,\xi_{t-1}^{k})=\ell(\vw,\xi_{t-1}^{k})$, so that
	\begin{align*}
	\sum_{k=1}^{N}p_{k}\|\ov{w}_{t}-\vw_{t}^{k}\|^{2} & =\sum_{k=1}^{N}p_{k}\|\ov{w}_{t-1}-\alpha_{t-1}\vg_{t-1}-\vw_{t-1}^{k}+\alpha_{t-1}\vg_{t-1,k}\|^{2}\\
	& =\sum_{k=1}^{N}p_{k}\alpha_{t-1}^{2}\|\vg_{t-1}-\vg_{t-1,k}\|^{2}\\
	& =\alpha_{t-1}^{2}\sum_{k=1}^{N}p_{k}(\|\vg_{t-1,k}\|^{2}-\|\vg_{t-1}\|^{2})\\
	& =\alpha_{t-1}^{2}\sum_{k=1}^{N}p_{k}\|\nabla F_{k}(\ov{w}_{t-1},\xi_{t-1}^{k})\|^{2}-\alpha_{t-1}^{2}\|\vg_{t-1}\|^{2}\\
	& \le\alpha_{t-1}^{2}\sum_{k=1}^{N}p_{k}2l(F_{k}(\ov{w}_{t-1},\xi_{t-1}^{k})-F_{k}(\vw^{\ast},\xi_{t-1}^{k}))-\alpha_{t-1}^{2}\|\vg_{t-1}\|^{2}
	\end{align*}
	again using $\ov{w}_{t-1}=\vw_{t-1}^{k}$. Taking expectation
	with respect to $\xi_{t-1}^{k}$'s and using the fact that $F(\vw^{\ast})=0$,
	we have 
	\begin{align*}
	\mathbb{E}_{t-1}\sum_{k=1}^{N}p_{k}\|\ov{w}_{t}-\vw_{t}^{k}\|^{2} & \leq2l\alpha_{t-1}^{2}\sum_{k=1}^{N}p_{k}F_{k}(\ov{w}_{t-1})-\alpha_{t-1}^{2}\|\vg_{t-1}\|^{2}\\
	& =2l\alpha_{t-1}^{2}F(\ov{w}_{t-1})-\alpha_{t-1}^{2}\|\vg_{t-1}\|^{2}
	\end{align*}
	
	Note also that 
	\begin{align*}
	\|\vg_{t-1}\|^{2} & =\|\sum_{k=1}^{N}p_{k}\nabla F_{k}(\ov{w}_{t-1},\xi_{t-1}^{k})\|^{2}
	\end{align*}
	while
	\begin{align*}
&\|\vg_{t}\|^{2}=\|\sum_{k=1}^{N}p_{k}\nabla F_{k}(\vw_{t}^{k},\xi_{t}^{k})\|^{2} \\
 \leq& 2\|\sum_{k=1}^{N}p_{k}\nabla F_{k}(\ov{w}_{t},\xi_{t}^{k})\|^{2}+2\|\sum_{k=1}^{N}p_{k}(\nabla F_{k}(\ov{w}_{t},\xi_{t}^{k})-\nabla F_{k}(\vw_{t}^{k},\xi_{t}^{k}))\|^{2}\\
	 \leq& 2\|\sum_{k=1}^{N}p_{k}\nabla F_{k}(\ov{w}_{t},\xi_{t}^{k})\|^{2}+2\sum_{k=1}^{N}p_{k}l^{2}\|\ov{w}_{t}-\vw_{t}^{k}\|^{2}
	\end{align*}
	Substituting these into the bound for $\|\ov{w}_{t+1}-\vw^{\ast}\|^{2}$,
	we have 
	
	\begin{align*}
	&\mathbb{E}\|\ov{w}_{t+1}-\vw^{\ast}\|^{2} \\
 &\leq\mathbb{E}(1-\alpha_{t}\mu)((1-\alpha_{t-1}\mu)\|\ov{w}_{t-1}-\vw^{\ast}\|^{2}-2\alpha_{t-1}F(\ov{w}_{t-1})+\alpha_{t-1}^{2}\|\vg_{t-1}\|^{2})\\
	& -2\alpha_{t}F(\ov{w}_{t})+2\alpha_{t}^{2}\|\sum_{k=1}^{N}p_{k}\nabla F_{k}(\ov{w}_{t},\xi_{t}^{k})\|^{2}+\left(2l^{2}\alpha_{t-1}^{2}\alpha_{t}^{2}+\alpha_{t}\alpha_{t-1}^{2}L\right)\left(2lF(\ov{w}_{t-1})-\|\vg_{t-1}\|^{2}\right)\\
	& =\mathbb{E}(1-\alpha_{t}\mu)(1-\alpha_{t-1}\mu)\|\ov{w}_{t-1}-\vw^{\ast}\|^{2}\\
	& -2\alpha_{t}(F(\ov{w}_{t})-\alpha_{t}\|\sum_{k=1}^{N}p_{k}\nabla F_{k}(\ov{w}_{t},\xi_{t}^{k})\|^{2})\\
	& -2\alpha_{t-1}(1-\alpha_{t}\mu)\left((1-\frac{l\alpha_{t-1}(2l^{2}\alpha_{t}^{2}+\alpha_{t}L)}{1-\alpha_{t}\mu})F(\ov{w}_{t-1})-\frac{\alpha_{t-1}}{2}\|\sum_{k=1}^{N}p_{k}\nabla F_{k}(\ov{w}_{t-1},\xi_{t-1}^{k})\|^{2}\right)
	\end{align*}
	from which we can conclude that 
	\begin{align*}
	\mathbb{E}\|\ov{w}_{t+1}-\vw^{\ast}\|^{2} & \leq(1-\alpha_{t}\mu)(1-\alpha_{t-1}\mu)\mathbb{E}\|\ov{w}_{t-1}-\vw^{\ast}\|^{2}
	\end{align*}
	if we can choose $\alpha_{t},\alpha_{t-1}$ to guarantee
	\begin{align*}
	\mathbb{E}(F(\ov{w}_{t})-\alpha_{t}\|\sum_{k=1}^{N}p_{k}\nabla F_{k}(\ov{w}_{t},\xi_{t}^{k})\|^{2}) & \geq0\\
	\mathbb{E}\left((1-\frac{l\alpha_{t-1}(2l^{2}\alpha_{t}^{2}+\alpha_{t}L)}{1-\alpha_{t}\mu})F(\ov{w}_{t-1})-\frac{\alpha_{t-1}}{2}\|\sum_{k=1}^{N}p_{k}\nabla F_{k}(\ov{w}_{t-1},\xi_{t-1}^{k})\|^{2}\right) & \geq0
	\end{align*}
	
	Note that 
	\begin{align*}
 &\mathbb{E}_{t}\|\sum_{k=1}^{N}p_{k}\nabla F_{k}(\ov{w}_{t},\xi_{t}^{k})\|^{2} \\
=&\mathbb{E}_{t}\langle\sum_{k=1}^{N}p_{k}\nabla F_{k}(\ov{w}_{t},\xi_{t}^{k}),\sum_{k=1}^{N}p_{k}\nabla F_{k}(\ov{w}_{t},\xi_{t}^{k})\rangle\\
	& =\sum_{k=1}^{N}p_{k}^{2}\mathbb{E}_{t}\|\nabla F_{k}(\ov{w}_{t},\xi_{t}^{k})\|^{2}+\sum_{k=1}^{N}\sum_{j\neq k}p_{j}p_{k}\mathbb{E}_{t}\langle\nabla F_{k}(\ov{w}_{t},\xi_{t}^{k}),\nabla F_{j}(\ov{w}_{t},\xi_{t}^{j})\rangle\\
	& =\sum_{k=1}^{N}p_{k}^{2}\mathbb{E}_{t}\|\nabla F_{k}(\ov{w}_{t},\xi_{t}^{k})\|^{2}+\sum_{k=1}^{N}\sum_{j\neq k}p_{j}p_{k}\langle\nabla F_{k}(\ov{w}_{t}),\nabla F_{j}(\ov{w}_{t})\rangle\\
	& =\sum_{k=1}^{N}p_{k}^{2}\mathbb{E}_{t}\|\nabla F_{k}(\ov{w}_{t},\xi_{t}^{k})\|^{2}+\sum_{k=1}^{N}\sum_{j=1}^{N}p_{j}p_{k}\langle\nabla F_{k}(\ov{w}_{t}),\nabla F_{j}(\ov{w}_{t})\rangle-\sum_{k=1}^{N}p_{k}^{2}\|\nabla F_{k}(\ov{w}_{t})\|^{2}\\
	& \leq\sum_{k=1}^{N}p_{k}^{2}\mathbb{E}_{t}\|\nabla F_{k}(\ov{w}_{t},\xi_{t}^{k})\|^{2}+\|\sum_{k}p_{k}\nabla F_{k}(\ov{w}_{t})\|^{2}-\frac{1}{N}\nu_{\min}\|\sum_{k}p_{k}\nabla F_{k}(\ov{w}_{t})\|^{2}\\
	& =\sum_{k=1}^{N}p_{k}^{2}\mathbb{E}_{t}\|\nabla F_{k}(\ov{w}_{t},\xi_{t}^{k})\|^{2}+(1-\frac{1}{N}\nu_{\min})\|\nabla F(\ov{w}_{t})\|^{2}
	\end{align*}
	and so following~\cite{ma2017power} if we let $\alpha_{t}=\min\{\frac{qN}{2l\nu_{\max}},\frac{1-q}{2L(1-\frac{1}{N}\nu_{\min})}\}$
	for a $q\in[0,1]$ to be optimized later, we have 
	\begin{align*}
	& \mathbb{E}_{t}(F(\ov{w}_{t})-\alpha_{t}\|\sum_{k=1}^{N}p_{k}\nabla F_{k}(\ov{w}_{t},\xi_{t}^{k})\|^{2})\\
	& \geq\mathbb{E}_{t}\sum_{k=1}^{N}p_{k}F_{k}(\ov{w}_{t})-\alpha_{t}\left[\sum_{k=1}^{N}p_{k}^{2}\mathbb{E}_{t}\|\nabla F_{k}(\ov{w}_{t},\xi_{t}^{k})\|^{2}+(1-\frac{1}{N}\nu_{\min})\|\nabla F(\ov{w}_{t})\|^{2}\right]\\
	& \geq\mathbb{E}_{t}\sum_{k=1}^{N}p_{k}(qF_{k}(\ov{w}_{t},\xi_{t}^{k})-\alpha_{t}\frac{1}{N}\nu_{\max}\|\nabla F_{k}(\ov{w}_{t},\xi_{t}^{k})\|^{2})\\
 &+((1-q)F(\ov{w}_{t})-\alpha_{t}(1-\frac{1}{N}\nu_{\min})\|\nabla F(\ov{w}_{t})\|^{2})\\
	& \geq q\mathbb{E}_{t}\sum_{k=1}^{N}p_{k}(F_{k}(\ov{w}_{t},\xi_{t}^{k})-\frac{1}{2l}\|\nabla F_{k}(\ov{w}_{t},\xi_{t}^{k})\|^{2})+(1-q)(F(\ov{w}_{t})-\frac{1}{2L}\|\nabla F(\ov{w}_{t})\|^{2})\\
	& \geq0
	\end{align*}
	again using $\vw^{\ast}$ optimizes $F_{k}(\vw,\xi_{t}^{k})$ with $F_{k}(\vw^{\ast},\xi_{t}^{k})=0$. 
	
	Maximizing $\alpha_{t}=\min\{\frac{qN}{2l\nu_{\max}},\frac{1-q}{2L(1-\frac{1}{N}\nu_{\min})}\}$
	over $q\in[0,1]$, we see that $q=\frac{l\nu_{\max}}{l\nu_{\max}+L(N-\nu_{\min})}$
	results in the fastest convergence, and this translates to $\alpha_{t}=\frac{1}{2}\frac{N}{l\nu_{\max}+L(N-\nu_{\min})}$.
	Next we claim that $\alpha_{t-1}=c\frac{1}{2}\frac{N}{l\nu_{\max}+L(N-\nu_{\min})}$
	also guarantees
	\begin{align*}
	\mathbb{E}(1-\frac{l\alpha_{t-1}(2l^{2}\alpha_{t}^{2}+\alpha_{t}L)}{1-\alpha_{t}\mu})F(\ov{w}_{t-1})-\frac{\alpha_{t-1}}{2}\|\sum_{k=1}^{N}p_{k}\nabla F_{k}(\ov{w}_{t-1},\xi_{t-1}^{k})\|^{2} & \geq0
	\end{align*}
	
	Note that by scaling $\alpha_{t-1}$ by a constant $c\leq1$ if necessary,
	we can guarantee that $\frac{l\alpha_{t-1}(2l^{2}\alpha_{t}^{2}+\alpha_{t}L)}{1-\alpha_{t}\mu}\leq\frac{1}{2}$,
	and so the condition is equivalent to 
	\begin{align*}
	F(\ov{w}_{t-1})-\alpha_{t-1}\|\sum_{k=1}^{N}p_{k}\nabla F_{k}(\ov{w}_{t-1},\xi_{t-1}^{k})\|^{2} & \geq0
	\end{align*}
	which was shown to hold with $\alpha_{t-1}\leq\frac{1}{2}\frac{N}{l\nu_{\max}+L(N-\nu_{\min})}$. 
	
	For the proof of general $E\ge2$, we use the following two identities:
	\begin{align*}
	\|\vg_{t}\|^{2} & \leq2\|\sum_{k=1}^{N}p_{k}\nabla F_{k}(\ov{w}_{t},\xi_{t}^{k})\|^{2}+2\sum_{k=1}^{N}p_{k}l^{2}\|\ov{w}_{t}-\vw_{t}^{k}\|^{2}\\
	\mathbb{E}\sum_{k=1}^{N}p_{k}\|\ov{w}_{t}-\vw_{t}^{k}\|^{2} & \leq\mathbb{E}2(1+2l^{2}\alpha_{t-1}^{2})\sum_{k=1}^{N}p_{k}\|\ov{w}_{t-1}-\vw_{t-1}^{k}\|^{2}+8\alpha_{t-1}^{2}lF(\ov{w}_{t-1})-2\alpha_{t-1}^{2}\|\vg_{t-1}\|^{2}
	\end{align*}
	where the first inequality has been established before. To establish
	the second inequality, note that 
	\begin{align*}
	\sum_{k=1}^{N}p_{k}\|\ov{w}_{t}-\vw_{t}^{k}\|^{2} & =\sum_{k=1}^{N}p_{k}\|\ov{w}_{t-1}-\alpha_{t-1}\vg_{t-1}-\vw_{t-1}^{k}+\alpha_{t-1}\vg_{t-1,k}\|^{2}\\
	& \leq2\sum_{k=1}^{N}p_{k}\left(\|\ov{w}_{t-1}-\vw_{t-1}^{k}\|^{2}+\|\alpha_{t-1}\vg_{t-1}-\alpha_{t-1}\vg_{t-1,k}\|^{2}\right)
	\end{align*}
	and
	\begin{align*}
	& \sum_{k}p_{k}\|\vg_{t-1,k}-\vg_{t-1}\|^{2}=\sum_{k}p_{k}(\|\vg_{t-1,k}\|^{2}-\|\vg_{t-1}\|^{2})\\
	& =\sum_{k}p_{k}\|\nabla F_{k}(\ov{w}_{t-1},\xi_{t-1}^{k})+\nabla F_{k}(\vw_{t-1}^{k},\xi_{t-1}^{k})-\nabla F_{k}(\ov{w}_{t-1},\xi_{t-1}^{k})\|^{2}-\|\vg_{t-1}\|^{2}\\
	& \leq2\sum_{k}p_{k}\left(\|\nabla F_{k}(\ov{w}_{t-1},\xi_{t-1}^{k})\|^{2}+l^{2}\|\vw_{t-1}^{k}-\ov{w}_{t-1}\|^{2}\right)-\|\vg_{t-1}\|^{2}
	\end{align*}
	so that using the $l$-smoothness of $\ell$, 
	\begin{align*}
	& \mathbb{E}\sum_{k=1}^{N}p_{k}\|\ov{w}_{t}-\vw_{t}^{k}\|^{2}\\
	& \leq\mathbb{E}2(1+2l^{2}\alpha_{t-1}^{2})\sum_{k=1}^{N}p_{k}\|\ov{w}_{t-1}-\vw_{t-1}^{k}\|^{2}+4\alpha_{t-1}^{2}\sum_{k}p_{k}\|\nabla F_{k}(\ov{w}_{t-1},\xi_{t-1}^{k})\|^{2}-2\alpha_{t-1}^{2}\|\vg_{t-1}\|^{2}\\
	& \leq\mathbb{E}2(1+2l^{2}\alpha_{t-1}^{2})\sum_{k=1}^{N}p_{k}\|\ov{w}_{t-1}-\vw_{t-1}^{k}\|^{2}\\
 &+4\alpha_{t-1}^{2}2l\sum_{k}p_{k}(F_{k}(\ov{w}_{t-1},\xi_{t-1}^{k})-F_{k}(\vw^{\ast},\xi_{t-1}^{k}))-2\alpha_{t-1}^{2}\|\vg_{t-1}\|^{2}\\
	& =\mathbb{E}2(1+2l^{2}\alpha_{t-1}^{2})\sum_{k=1}^{N}p_{k}\|\ov{w}_{t-1}-\vw_{t-1}^{k}\|^{2}+8\alpha_{t-1}^{2}lF(\ov{w}_{t-1})-2\alpha_{t-1}^{2}\|\vg_{t-1}\|^{2}
	\end{align*}

	Using the first inequality, we have 
	\begin{align*}
	\mathbb{E}\|\ov{w}_{t+1}-\vw^{\ast}\|^{2} & \leq\mathbb{E}(1-\alpha_{t}\mu)\|\ov{w}_{t}-\vw^{\ast}\|^{2}\\
	& -2\alpha_{t}F(\ov{w}_{t})+2\alpha_{t}^{2}\|\sum_{k=1}^{N}p_{k}\nabla F_{k}(\ov{w}_{t},\xi_{t}^{k})\|^{2}\\
	& +(2\alpha_{t}^{2}l^{2}+\alpha_{t}L)\sum_{k=1}^{N}p_{k}\|\ov{w}_{t}-\vw_{t}^{k}\|^{2}
	\end{align*}
	and we choose $\alpha_{t}$ and $\alpha_{t-1}$ such that $\mathbb{E}(F(\ov{w}_{t})-\alpha_{t}\|\sum_{k=1}^{N}p_{k}\nabla F_{k}(\ov{w}_{t},\xi_{t}^{k})\|^{2})\geq0$
	and $(2\alpha_{t}^{2}l^{2}+\alpha_{t}L)\leq(1-\alpha_{t}\mu)(2\alpha_{t-1}^{2}l^{2}+\alpha_{t-1}L)/3$.
	This gives 
	\begin{align*}
	&\mathbb{E}\|\ov{w}_{t+1}-\vw^{\ast}\|^{2}\\
 & \leq\mathbb{E}(1-\alpha_{t}\mu)[(1-\alpha_{t-1}\mu)\|\ov{w}_{t-1}-\vw^{\ast}\|^{2}-2\alpha_{t-1}F(\ov{w}_{t-1})+2\alpha_{t-1}^{2}\|\sum_{k=1}^{N}p_{k}\nabla F_{k}(\ov{w}_{t-1},\xi_{t-1}^{k})\|^{2}\\
	& +(2\alpha_{t-1}^{2}l^{2}+\alpha_{t-1}L)(\sum_{k=1}^{N}p_{k}\|\ov{w}_{t-1}-\vw_{t-1}^{k}\|^{2}+\sum_{k=1}^{N}p_{k}\|\ov{w}_{t}-\vw_{t}^{k}\|^{2})/3]
	\end{align*}
	
	Using the second inequality
	\begin{align*}
	\sum_{k=1}^{N}p_{k}\|\ov{w}_{t}-\vw_{t}^{k}\|^{2} & \leq\mathbb{E}2(1+2l^{2}\alpha_{t-1}^{2})\sum_{k=1}^{N}p_{k}\|\ov{w}_{t-1}-\vw_{t-1}^{k}\|^{2}+8\alpha_{t-1}^{2}lF(\ov{w}_{t-1})-2\alpha_{t-1}^{2}\|\vg_{t-1}\|^{2}
	\end{align*}
	and that $2(1+2l^{2}\alpha_{t-1}^{2})\leq3$, $2\alpha_{t-1}^{2}l^{2}+\alpha_{t-1}L\le1$,
	we have 
	\begin{align*}
	\mathbb{E}\|\ov{w}_{t+1}-\vw^{\ast}\|^{2} & \leq\mathbb{E}(1-\alpha_{t}\mu)[(1-\alpha_{t-1}\mu)\|\ov{w}_{t-1}-\vw^{\ast}\|^{2}\\
	& -2\alpha_{t-1}F(\ov{w}_{t-1})+2\alpha_{t-1}^{2}\|\sum_{k=1}^{N}p_{k}\nabla F_{k}(\ov{w}_{t-1},\xi_{t-1}^{k})\|^{2}+8\alpha_{t-1}^{2}lF(\ov{w}_{t-1})\\
	& +(2\alpha_{t-1}^{2}l^{2}+\alpha_{t-1}L)(2\sum_{k=1}^{N}p_{k}\|\ov{w}_{t-1}-\vw_{t-1}^{k}\|^{2})]
	\end{align*}
	and if $\alpha_{t-1}$ is chosen such that 
	\begin{align*}
	(F(\ov{w}_{t-1})-4\alpha_{t-1}lF(\ov{w}_{t-1}))-\alpha_{t-1}\|\sum_{k=1}^{N}p_{k}\nabla F_{k}(\ov{w}_{t-1},\xi_{t-1}^{k})\|^{2}\geq0
	\end{align*}
	and
	\begin{align*} (2\alpha_{t-1}^{2}l^{2}+\alpha_{t-1}L)(1-\alpha_{t-1}\mu)
	&\leq(2\alpha_{t-2}^{2}l^{2}+\alpha_{t-2}L)/3
	\end{align*}
	we again have 
	\begin{align*}
	&\mathbb{E}\|\ov{w}_{t+1}-\vw^{\ast}\|^{2} \\
 & \leq\mathbb{E}(1-\alpha_{t}\mu)(1-\alpha_{t-1}\mu)[\|\ov{w}_{t-1}-\vw^{\ast}\|^{2}+(2\alpha_{t-2}^{2}l^{2}+\alpha_{t-2}L)\cdot(2\sum_{k=1}^{N}p_{k}\|\ov{w}_{t-1}-\vw_{t-1}^{k}\|^{2})/3]
	\end{align*}
	
	Applying the above derivation iteratively $\tau<E$ times, we have
	\begin{align*}
	\mathbb{E}\|\ov{w}_{t+1}-\vw^{\ast}\|^{2} & \leq\mathbb{E}(1-\alpha_{t}\mu)\cdots(1-\alpha_{t-\tau+1}\mu)[(1-\alpha_{t-\tau}\mu)\|\ov{w}_{t-\tau}-\vw^{\ast}\|^{2}\\
	& -2\alpha_{t-\tau}F(\ov{w}_{t-\tau})+2\alpha_{t-\tau}^{2}\|\sum_{k=1}^{N}p_{k}\nabla F_{k}(\ov{w}_{t-\tau},\xi_{t-\tau}^{k})\|^{2}+8\tau\alpha_{t-\tau}^{2}lF(\ov{w}_{t-\tau})\\
	& +(2\alpha_{t-\tau}^{2}l^{2}+\alpha_{t-\tau}L)((\tau+1)\sum_{k=1}^{N}p_{k}\|\ov{w}_{t-\tau}-\vw_{t-\tau}^{k}\|^{2})]
	\end{align*}
	as long as the step sizes $\alpha_{t-\tau}$ are chosen such that
	the following inequalities hold 
	\begin{align*}
	(2\alpha_{t-\tau}^{2}l^{2}+\alpha_{t-\tau}L)(1-\alpha_{t-\tau}\mu) & \leq(2\alpha_{t-\tau-1}^{2}l^{2}+\alpha_{t-\tau-1}L)/3\\
	2(1+2l^{2}\alpha_{t-\tau}^{2}) & \leq3\\
	2\alpha_{t-\tau}^{2}l^{2}+\alpha_{t-\tau}L & \leq1\\
	(F(\ov{w}_{t-\tau})-4\tau\alpha_{t-\tau}lF(\ov{w}_{t-\tau}))-\alpha_{t-\tau}\|\sum_{k=1}^{N}p_{k}\nabla F_{k}(\ov{w}_{t-\tau},\xi_{t-\tau}^{k})\|^{2} & \geq0
	\end{align*}
	We can check that setting $\alpha_{t-\tau}=c\frac{1}{\tau+1}\frac{N}{l\nu_{\max}+L(N-\nu_{\min})}$
	for some small constant $c$ satisfies the requirements. 
	
	Since communication is done every $E$ iterations, $\ov{w}_{t_{0}}=\vw_{t_{0}}^{k}$
	for some $t_{0}>t-E$ , from which we can conclude that 
	
	\begin{align*}
	\mathbb{E}\|\ov{w}_{t}-\vw^{\ast}\|^{2} & \leq(\prod_{\tau=1}^{t-t_{0}-1}(1-\mu\alpha_{t-\tau}))\|\vw_{t_{0}}-\vw^{\ast}\|^{2}\\
	& \leq(1-c\frac{\mu}{E}\frac{N}{l\nu_{\max}+L(N-\nu_{\min})})^{t-t_{0}}\|\vw_{t_{0}}-\vw^{\ast}\|^{2}
	\end{align*}
	and applying this inequality to iterations between each communication
	round, 
	\begin{align*}
	\mathbb{E}\|\ov{w}_{t}-\vw^{\ast}\|^{2} & \leq(1-c\frac{\mu}{E}\frac{N}{l\nu_{\max}+L(N-\nu_{\min})})^{t}\|\vw_{0}-\vw^{\ast}\|^{2}\\
	& =O(\exp(\frac{\mu}{E}\frac{N}{l\nu_{\max}+L(N-\nu_{\min})}t))\|\vw_{0}-\vw^{\ast}\|^{2}
	\end{align*}
	
	With partial participation, we note that 
	\begin{align*}
	\mathbb{E}\|\ov{w}_{t+1}-\vw^{\ast}\|^{2} & =\mathbb{E}\|\ov{w}_{t+1}-\ov{v}_{t+1}+\ov{v}_{t+1}-\vw^{\ast}\|^{2}\\
	& =\mathbb{E}\|\ov{w}_{t+1}-\ov{v}_{t+1}\|^{2}+\mathbb{E}\|\ov{v}_{t+1}-\vw^{\ast}\|^{2}\\
	& =\frac{1}{K}\sum_{k}p_{k}\mathbb{E}\|\vw_{t+1}^{k}-\ov{w}_{t+1}\|^{2}+\mathbb{E}\|\ov{v}_{t+1}-\vw^{\ast}\|^{2}
	\end{align*}
	and so the recursive identity becomes 
	\begin{align*}
	\mathbb{E}\|\ov{w}_{t+1}-\vw^{\ast}\|^{2} & \leq\mathbb{E}(1-\alpha_{t}\mu)\cdots(1-\alpha_{t-\tau+1}\mu)[(1-\alpha_{t-\tau}\mu)\|\ov{w}_{t-\tau}-\vw^{\ast}\|^{2}\\
	& -2\alpha_{t-\tau}F(\ov{w}_{t-\tau})+2\alpha_{t-\tau}^{2}\|\sum_{k=1}^{N}p_{k}\nabla F_{k}(\ov{w}_{t-\tau},\xi_{t-\tau}^{k})\|^{2}+8\tau\alpha_{t-\tau}^{2}lF(\ov{w}_{t-\tau})\\
	& +(2\alpha_{t-\tau}^{2}l^{2}+\alpha_{t-\tau}L+\frac{1}{K})((\tau+1)\sum_{k=1}^{N}p_{k}\|\ov{w}_{t-\tau}-\vw_{t-\tau}^{k}\|^{2})]
	\end{align*}
	which requires 
	\begin{align*}
	(2\alpha_{t-\tau}^{2}l^{2}+\alpha_{t-\tau}L+\frac{1}{K})(1-\alpha_{t-\tau}\mu) & \leq(2\alpha_{t-\tau-1}^{2}l^{2}+\alpha_{t-\tau-1}L+\frac{1}{K})/3\\
	2(1+2l^{2}\alpha_{t-\tau}^{2}) & \leq3\\
	2\alpha_{t-\tau}^{2}l^{2}+\alpha_{t-\tau}L+\frac{1}{K} & \leq1\\
	(F(\ov{w}_{t-\tau})-4\tau\alpha_{t-\tau}lF(\ov{w}_{t-\tau}))&-\alpha_{t-\tau}\|\sum_{k=1}^{N}p_{k}\nabla F_{k}(\ov{w}_{t-\tau},\xi_{t-\tau}^{k})\|^{2}  \geq0
	\end{align*}
	to hold. Again setting $\alpha_{t-\tau}=c\frac{1}{\tau+1}\frac{N}{l\nu_{\max}+L(N-\nu_{\min})}$
	for a possibly different constant from before satisfies the requirements.
	
	Finally, using the $L$-smoothness of $F$, 
	\begin{align*}
	F(\ov{w}_{T})-F(\vw^{\ast}) & \leq\frac{L}{2}\mathbb{E}\|\ov{w}_{T}-\vw^{\ast}\|^{2}=O(L\exp(-\frac{\mu}{E}\frac{N}{l\nu_{\max}+L(N-\nu_{\min})}T))\|\vw_{0}-\vw^{\ast}\|^{2}
	\end{align*}
\end{proof}

\subsection{Geometric Convergence of FedAvg for Overparameterized Linear Regression}
\label{app:geometric_proof}

We first provide details on quantities used in the proof of results on linear regression in Section~\ref{sec:app:overparameterized}. Recall that the local device objectives are now given by the sum of squares {\small$F_{k}(\mathbf{w})=\frac{1}{2n_{k}}\sum_{j=1}^{n_{k}}(\mathbf{w}^{T}\mathbf{x}_{k}^{j}-\vz_{k}^{j})^{2}$},
and there exists $\mathbf{w}^{\ast}$ such that $F(\mathbf{w}^{\ast})\equiv0$. 
Define the local Hessian matrix as $\vH^{k}:=\frac{1}{n_{k}}\sum_{j=1}^{n_{k}}\mathbf{x}_{k}^{j}(\mathbf{x}_{k}^{j})^{T}$, and the stochastic Hessian matrix as $\tilde{\vH}_{t}^{k}:=\xi_{t}^{k}(\xi_{t}^{k})^{T}$, where $\xi_{t}^{k}$ is the stochastic sample on the $k$th device at
time $t$. Define $l$ to be the smallest positive number such that $\mathbb{E}\|\xi_{t}^{k}\|^{2}$$\mathbf{\xi}_{t}^{k}$($\mathbf{\xi}_{t}^{k})^{T}\preceq l\vH^{k}$ for all $k$. Note that $l\leq\max_{k,j}\|\mathbf{x}_{k}^{j}\|^{2}$.
Let $L$ and $\mu$ be lower and upper bounds of non-zero eigenvalues
of $\vH^{k}$. Define $\kappa_{1}:=l/\mu$ and $\kappa:=L/\mu$. The condition number $\kappa_{1}$ is important in the characterization of convergence rates for FedAvg
algorithms. Note that $\kappa_{1}>\kappa$.

Let $\vH=\sum_{k}p_k\vH^k$. In general $\vH$ has zero eigenvalues. However, because the null space
of $\vH$ and range of $\vH$ are orthogonal, in our subsequence analysis
it suffices to project $\overline{\mathbf{w}}_{t}-\mathbf{w}^{\ast}$
onto the range of $\vH$, thus we may restrict to the non-zero eigenvalue
of $\vH$. 

A useful observation is that we can use $\mathbf{w}^{\ast T}\mathbf{x}_{k}^{j}-\vz_{k}^{j}\equiv0$
to rewrite the local objectives as $F_{k}(\mathbf{w})=\frac{1}{2}\langle\mathbf{w}-\mathbf{w}^{\ast},\vH^{k}(\mathbf{w}-\mathbf{w}^{\ast})\rangle\equiv\frac{1}{2}\|\mathbf{w}-\mathbf{w}^{\ast}\|_{\vH^{k}}^{2}$:
\begin{align*}
F_{k}(\vw) & =\frac{1}{2n_{k}}\sum_{j=1}^{n_{k}}(\vw^{T}\vx_{k,j}-\vz_{k,j}-(\vw^{\ast T}\vx_{k,j}-\vz_{k,j}))^{2}=\frac{1}{2n_{k}}\sum_{j=1}^{n_{k}}((\vw-\vw^{\ast})^{T}\vx_{k,j})^{2}\\
& =\frac{1}{2}\langle \vw-\vw^{\ast},\vH^{k}(\vw-\vw^{\ast})\rangle=\frac{1}{2}\|\vw-\vw^{\ast}\|_{\vH^{k}}^{2}
\end{align*}

so that $F(\mathbf{w})=\frac{1}{2}\|\mathbf{w}-\mathbf{w}^{\ast}\|_{H}^{2}$.

Finally, note that $\mathbb{E}\tilde{\vH}_{t}^{k}=\frac{1}{n_{k}}\sum_{j=1}^{n_{k}}\mathbf{x}_{k}^{j}(\mathbf{x}_{k}^{j})^{T}=\vH^{k}$
and $\mathbf{g}_{t,k}=\nabla F_{k}(\mathbf{w}_{t}^{k},\xi_{t}^{k})=\tilde{\vH}_{t}^{k}(\mathbf{w}_{t}^{k}-\mathbf{w}^{\ast})$
while $\mathbf{g}_{t}=\sum_{k=1}^{N}p_{k}\nabla F_{k}(\mathbf{w}_{t}^{k},\xi_{t}^{k})=\sum_{k=1}^{N}p_{k}\tilde{\vH}_{t}^{k}(\mathbf{w}_{t}^{k}-\mathbf{w}^{\ast})$ and $\overline{\mathbf{g}}_{t}=\sum_{k=1}^{N}p_{k}\vH^{k}(\mathbf{w}_{t}^{k}-\mathbf{w}^{\ast})$ 
\\
	\begin{thm}
		For the overparamterized linear regression problem, FedAvg with communication every $E$
		iterations with constant step size $\overline{\alpha}=\mathcal{O}(\frac{1}{E}\frac{N}{l\nu_{\max}+\mu(N-\nu_{\min})})$
		has geometric convergence:
		\begin{align*}
		\mathbb{E}F(\overline{\mathbf{w}}_{T}) & \leq\mathcal{O}\left(L\exp(-\frac{NT}{E(\nu_{\max}\kappa_{1}+(N-\nu_{\min}))})\|\mathbf{w}_{0}-\mathbf{w}^{\ast}\|^{2}\right).
		\end{align*}
	\end{thm}
	%
\begin{proof}
	We again show the result first when $E=2$ and $t-1$ is a communication
	round. We have 
	\begin{align*}
	\|\ov{w}_{t+1}-\vw^{\ast}\|^{2} & =\|(\ov{w}_{t}-\alpha_{t}\vg_{t})-\vw^{\ast}\|^{2}\\
	& =\|\ov{w}_{t}-\vw^{\ast}\|^{2}-2\alpha_{t}\langle\ov{w}_{t}-\vw^{\ast},\vg_{t}\rangle+\alpha_{t}^{2}\|\vg_{t}\|^{2}
	\end{align*}
	and 
	\begin{align*}
	& -2\alpha_{t}\mathbb{E}_{t}\langle\ov{w}_{t}-\vw^{\ast},\vg_{t}\rangle\\
	& =-2\alpha_{t}\sum_{k=1}^{N}p_{k}\langle\ov{w}_{t}-\vw^{\ast},\nabla F_{k}(\vw_{t}^{k})\rangle\\
	& =-2\alpha_{t}\sum_{k=1}^{N}p_{k}\langle\ov{w}_{t}-\vw_{t}^{k},\nabla F_{k}(\vw_{t}^{k})\rangle-2\alpha_{t}\sum_{k=1}^{N}p_{k}\langle \vw_{t}^{k}-\vw^{\ast},\nabla F_{k}(\vw_{t}^{k})\rangle\\
	& =-2\alpha_{t}\sum_{k=1}^{N}p_{k}\langle\ov{w}_{t}-\vw_{t}^{k},\nabla F_{k}(\vw_{t}^{k})\rangle-2\alpha_{t}\sum_{k=1}^{N}p_{k}\langle \vw_{t}^{k}-\vw^{\ast},\vH^{k}(\vw_{t}^{k}-\vw^{\ast})\rangle\\
	& =-2\alpha_{t}\sum_{k=1}^{N}p_{k}\langle\ov{w}_{t}-\vw_{t}^{k},\nabla F_{k}(\vw_{t}^{k})\rangle-4\alpha_{t}\sum_{k=1}^{N}p_{k}F_{k}(\vw_{t}^{k})\\
	& \leq2\alpha_{t}\sum_{k=1}^{N}p_{k}(F_{k}(\vw_{t}^{k})-F_{k}(\ov{w}_{t})+\frac{L}{2}\|\ov{w}_{t}-\vw_{t}^{k}\|^{2})-4\alpha_{t}\sum_{k=1}^{N}p_{k}F_{k}(\vw_{t}^{k})\\
	& =\alpha_{t}L\sum_{k=1}^{N}p_{k}\|\ov{w}_{t}-\vw_{t}^{k}\|^{2}-2\alpha_{t}\sum_{k=1}^{N}p_{k}F_{k}(\ov{w}_{t})-2\alpha_{t}\sum_{k=1}^{N}p_{k}F_{k}(\vw_{t}^{k})\\
	& =\alpha_{t}L\sum_{k=1}^{N}p_{k}\|\ov{w}_{t}-\vw_{t}^{k}\|^{2}-\alpha_{t}\sum_{k=1}^{N}p_{k}\langle(\ov{w}_{t}-\vw^{\ast}),\vH^{k}(\ov{w}_{t}-\vw^{\ast})\rangle-2\alpha_{t}\sum_{k=1}^{N}p_{k}F_{k}(\vw_{t}^{k})
	\end{align*}
	and 
	\begin{align*}
	\|\vg_{t}\|^{2} & =\|\sum_{k=1}^{N}p_{k}\tilde{\vH}_{t}^{k}(\vw_{t}^{k}-\vw^{\ast})\|^{2}\\
	& =\|\sum_{k=1}^{N}p_{k}\tilde{\vH}_{t}^{k}(\ov{w}_{t}-\vw^{\ast})+\sum_{k=1}^{N}p_{k}\tilde{\vH}_{t}^{k}(\vw_{t}^{k}-\ov{w}_{t})\|^{2}\\
	& \leq2\|\sum_{k=1}^{N}p_{k}\tilde{\vH}_{t}^{k}(\ov{w}_{t}-\vw^{\ast})\|^{2}+2\|\sum_{k=1}^{N}p_{k}\tilde{\vH}_{t}^{k}(\vw_{t}^{k}-\ov{w}_{t})\|^{2}
	\end{align*}
	which gives 
	\begin{align*}
	&\mathbb{E}\|\ov{w}_{t+1}-\vw^{\ast}\|^{2} \\
 & \leq\mathbb{E}\|\ov{w}_{t}-\vw^{\ast}\|^{2}-\alpha_{t}\sum_{k=1}^{N}p_{k}\langle\ov{w}_{t}-\vw^{\ast},\vH^{k}\ov{w}_{t}-\vw^{\ast}\rangle+2\alpha_{t}^{2}\|\sum_{k=1}^{N}p_{k}\tilde{\vH}_{t}^{k}(\ov{w}_{t}-\vw^{\ast})\|^{2}\\
	& +\alpha_{t}L\sum_{k=1}^{N}p_{k}\|\ov{w}_{t}-\vw_{t}^{k}\|^{2}+2\alpha_{t}^{2}\|\sum_{k=1}^{N}p_{k}\tilde{\vH}_{t}^{k}(\vw_{t}^{k}-\ov{w}_{t})\|^{2}-2\alpha_{t}\sum_{k=1}^{N}p_{k}F_{k}(\vw_{t}^{k})
	\end{align*}
	following~\cite{ma2017power} we first prove that 
	\begin{align*}
	\mathbb{E}\|\ov{w}_{t}-\vw^{\ast}\|^{2}-\alpha_{t}\sum_{k=1}^{N}p_{k}\langle(\ov{w}_{t}-\vw^{\ast}),\vH^{k}(\ov{w}_{t}-\vw^{\ast})\rangle+2\alpha_{t}^{2}\|\sum_{k=1}^{N}p_{k}\tilde{\vH}_{t}^{k}(\ov{w}_{t}-\vw^{\ast})\|^{2}\\
	\leq(1-\frac{N}{8(\nu_{\max}\kappa_{1}+(N-\nu_{\min}))})\mathbb{E}\|\ov{w}_{t}-\vw^{\ast}\|^{2}
	\end{align*}
	with appropriately chosen $\alpha_{t}$. Compared to the rate $O(\frac{\mu N}{l\nu_{\max}+L(N-\nu_{\min})})=O(\frac{N}{\nu_{\max}\kappa_{1}+(N-\nu_{\min})\kappa})$
	for general strongly convex and smooth objectives, this is an improvement
	as linear speedup is now available for a larger range of $N$. 
	
	We have 
	\begin{align*}
	& \mathbb{E}_{t}\|\sum_{k=1}^{N}p_{k}\tilde{\vH}_{t}^{k}(\ov{w}_{t}-\vw^{\ast})\|^{2}\\
	& =\mathbb{E}_{t}\langle\sum_{k=1}^{N}p_{k}\tilde{\vH}_{t}^{k}(\ov{w}_{t}-\vw^{\ast}),\sum_{k=1}^{N}p_{k}\tilde{\vH}_{t}^{k}(\ov{w}_{t}-\vw^{\ast})\rangle\\
	& =\sum_{k=1}^{N}p_{k}^{2}\mathbb{E}_{t}\|\tilde{\vH}_{t}^{k}(\ov{w}_{t}-\vw^{\ast})\|^{2}+\sum_{k=1}^{N}\sum_{j\neq k}p_{j}p_{k}\mathbb{E}_{t}\langle\tilde{\vH}_{t}^{k}(\ov{w}_{t}-\vw^{\ast}),\tilde{\vH}_{t}^{j}(\ov{w}_{t}-\vw^{\ast})\rangle\\
	& =\sum_{k=1}^{N}p_{k}^{2}\mathbb{E}_{t}\|\tilde{\vH}_{t}^{k}(\ov{w}_{t}-\vw^{\ast})\|^{2}+\sum_{k=1}^{N}\sum_{j\neq k}p_{j}p_{k}\mathbb{E}_{t}\langle \vH^{k}(\ov{w}_{t}-\vw^{\ast}),\vH^{j}(\ov{w}_{t}-\vw^{\ast})\rangle\\
	& =\sum_{k=1}^{N}p_{k}^{2}\mathbb{E}_{t}\|\tilde{\vH}_{t}^{k}(\ov{w}_{t}-\vw^{\ast})\|^{2}+\sum_{k=1}^{N}\sum_{j=1}^{N}p_{j}p_{k}\mathbb{E}_{t}\langle \vH^{k}(\ov{w}_{t}-\vw^{\ast}),\vH^{j}(\ov{w}_{t}-\vw^{\ast})\rangle\\
 &-\sum_{k=1}^{N}p_{k}^{2}\|\vH^{k}(\ov{w}_{t}-\vw^{\ast})\|^{2}\\
	& =\sum_{k=1}^{N}p_{k}^{2}\mathbb{E}_{t}\|\tilde{\vH}_{t}^{k}(\ov{w}_{t}-\vw^{\ast})\|^{2}+\|\sum_{k}p_{k}\vH^{k}(\ov{w}_{t}-\vw^{\ast})\|^{2}-\sum_{k=1}^{N}p_{k}^{2}\|\vH^{k}(\ov{w}_{t}-\vw^{\ast})\|^{2}\\
	& \leq\sum_{k=1}^{N}p_{k}^{2}\mathbb{E}_{t}\|\tilde{\vH}_{t}^{k}(\ov{w}_{t}-\vw^{\ast})\|^{2}+\|\sum_{k}p_{k}\vH^{k}(\ov{w}_{t}-\vw^{\ast})\|^{2}-\frac{1}{N}\nu_{\min}\|\sum_{k}p_{k}\vH^{k}(\ov{w}_{t}-\vw^{\ast})\|^{2}\\
	& \leq\frac{1}{N}\nu_{\max}\sum_{k=1}^{N}p_{k}\mathbb{E}_{t}\|\tilde{\vH}_{t}^{k}(\ov{w}_{t}-\vw^{\ast})\|^{2}+(1-\frac{1}{N}\nu_{\min})\|\sum_{k}p_{k}\vH^{k}(\ov{w}_{t}-\vw^{\ast})\|^{2}\\
	& \leq\frac{1}{N}\nu_{\max}l\sum_{k=1}^{N}p_{k}\langle(\ov{w}_{t}-\vw^{\ast}),\vH^{k}(\ov{w}_{t}-\vw^{\ast})\rangle+(1-\frac{1}{N}\nu_{\min})\|\sum_{k}p_{k}\vH^{k}(\ov{w}_{t}-\vw^{\ast})\|^{2}\\
	& =\frac{1}{N}\nu_{\max}l\langle(\ov{w}_{t}-\vw^{\ast}),\vH(\ov{w}_{t}-\vw^{\ast})\rangle+(1-\frac{1}{N}\nu_{\min})\langle\ov{w}_{t}-\vw^{\ast},\vH^{2}(\ov{w}_{t}-\vw^{\ast})\rangle
	\end{align*}
	using $\|\tilde{\vH}_{t}^{k}\|\leq l$. 
	
	Now we have 
	\begin{align*}
	\mathbb{E}\|\ov{w}_{t}-\vw^{\ast}\|^{2}-\alpha_{t}\sum_{k=1}^{N}p_{k}\langle(\ov{w}_{t}-\vw^{\ast}),\vH^{k}(\ov{w}_{t}-\vw^{\ast})\rangle+2\alpha_{t}^{2}\|\sum_{k=1}^{N}p_{k}\tilde{\vH}_{t}^{k}(\ov{w}_{t}-\vw^{\ast})\|^{2} & =\\
	\langle\ov{w}_{t}-\vw^{\ast},(I-\alpha_{t}\vH+2\alpha_{t}^{2}(\frac{\nu_{\max}l}{N}\vH+\frac{N-\nu_{\min}}{N}\vH^{2}))(\ov{w}_{t}-\vw^{\ast})\rangle
	\end{align*}
	and it remains to bound the maximum eigenvalue of 
	\begin{align*}
	(I-\alpha_{t}\vH+2\alpha_{t}^{2}(\frac{\nu_{\max}l}{N}\vH+\frac{N-\nu_{\min}}{N}\vH^{2}))
	\end{align*}
	and we bound this following~\cite{ma2017power}. If we choose $\alpha_{t}<\frac{N}{2(\nu_{\max}l+(N-\nu_{\min})L)}$,
	then 
	\begin{align*}
	-\alpha_{t}\vH+2\alpha_{t}^{2}(\frac{\nu_{\max}l}{N}\vH+\frac{N-\nu_{\min}}{N}\vH^{2}) & \prec0
	\end{align*}
	and the convergence rate is given by the maximum of $1-\alpha_{t}\lambda+2\alpha_{t}^{2}(\frac{\nu_{\max}l}{N}\lambda+\frac{N-\nu_{\min}}{N}\lambda^{2})$
	maximized over the non-zero eigenvalues $\lambda$ of $\vH$. To select
	the step size $\alpha_{t}$ that gives the smallest upper bound, we
	then minimize over $\alpha_{t}$, resulting in 
	\begin{align*}
	\min_{\alpha_{t}<\frac{N}{2(\nu_{\max}l+(N-\nu_{\min})L)}}\max_{\lambda>0:\exists v,\vH v=\lambda v}\left\{ 1-\alpha_{t}\lambda+2\alpha_{t}^{2}(\frac{\nu_{\max}l}{N}\lambda+\frac{N-\nu_{\min}}{N}\lambda^{2})\right\} 
	\end{align*}
	Since the objective is quadratic in $\lambda$, the maximum is achieved
	at either the largest eigenvalue $\lambda_{\max}$ of $\vH$ or the
	smallest non-zero eigenvalue $\lambda_{\min}$ of $\vH$. 
	
	When $N\leq\frac{4\nu_{\max}l}{L-\lambda_{\min}}+4\nu_{\min}$, i.e.
	when $N=O(l/\lambda_{\min})=O(\kappa_{1})$, the optimal objective
	value is achieved at $\lambda_{\min}$ and the optimal step size is
	given by $\alpha_{t}=\frac{N}{4(\nu_{\max}l+(N-\nu_{\min})\lambda_{\min})}$.
	The optimal convergence rate (i.e. the optimal objective value) is
	equal to $1-\frac{1}{8}\frac{N\lambda_{\min}}{(\nu_{\max}l+(N-\nu_{\min})\lambda_{\min})}=1-\frac{1}{8}\frac{N}{(\nu_{\max}\kappa_{1}+(N-\nu_{\min}))}$.
	This implies that when $N=O(\kappa_{1})$, the optimal convergence
	rate has a linear speedup in $N$. When $N$ is larger, this step
	size is no longer optimal, but we still have $1-\frac{1}{8}\frac{N}{(\nu_{\max}\kappa_{1}+(N-\nu_{\min}))}$
	as an upper bound on the convergence rate. 
	
	Now we have proved 
	\begin{align*}
	\mathbb{E}\|\ov{w}_{t+1}-\vw^{\ast}\|^{2} & \leq(1-\frac{1}{8}\frac{N}{(\nu_{\max}\kappa_{1}+(N-\nu_{\min}))})\mathbb{E}\|\ov{w}_{t}-\vw^{\ast}\|^{2}\\
	& +\alpha_{t}L\sum_{k=1}^{N}p_{k}\|\ov{w}_{t}-\vw_{t}^{k}\|^{2}+2\alpha_{t}^{2}\|\sum_{k=1}^{N}p_{k}\tilde{\vH}_{t}^{k}(\vw_{t}^{k}-\ov{w}_{t})\|^{2}-2\alpha_{t}\sum_{k=1}^{N}p_{k}F_{k}(\vw_{t}^{k})
	\end{align*}
	Next we bound terms in the second line using a similar argument as
	the general case. We have 
	\begin{align*}
	2\alpha_{t}^{2}\|\sum_{k=1}^{N}p_{k}\tilde{\vH}_{t}^{k}(\vw_{t}^{k}-\ov{w}_{t})\|^{2} & \leq2\alpha_{t}^{2}l^{2}\sum_{k=1}^{N}p_{k}\|\ov{w}_{t}-\vw_{t}^{k}\|^{2}
	\end{align*}
	and 
	\begin{align*}
	\mathbb{E}\sum_{k=1}^{N}p_{k}\|\ov{w}_{t}-\vw_{t}^{k}\|^{2} & \leq\mathbb{E}2(1+2l^{2}\alpha_{t-1}^{2})\sum_{k=1}^{N}p_{k}\|\ov{w}_{t-1}-\vw_{t-1}^{k}\|^{2}+8\alpha_{t-1}^{2}lF(\ov{w}_{t-1})\\
	& =4\alpha_{t-1}^{2}l\langle\ov{w}_{t-1}-\vw^{\ast},\vH(\ov{w}_{t-1}-\vw^{\ast})\rangle
	\end{align*}
	and if $\alpha_{t},\alpha_{t-1}$ satisfy 
	\begin{align*}
	\alpha_{t}L+2\alpha_{t}^{2} & \leq(1-\frac{1}{8}\frac{N}{(\nu_{\max}\kappa_{1}+(N-\nu_{\min}))})(\alpha_{t-1}L+2\alpha_{t-1}^{2})/3\\
	2(1+2l^{2}\alpha_{t-1}^{2}) & \leq3\\
	\alpha_{t}L+2\alpha_{t}^{2} & \leq1
	\end{align*}
	we have 
	\begin{align*}
	& \mathbb{E}\|\ov{w}_{t+1}-\vw^{\ast}\|^{2}\\
	& \leq(1-\frac{1}{8}\frac{N}{(\nu_{\max}\kappa_{1}+(N-\nu_{\min}))})
 ( \mathbb{E}\|\ov{w}_{t-1}-\vw^{\ast}\|^{2}-\alpha_{t}\langle\ov{w}_{t-1}-\vw^{\ast},\vH\ov{w}_{t-1}-\vw^{\ast}\rangle\\
 &+2\alpha_{t}^{2}\|\sum_{k=1}^{N}p_{k}\tilde{\vH}_{t}^{k}(\ov{w}_{t}-\vw^{\ast})\|^{2}+(\alpha_{t-1}L+2\alpha_{t-1}^{2})\cdot2\sum_{k=1}^{N}p_{k}\|\ov{w}_{t-1}-\vw_{t-1}^{k}\|^{2}\\
	&+4\alpha_{t-1}^{2}l\langle\ov{w}_{t-1}-\vw^{\ast},\vH(\ov{w}_{t-1}-\vw^{\ast}))
	\end{align*}
	and again by choosing $\alpha_{t-1}=c\frac{N}{8(\nu_{\max}l+(N-\nu_{\min})\lambda_{\min})}$
	for a small constant $c$, we can guarantee that 
	\begin{align*}
	\mathbb{E}\|\ov{w}_{t-1}-\vw^{\ast}\|^{2}-\alpha_{t-1}\langle\ov{w}_{t-1}-\vw^{\ast},\vH\ov{w}_{t-1}-\vw^{\ast}\rangle\\
	+2\alpha_{t-1}^{2}\|\sum_{k=1}^{N}p_{k}\tilde{\vH}_{t-1}^{k}(\ov{w}_{t-1}-\vw^{\ast})\|^{2}+4\alpha_{t-1}^{2}l\langle\ov{w}_{t-1}-\vw^{\ast},\vH(\ov{w}_{t-1}-\vw^{\ast})\rangle\\
	\leq(1-c\frac{N}{16(\nu_{\max}l+(N-\nu_{\min})\lambda_{\min})})\mathbb{E}\|\ov{w}_{t-1}-\vw^{\ast}\|^{2}
	\end{align*}
	
	For general $E$, we have the recursive relation
	\begin{align*}
	&\mathbb{E}\|\ov{w}_{t+1}-\vw^{\ast}\|^{2}\\
 &\leq\mathbb{E}(1-c\frac{1}{8}\frac{N}{(\nu_{\max}\kappa_{1}+(N-\nu_{\min}))})\cdots(1-c\frac{1}{8\tau}\frac{N}{(\nu_{\max}\kappa_{1}+(N-\nu_{\min}))})[\|\ov{w}_{t-\tau}-\vw^{\ast}\|^{2}\\
	& -\alpha_{t-\tau}\langle\ov{w}_{t-\tau}-\vw^{\ast},\vH\ov{w}_{t-\tau}-\vw^{\ast}\rangle+2\alpha_{t-\tau}^{2}\|\sum_{k=1}^{N}p_{k}\tilde{\vH}_{t-\tau}^{k}(\ov{w}_{t-\tau}-\vw^{\ast})\|^{2}\\
	& +4\tau\alpha_{t-1}^{2}l\langle\ov{w}_{t-1}-\vw^{\ast},\vH(\ov{w}_{t-1}-\vw^{\ast})\rangle\\
	& +(2\alpha_{t-\tau}^{2}l^{2}+\alpha_{t-\tau}L)((\tau+1)\sum_{k=1}^{N}p_{k}\|\ov{w}_{t-\tau}-\vw_{t-\tau}^{k}\|^{2})]
	\end{align*}
	as long as the step sizes are chosen $\alpha_{t-\tau}=c\frac{N}{4\tau(\nu_{\max}l+(N-\nu_{\min})\lambda_{\min})}$
	such that the following inequalities hold 
	\begin{align*}
	(2\alpha_{t-\tau}^{2}l^{2}+\alpha_{t-\tau}L) & \leq(1-\alpha_{t-\tau}\mu)(2\alpha_{t-\tau-1}^{2}l^{2}+\alpha_{t-\tau-1}L)/3\\
	2(1+2l^{2}\alpha_{t-\tau}^{2}) & \leq3\\
	2\alpha_{t-\tau}^{2}l^{2}+\alpha_{t-\tau}L & \leq1
	\end{align*}
	and 
	\begin{align*}
	& \|\ov{w}_{t-\tau}-\vw^{\ast}\|^{2}-\alpha_{t-\tau}\langle\ov{w}_{t-\tau}-\vw^{\ast},\vH\ov{w}_{t-\tau}-\vw^{\ast}\rangle\\
	& +2\alpha_{t-\tau}^{2}\|\sum_{k=1}^{N}p_{k}\tilde{\vH}_{t-\tau}^{k}(\ov{w}_{t-\tau}-\vw^{\ast})\|^{2}+4\tau\alpha_{t-1}^{2}l\langle\ov{w}_{t-1}-\vw^{\ast},\vH(\ov{w}_{t-1}-\vw^{\ast})\rangle\\
	& \leq(1-c\frac{N}{8(\tau+1)(\nu_{\max}\kappa_{1}+(N-\nu_{\min}))})\mathbb{E}\|\ov{w}_{t-\tau}-\vw^{\ast}\|^{2}
	\end{align*}
	which gives 
	\begin{align*}
	\mathbb{E}\|\ov{w}_{t}-\vw^{\ast}\|^{2} & \leq(1-c\frac{1}{8E}\frac{N}{(\nu_{\max}\kappa_{1}+(N-\nu_{\min}))})^{t}\|\vw_{0}-\vw^{\ast}\|^{2}\\
	& =O(\exp(-\frac{1}{E}\frac{N}{(\nu_{\max}\kappa_{1}+(N-\nu_{\min}))}t))\|\vw_{0}-\vw^{\ast}\|^{2}.
	\end{align*}
\end{proof}

\section{Details on Experiments and Additional Results}
\label{sec:expsupp}

We describe the precise procedure to reproduce the results in this paper.
As we mentioned in Section~\ref{sec:exp}, we empirically verified the
linear speed up on various convex settings for both FedAvg and its
accelerated variants. For all the results, we set random seeds as $0, 1, 2$
and report the best convergence rate across the three folds. For each
run, we initialize $\vw_0 = \mathbf{0}$ and measure the number of iteration
to reach the target accuracy $\epsilon$. We use the small-scale dataset
w8a~\citep{platt1998fast}, which consists of $n = 49749 $ samples with
feature dimension $d = 300$. The label is either positive one or negative one.
The dataset has sparse binary features in $\{0, 1\}$. Each sample
has 11.15 non-zero feature values out of $300$ features on average.
We set the batch size equal to four across all experiments.
In the next following subsections,
we introduce parameter searching in each objective separately.

\subsection{Strongly Convex Objectives}
We first consider the strongly convex objective function, where we use
a regularized binary logistic regression with regularization $\lambda=1/n\approx 2e-5$. We evenly distributed on $1, 2, 4, 8, 16, 32$ devices and  report the number of iterations/rounds needed to converge to $\epsilon-$accuracy, where $\epsilon=0.005$. The optimal objective function value $f^*$
is set as $f^* = 0.126433176216545$. This is determined numerically and we follow the setting in~\cite{stich2018local}. The learning rate is decayed as the $\eta_t = \min(\eta_0, \frac{nc}{1 + t})$, where we extensively search the best learning rate $c \in \{2^{-1}c_0, 2^{-2}c_0, c_0, 2c_0, 2^{2}c_0\}$. In this case, we search the initial learning rate $\eta_0\in \{1, 32\}$ and $c_0 = 1/8$.

\subsection{Convex Smooth Objectives}
We also use binary logistic regression without regularization.
The setting is almost same as its regularized counter part. We also evenly distributed all the samples on $1, 2, 4, 8, 16, 32$ devices. The figure shows the number of iterations needed to converge to $\epsilon-$accuracy, where $\epsilon=0.02$. The optiaml objective function value is set as $f^*=0.11379089057514849$, determined numerically. 
The learning rate is decayed as the $\eta_t = \min(\eta_0, \frac{nc}{1 + t})$, where we extensively search the best learning rate $c \in \{2^{-1}c_0, 2^{-2}c_0, c_0, 2c_0, 2^{2}c_0\}$. In this case, we search the initial learning rate $\eta_0\in \{1, 32\}$ and $c_0 = 1/8$.

\subsection{Linear Regression}
For linear regression, we use the same feature vectors from w8a dataset 
and generate ground truth $[\vw^*, b^*]$ from a multivariate normal distribution
with zero mean and standard deviation one. Then we generate label 
based on $y_i = \vx_i^t\vw^* + b^*$. This procedure will ensure we satisfy
the over-parameterized setting as required in our theorems. 
We also evenly distributed all the samples on $1, 2, 4, 8, 16, 32$ devices. The figure shows the number of iterations needed to converge to $\epsilon-$accuracy, where $\epsilon=0.02$. The optiaml objective function value is $f^*=0$. 
The learning rate is decayed as the $\eta_t = \min(\eta_0, \frac{nc}{1 + t})$, where we extensively search the best learning rate $c \in \{2^{-1}c_0, 2^{-2}c_0, c_0, 2c_0, 2^{2}c_0\}$. In this case, we search the initial learning rate $\eta_0\in \{0.1, 0.12\}$ and $c_0 = 1/256$.

\subsection{Partial Participation}
To examine the linear speedup of FedAvg in partial participation setting,
we evenly distributed data on $4, 8, 16, 32, 64, 128$ devices and 
uniformly sample $50\%$ devices without replacement. 
All other hyperparameters are the same as previous sections. 

\subsection{Nesterov Accelerated FedAvg}
The experiments of Nesterov accelerated FedAvg (the update formula is given as follows) uses the same setting as
previous three sections for vanilia FedAvg.
\begin{align*}
\vy_{t+1}^{k} & =\vw_{t}^{k}-\alpha_{t}\vg_{t,k}\\
\vw_{t+1}^{k} & =\begin{cases}
\vy_{t+1}^{k}+\beta_{t}(\vy_{t+1}^{k}-\vy_{t}^{k}) & \text{if }t+1\notin\mathcal{I}_{E}\\
\sum_{k\in\mathcal{S}_{t+1}}\left(\vy_{t+1}^{k}+\beta_{t}(\vy_{t+1}^{k}-\vy_{t}^{k})\right) & \text{if }t+1\in\mathcal{I}_{E}
\end{cases}
\end{align*}
We set $\beta_t = 0.1$ and search $\alpha_t$ in the same way as $\eta_t$
in FedAvg.

\subsection{The Impact of $E$.}
In this subsection, we further examine how does the number of local steps ($E$) 
affect convergence. As shown in Figure~\ref{fig:e}, the number of iterations increases as $E$ increase, which slow down the convergence in
terms of gradient computation. However, it can save communication costs as
the number of rounds decreased when the $E$ increases. This showcases that
we need a proper choice of $E$ to trade-off the communication cost and
convergence speed. 
\begin{figure}
\centering
	\begin{tabular}{ccc}
	\hspace{-2em} \includegraphics[width=0.33\textwidth]{fig/paper-stronglycvxsmthspeedupEpochsT-min-w8a-epsilon0131-reg1e-05.pdf} &
	\includegraphics[width=0.33\textwidth]{fig/paper-cvxsmoothspeedupEpochsT-min-w8a-epsilon0134-reg0.pdf} & 
	\includegraphics[width=0.33\textwidth]{fig/paper-linregression-newspeedupEpochsT-min-linearregressionw8a-epsilon002-reg0.pdf} \\
	\hspace{-2em} \includegraphics[width=0.33\textwidth]{fig/paper-stronglycvxsmthspeedupEpochsRounds-min-w8a-epsilon0131-reg1e-05.pdf} &
	\includegraphics[width=0.33\textwidth]{fig/paper-cvxsmoothspeedupEpochsRounds-min-w8a-epsilon0134-reg0.pdf} & 
	\includegraphics[width=0.33\textwidth]{fig/paper-linregression-newspeedupEpochsRounds-min-linearregressionw8a-epsilon002-reg0.pdf} \\
(a) Strongly convex objective & (b) Convex smooth objective & (c) Linear regression
	\end{tabular}
\caption{The convergence of FedAvg w.r.t the number of local steps $E$. }
\label{fig:e}
\end{figure}

\bibliographystyle{unsrt}
\bibliography{ref.bib}

\newpage
\begin{assumption}[L-smooth]
	$F_{1}, \cdots, F_{N}$ are all $L$-smooth: for all  $\mathbf{v}$  and $\mathbf{w}$, $F_{k}(\mathbf{v}) \leq F_{k}(\mathbf{w})+(\mathbf{v}- \\ \mathbf{w})^{T} \nabla F_{k}(\mathbf{w})+\frac{L}{2}\|\mathbf{v}-\mathbf{w}\|_{2}^{2}$.
	\label{ass:lsmooth}
\end{assumption}
\begin{assumption}[Strongly-convex]
	$	F_{1}, \cdots, F_{N} \text { are all } \mu \text { -strongly convex: for all v and } \mathbf{w}, F_{k}(\mathbf{v}) \geq F_{k}(\mathbf{w})+(\mathbf{v}-\mathbf{w})^{T} \nabla F_{k}(\mathbf{w})+\frac{\mu}{2}\|\mathbf{v}-\mathbf{w}\|_{2}^{2}$
	\label{ass:stroncvx}
\end{assumption}
\begin{assumption}[Bounded local variance]
	Let $\mathbf{\xi}_{t}^{k}$ be sampled from the $k$-th device's local data uniformly at random. The variance of stochastic gradients in each device is bounded: $\mathbb{E}\left\|\nabla F_{k}\left(\mathbf{w}_{t}^{k}, \mathbf{\xi}_{t}^{k}\right)-\nabla F_{k}\left(\mathbf{w}_{t}^{k}\right)\right\|^{2} \leq \sigma_{k}^{2}$,
	for $k=1, \cdots, N$ and any $\mathbf{w}_{t}^{k}$. Let $\sigma^2=\sum_{k=1}^{N}p_k\sigma_{k}^{2}$.
	\label{ass:boundedvariance}
\end{assumption}
\begin{assumption}[Bounded local gradient]
	The expected squared norm of stochastic gradients is uniformly bounded. i.e.,
	$\mathbb{E}\left\|\nabla F_{k}\left(\mathbf{w}_{t}^{k}, \mathbf{\xi}_{t}^{k}\right)\right\|^{2} \leq G^{2}$, for all $k = 1,..., N$ and $t=0, \dots, T-1$.
	\label{ass:subgrad2}
\end{assumption}